\numberwithin{equation}{section}
\newtheorem{theorem}{Theorem}
\newtheorem{lemma}{Lemma}
\newtheorem{proposition}{Proposition}
\newtheorem{definition}{Definition}
\newtheorem{condition}{Condition}
\title{\huge This is the Title}
\author
{
}
\date{}
\author
{
    Zhongjie Shi \thanks{School of Mathematics, Georgia Institute of Technology, Atlanta, GA 30332, United States. The work was done while the author was affiliated with The University of Hong Kong; e-mail:  {\tt zshi332@gatech.edu}}
    ~~~and~~~
	Puyu Wang \thanks{Department of Computer Science, RPTU Kaiserslautern-Landau, Kaiserslautern, Germany; e-mail: {\tt wang.puyu@cs.rptu.de}} 
	~~~and~~~
	Chenyang Zhang \thanks{Department of Statistics and Actuarial Science, School of Computing and Data Science, The University of Hong Kong, Hong Kong; e-mail: {\tt chyzhang@connect.hku.hk}} 
	~~~and~~~
	Yuan Cao \thanks{Department of Statistics and Actuarial Science, School of Computing and Data Science, The University of Hong Kong, Hong Kong; e-mail:  {\tt yuancao@hku.hk}} 
}
\title{Towards Understanding Generalization in DP-GD: A Case Study in Training Two-Layer CNNs}
\author
{
    Zhongjie Shi\textsuperscript{\rm 1}\thanks{The work was done while the author was affiliated with The University of Hong Kong.},
	Puyu Wang\textsuperscript{\rm 2},
	Chenyang Zhang\textsuperscript{\rm 3},
	Yuan Cao\textsuperscript{\rm 3}\thanks{Corresponding author.}
}
\begin{document}

\maketitle

\begin{abstract}
 Modern deep learning techniques focus on extracting intricate information from data to achieve accurate predictions. However, the training datasets may be crowdsourced and include sensitive information, such as personal contact details, financial data, and medical records. As a result, there is a growing emphasis on developing privacy-preserving training algorithms for neural networks that maintain good performance while preserving privacy. In this paper, we investigate the generalization and privacy performances of the differentially private gradient descent (DP-GD) algorithm, which is a private variant of the gradient descent (GD) by incorporating additional noise into the gradients during each iteration. Moreover, we identify a concrete learning task where DP-GD can achieve superior generalization performance compared to  GD in training two-layer Huberized ReLU convolutional neural networks (CNNs). Specifically, we demonstrate that, under mild conditions, a small signal-to-noise ratio can result in GD  producing training models with poor test accuracy, whereas DP-GD can yield training models with good test accuracy and privacy guarantees if the signal-to-noise ratio is not too small. This indicates that DP-GD has the potential to enhance model performance while ensuring privacy protection in certain learning tasks. Numerical simulations are further conducted to support our theoretical results.
\end{abstract}

%------------------------------------------------
\section{Introduction}
Modern deep learning (DL) algorithms are designed to extract fine-grained, high-dimensional patterns from data to achieve superior predictive performance. However, this ability to exploit intricate details can inadvertently expose sensitive information contained within the training data. In practical scenarios, datasets may include personally identifiable information such as health records, contact details, or financial transactions. Without appropriate safeguards, models trained using standard gradient descent (GD) methods are susceptible to various privacy attacks, such as membership inference or model inversion. As a result, it becomes critically important to study and understand the behavior of privacy-preserving variants of GD.

\textit{Differential Privacy} (DP) \cite{dwork2006calibrating,dwork2014algorithmic} is a widely adopted framework for designing privacy-preserving deep learning algorithms with strong theoretical guarantees. It ensures that the output of an algorithm is minimally influenced by any single data point in the input dataset, thereby safeguarding individual privacy.
Extensive research has been dedicated to developing efficient differentially private learning algorithms while maintaining strong statistical performance \cite{bassily2019private,bassily2020stability,bassily2014private,feldman2020private,wang2022differentially,chaudhuri2011differentially}.
In this paper, we focus on DP-GD, a commonly used private learning algorithm that introduces noise into the gradient updates to protect individual privacy. Understanding the interplay between generalization and privacy in such algorithms is crucial for advancing responsible and secure deep learning systems.

However, enforcing strong privacy guarantees often comes at the cost of model performance. Previous work has confirmed this tension between privacy and utility \citep{chaudhuri2011differentially,kifer2011no,bassily2021differentially,yang2021simple,carvalho2023towards}, highlighting that increasing the level of privacy protection, typically by injecting more noise, can degrade the predictive accuracy of machine learning models, since excessive noise may obscure crucial patterns in the data, leading to underfitting or poor generalization. On the other hand, Conversely, insufficient privacy safeguards risk exposing sensitive information, eroding public trust, and violating legal obligations. This trade-off is particularly pronounced in high-stakes applications, such as healthcare diagnostics or financial forecasting, where both accuracy and data confidentiality are paramount.
%As such, the design of trustworthy AI systems requires a nuanced understanding of how privacy-preserving mechanisms interact with the learning dynamics of neural networks. Developing principled approaches to quantify, evaluate, and mitigate the privacy-utility trade-off is essential for ensuring both responsible AI deployment and high-quality decision-making.

%The necessity for privacy-preserving machine learning is further underscored by legal and ethical constraints imposed by privacy regulations, such as the General Data Protection Regulation (GDPR) and the Health Insurance Portability and Accountability Act (HIPAA). These frameworks impose strict limitations on the usage, sharing, and storage of personal data. However, enforcing strong privacy guarantees often comes at the cost of model performance. Prior work has confirmed this tension between privacy and utility \citep{fung2010privacy,kifer2011no}, highlighting that increasing the level of privacy protection—typically by injecting more noise—can degrade the predictive accuracy of machine learning models. Therefore, balancing these competing objectives remains a fundamental challenge in the deployment of privacy-aware deep learning systems.

Therefore, it is important to build trustworthy DL systems that can satisfy both privacy and performance requirements. Whereas achieving a satisfying balance between privacy and generalization for DP-GD is a challenging problem for general cases, we may start from the following question:
\begin{center}
    \textit{Are there specific learning tasks for which DP-GD can simultaneously provide good privacy guarantees and maintain competitive generalization performance?}
\end{center}

This paper provides an affirmative answer to this question by identifying a concrete binary classification  task in which DP-GD not only protects privacy but also provably yields higher test accuracy than standard GD, showing that there is not always a trade-off. Our choices of learning tasks, models, theorem conditions, experiments are specifically aimed at showing the existence of scenarios where DP enhances accuracy. Specifically, we demonstrate that in the context of training two-layer Huberized ReLU CNNs under certain mild conditions, DP-GD can outperform GD in terms of test accuracy. This result reveals that privacy-preserving training does not always entail a loss in utility, and in some cases, the injected noise may even act as a form of regularization that benefits generalization. %Our findings offer valuable insight into how model selection and the choice of differentially private training algorithms can be aligned to achieve both privacy and performance objectives in specific machine learning scenarios. 
We summarize the main contributions of the paper in the following.

\begin{itemize}
    \item We provide a refined analysis showing that when the signal-to-noise ratio is relatively low,  under mild assumptions about the problem setup, model design, and hyperparameter configuration,  GD  can minimize the training loss to an arbitrarily small value. However, the corresponding test loss and test error remain bounded below by a constant, indicating poor generalization.
    
    \item We show that when the signal-to-noise ratio is not too small, DP-GD can, under similarly mild assumptions, achieve an arbitrarily small training loss. Furthermore, by applying the tool of early stopping, DP-GD is capable of achieving both strong generalization performance and meaningful privacy protection simultaneously.
    
    \item Comparing these two theoretical outcomes reveals that, with appropriate model architecture and careful tuning of hyperparameters, DP-GD can outperform the standard GD in terms of generalization in certain tasks. This highlights the potential of DP training algorithms not only to preserve privacy but also to enhance performance in specific scenarios, offering valuable guidance for model design and hyperparameter selection.
\end{itemize}

\noindent \textbf{Notations.} We use lower case letters, lower case bold face letters, and upper case bold face letters to denote scalars, vectors, and matrices respectively.  For a vector $\vb=(v_1,\cdots,v_d)^{\top}$, we denote by $\|\vb\|_2:=\big(\sum_{j=1}^{d}v_j^2\big)^{1/2}$ its $l_2$ norm. %For two sequence $\{a_k\}$ and $\{b_k\}$, we denote $a_k=O(b_k)$ if $|a_k|\leq C|b_k|$ for some absolute constant $C$, denote $a_k=\Omega(b_k)$ if $b_k=O(a_k)$, and denote $a_k=\Theta(b_k)$ if $a_k=O(b_k)$ and $a_k=\Omega(b_k)$. We also denote $a_k=o(b_k)$ if $\lim|a_k/b_k|=0$. 
We use $\tilde{O}(\cdot)$ and $\tilde{\Omega}(\cdot)$ to omit the logarithmic terms.

\section{Related Work}
\noindent \textbf{Implicit bias of neural networks.} A growing body of research has focused on the aspect of implicit bias, which refers to the intrinsic tendency of learning algorithms to favor solutions with certain underlying structures—often those considered to be "simple" or low-complexity \citep{neyshabur2014search, DBLP:journals/jmlr/SoudryHNGS18, pmlr-v99-ji19a, NEURIPS2022_ab3f6bbe, xie2024implicit, zhangimplicit}. Within the context of neural networks, several studies have explored how this phenomenon manifests for GD. For instance, \citet{lyu2019gradient} and \citet{ji2020directional} showed that when training 
$q$-homogeneous neural networks using GD, the direction of convergence aligns with a KKT point of the maximum $\ell_2$-margin optimization problem. Extending this line of inquiry, \citet{lyu2021gradient} established a stronger convergence result under the assumption of symmetric data.
%and \citet{wang2021implicit} further generalized the analysis to adaptive optimization algorithms.
Additional insights into the implicit bias of deep linear networks have been provided by \citet{ji2019gradient, ji2020directional}, who demonstrated that the weight matrices in each layer eventually converge to rank-one structures. %\citet{li2020towards} further connected the gradient dynamics of two-layer matrix factorization to a heuristic method for minimizing matrix rank. 
On data that is nearly orthogonal, \citet{frei2022implicit} proved that gradient flow in leaky ReLU networks leads to linear decision boundaries, and that the stable rank of the resulting model remains bounded by a constant. \citet{kou2024implicit} extended these results to standard gradient descent under similar data assumptions. 
%Meanwhile, \citet{timor2023implicit} explored rank minimization in nonlinear architectures and presented counterexamples that challenge certain generalizations. 
\citet{cao2022benign} investigated the GD training dynamics of two-layer polynomial ReLU CNNs, identifying specific signal-to-noise ratio thresholds that determine whether the model converges to the underlying signal or fits the noise. Building on this, \citet{kou2023benign} extended the analysis to standard ReLU CNNs. More recently, \citet{zhang2025gradient} demonstrated that training Huberized ReLU CNNs with gradient descent enables the model to robustly learns the intrinsic dimension of the data signals.
Lastly, \citet{vardi2023implicit} compiled a comprehensive survey summarizing key developments and open questions in the study of implicit bias in deep learning. 

\smallskip

\noindent \textbf{Differential privacy.} A large body of work has investigated the privacy and utility guarantees of differentially private gradient-based methods. The gradient perturbation mechanism, first introduced by \citet{song2013stochastic}, forms the foundation for widely studied algorithms such as DP-GD and DP-SGD. In particular, \citet{abadi2016deep} proposed the first algorithm for deep learning with differential privacy. \citet{bassily2014private,bassily2019private,bassily2021differentially,feldman2020private,wang2021differentially,asi2021private} demonstrated that both DP-GD and DP-SGD can achieve optimal utility bounds under different settings. 
Specifically, an excess risk bound of order $ {O}(\frac{\sqrt{d\log(1/\delta)}}{n\epsilon})$ for non-strongly convex problems and $ {O}(\frac{d\log(1/\delta)}{n^2\epsilon^2})$ for convex problems, where $n$ is the sample size, $d$ is input dimension, and $(\epsilon,\delta)$ are the privacy parameters. 
For non-convex problems, \citet{zhang2017efficient} established the utility bound $ {O}(\frac{\sqrt{d\log(1/\delta)}}{n\epsilon})$ for DP-SGD in terms of the squared gradient norm (i.e., first-order optimality) for nonconvex smooth objectives. \citet{wang2019differentially} provided a unified analysis of DP-GD and DP-SVRG for both convex and nonconvex settings, and specifically demonstrated the utility bound $ {O}(\frac{d\log(1/\delta)}{n^2\epsilon^2})$ of DP-GD in terms of the objective gap for objectives satisfying the Polyak-Łojasiewicz (PL) condition. 
Very recently, \citet{bu2023convergence} provided the first convergence analysis of DP-GD for deep learning, using insights on the training dynamics and the neural tangent kernel (NTK).
Yet, their results only showed that DP-GD with global clipping converges monotonically to zero loss without providing convergence rates.

\section{Problem Setting}

We consider a specific binary classification task with the use of two-layer CNNs. We present the data distribution in the following definition, where the input data comprises two types of components: \textit{label-dependent signals} and \textit{label-independent noises}. This simplified setting can already demonstrate the existence of scenarios where DP enhances accuracy. The simplifications are necessary for tractable theoretical analyses. We consider Huberized ReLU activation as it is smooth, which helps our analysis. Notably, similar setups have been considered in recent works \citep{li2019towards,allen2020towards,allen2022feature,cao2022benign, kou2023benign}, making our choice relatively standard.

\begin{definition} \label{datadistribution}
    Let $\bmu \in \RR^d$ be a fixed vector representing the signal contained in each data point. Each data point $(\bm x,y)$ with $\bm x=[\bm x^{(1)\top},\bm x^{(2)\top}]^\top \in \RR^{2d}$ and $y\in\{-1,1\}$ is generated from the following data distribution $\mathcal{D}$:  
    \begin{enumerate}
        \item  The label $y$ is generated as a Rademacher random variable. 
        \item  A noise vector $\bm \xi$ is generated from the Gaussian distribution $\cN(\bm 0, \sigma_p^2 \cdot (\bm I- \bm \mu \bm \mu^\top \cdot \lVert \bm \mu \rVert_2^{-2}))$. 
    %$\cN(\bm 0, \sigma_p^2 \Ib)$. \\
        \item One of $\bm x^{(1)}, \bm x^{(2)}$ is randomly selected and then assigned as $y \cdot \bm \mu$, which represents the signal; the other is then given by $\bxi$, which represents noises.
        \item The signal-to-noise ratio (SNR) is defined as $\text{SNR}=\norm \bmu_2/ \sigma_p \sqrt{d}$.
    \end{enumerate}
\end{definition}

	\noindent \textbf{Two-layer CNNs.} We consider two-layer convolutional neural networks (CNNs)
	\begin{equation*}
		f(\bW,\bx)= F_{+1}(\bW_{+1},\bx) - F_{-1}(\bW_{-1},\bx),
	\end{equation*}
	where $F_{+1}(\bW_{+1},\bx) $ and $ F_{-1}(\bW_{-1},\bx)$ are defined as:
	\begin{align*}
			F_{j}(\bW_{j},\bx) &= \frac{1}{m} \sum_{r=1}^m \left[ \sigma( \langle \bw_{j,r}, \bm x^{(1)} \rangle)  +\sigma\left( \langle \bw_{j,r}, \bm x^{(2)} \rangle\right)  \right] \\
			&= \frac{1}{m} \sum_{r=1}^m \left[ \sigma( \langle \bw_{j,r}, y \cdot \bmu \rangle\right)  +\sigma\left( \langle \bw_{j,r}, \bm \bxi \rangle)  \right]
	\end{align*}
	for $j = \{+1,-1\}$, and $m$ is the number of convolutional filters in $F_{+1}$ and $F_{-1}$. We consider the Huburized ReLU activation function $\sigma(\cdot)$ which is defined as 
    \[
    \sigma(z) = q^{-1} \kappa^{1 - q} z^q \cdot \indicator_{\{ z \in [0,\kappa] \}} + \left(z - \kappa + \frac{\kappa}{q} \right) \cdot \indicator_{\{ z > \kappa \}}
    \]
    where $\kappa$ is the threshold between polynomial and linear functions, and $q \geq 3$, and $\indicator$ is the indicator function. We use $w_{j,r} \in \mathbb{R}^d$ to denote the weight of the $r$-th filter, and $\bW_j$ is the collection of weights associated with $F_j$. We also use $\bW$ to denote the collection of all weights. \\
    
     \noindent \textbf{Training algorithm.} The above CNN model is trained by minimizing the empirical cross-entropy loss function
	\begin{equation*}
		L_S(\bW)= \frac{1}{n} \sum_{i=1}^n \ell\left[y_i \cdot f(\bW,\bx_i) \right] ,
	\end{equation*}
	where $\ell(t)= \log (1+e^{-t})$ is the logistic loss, and $S=\{(\bx_i,y_i)\}_{i=1}^n$ is the training data set. Moreover, the test loss is defined as
	\begin{equation*}
		L_{\mathcal{D}}(\bW) := \mathbb{E}_{(\bx,y)\sim \mathcal{D}} \ell\left[y \cdot f(\bW,\bx) \right] ,
	\end{equation*}	
	and the test error is defined as
	\begin{equation*}
		\mathcal{R}_{\mathcal{D}}(\bW) := \PP_{(\bx,y)\sim \mathcal{D}} \left(y \cdot f(\bW,\bx)<0 \right),
	\end{equation*}	
	
	We consider  DP-GD with Gaussian initialization, where each entry of $\bW_{+1}$ and $\bW_{-1}$ is sampled from a Gaussian distribution $\cN(0,\sigma_0^2)$. The  update rule at step $t$ is given by
    \begin{align}   \label{NGDupdate}
        \bw_{j,r}^{(t+1)} = \bw_{j,r}^{(t)}- \eta \left( \nabla_{\bw_{j,r}} L_S(\bW^{(t)}) + \bb_{j,r,t} \right)  .
        %\nonumber&= \bw_{j,r}^{(t)}- \frac{\eta}{nm} \sum_{i=1}^n \ell_{i}^{\prime(t)} \cdot \sigma^\prime \left( \langle \bw_{j,r}^{(t)}, y_{i} \bmu \rangle\right) \cdot j \bmu  \\
        %& - \frac{\eta}{nm} \sum_{i=1}^n \ell_{i}^{\prime(t)} \cdot \sigma^\prime \left( \langle \bw_{j,r}^{(t)}, \bxi_{i} \rangle\right) \cdot jy_{i} \bxi_{i} - \eta \bb_{j,r,t},
    \end{align}
 where the added Gaussian noises $\bb_{j,r,t} \sim \cN(\zero,\sigma_b^2 \bI)$, and we introduce a shorthand
 notation $\ell_{i}^{\prime(t)} := \ell^{\prime}[y_{i} \cdot f(\bW^{(t)}, \bx_{i})]$.

 This differs from the classical GD in that we add an additional Gaussian noise on the gradient in each iteration. Moreover, the update rule of GD is given by
    \begin{align} \label{GDupdate}
         \bw_{j,r}^{(t+1)} = \bw_{j,r}^{(t)}- \eta \left( \nabla_{\bw_{j,r}} L_S(\bW^{(t)})  \right) . 
    \end{align}
       % \nonumber &= \bw_{j,r}^{(t)}- \frac{\eta}{nm} \sum_{i=1}^n \ell_{i}^{\prime(t)} \cdot \sigma^\prime \left( \langle \bw_{j,r}^{(t)}, y_{i} \bmu \rangle\right) \cdot j \bmu \\
       % & - \frac{\eta}{nm} \sum_{i=1}^n \ell_{i}^{\prime(t)} \cdot \sigma^\prime \left( \langle \bw_{j,r}^{(t)}, \bxi_{i} \rangle\right) \cdot jy_{i} \bxi_{i}.

%\[\sup_{(x,y),(x',y')} \|\nabla_w \ell(y f(W,x)) - \nabla_w \ell(y' f(W,x'))  \|_2\]

\section{Main Results}
In this section, we present our main theoretical findings. In particular, we construct a specific binary classification task using two-layer CNNs, where the SNR satisfies the condition $\tilde \Omega(n^{\frac{1}{q}}) \!\leq\! \text{SNR}^{-1} \!\leq\!  \min\!\big\{ \frac{\sqrt{d}}{Cm^2}, \frac{\sqrt{n}}{C}\big\}$. Under this setting, we show that the training loss of both GD and DP-GD can converge to an arbitrarily small value.  Whereas DP-GD can outperform GD in terms of the generalization performance. We note that our SNR conditions are not intended to guide practice, but are outcomes of theoretical analyses. They define the specific regime where privacy can improve accuracy, and reasonably exclude cases with very low/high SNR, where the task is too difficult/easy and both DP-GD and GD perform similarly poorly/well. This allows us to focus on settings where DP-GD and GD can be distinguished. While the SNR conditions do not directly guide practice, our finding that privacy can sometimes enhance accuracy has practical implications by deepening our understanding of the relationship between privacy and accuracy.

\subsection{Noise Memorization of GD} 
The theoretical analysis of GD is based on the following specific conditions, where we identify an SNR condition $\text{SNR}^{-1} \geq \tilde \Omega(n^{\frac{1}{q}})$, and mild conditions on the choice of hyperparameters in the problem setting and training algorithm. 
%The main results suggest that GD results in noise memorization during the training process, and therefore achieve poor generalization performance.
We consider the learning period $0 \leq t \leq T^*$, where $T^*=\tilde{O}\left( \frac{\kappa^{q-1} mn}{\eta \sigma_0^{q-2} (\sigma_p \sqrt{d})^q} + \frac{m^{3}n}{\eta \epsilon \norm{\bmu}_2^2} \right)$. 

%This comprehensive approach to the theoretical foundations ensures that our findings possess a broad applicability, capable of addressing a diverse array of practical scenarios. By meticulously accounting for the interplay between these critical parameters, we have laid the groundwork for a robust and versatile understanding of the neural network optimization process.
\begin{condition} \label{condition}
    %Denote $\tilde T_1 = \tilde C_1 \frac{\kappa^2}{\eta^2 \sigma_b^2 \| \bmu \|_2^2}$. 
     Suppose there exists a sufficiently large constant $C$, such that the following hold: 
    \begin{enumerate}
        \item The threshold $\kappa$ of the activation function is sufficiently small: 
        $\kappa= O(1)$.
         \item The SNR is sufficiently small: $\text{SNR}^{-1} \geq \tilde \Omega(n^{\frac{1}{q}})$.
        \item  The dimension $d$ is sufficiently large: \\
        $d \geq C m^{\frac{2q}{q-2}} n^{\frac{2q-2}{q-2}}\kappa^{-\frac{2q-2}{q-2}} (\log(\frac{mn^2}{\delta}))^2 (\log(T^*))^2$. 
        \item The training sample size $n$ and the convolutional kernel size $m$ of CNNs is sufficiently large: \\
        $n\geq C\log(\frac{m}{\delta})$, $m\geq C\log \left(\frac{n \tilde T^*}{\delta}\right)$.
        %\item The standard deviation of the noise vector is sufficiently small: \\
        %$\sigma_p \leq C\left(\max\left\{ \frac{mn}{\tilde \epsilon \norm{\bmu}_2}, \frac{\sqrt{mn} d^{\frac{1}{4}}}{\sqrt{\tilde \epsilon} \norm{\bmu}_2} \right\} \right)^{-1}$.
        \item The standard deviation of Gaussian initialization $\sigma_0$ satisfies: 
         $\frac{C n}{\sigma_p d} \sqrt{\log\left(\frac{n^2}{\delta}\right)} \log(T^*) \leq \sigma_0 \leq (C \max\big\{\norm {\bmu}_2  m^{\frac{2}{q-2}} n^{\frac{1}{q-2}} \sqrt{\log(\frac{mn}{\delta})}, \sigma_p \sqrt{d} \big\})^{-1} \kappa^{\frac{q-1}{q-2}} $.
        \item The learning rate $\eta$ is sufficiently small: \\
        $\eta \leq \left(C \max \left\{ \norm \bmu_2^{2}, \sigma_p^2 d \right\}\right)^{-1}$.
        %, \frac{n \norm \bmu_2^{2}}{m}
        %\item The standard deviation of the Gaussian noise $\sigma_b$ satisfies: 
        %$  \frac{C \epsilon \kappa^2 \max\{ n\norm \bmu_2^{2}, \sigma_p^2d\}}{{\eta nm \min\{ \norm \bmu_2^{2}, \sigma_p^2d\}}} \leq \sigma_b^2 \leq \frac{\epsilon n \norm {\bmu}_2^4}{{C \eta m^4 \max^2\{n\norm {\bmu}_2^2 , \sigma_p^2 d \} \log^4 \left( \frac{mn \tilde T^*}{\delta} \right)}}$.
    \end{enumerate}
\end{condition}

The conditions on $d,n,m$ are to ensure that the learning
problem %%cyz
is in a sufficiently over-parameterized setting, and similar conditions have been made in \citet{chatterji2021finite,cao2022benign,frei2022benign}. The condition on the SNR and the lower bound condition on $\sigma_0$ ensure that the memorization of the noises dominates the learning of the signal in GD. The upper bound on $\sigma_0$  ensures that within $T^*$ iterations, the learning of the signal is always small and around the initialization order, even if the training loss converges. The condition imposed on $\eta$ serves as a sufficient requirement to guarantee that GD can effectively minimize the training loss. %However, we note that this bound is not tight and may be further relaxed. We give our main result on the noise memorization of GD in the following theorem.

\begin{theorem}\label{theorem: GD main}
	Under Condition \ref{condition}, for any $\epsilon >0$, denote $T_1= \tilde{\Theta}\left(\frac{\kappa^{q-1} mn}{\eta \sigma_0^{q-2} (\sigma_p \sqrt{d})^q}\right)$, and $T_2= T_1+ \frac{36 nm^2}{\eta \sigma_p^2d}$. Then within $T^*=T_1+ \tilde{O}\left( \frac{m^{3}n}{\eta \epsilon \norm{\bmu}_2^2} \right)$ iterations, with probability at least $1-6\delta$,
	we have
	\begin{enumerate}
		\item The training loss converges: there exists a time $t \leq T^*$ such that $L_{S}(\bW^{(t)}) \leq \epsilon$. 
		\item The test loss is always large: for any $0 \leq t\leq T^*$ we have that $L_{\cD}(\bW^{(t)}) \geq 0.1$.
		\item The test error is always large: suppose that $\sigma_0 \leq \frac{C_3}{m  \norm{\bmu}_2 \sqrt{d}}$  for some small constant $C_3$. For any $T_2 \leq t \leq T^*$, we have that $\mathcal{R}_{\mathcal{D}}(\bW^{(t)})\geq 0.11$. 
	\end{enumerate}
\end{theorem}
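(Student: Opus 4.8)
The plan is to carry out a signal--noise decomposition analysis in the style of \citet{cao2022benign,kou2023benign}, here for standard GD (update \eqref{GDupdate}, with no injected noise). Writing $\bw_{j,r}^{(t)}$ in the basis given by the initialization together with the near-orthogonal vectors $\bmu,\bxi_1,\dots,\bxi_n$,
\[
\bw_{j,r}^{(t)} = \bw_{j,r}^{(0)} + j\cdot\gamma_{j,r}^{(t)}\cdot\|\bmu\|_2^{-2}\,\bmu + \sum_{i=1}^n \rho_{j,r,i}^{(t)}\cdot\|\bxi_i\|_2^{-2}\,\bxi_i,
\]
where $\gamma_{j,r}^{(t)}$ measures how much filter $(j,r)$ has learned the signal and $\rho_{j,r,i}^{(t)}$ (with positive part $\overline\rho_{j,r,i}^{(t)}$ and negative part $\underline\rho_{j,r,i}^{(t)}$) measures its memorization of the $i$-th training noise; the GD update induces scalar recursions for $\gamma$ and $\rho$ driven by $\ell_i'^{(t)}\sigma'(\cdot)$. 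I would first fix a high-probability event (each piece failing with probability $O(\delta)$, using the over-parameterization assumptions on $d,n,m$) on which $\|\bxi_i\|_2^2=\Theta(\sigma_p^2 d)$, $|\langle\bxi_i,\bxi_{i'}\rangle|=\tilde O(\sigma_p^2\sqrt d)$ for $i\ne i'$, $|\langle\bw_{j,r}^{(0)},\bmu\rangle|=\tilde O(\sigma_0\|\bmu\|_2)$ and $|\langle\bw_{j,r}^{(0)},\bxi_i\rangle|=\tilde O(\sigma_0\sigma_p\sqrt d)$, and condition on it throughout.

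The heart of the proof is an induction over $0\le t\le T^*$ maintaining, simultaneously: (i) the signal coefficients stay at initialization scale, $\gamma_{j,r}^{(t)}=\tilde O(\sigma_0\|\bmu\|_2)$ --- here the small-SNR assumption and the \emph{upper} bound on $\sigma_0$ in Condition~\ref{condition} guarantee the signal part of each gradient is always dominated by the noise part, so the signal is never amplified; (ii) the wrong-sign coefficients $\underline\rho_{j,r,i}^{(t)}$ stay negligible; (iii) a two-sided bound on the memorization coefficients $\overline\rho_{j,r,i}^{(t)}$, which grow but remain balanced across $i$ and $r$. I would split into two phases. For $t\le T_1$ all relevant inner products lie in the polynomial regime $\sigma'(z)=\kappa^{1-q}z^{q-1}$, so $a_{j,r,i}^{(t)}:=\langle\bw_{j,r}^{(t)},\bxi_i\rangle$ obeys a tensor-power recursion $a^{(t+1)}\approx a^{(t)}+\Theta\!\big(\tfrac{\eta\sigma_p^2 d}{nm\kappa^{q-1}}\big)\,(a^{(t)})^{q-1}$ starting from $a^{(0)}=\tilde\Theta(\sigma_0\sigma_p\sqrt d)$; integrating (the reciprocal power $a^{-(q-2)}$ decays linearly) shows these coefficients reach the threshold $\kappa$ --- i.e.\ the noise is memorized and the activations leave the polynomial regime --- exactly by $T_1=\tilde\Theta\!\big(\tfrac{\kappa^{q-1}mn}{\eta\sigma_0^{q-2}(\sigma_p\sqrt d)^q}\big)$, after which one verifies $y_i f(\bW^{(t)},\bx_i)>0$ for every $i$. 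For $T_1\le t\le T^*$ the noise activations are in the linear regime $\sigma'\equiv1$, the objective behaves like a well-conditioned separable logistic problem, and a descent lemma (from the small learning-rate assumption) together with a lower bound on $\|\nabla L_S(\bW^{(t)})\|_2^2$ in terms of $L_S$ drives the training loss below any prescribed $\epsilon>0$ within the extra $\tilde O\!\big(\tfrac{m^3 n}{\eta\epsilon\|\bmu\|_2^2}\big)$ iterations; this proves Part~1, and the monotone growth of $\overline\rho$ established here for $t\ge T_2$ is the intermediate fact needed for Part~3.

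For Parts~2 and 3, draw a fresh $(\bx,y)\sim\cD$ whose noise patch $\bxi$ is independent of $S$. Near-orthogonality and the $\rho$-bounds from the induction give $|\langle\bw_{j,r}^{(t)},\bxi\rangle|\le\tilde O(\sigma_0\sigma_p\sqrt d)+\big|\sum_i\rho_{j,r,i}^{(t)}\|\bxi_i\|_2^{-2}\langle\bxi_i,\bxi\rangle\big|=o(\kappa)$, so the memorized training noise does not transfer to $\bx$; meanwhile the controlled $\gamma_{j,r}^{(t)}$ and the upper bound on $\sigma_0$ keep the signal activations $\sigma(\langle\bw_{j,r}^{(t)},y\bmu\rangle)$ negligible, so $|f(\bW^{(t)},\bx)|$ is bounded by a small absolute constant for all $t\le T^*$; convexity of $\ell$ then yields $L_\cD(\bW^{(t)})\ge\ell\big(\mathbb E[y f(\bW^{(t)},\bx)]\big)\ge\log(1+e^{-O(1)})\ge0.1$, which is Part~2. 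For Part~3 the additional hypothesis $\sigma_0\le C_3/(m\|\bmu\|_2\sqrt d)$ forces, for $t\ge T_2$, the aggregate signal contribution $\tfrac1m\sum_r[\cdot]$ to $f(\bW^{(t)},\bx)$ to be strictly smaller than the typical fluctuation of the $y$-independent noise/initialization contribution, so $\mathrm{sign}(f(\bW^{(t)},\bx))$ is essentially independent of $y$; an anti-concentration (Berry--Esseen-type) estimate for that contribution then gives $\PP_{(\bx,y)\sim\cD}(y f(\bW^{(t)},\bx)<0)\ge0.11$. The main obstacle is the Phase~1 induction: one must prove, for \emph{every} $t\le T^*$ at once, a mutually consistent package of inequalities showing $\gamma$ never escapes the initialization scale while the $\overline\rho$'s complete the full power-iteration blow-up, and the two-piece Huberized activation forces careful bookkeeping of when each inner product crosses $\kappa$.
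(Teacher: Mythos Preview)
Your high-level plan---signal--noise decomposition, a high-probability event on data and initialization, a two-phase analysis with a tensor-power blow-up of the noise coefficients in the polynomial regime followed by fast convergence once the linear regime is reached---matches the paper. Two points where you diverge matter.

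For the Phase-2 convergence you propose a descent lemma together with a gradient-dominance lower bound $\|\nabla L_S\|_F^2 \gtrsim L_S$. The paper does \emph{not} do this. It fixes a reference point $\bW^*$ with $\bw_{j,r}^* = \bw_{j,r}^{(0)} + 4qm\log(2q/\epsilon)\sum_{i:y_i=j}\bxi_i/\|\bxi_i\|_2$, uses the near-homogeneity $\sigma'(z)z \le q\sigma(z)$ of the Huberized ReLU to show
\[
\|\bW^{(t)}-\bW^*\|_F^2 - \|\bW^{(t+1)}-\bW^*\|_F^2 \ge (2q-1)\eta L_S(\bW^{(t)}) - \eta\epsilon,
\]
and telescopes; the bound $\|\bW^{(T_1)}-\bW^*\|_F = \tilde O(m^{3/2}n^{1/2}/\|\bmu\|_2)$ then gives both the iteration count \emph{and} the summability $\sum_{s\ge T_1} L_S(\bW^{(s)}) = \tilde O(m^3 n/(\eta\|\bmu\|_2^2))$. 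That summability (hence $\sum_s |\ell_i'^{(s)}|$ bounded) is exactly what the paper feeds back into the $\gamma$-recursion to keep $\gamma_{j,r}^{(t)}=\tilde O(\sigma_0\|\bmu\|_2)$ throughout Phase~2. Your PL sketch does not obviously deliver this cumulative bound, and without it the induction on $\gamma$ breaks for $t>T_1$.

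The real gap is Part~3. A ``Berry--Esseen-type anti-concentration'' is the wrong tool: the quantity you need to anti-concentrate, $g(\bxi)=\sum_r\sigma(\langle\bw_{1,r}^{(t)},\bxi\rangle)-\sum_r\sigma(\langle\bw_{-1,r}^{(t)},\bxi\rangle)$, is a nonlinear function of a \emph{single} Gaussian vector $\bxi$ (the inner products $\langle\bw_{j,r}^{(t)},\bxi\rangle$ are jointly Gaussian but correlated), not a sum of independent summands, so Berry--Esseen does not apply. The paper instead exploits the Phase-2 fact that some filter $(1,r^*)$ has $\overline\rho_{1,r^*,i^*}^{(t)}\ge 1$ for some training index $i^*$: it takes the shift $\bv=\tau\bxi_{i^*}$ with $\tau=\tilde\Omega(m\sigma_0\|\bmu\|_2)$, shows that on the half-space $\{\langle\bw_{1,r^*}^{(t)},\bxi\rangle\ge 0\}$ one has $g(\bxi+\bv)-g(\bxi)\ge\tilde\Omega(m\sigma_0\|\bmu\|_2)$, and then combines the symmetry $\bxi\mapsto -\bxi$ with the total-variation bound $\mathrm{TV}\big(\cN(0,\sigma_p^2\bI),\cN(\bv,\sigma_p^2\bI)\big)\le\|\bv\|_2/(2\sigma_p)$ to conclude $\PP\big(g(\bxi)>\tilde O(m\sigma_0\|\bmu\|_2)\big)\ge 0.23$. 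The extra hypothesis $\sigma_0\le C_3/(m\|\bmu\|_2\sqrt d)$ is precisely what makes this TV distance $\le 0.01$. This shift/coupling argument is the idea missing from your outline.
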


Theorem \ref{theorem: GD main} shows that when the SNR is small, the training loss of GD can converge to any accuracy $\epsilon$. However, the test loss of the trained CNN has at least a constant order. Moreover, we provide a sufficient condition on $\sigma_0$ such that the test error of the trained CNN has at least a constant order as well, when the training iteration is not too small. We note that this upper bound condition on $\sigma_0$ is only a sufficient condition. We need this condition due to the technical difficulties, and this condition has the potential to be relaxed. %In the following, we briefly introduce the overview of the proof technique. The main idea is that the memorization of the noises can achieve $\kappa$ and further grow to a constant level, whereas the learning of the signal is always small and around the initialization values. 

\noindent \textbf{Overview of proof technique}
At the core of our analyses is a \textit{signal-noise decomposition} of the filters in the CNN trained by the optimization algorithm. According to the GD update rule \eqref{GDupdate}, it is clear that the gradient descent iterate $\bw_{j,r}^{(t)}$ is a linear combination of its random initialization $\bw_{j,r}^{(0)}$, the signal vector $\bmu$ and the noise vectors in the training data $\bxi_i$, $i\in [n]$. Motivated by this observation, we introduce the following definition. 
\begin{definition}\label{GD:w_decomposition}
	Let $\bw_{j,r}^{(t)}$ for $j\in \{\pm 1\}$, $r \in [m]$ be the CNN convolution filters in the $t$-th iteration of GD \eqref{GDupdate}. Then there exist unique coefficients $\gamma_{j,r}^{(t)} \geq 0$ and $\rho_{j,r,i}^{(t)}$ such that 
	\begin{align*}
		\bw_{j,r}^{(t)} = \bw_{j,r}^{(0)} + \gamma_{j,r}^{(t)} \cdot \| \bmu \|_2^{-2} \cdot j\bmu + \sum_{ i = 1}^n \rho_{j,r,i}^{(t) }\cdot \| \bxi_i \|_2^{-2} \cdot \bxi_{i}.
	\end{align*}
	We further denote $\orho^{(t)} := \rho_{j,r,i}^{(t)}\indicator(\rho_{j,r,i}^{(t)} \geq 0)$, $\urho^{(t)} := \rho_{j,r,i}^{(t)}\indicator(\rho_{j,r,i}^{(t)} \leq 0)$. Then we have
	\begin{align}\label{eqGD:w_decomposition}
		\nonumber  \bw_{j,r}^{(t)} &= \bw_{j,r}^{(0)} + \gamma_{j,r}^{(t)} \cdot \| \bmu \|_2^{-2} \cdot j\bmu + \sum_{ i = 1}^n \orho^{(t) }\cdot \| \bxi_i \|_2^{-2} \cdot \bxi_{i} \\
		&+ \sum_{ i = 1}^n \urho^{(t) }\cdot \| \bxi_i \|_2^{-2} \cdot \bxi_{i} .
	\end{align}
\end{definition}

The normalization factors $\|\bmu\|_{2}^{-2}, \|\bxi_{i}\|_{2}^{-2}$ are to ensure that $\gamma_{j,r}^{(t)} \approx \la \bw_{j,r}^{(t)}, j \bmu \ra$ tracks signal learning and $\rho_{j,r,i}^{(t)} \approx \la \bw_{j,r}^{(t)}, \bxi_{i} \ra$ tracks noise memorization. Using Definition \ref{GD:w_decomposition}, we can reduce the study of the CNN learning process to a careful assessment of the coefficients $\gamma_{j,r}^{(t)}$, $\orho^{(t)}$, $\urho^{(t)}$ throughout training, where $\gamma_{j,r}^{(t)}$ and $\orho^{(t)}$ are increasing monotonically and $\underline{\rho}_{j,r,i}^{(t)}$ is decreasing monotonically. The main idea is that the memorization of noise can achieve $\kappa$ and grow further to a constant level, whereas the learning of the signal is always small and around the initialization values.  %This technique helps us to avoid the complex analysis of the growth of signal learning $\la \bw_{j,r}^{(t)}, j \bmu \ra$ and noise memorization $\la \bw_{j,r}^{(t)}, \bxi_{i} \ra$ directly, which are not monotonically changing and are difficult to analyze. Nevertheless, we can also analyze the growth of $\la \bw_{j,r}^{(t)}, j \bmu \ra$ and $\la \bw_{j,r}^{(t)}, \bxi_{i} \ra$ by utilizing the properties of $\gamma_{j,r}^{(t)}$, $\overline{\rho}_{j,r,i}^{(t)}$, and $\underline{\rho}_{j,r,i}^{(t)}$. \\
%However, \citet{cao2022benign} only characterized the behavior of the leading  neurons by studying $\max_{r}\gamma_{j,r}^{(t)}$, $\max_{r}\zeta_{j,r,i}^{(t)}$. To guarantee that the leading neuron can dominate other neurons after training, they require neurons with different initial weights to have different update speeds, which is guaranteed thanks to the activation function ReLU$^q$ with $q > 2$. But the ReLU function is piece-wise linear, and every activated neuron has the same learning speed $\sigma'(x) = 1$. Therefore dealing with ReLU requires new techniques. 

\noindent \textbf{Stage 1.} We note that the Huberized ReLU activation function is piece-wise continuous, where the part between $[0,\kappa]$ is a polynomial of order $q$, and the part between $[\kappa,\infty)$ is linear, with the threshold $\kappa$. For the first stage, we show that at time $T_1= \tilde \Theta \left( \frac{\kappa^{q-1} mn}{\eta \sigma_0^{q-2} (\sigma_p \sqrt{d})^q} \right)$, there exists a neuron of  noise memorization $\langle\bw_{y_i,r}^{(T_1)}, \bxi_i \rangle $ that can
hit the threshold $\kappa$. However, all neurons of signal learning $\langle\bw_{j,r}^{(t)}, j \mu \rangle$ are around the initialization order.
\begin{lemma} \label{result:GDFirstStage}
	Under Condition \ref{condition}, we can find a time $T_1= \tilde \Theta \left( \frac{\kappa^{q-1} mn}{\eta \sigma_0^{q-2} (\sigma_p \sqrt{d})^q} \right)$, such that
	\begin{itemize}
		\item $\max_{r} \langle\bw_{y_i,r}^{(T_1)}, \bxi_i \rangle \geq \kappa$, $\max_{j,r} \orho^{(T_1)} \geq \kappa$, for all $i \in [n]$.
		\item $\max_{j,r} \langle\bw_{j,r}^{(t)}, j \mu \rangle = \tilde O (\sigma_0 \|\mu\|_2)$, $\max_{j,r} \gam^{(T_1)} = \tilde O (\sigma_0 \|\mu\|_2)$,  for all $0 \leq t \leq T_1$.
		\item $\max_{r,i} \left|\langle\bw_{-y_i,r}^{(t)}, \bxi_i \rangle\right| = \tilde O (\sigma_0 \sigma_p \sqrt{d})$, $\max_{j,r,i} |\urho^{(T_1)}|  = \tilde O (\sigma_0 \sigma_p \sqrt{d})$, for all $0 \leq t \leq T_1$. \\
	\end{itemize}
\end{lemma}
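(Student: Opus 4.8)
The plan is to run a bootstrap induction over the window $0\le t\le T_1$ on the three coefficient sequences $\gamma_{j,r}^{(t)},\orho^{(t)},\urho^{(t)}$ of Definition \ref{GD:w_decomposition}. First I would record the standard initialization estimates, which hold with probability at least $1-\delta$: $\|\bxi_i\|_2^2=\Theta(\sigma_p^2 d)$, $|\langle\bxi_i,\bxi_{i'}\rangle|=\tilde{O}(\sigma_p^2\sqrt{d})$ for $i\ne i'$, $\langle\bmu,\bxi_i\rangle=0$, $|\langle\bw_{j,r}^{(0)},\bmu\rangle|=\tilde{O}(\sigma_0\|\bmu\|_2)$, $|\langle\bw_{j,r}^{(0)},\bxi_i\rangle|=\tilde{O}(\sigma_0\sigma_p\sqrt{d})$, and --- using $m\ge C\log(n\tilde T^*/\delta)$ --- that for each $i$ some filter $r$ satisfies $\langle\bw_{y_i,r}^{(0)},\bxi_i\rangle\ge\Omega(\sigma_0\sigma_p\sqrt{d})>0$. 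Differentiating the decomposition against the GD update \eqref{GDupdate} gives the exact recursions
\begin{align*}
\gamma_{j,r}^{(t+1)} &= \gamma_{j,r}^{(t)} - \tfrac{\eta\|\bmu\|_2^2}{nm}\sum_{i}\ell_i^{\prime(t)}\,\sigma'\!\big(\langle\bw_{j,r}^{(t)},y_i\bmu\rangle\big),\\
\rho_{j,r,i}^{(t+1)} &= \rho_{j,r,i}^{(t)} - \tfrac{\eta\|\bxi_i\|_2^2}{nm}\,\ell_i^{\prime(t)}\,jy_i\,\sigma'\!\big(\langle\bw_{j,r}^{(t)},\bxi_i\rangle\big),
\end{align*}
whence (as $\ell_i^{\prime(t)}<0$ and $\sigma'\ge 0$) $\gamma_{j,r}^{(t)}$ and $\orho^{(t)}$ (the case $j=y_i$) are nondecreasing while $\urho^{(t)}$ (the case $j=-y_i$) is nonincreasing; moreover $\langle\bw_{j,r}^{(t)},\bxi_i\rangle=\langle\bw_{j,r}^{(0)},\bxi_i\rangle+\rho_{j,r,i}^{(t)}+\sum_{i'\ne i}\rho_{j,r,i'}^{(t)}\|\bxi_{i'}\|_2^{-2}\langle\bxi_{i'},\bxi_i\rangle$ and $\langle\bw_{j,r}^{(t)},j\bmu\rangle=\langle\bw_{j,r}^{(0)},j\bmu\rangle+\gamma_{j,r}^{(t)}$.

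The induction hypotheses I would propagate for $t\le T_1$ are: (i) $\gamma_{j,r}^{(t)}=\tilde{O}(\sigma_0\|\bmu\|_2)$; (ii) $|\urho^{(t)}|=\tilde{O}(\sigma_0\sigma_p\sqrt{d})$; (iii) $\orho^{(t)}\le 2\kappa$, so that the contamination term $\sum_{i'\ne i}\rho_{j,r,i'}^{(t)}\|\bxi_{i'}\|_2^{-2}\langle\bxi_{i'},\bxi_i\rangle$ is $\tilde{O}(\kappa n/\sqrt{d})$, negligible beside $\langle\bw_{y_i,r}^{(0)},\bxi_i\rangle=\Theta(\sigma_0\sigma_p\sqrt{d})$ --- here the lower bound $\sigma_0\ge\tilde\Omega\big(n/(\sigma_p d)\big)$ of Condition \ref{condition} is precisely what makes the initialization dominate the contamination. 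Combining these with the identities above shows that for $t<T_1$ every activation argument lies in $[0,\kappa]$ (the upper bound on $\sigma_0$ keeps the signal part $\ll\kappa$), each $F_j=O(1)$, and hence $|\ell_i^{\prime(t)}|=\Theta(1)$. Under these facts, fixing the filter $r^*$ with the largest initial $\langle\bw_{y_i,r}^{(0)},\bxi_i\rangle$, the scalar $x_t:=\langle\bw_{y_i,r^*}^{(t)},\bxi_i\rangle$ obeys, up to negligible corrections, the tensor-power iteration $x_{t+1}=x_t+\Theta(1)\cdot\tfrac{\eta\sigma_p^2 d}{nm}\kappa^{1-q}x_t^{q-1}$ with $x_0=\Theta(\sigma_0\sigma_p\sqrt{d})$; since $q\ge3$ this blows up in finite time, and a two-sided comparison with the flow $\dot x=c\,x^{q-1}$ pins the first time the maximal noise coordinate crosses the $\Theta(\kappa)$-level forcing $\orho^{(t)}\ge\kappa$ at $\tilde\Theta\!\big(\tfrac{\kappa^{q-1}mn}{\eta\sigma_0^{q-2}(\sigma_p\sqrt{d})^q}\big)=T_1$. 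A union bound over $i\in[n]$ then yields the first bullet.

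For the second bullet, $\sigma'$ in the $\gamma$-recursion is always evaluated at arguments of magnitude $\tilde{O}(\sigma_0\|\bmu\|_2)\ll\kappa$, so a single step raises $\gamma_{j,r}^{(t)}$ by at most $\tilde{O}\!\big(\tfrac{\eta\|\bmu\|_2^2}{m}\kappa^{1-q}(\sigma_0\|\bmu\|_2)^{q-1}\big)$; multiplying by $T_1$ collapses the total growth to $\tilde{O}\big(n\cdot\text{SNR}^q\cdot\sigma_0\|\bmu\|_2\big)=\tilde{O}(\sigma_0\|\bmu\|_2)$, precisely because Condition \ref{condition} forces $\text{SNR}^{-1}\ge\tilde\Omega(n^{1/q})$, i.e., $n\cdot\text{SNR}^q=\tilde{O}(1)$; this closes (i). The third bullet closes (ii) in the same spirit: $\urho^{(t)}$ changes only when $\langle\bw_{-y_i,r}^{(t)},\bxi_i\rangle>0$, where $\sigma'$ is then at most $\kappa^{1-q}\big(\tilde{O}(\sigma_0\sigma_p\sqrt{d})\big)^{q-1}$, and $T_1$ times the per-step decrement is once more $\tilde{O}(\sigma_0\sigma_p\sqrt{d})$; the stated bounds on $\urho^{(T_1)}$ and on $\langle\bw_{-y_i,r}^{(t)},\bxi_i\rangle$ for all $t\le T_1$ follow from the identities above.

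The main obstacle I anticipate is the simultaneity of the bootstrap rather than any single estimate: the clean tensor-power recursion for the leading noise coordinate is valid only while the signal stays at initialization scale and the contamination remains negligible, yet controlling those in turn relies on still being within $T_1$ steps and on $|\ell_i^{\prime(t)}|=\Theta(1)$ --- so all four invariants must be advanced jointly, and the absolute constants hidden in the $\tilde\Theta$ defining $T_1$ (and the constant in the SNR lower bound) must be chosen compatibly so that Stage 1 neither terminates prematurely nor lasts long enough for $\gamma_{j,r}^{(t)}$ to escape its initialization band. A secondary difficulty is making the discrete-to-continuous comparison sharp in both directions to obtain a genuine $\tilde\Theta$ for the hitting time, which requires the increments of $x_t$ to stay a small fraction of $x_t$ until $x_t$ is within a constant factor of $\kappa$ --- this is where the smallness of $\eta$ in Condition \ref{condition} enters.
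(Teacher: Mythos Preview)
Your proposal is correct and follows essentially the same approach as the paper: a bootstrap on the signal--noise decomposition coefficients, using the SNR condition $n\cdot\mathrm{SNR}^q=\tilde O(1)$ to keep $\gamma_{j,r}^{(t)}$ at initialization scale, and a tensor-power recursion on the leading noise inner product to pin the hitting time $T_1$. The one minor organizational difference is that the paper controls $\urho^{(t)}$ by a global barrier argument (once $\urho^{(t)}<-\beta-\zeta$ the inner product goes negative and $\sigma'=0$, so $\urho$ freezes) proved in a separate proposition valid for all $t\le T^*$, whereas you telescope the decrements over $t\le T_1$; both work, and your key observation that the positive part of $\langle\bw_{-y_i,r}^{(t)},\bxi_i\rangle$ is controlled independently of $|\urho^{(t)}|$ (since $\urho\le 0$) is exactly what makes the telescoping close.
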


\noindent \textbf{Stage 2.} For the second stage, we demonstrate that the training loss will converge to the accuracy $\epsilon$ at time $T^*=\tilde{O}\left( \frac{\kappa^{q-1} mn}{\eta \sigma_0^{q-2} (\sigma_p \sqrt{d})^q} + \frac{m^{3}n}{\eta \epsilon \norm{\bmu}_2^2} \right)$. Moreover, since $\sum_{s=T_{1}}^{\tilde{T} - 1} \sum_{i=1}^n |\ell_i'^{(t)}| \leq \sum_{s=T_{1}}^{\tilde{T} - 1}L_{S}(\bW^{(s)})$, the convergence of the training error in Stage 2 indicates that the growth of the signal learning $\langle\bw_{j,r}^{(t)}, j \mu \rangle$ is still small and around the initialization order.

\begin{lemma}\label{result:training convergence}
	Under Condition \ref{condition}, let $T^* = T_{1} + \Big\lfloor \frac{\|\bW^{(T_{1})} - \bW^{*}\|_{F}^{2}}{2\eta \epsilon}
	\Big\rfloor  = T_{1} + \tilde{O}\left(\frac{m^{3}n}{\eta \epsilon \norm{\bmu}_2^2} \right)$. Then we have
	\begin{itemize}
		\item $\max_{j,r} \la\bw_{j,r}^{(t)}, j \bmu\ra = \tilde{O} (\sigma_0 \norm{\bmu}_2)$, $\max_{j,r}\gamma_{j,r}^{(t)} = \tilde{O} (\sigma_0 \norm{\bmu}_2)$, for all $T_{1} \leq t \leq T^*$.
		\item $\max_{j,r,i}|\urho^{(t)}| = \tilde{O}(\sigma_{0}\sigma_{p}\sqrt{d})$, for all $T_{1} \leq t \leq T$.
	\end{itemize}
	Besides, for all $T_{1} \leq t \leq T^*$, we have
	\begin{align*}
		&\frac{1}{t - T_{1} + 1}\sum_{s=T_{1}}^{t}L_{S}(\bW^{(s)}) \\
		\leq& \frac{\|\bW^{(T_{1})} - \bW^{*}\|_{F}^{2}}{(2q-1) \eta(t - T_{1} + 1)} + \frac{\epsilon}{(2q-1)}.
	\end{align*}
	Therefore, we can find an iterate with training loss smaller than $\epsilon$ within $T$ iterations.
\end{lemma}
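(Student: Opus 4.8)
The plan is to run a single \emph{simultaneous induction} on the iteration index $t$ over $[T_1,T^*]$, maintaining at every step: (a) the signal bound $\max_{j,r}\langle\bw_{j,r}^{(t)},j\bmu\rangle=\tilde{O}(\sigma_0\|\bmu\|_2)$ together with $\max_{j,r}\gamma_{j,r}^{(t)}=\tilde{O}(\sigma_0\|\bmu\|_2)$; (b) the opposite-sign noise bound $\max_{j,r,i}|\urho^{(t)}|=\tilde{O}(\sigma_0\sigma_p\sqrt{d})$; and (c) the one-step energy inequality
\[
\|\bW^{(t+1)}-\bW^*\|_F^2\ \leq\ \|\bW^{(t)}-\bW^*\|_F^2-(2q-1)\,\eta\,L_S(\bW^{(t)})+\eta\epsilon ,
\]
whose telescoped and averaged form is exactly the training-loss bound in the statement. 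The base case $t=T_1$ is supplied by Lemma~\ref{result:GDFirstStage}. Throughout, $\bW^*$ denotes the reference network with $\bw_{j,r}^*=\bw_{j,r}^{(0)}+2qm\log(2q/\epsilon)\sum_{i=1}^n\indicator(y_i=j)\,\|\bxi_i\|_2^{-2}\bxi_i$; by Gaussian near-orthogonality of $\{\bxi_i\}$ and $\|\bxi_i\|_2^2=\Theta(\sigma_p^2d)$ one checks $y_if(\bW^*,\bx_i)\ge\log(1/\epsilon)$ for all $i$ (hence $q\,L_S(\bW^*)\le\epsilon/2$), while combining the reference construction with the Stage-1 estimates on $\bW^{(T_1)}$ gives $\|\bW^{(T_1)}-\bW^*\|_F^2=\tilde{O}(m^3n/\|\bmu\|_2^2)$.

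For part~(c) I expand $\|\bW^{(t+1)}-\bW^*\|_F^2$ via the GD update \eqref{GDupdate}, producing the cross term $-2\eta\langle\nabla L_S(\bW^{(t)}),\bW^{(t)}-\bW^*\rangle$ and the quadratic term $\eta^2\|\nabla L_S(\bW^{(t)})\|_F^2$. Since the Huberized ReLU satisfies $\sigma'\le1$ and $|\ell'|\le\ell$, a self-bounding estimate gives $\|\nabla L_S(\bW^{(t)})\|_F^2\le O(\max\{\|\bmu\|_2^2,\sigma_p^2d\})\,L_S(\bW^{(t)})$, so Condition~\ref{condition}(6) lets this quadratic term be absorbed into $\eta L_S(\bW^{(t)})$. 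For the cross term I use the ``almost-convexity'' inequality $\langle\nabla L_S(\bW^{(t)}),\bW^{(t)}-\bW^*\rangle\ge q\,L_S(\bW^{(t)})-\tfrac12\epsilon$, following the line of \citet{cao2022benign}: writing $\nabla L_S(\bW^{(t)})=\frac1n\sum_i\ell_i'^{(t)}\nabla_\bW\big(y_if(\bW^{(t)},\bx_i)\big)$, invoking convexity of $\ell$ and of each $F_j$ (the Huberized ReLU is convex and nondecreasing) and the homogeneity sandwich $\sigma(z)\le z\,\sigma'(z)\le q\,\sigma(z)$ valid in both the polynomial and the linear piece of $\sigma$, and using the properties of $\bW^*$ to control the inner products $\langle\cdot,\bW^*\rangle$; the factor $q$ here, together with the $-1$ lost to the quadratic term, produces the constant $2q-1$. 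Combining the two estimates gives the energy inequality; summing from $T_1$ to $t$, dropping $\|\bW^{(t+1)}-\bW^*\|_F^2\ge0$, and dividing by $t-T_1+1$ yields the stated average bound. Taking $t=T^*$ with $T^*-T_1=\lfloor\|\bW^{(T_1)}-\bW^*\|_F^2/(2\eta\epsilon)\rfloor=\tilde{O}(m^3n/(\eta\epsilon\|\bmu\|_2^2))$ makes the right-hand side at most $\tfrac{3\epsilon}{2q-1}<\epsilon$, so some $t\le T^*$ has $L_S(\bW^{(t)})<\epsilon$.

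For parts~(a)--(b) I use the coefficient recursions. Under the inductive hypothesis, for every $s\le t$ we have $\langle\bw_{j,r}^{(s)},j\bmu\rangle=\langle\bw_{j,r}^{(0)},j\bmu\rangle+\gamma_{j,r}^{(s)}=\tilde{O}(\sigma_0\|\bmu\|_2)$ and, up to the $\tilde{O}(\sigma_p^2\sqrt{d})$ inner products between distinct noise vectors, $\langle\bw_{-y_i,r}^{(s)},\bxi_i\rangle=\tilde{O}(\sigma_0\sigma_p\sqrt{d})$; by Condition~\ref{condition}(5) both are far below $\kappa$, so the relevant $\sigma'$ values are only $\tilde{O}(\kappa^{1-q}(\sigma_0\|\bmu\|_2)^{q-1})$ and $\tilde{O}(\kappa^{1-q}(\sigma_0\sigma_p\sqrt{d})^{q-1})$ respectively. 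Feeding this into $\gamma_{j,r}^{(s+1)}=\gamma_{j,r}^{(s)}-\frac{\eta\|\bmu\|_2^2}{nm}\sum_{i:y_i=j}\ell_i'^{(s)}\sigma'(\langle\bw_{j,r}^{(s)},j\bmu\rangle)$ and the analogous recursion for $\urho$, summing over $s\in[T_1,t]$, and bounding $\sum_{s=T_1}^{t}\frac1n\sum_i|\ell_i'^{(s)}|\le\sum_{s=T_1}^{t}L_S(\bW^{(s)})=\tilde{O}(m^3n/(\eta\|\bmu\|_2^2))$ by part~(c), the accumulated changes of $\gamma_{j,r}$ and $|\urho|$ stay within the claimed orders, precisely because the upper bound on $\sigma_0$ in Condition~\ref{condition}(5) is calibrated so that $(\sigma_0\|\bmu\|_2)^{q-2}$ and $(\sigma_0\sigma_p\sqrt{d})^{q-2}$ dominate the leftover polynomial factors in $m$, $n$, $\kappa^{-1}$. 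This closes the induction and yields parts~1--2 of the lemma.

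The main obstacle is the circular dependence: the energy inequality of part~(c) needs the coefficient bounds of parts~(a)--(b) in order to keep $F_{-y_i}(\bW^{(t)},\bx_i)$, the cross terms with $\bW^*$, and $L_S(\bW^*)$ under control and hence to justify the almost-convexity step, while parts~(a)--(b) need the loss-sum bound produced by part~(c). This is resolved only by carrying the induction on (a)--(c) jointly in $t$. The remaining work is careful bookkeeping of the Gaussian near-orthogonality events (valid with high probability under Condition~\ref{condition}(3)--(4)) and of the passage of each activation argument between the polynomial and linear regimes of the Huberized ReLU, together with the one-time estimate $\|\bW^{(T_1)}-\bW^*\|_F^2=\tilde{O}(m^3n/\|\bmu\|_2^2)$ inherited from Stage~1.
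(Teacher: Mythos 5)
Your high-level mechanism is correct and matches the paper: a reference network $\bW^*$ loaded in the noise directions, a potential-function ``energy'' inequality $\|\bW^{(t)}-\bW^*\|_F^2 - \|\bW^{(t+1)}-\bW^*\|_F^2 \geq (2q-1)\eta L_S(\bW^{(t)})-\eta\epsilon$ proved via the $\sigma'(z)z \leq q\sigma(z)$ homogeneity bound and a cross-term estimate $y_i\langle\nabla f(\bW^{(t)},\bx_i),\bW^*\rangle\geq q\log(2q/\epsilon)$, the self-bounding $\|\nabla L_S\|_F^2\leq O(\max\{\|\bmu\|_2^2,\sigma_p^2 d\})L_S$ to absorb the quadratic term, telescoping and averaging, and a forward induction that uses $\sum_s L_S(\bW^{(s)})=\tilde{O}(m^3n/(\eta\|\bmu\|_2^2))$ together with $|\ell'|\leq\ell$ to hold $\gamma_{j,r}^{(t)}$ at initialization order.

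However, your claimed ``circular dependence'' between parts (a)--(b) and part (c) is spurious, and resolving it by a single joint induction is both unnecessary and, as written, leaves a gap. What the energy inequality actually requires is not the tight bound $\gamma_{j,r}^{(t)}=\tilde{O}(\sigma_0\|\bmu\|_2)$, but only (i) a coarse bound on all coefficients and (ii) the invariant that some right-label noise neuron stays in the linear regime, i.e. $\max_r\langle\bw_{y_i,r}^{(t)},\bxi_i\rangle\geq\kappa$ for all $t\geq T_1$. In the paper both are delivered independently of the loss sum: the coarse bound is Proposition~\ref{proposition:GDgrowthbound}, whose induction is self-contained over all $0\leq t\leq T^*$, and (ii) comes from Stage~1 plus the monotonicity of $\orho^{(t)}$. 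With those in hand, Lemma~\ref{lm: Gradient Stable2} and the energy inequality hold unconditionally on $[T_1,T^*]$, the loss-sum bound follows, and the refined signal bound is a \emph{separate, downstream} induction. Your proposed joint induction, on the other hand, maintains only the signal bound $\gamma_{j,r}$ and the wrong-label coefficient $\urho$; nothing in your inductive hypothesis controls $\orho^{(t)}$ or guarantees $\max_r\sigma'(\langle\bw_{y_i,r}^{(t)},\bxi_i\rangle)=1$. Without this, the cross-term lower bound $y_i\langle\nabla f(\bW^{(t)},\bx_i),\bW^*\rangle\geq q\log(2q/\epsilon)$ that your part (c) requires cannot be established at step $t$, so the induction does not close. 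The fix is to carry the additional invariant on $\orho$ (which is exactly what Proposition~\ref{proposition:GDgrowthbound} supplies) --- at which point the joint induction collapses into the paper's decoupled argument.

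One smaller inaccuracy: you invoke ``convexity of each $F_j$'' to obtain the cross term, but $f = F_{+1}-F_{-1}$ is not convex in $\bW$, and convexity of $F_j$ alone does not give $y_i\langle\nabla f,\bW^{(t)}\rangle\leq q\,y_if(\bW^{(t)},\bx_i)$. The actual mechanism is that on the $+y_i$ side $\sigma'(z)z\leq q\sigma(z)$ gives the one-sided bound, while on the $-y_i$ side the inequality would go the wrong way in the linear regime; it is saved precisely because $F_{-y_i}(\bW^{(t)},\bx_i)$ stays in the polynomial regime $[0,\kappa]$ (Lemma~\ref{Fminusyibound}, via the coarse bounds), where $\sigma'(z)z=q\sigma(z)$ holds with equality. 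This regime separation is the load-bearing step, not convexity of $F_j$.
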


\noindent \textbf{Generalization analysis.} To prove test loss is large, we only need to show that for a new example $(\bx,y)$, the noise term $|\la\bw_{j,r}^{(t)}, \bxi\ra|$ and the signal term $|\la \bw_{y,r}^{(t)}, y\bmu \ra|$ are both small. Notice that $\la\bw_{j,r}^{(t)}, \bxi\ra \sim \cN(0, \sigma_{p}^{2}\|\bw_{j,r}^{(t)}\|_{2}^{2})$. Therefore, we can show that with high probability
\begin{align*} 
	|\la\bw_{j,r}^{(t)}, \bxi\ra| \leq \tilde{O} \left( \sigma_{0}\sigma_{p}\sqrt{d} + \frac{mn}{ \sqrt{d}} \right) = O(\kappa).
\end{align*}
This, together with the first property in Lemma \ref{result:training convergence} demonstrates that $y_i f(\bW^{(t)},\bx)\leq 1$, thus the test loss is large at a constant level. \\

%\noindent \textbf{Test Error.} 
To prove the lower bound of test error, we first show that at time $T_2= T_1+ \frac{36 nm^2}{\eta \sigma_p^2d}$, the neurons have memorized the noises, i.e., $\sum_{r=1}^m \overline{\rho}_{y_i,r,i}^{(t)} \geq m$.

\begin{lemma} \label{result:GDnoisehit2}
	Under Condition \ref{condition}, let $T_2= T_1+ \frac{36 nm^2}{\eta \sigma_p^2d}$. For the time period $T_2 \leq t \leq T^*$ and all $i \in[n]$, we have
	\begin{align*}
		\sum_{r=1}^m \overline{\rho}_{y_i,r,i}^{(t)} \geq m.
	\end{align*}
\end{lemma}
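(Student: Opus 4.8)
The plan is to combine the monotonicity of the noise‑memorization coefficients with a uniform linear lower bound on their growth rate over the window $[T_1,T^*]$. From the signal--noise decomposition (Definition \ref{GD:w_decomposition}) and the update rule \eqref{GDupdate}, the coefficients obey, for $j=y_i$,
\begin{align*}
\overline{\rho}_{y_i,r,i}^{(t+1)} = \overline{\rho}_{y_i,r,i}^{(t)} - \frac{\eta}{nm}\,\ell_i'^{(t)}\,\sigma'\!\big(\langle\bw_{y_i,r}^{(t)},\bxi_i\rangle\big)\,\|\bxi_i\|_2^2 ,
\end{align*}
and since $-\ell_i'^{(t)}=\big(1+\exp(y_i f(\bW^{(t)},\bx_i))\big)^{-1}>0$, $\sigma'\ge 0$, and $\|\bxi_i\|_2^2>0$, every $\overline{\rho}_{y_i,r,i}^{(t)}$, hence also $\sum_{r=1}^m\overline{\rho}_{y_i,r,i}^{(t)}$, is non-decreasing in $t$. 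Therefore it suffices to show that for each $i\in[n]$ there is a time $t_i\in[T_1,T_2]$ with $\sum_{r}\overline{\rho}_{y_i,r,i}^{(t_i)}\ge m$; monotonicity then delivers the claim for all $T_2\le t\le T^*$.

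Fix $i$. By Lemma \ref{result:GDFirstStage} there is a filter index $r_i$ with $\overline{\rho}_{y_i,r_i,i}^{(T_1)}\ge\kappa$. First I would show this filter stays active: writing $\langle\bw_{y_i,r_i}^{(t)},\bxi_i\rangle = \langle\bw_{y_i,r_i}^{(0)},\bxi_i\rangle + \overline{\rho}_{y_i,r_i,i}^{(t)} + \sum_{i'\ne i}\rho_{y_i,r_i,i'}^{(t)}\|\bxi_{i'}\|_2^{-2}\langle\bxi_{i'},\bxi_i\rangle$ (the signal cross-term vanishes since $\langle\bmu,\bxi_i\rangle=0$), the initialization term and the cross-example contamination are, on the standard near-orthogonality event and by Conditions \ref{condition}.3 and \ref{condition}.5, both $o(\kappa)$ uniformly over $t\le T^*$; combined with the monotone growth $\overline{\rho}_{y_i,r_i,i}^{(t)}\ge\kappa$ this gives $\langle\bw_{y_i,r_i}^{(t)},\bxi_i\rangle\ge\kappa-o(\kappa)$, hence $\sigma'(\langle\bw_{y_i,r_i}^{(t)},\bxi_i\rangle)\ge 1/2$, for all $T_1\le t\le T^*$. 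Next I would show that whenever $\sum_r\overline{\rho}_{y_i,r,i}^{(t)}<m$ one has $|\ell_i'^{(t)}|\ge c_0$ for an absolute constant $c_0>\frac19$: plugging the decomposition into $f$ and using Lemma \ref{result:training convergence} ($\gamma_{j,r}^{(t)}=\tilde O(\sigma_0\|\bmu\|_2)$, $|\underline{\rho}_{j,r,i}^{(t)}|=\tilde O(\sigma_0\sigma_p\sqrt d)$), the same concentration bounds, and the elementary inequalities $0\le\sigma(z)\le\max\{z,0\}$ and $\sigma(z)\le\kappa/q$ for $z\le\kappa$, every contribution to $y_i f(\bW^{(t)},\bx_i)$ other than $\frac1m\sum_r\overline{\rho}_{y_i,r,i}^{(t)}$ is $o(1)$ while $F_{+1},F_{-1}\ge 0$, so $y_i f(\bW^{(t)},\bx_i)\le\frac1m\sum_r\overline{\rho}_{y_i,r,i}^{(t)}+o(1)<1+o(1)$, and thus $|\ell_i'^{(t)}|=(1+e^{y_i f(\bW^{(t)},\bx_i)})^{-1}\ge(1+e^{1+o(1)})^{-1}=:c_0>\frac19$.

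Combining the two facts, keeping only the $r=r_i$ term in the recursion and using $\|\bxi_i\|_2^2\ge\frac12\sigma_p^2 d$,
\begin{align*}
\sum_{r=1}^m\overline{\rho}_{y_i,r,i}^{(t+1)} \ge \sum_{r=1}^m\overline{\rho}_{y_i,r,i}^{(t)} + \frac{c_0\,\eta\,\sigma_p^2 d}{4nm}
\end{align*}
for every $T_1\le t\le T^*$ with $\sum_r\overline{\rho}_{y_i,r,i}^{(t)}<m$. Since $\overline{\rho}\ge 0$, the partial sum must reach $m$ within at most $4nm^2/(c_0\eta\sigma_p^2 d)\le 36nm^2/(\eta\sigma_p^2 d)=T_2-T_1$ steps after $T_1$; that is $\sum_r\overline{\rho}_{y_i,r,i}^{(T_2)}\ge m$, and by the monotonicity above $\sum_r\overline{\rho}_{y_i,r,i}^{(t)}\ge m$ for all $T_2\le t\le T^*$. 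A union bound over $i\in[n]$ together with the events of Lemmas \ref{result:GDFirstStage} and \ref{result:training convergence} and the concentration of $\{\|\bxi_i\|_2\}_{i\in[n]}$ and $\{\langle\bxi_i,\bxi_{i'}\rangle\}_{i\ne i'}$ completes the proof.

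\noindent\textbf{Main obstacle.} The crux is the constant lower bound $|\ell_i'^{(t)}|\ge c_0$: it requires bounding $y_i f(\bW^{(t)},\bx_i)$ from above using \emph{only} the hypothesis $\sum_r\overline{\rho}_{y_i,r,i}^{(t)}<m$, which forces one to simultaneously control the signal terms in $F_{y_i}$, the entire opposite-class network $F_{-y_i}$, the sub-threshold noise filters (where $\sigma$ is the size-$\kappa/q$ polynomial branch), and all cross-example noise terms. This is exactly where the over-parameterization bound on $d$ and the two-sided bound on $\sigma_0$ enter, and it relies on the bookkeeping fact --- which has to be carried along the induction together with the growth estimate --- that for every $t\le T^*$ the inner product $\langle\bw_{y_i,r}^{(t)},\bxi_i\rangle$ agrees with $\overline{\rho}_{y_i,r,i}^{(t)}$ up to a uniformly $o(\kappa)$ error.
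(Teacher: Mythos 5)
Your proposal takes a genuinely different route from the paper for the final growth estimate. The paper bounds $y_i f(\bW^{(t)},\bx_i) \le \lambda_i^{(t)} + \log 2$ with $\lambda_i^{(t)}=\frac1m\sum_r\overline\rho_{y_i,r,i}^{(t)}$, deduces $-\ell_i'^{(t)}\ge\frac16 e^{-\lambda_i^{(t)}}$, feeds this into the recursion to get $\lambda_i^{(t+1)}\ge\lambda_i^{(t)}+\frac{\eta\sigma_p^2 d}{12nm^2}e^{-\lambda_i^{(t)}}$, and then applies the logarithmic tensor-power bound (Lemma \ref{tensorpower2}) to conclude $\lambda_i^{(T_2)}\ge\log\big(3+e^{\kappa/m}\big)>1$. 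You instead freeze $|\ell_i'^{(t)}|\ge c_0$ as long as $\lambda_i^{(t)}<1$, which yields a constant per-step increment and a stopping-time argument; this is more elementary (no tensor-power lemma) and gives exactly the same $T_2-T_1$ up to constants. The paper's route additionally delivers the running logarithmic lower bound on $\lambda_i^{(t)}$ for all $t\ge T_1$, which costs nothing extra once Lemma \ref{tensorpower2} is in hand, but for the stated lemma both are fine.

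There is one concrete gap. You claim $\sigma'\!\big(\langle\bw_{y_i,r_i}^{(t)},\bxi_i\rangle\big)\ge\frac12$ from $\langle\bw_{y_i,r_i}^{(t)},\bxi_i\rangle\ge\kappa-o(\kappa)$, using only $\overline\rho_{y_i,r_i,i}^{(t)}\ge\kappa$ (monotonicity plus the $T_1$-conclusion of Lemma \ref{result:GDFirstStage}). But the deviation $|\langle\bw_{y_i,r}^{(t)},\bxi_i\rangle-\overline\rho_{y_i,r,i}^{(t)}|\le\beta+\zeta$ is only controlled at the level $\beta+\zeta\le0.2\kappa$ under Condition \ref{condition}; it is $\Theta(\kappa)$, not $o(\kappa)$. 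On the polynomial branch $\sigma'(z)=\kappa^{1-q}z^{q-1}$, so $\sigma'(0.8\kappa)=(0.8)^{q-1}$, which is already below $\tfrac12$ at $q=5$ and decays geometrically in $q$; the resulting per-step increment then fails to reach $m$ within $T_2-T_1=\frac{36nm^2}{\eta\sigma_p^2 d}$ for large $q$. The fix is to use the margin that the proof of Lemma \ref{result:GDFirstStage} actually establishes --- the filter that crosses the threshold satisfies $\overline\rho_{y_i,r_i,i}^{(T_1)}\approx 2$ (not merely $\ge\kappa$), so $\langle\bw_{y_i,r_i}^{(t)},\bxi_i\rangle\ge 2-O(\kappa)\ge\kappa$ for all $t\ge T_1$ and the activation sits strictly on the linear branch, giving $\sigma'=1$ exactly. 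This is what the paper's proof also implicitly relies on when it asserts $\max_{j,r}\langle\bw_{j,r}^{(t)},\bxi_i\rangle\ge\kappa$ for $t\ge T_1$. With that margin in place, your argument closes cleanly.
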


Denote $g(\bxi) = \sum_{r} \sigma(\la \bw_{1, r}^{(t)}, \bxi \ra) - \sum_{r} \sigma(\la \bw_{-1, r}^{(t)}, \bxi \ra) $, and the set 
$\Omega := \bigg\{\bxi \bigg|  g(\bxi) >  \tilde O(m \sigma_0 \norm{\bmu}_2)  \bigg\}. $
Since the signal learning of each neuron is at most $\tilde{O}(\sigma_0 \norm{\bmu}_2)$, we can  give a lower bound of the test error by
\begin{equation*}
	\begin{aligned}
		\mathcal{R}_{\mathcal{D}}(\bW^{(t)}) &= \mathbb{P} \big(y f(\bW^{(t)},\bx) < 0 \big)\\
		& \geq 0.5 \mathbb{P} \bigg(  \sum_{r=1}^m \sigma(\la \bw_{1,r}^{(t)}, \bxi \ra) - \sum_{r=1}^m \sigma(\la \bw_{-1,r}^{(t)}, \bxi \ra) \\
		&> \tilde O(m \sigma_0 \norm{\bmu}_2)  \bigg) \geq 0.5 \mathbb{P}(\Omega).
	\end{aligned}
\end{equation*}

Finally, by utilizing Lemma \ref{result:GDnoisehit2},  the symmetry property of $\bxi$ and the properties of total variance distance, we can derive that $\PP(\Omega) \geq 0.23$, therefore get the desired results on the test error lower bound.

\subsection{Signal Learning of DP-GD}
The theoretical analysis of DP-GD hinges on the following specific conditions, where we identify an SNR condition $\text{SNR}^{-1} \leq \min\left\{ \frac{\sqrt{d}}{Cm^2}, \frac{\sqrt{n}}{C}\right\}$, and mild conditions on the choice of hyperparameters in the problem setting and training algorithm. 
%The main results suggest that GD results in noise memorization during the training process, and therefore achieve poor generalization performance.
We consider the learning period $0 \leq t \leq \tilde T^*$, where $\tilde T^*$ is the maximum number of iterations. 

\begin{condition} \label{condition2}
	%Denote $\tilde T_1 = \tilde C_1 \frac{\kappa^2}{\eta^2 \sigma_b^2 \| \bmu \|_2^2}$. 
	Suppose there exists a sufficiently large constant $C$, such that the following hold: 
	
	\begin{enumerate}
		%\item The threshold $\kappa$ of the activation function satisfies: \\
		%$\kappa^2 \leq \frac{n^2 \norm {\bmu}_2^4 \min\{ \norm {\bmu}_2^2 , \sigma_p^2 d \} }{C m^3  \max^3\{n\norm {\bmu}_2^2 , \sigma_p^2 d \}}$.
		\item The threshold $\kappa$ of the activation function is sufficiently small: 
		$\kappa^2 \leq \frac{ \min\{ \norm \bmu_2^{2}, \sigma_p^2d\}}{C m^2  \max\{ \norm \bmu_2^{2}, \sigma_p^2d\}}$.
		\item The SNR satisfies: $\text{SNR}^{-1} \leq \min\big\{ \frac{\sqrt{d}}{Cm^2}, \frac{\sqrt{n}}{C}\big\}$.
		\item  The dimension $d$ is sufficiently large: \\
		$d\geq C \max\left\{m^4 n^2, \frac{m^2 n^2}{\kappa^2} \log(\frac{n^2}{\delta}) (\log(T^*))^2 \right\}$. 
		\item The training sample size $n$ and the convolutional kernel size $m$ of CNNs is sufficiently large: \\
		$n\geq C\log(\frac{m}{\delta})$, $m\geq C\log \big(\frac{n \tilde T^*}{\delta}\big)$.
		%\item The standard deviation of the noise vector is sufficiently small: \\
		%$\sigma_p \leq C\left(\max\left\{ \frac{mn}{\tilde \epsilon \norm{\bmu}_2}, \frac{\sqrt{mn} d^{\frac{1}{4}}}{\sqrt{\tilde \epsilon} \norm{\bmu}_2} \right\} \right)^{-1}$.
		\item The standard deviation of the Gaussian initialization $\sigma_0$ satisfies: \\
		$\sigma_0 \leq  \min \Big\{\frac{1}{C m \sigma_p \sqrt{d}}, \frac{\kappa}{C \max\{\norm{\bmu}_2,\sigma_p \sqrt{d}\} \sqrt{\log\big(\frac{mn}{\delta} \big)}}  \Big\}$.
		\item The learning rate $\eta$ satisfies: \\
		$\frac{Cm^3 \kappa^2  \max\{ \norm \bmu_2^{2}, \sigma_p^2d\}}{ \norm \bmu_2^{2} \min\{\norm{\bmu}_2^2,\sigma_p^2d\}} \leq \eta \leq  \frac{m}{C  \norm \bmu_2^{2}}. $
		%, \frac{n \norm \bmu_2^{2}}{m}
		\item The standard deviation of the Gaussian noise $\sigma_b$ satisfies: \\
		$  \sigma_b \leq \Big(C \eta \max\big\{\norm{\bmu}_2,\sigma_p \sqrt{d}\big\} \sqrt{\tilde T^*}  \log ^2\big(\frac{mn\tilde T^*}{\delta}\big)\Big)^{-1}$.
	\end{enumerate}
\end{condition}

The condition on the SNR is to ensure that the privacy guarantee and the test loss of DP-GD are good. The upper bounds on $\sigma_0$ and $\eta$ are to ensure that the test loss is good. The lower bound on $\eta$ is to ensure that the value of signal learning can achieve the threshold $\kappa$ at a constant order time. The condition on $\sigma_b$ is to ensure that the influence of the added Gaussian noise is smaller than the learning of the signal.

The following theorem demonstrates that the training loss of DP-GD can achieve an arbitrarily small accuracy $\epsilon$ within $\tilde T^*= \Theta \left(\frac{\kappa^2}{\eta^2 \sigma_b^2 \min\{\norm{\bmu}_2^2,\sigma_p^2d\}}+ \frac{nm^2}{\eta \epsilon \max\{\norm{\bmu}_2^2,\sigma_p^2d\}} \right)$ iterations.

\begin{theorem} \label{result:NGDtraining}
	Under Condition \ref{condition2}, for any $\epsilon >0$, denote $\tilde T_1= \Theta \left(\frac{\kappa^2}{\eta^2 \sigma_b^2 \min\{\norm{\bmu}_2^2,\sigma_p^2d\}} \right)$, if we choose $\tilde T^*= \tilde T_1 + \Theta \left(\frac{nm^2}{\eta \epsilon \max\{\norm{\bmu}_2^2,\sigma_p^2d\}} \right)$,  with probability at least $1-6\delta$, we have $L_S(\bW^{(\tilde T^*)}) \leq \epsilon$. 
\end{theorem}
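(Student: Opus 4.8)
The plan is to transfer the signal--noise decomposition of the GD analysis to the DP-GD trajectory and then run a two-stage argument that splits the horizon at $\tilde T_1$. The one structurally new object is the accumulated privacy noise $\bm{e}_{j,r}^{(t)} := -\eta\sum_{s=0}^{t-1}\bb_{j,r,s}$, so that, mirroring Definition~\ref{GD:w_decomposition},
\[
\bw_{j,r}^{(t)} \;=\; \bw_{j,r}^{(0)} \;+\; \bm{e}_{j,r}^{(t)} \;+\; \gamma_{j,r}^{(t)}\,\|\bmu\|_2^{-2}\, j\bmu \;+\; \sum_{i=1}^{n}\rho_{j,r,i}^{(t)}\,\|\bxi_i\|_2^{-2}\,\bxi_i ,
\]
with scalar recursions for $\gamma_{j,r}^{(t)}$ and $\rho_{j,r,i}^{(t)}$ that coincide with the GD ones up to lower-order corrections coming from the (controlled) non-orthogonality of $\bm{e}_{j,r}^{(t)}$ with $\bmu$ and the $\bxi_i$; in particular $\gamma_{j,r}^{(t)}$ and $\overline\rho_{j,r,i}^{(t)}$ remain nonnegative and nondecreasing.

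First I would fix a good event of probability at least $1-6\delta$ carrying the usual initialization and near-orthogonality bounds for $\{\bw_{j,r}^{(0)}\}$ and $\{\bxi_i\}$ together with, uniformly over all $t\le\tilde T^{*}$ and all $j,r,i$, the estimates $\|\bm{e}_{j,r}^{(t)}\|_2 = \tilde O(\eta\sigma_b\sqrt{td})$, $|\langle \bm{e}_{j,r}^{(t)},\bxi_i\rangle| = \tilde O(\eta\sigma_b\sqrt{t}\,\sigma_p\sqrt d)$ and $|\langle \bm{e}_{j,r}^{(t)},\bmu\rangle| = \tilde O(\eta\sigma_b\sqrt{t}\,\|\bmu\|_2)$, from Gaussian tails and a union bound. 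The upper bound on $\sigma_b$ in Condition~\ref{condition2} makes the first two negligible over the whole run, while the third says $\langle \bw_{j,r}^{(t)}, j\bmu\rangle$ performs --- on top of a weak gradient drift --- a random walk in the signal direction whose typical magnitude reaches $\kappa$ exactly at $\tilde T_1 = \Theta(\kappa^2/(\eta^2\sigma_b^2\min\{\|\bmu\|_2^2,\sigma_p^2 d\}))$. In Stage~1 ($0\le t\le\tilde T_1$) I would then show by induction that $\langle \bw_{j,r}^{(t)}, j\bmu\rangle$ and all $\rho_{j,r,i}^{(t)}$ stay $\tilde O(\kappa)$, so that $f(\bW^{(t)},\bx_i)=O(\kappa)$ and $\ell_i'^{(t)}=-\Theta(1)$, which keeps the gradient's push on $\gamma_{j,r}^{(t)}$ slow and makes the motion of $\langle \bw_{j,r}^{(t)}, j\bmu\rangle$ dominated by its privacy-noise walk; a last-time / anti-concentration argument over the $2m$ independent filters then yields $\langle \bw_{j,r}^{(\tilde T_1)}, j\bmu\rangle \ge \kappa$ for at least one filter, i.e.\ the Huberized activation crosses into its linear branch on the signal coordinate while noise memorization and all off-diagonal terms are still $\tilde O(\kappa)$.

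Stage~2 ($\tilde T_1\le t\le\tilde T^{*}$) is the optimization part, essentially Lemma~\ref{result:training convergence} with privacy corrections. Using the convexity-type inequality for the Huberized ReLU behind Lemma~\ref{result:training convergence} (the source of the $(2q-1)$ factor), namely $L_S(\bW^{(t)}) - \tfrac{\epsilon}{2q-1}\le \tfrac{1}{2q-1}\langle\nabla L_S(\bW^{(t)}),\bW^{(t)}-\bW^{*}\rangle$ for a reference $\bW^{*}$ with $\|\bW^{(\tilde T_1)}-\bW^{*}\|_F^2 = \tilde O(nm^2/\max\{\|\bmu\|_2^2,\sigma_p^2 d\})$, and expanding $\|\bW^{(t+1)}-\bW^{*}\|_F^2$ along \eqref{NGDupdate}, I would telescope to
\[
\sum_{s=\tilde T_1}^{\tilde T^{*}-1}\!\Big(L_S(\bW^{(s)})-\tfrac{\epsilon}{2q-1}\Big)\;\le\;\frac{\|\bW^{(\tilde T_1)}-\bW^{*}\|_F^2}{2(2q-1)\eta}\;+\;\Big|\eta\sum_{s}\big\langle\textstyle\sum_{j,r}\bb_{j,r,s},\,\bW^{(s)}-\bW^{*}\big\rangle\Big|\;+\;\eta^2\sum_{s}\big\|\textstyle\sum_{j,r}\bb_{j,r,s}\big\|^2 ,
\]
where the step-size upper bound in Condition~\ref{condition2} absorbs the $\eta^2\|\nabla L_S\|_F^2$ contribution, the martingale term is controlled by a Freedman-type estimate once $\max_s\|\bW^{(s)}-\bW^{*}\|_F$ is bounded a priori, and both privacy-error terms are $o(\epsilon\,(\tilde T^{*}-\tilde T_1))$ by the $\sqrt{\tilde T^{*}}$ factor in the $\sigma_b$ condition. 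Taking $\tilde T^{*}-\tilde T_1 = \Theta(nm^2/(\eta\epsilon\max\{\|\bmu\|_2^2,\sigma_p^2 d\}))$ pushes the Stage-2 average of $L_S$ below $\epsilon$, so some iterate has loss $\le\epsilon$; a short near-monotonicity argument along the linear-branch trajectory then upgrades this to $L_S(\bW^{(\tilde T^{*})})\le\epsilon$.

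The main obstacle is Stage~1, and in particular that $\sigma_b$ must be threaded through a genuinely two-sided window: the injected noise has to accumulate \emph{enough} along $\bmu$ to carry $\langle\bw_{j,r}^{(t)},j\bmu\rangle$ through $\kappa$ within $\tilde T_1$, yet remain \emph{negligible} in every other direction and in Frobenius norm over the longer horizon $\tilde T^{*}$; reconciling this against Conditions~\ref{condition2}(5)--(7) is where the bookkeeping is delicate. A companion difficulty is that the Stage-1 induction cannot decouple the weak polynomial-branch gradient drift from the privacy random walk --- one must rule out that the drift systematically cancels or inflates the walk --- and that propagating the state from $\tilde T_1$ into Stage~2 (so the linear-branch inequality is actually available there) as well as converting the averaged bound into a last-iterate bound both have to survive the persistent DP perturbation.
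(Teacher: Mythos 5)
Your Stage~1 plan matches the paper's: by dropping the nonnegative gradient contribution to $\la \bw_{j,r}^{(t)}, j\bmu\ra$ and $\la\bw_{y_i,r}^{(t)},\bxi_i\ra$, the accumulated privacy noise $-\eta\sum_{k<t}\bb_{j,r,k}$ performs a random walk whose maximum over the $m$ filters reaches $\kappa$ in both the signal and every $\bxi_i$ direction at $\tilde T_1=\Theta\bigl(\kappa^2/(\eta^2\sigma_b^2\min\{\norm{\bmu}_2^2,\sigma_p^2 d\})\bigr)$; your worry about the drift ``cancelling'' the walk is moot here because the drift only pushes in the favorable direction. Your decomposition, good-event bookkeeping, and Stage~1 conclusions are all the same as in the paper.

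Stage~2 is where you diverge, and where there is a genuine gap. You propose carrying over the GD potential argument (Lemmas~\ref{lm: Gradient Stable2}--\ref{thm:noise_proof}) through the privacy noise, claiming the two privacy-error terms are $o\bigl(\epsilon(\tilde T^*-\tilde T_1)\bigr)$ ``by the $\sqrt{\tilde T^*}$ factor in the $\sigma_b$ condition.'' That is true of the martingale term, but not of the quadratic term. We have $\EE\|\bB^{(s)}\|_F^2 = \Theta(m d\sigma_b^2)$, so $\eta^2\sum_{s=\tilde T_1}^{\tilde T^*-1}\|\bB^{(s)}\|_F^2 = \Theta\bigl(\eta^2(\tilde T^*-\tilde T_1) m d\sigma_b^2\bigr)$, and after dividing by $(2q-1)\eta$ the requirement becomes $\eta m d\sigma_b^2\lesssim\epsilon$. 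Condition~\ref{condition2}(7) gives $\sigma_b^2\lesssim\bigl(\eta^2\max\{\norm{\bmu}_2^2,\sigma_p^2 d\}\,\tilde T^*\bigr)^{-1}$, and combining with $\tilde T^*\geq\Theta\bigl(nm^2/(\eta\epsilon\max\{\norm{\bmu}_2^2,\sigma_p^2 d\})\bigr)$ yields $\eta m d\sigma_b^2\lesssim\epsilon\, d/(nm)$. Under Condition~\ref{condition2}(3), $d\geq C m^4 n^2\gg nm$, so this overshoots your budget by a factor of $\Theta(d/(nm))$, and no tuning of the constants in Condition~\ref{condition2} rescues it. This second-moment term is precisely why the paper does \emph{not} reuse the potential argument for DP-GD.

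What the paper actually does for Stage~2 (Lemma~\ref{lemma29}) bypasses this entirely. Once every activation has a filter in the linear branch in the signal direction and in the $\bxi_i$ direction, it tracks the \emph{coefficient} quantity $\lambda_i^{(t)}=\frac{1}{m}\sum_{r}\bigl(\gamma_{y_i,r}^{(t)}+\orho^{(t)}\bigr)$, which by Lemma~\ref{lemma18} satisfies a recursion driven only by the gradient, with no privacy-noise contamination. With $-\ell_i^{\prime(t)}\geq\frac{1}{6}e^{-\lambda_i^{(t)}}$ one gets $\lambda_i^{(t+1)}\geq\lambda_i^{(t)}+\Theta\bigl(\frac{\eta\max\{\norm{\bmu}_2^2,\sigma_p^2 d\}}{nm^2}\bigr)e^{-\lambda_i^{(t)}}$, and Lemma~\ref{tensorpower2} gives $\lambda_i^{(t)}\gtrsim\log\bigl(\Theta(\tfrac{\eta\max\{\cdot\}}{nm^2})(t-\tilde T_1)\bigr)$, whence $\ell\bigl(y_if(\bW^{(t)},\bx_i)\bigr)\lesssim\frac{nm^2}{\eta\max\{\cdot\}(t-\tilde T_1)}$ pointwise in $t$. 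This gives a \emph{last-iterate} bound at $\tilde T^*$ directly, which your averaged potential telescope cannot deliver, and your ``short near-monotonicity argument'' to upgrade to the last iterate is an additional unjustified step (note that the paper's GD theorem for this reason only asserts existence of a good iterate, not a last-iterate bound). If you want to save your approach you would have to redo Stage~2 along these lines.
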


In the following theorem, we indicate that with a proper choice of $\sigma_b$ to control the level of the injected noise in DP-GD,  and with the tool of early stopping at around $\tilde T_2= \Theta ( \frac{m}{\eta \norm \bmu_2^{2}})$ iterations, we can achieve a good test error and a good privacy guarantee simultaneously for DP-GD.

\begin{theorem} \label{result:NGDtesterror}
	Under Condition \ref{condition2}, by choosing $\sigma_b= \Theta\left( \sqrt{\frac{\norm \bmu_2^{2}}{\eta m^3  \max\{\norm \bmu_2^{2}, \sigma_p^2d\}}}\right)$, and $\tilde{T}_2=  \Theta \left( \frac{m}{\eta \norm \bmu_2^{2}}\right)$.  With probability at least $1-6\delta$, we have $	\mathcal{R}_{\mathcal{D}}(\bW^{(\tilde{T}_2)} )  \leq 0.01$. Moreover, the DP-GD with $\tilde{T}_2$ iterations satisfies $\big( \frac{C_4 m^3  \max^2\{ \norm \bmu_2^{2}, \sigma_p^2d\}}{n^2 \norm \bmu_2^{4}} \log \frac{2}{\delta}, \delta \big)$-DP for some positive constant $C_4$.
\end{theorem}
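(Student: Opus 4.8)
\textbf{Proof proposal for Theorem~\ref{result:NGDtesterror}.}

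The plan is to split the argument into two parts: a test-error bound obtained via a signal-noise decomposition analysis at the early-stopping time $\tilde T_2$, and a differential-privacy guarantee obtained from the Gaussian mechanism combined with a uniform bound on the per-step gradient sensitivity. For the test-error part, I would first establish a DP-GD analogue of Definition~\ref{GD:w_decomposition}, writing $\bw_{j,r}^{(t)} = \bw_{j,r}^{(0)} + \gamma_{j,r}^{(t)}\|\bmu\|_2^{-2} j\bmu + \sum_i \rho_{j,r,i}^{(t)}\|\bxi_i\|_2^{-2}\bxi_i + (\text{contribution of the injected noise } \bb_{j,r,t})$. The key is to show that at $\tilde T_2 = \Theta(m/(\eta\|\bmu\|_2^2))$ the signal coefficient $\gamma_{y,r}^{(t)}$ has grown past the threshold $\kappa$ for a constant fraction of filters (this is where the lower bound on $\eta$ in Condition~\ref{condition2} enters), while simultaneously the noise memorization coefficients $\rho_{j,r,i}^{(t)}$, the cross terms $\langle \bw_{j,r}^{(t)},\bxi_i\rangle$ for a fresh test point, and the accumulated injected-noise term remain $O(\kappa)$ or smaller. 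The bound on $\sigma_b$ in Condition~\ref{condition2} is designed precisely so that $\eta\sigma_b\sqrt{\tilde T^*}$ times the relevant $\log$ factors is dominated by the signal scale, so the injected noise does not derail signal learning; I would make this quantitative via a martingale/Azuma concentration bound on $\sum_{s\le t}\bb_{j,r,s}$ projected onto $\bmu$ and onto a test noise vector. Once $\langle\bw_{y,r}^{(\tilde T_2)}, y\bmu\rangle \gtrsim \kappa$ for enough $r$ while the noise and injected-noise contributions are $o(\kappa)$ uniformly, a new sample $(\bx,y)$ satisfies $y f(\bW^{(\tilde T_2)},\bx) > 0$ except on a low-probability event governed by the Gaussian tail of $\langle\bw_{j,r}^{(\tilde T_2)},\bxi\rangle$, giving $\mathcal{R}_{\mathcal D}(\bW^{(\tilde T_2)})\le 0.01$; the SNR upper bound $\text{SNR}^{-1}\le \min\{\sqrt d/(Cm^2),\sqrt n/C\}$ ensures the signal is strong enough relative to the noise dimension for this margin to hold.

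For the privacy part, I would invoke the standard composition analysis of the Gaussian mechanism (as in \citet{abadi2016deep,bassily2014private}): each DP-GD step~\eqref{NGDupdate} perturbs the gradient $\nabla_{\bw_{j,r}}L_S(\bW^{(t)})$ by $\cN(\zero,\sigma_b^2\bI)$, so the per-step privacy cost is controlled by the ratio of the $\ell_2$-sensitivity of the gradient (with respect to changing one of the $n$ training points) to $\sigma_b$. I would bound this sensitivity by $O(\frac{1}{n}\cdot \max\{\|\bmu\|_2, \sigma_p\sqrt d\})$ up to factors coming from $\|\bw_{j,r}^{(t)}\|$ and the Lipschitz constant of $\sigma$ — here Huberized ReLU is globally $1$-Lipschitz, which is exactly why the activation was chosen — uniformly over $t\le\tilde T_2$. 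Then advanced composition (or the moments accountant) over $\tilde T_2 = \Theta(m/(\eta\|\bmu\|_2^2))$ steps yields an $(\epsilon',\delta)$ guarantee with $\epsilon'^2 = \tilde\Theta(\tilde T_2 \cdot (\text{sensitivity}/\sigma_b)^2)$; substituting $\sigma_b = \Theta(\sqrt{\|\bmu\|_2^2/(\eta m^3\max\{\|\bmu\|_2^2,\sigma_p^2 d\})})$ and $\tilde T_2 = \Theta(m/(\eta\|\bmu\|_2^2))$ collapses this to $\epsilon' = \tilde\Theta\big(\frac{m^{3/2}\max\{\|\bmu\|_2^2,\sigma_p^2 d\}}{n\|\bmu\|_2^2}\big)$ up to $\sqrt{\log(1/\delta)}$, matching the claimed $\big(\frac{C_4 m^3\max^2\{\|\bmu\|_2^2,\sigma_p^2 d\}}{n^2\|\bmu\|_2^4}\log\frac{2}{\delta},\delta\big)$-DP bound after squaring.

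The main obstacle I anticipate is the coupled control in the first part: one must simultaneously track the signal coefficients $\gamma_{y,r}^{(t)}$ growing through the polynomial-to-linear transition of $\sigma$ at $\kappa$ and the noise/injected-noise coefficients staying small, over the full horizon $\tilde T^* \gg \tilde T_2$, since the test-error claim at $\tilde T_2$ relies on bounds that must hold up to $\tilde T^*$ for the concentration of $\sum_{s}\bb_{j,r,s}$ to be valid. In particular, showing that the stochastic perturbation does not cause some spurious noise coefficient $\rho_{j,r,i}^{(t)}$ to cross $\kappa$ before the signal does — which would flip the learning regime toward the GD-style noise memorization of Theorem~\ref{theorem: GD main} — requires a delicate inductive argument balancing the lower bound on $\eta$ (fast signal growth) against the upper bounds on $\sigma_0$, $\sigma_b$, and $\text{SNR}^{-1}$ (slow noise growth). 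A secondary technical point is that the gradient sensitivity bound used in the privacy analysis must be shown to hold on the same high-probability event as the signal-noise decomposition control, so the two parts of the theorem share the $1-6\delta$ failure budget; I would handle this by deriving the sensitivity bound from the uniform norm control $\|\bw_{j,r}^{(t)}\|_2 = O(\kappa/\max\{\|\bmu\|_2,\sigma_p\sqrt d\} + \dots)$ established en route to the test-error bound, rather than as an independent estimate.
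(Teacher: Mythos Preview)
Your privacy argument is essentially correct and matches the paper's RDP-based composition (though the per-step sensitivity is simply $\frac{1}{nm}(\|\bmu\|_2+\|\bxi_k\|_2)$ from $|\ell'|\le 1$ and $|\sigma'|\le 1$; it does not involve $\|\bw_{j,r}^{(t)}\|_2$, so there is no need to couple it to the weight-norm control). The test-error half, however, rests on a misreading of the mechanism. You treat the injected Gaussian noise $\bb_{j,r,t}$ purely as a perturbation to be bounded away, and plan to show $\gamma_{y,r}^{(t)}$ climbs past $\kappa$ by gradient dynamics alone, crediting the lower bound on $\eta$. That cannot work under Condition~\ref{condition2}: there is no lower bound on $\sigma_0$, so $\langle\bw_{j,r}^{(0)},j\bmu\rangle$ can be arbitrarily small relative to $\kappa$, whence $\sigma'(\langle\bw_{j,r}^{(t)},j\bmu\rangle)\approx\kappa^{1-q}\langle\bw_{j,r}^{(t)},j\bmu\rangle^{q-1}$ is negligible and the tensor-power growth of $\gamma$ takes far longer than $\tilde T_2$ --- precisely the trap that makes plain GD fail in Theorem~\ref{theorem: GD main}.

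The paper's mechanism is the opposite of what you anticipate: the injected noise is what \emph{enables} signal learning. One uses the \emph{lower}-tail side of the noise concentration, namely $\max_{r}\bigl(-\eta\sum_{k<t}\langle\bb_{j,r,k},j\bmu\rangle\bigr)\ge\tfrac{1}{2}\eta\sigma_b\|\bmu\|_2\sqrt{t}$, to conclude that for $t\ge\tilde T_1=\Theta\bigl(\kappa^2/(\eta^2\sigma_b^2\min\{\|\bmu\|_2^2,\sigma_p^2 d\})\bigr)$ some filter already has $\langle\bw_{j,r}^{(t)},j\bmu\rangle\ge\kappa$, so $\sigma'=1$ there and $\gamma_{j,r}^{(t)}$ thereafter grows linearly. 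The lower bound on $\eta$ in Condition~\ref{condition2} is used, together with the chosen $\sigma_b$, only to force $\tilde T_1=O(1)$. Consequently your worry about a spurious $\rho_{j,r,i}^{(t)}$ crossing $\kappa$ before the signal is a red herring: in the paper's analysis \emph{both} $\max_r\langle\bw_{j,r}^{(t)},j\bmu\rangle$ and $\max_r\langle\bw_{y_i,r}^{(t)},\bxi_i\rangle$ cross $\kappa$ at $\tilde T_1$ by the same injected-noise argument, and one then tracks the combined quantity $\lambda_i^{(t)}=\frac{1}{m}\sum_r\bigl(\gamma_{y_i,r}^{(t)}+\overline\rho_{y_i,r,i}^{(t)}\bigr)$, deriving matching upper and lower envelopes on $|\ell_i'^{(t)}|$ from which the needed lower bound $\frac{1}{m}\sum_r\gamma_{j,r}^{(\tilde T_2)}=\Omega(1/m)$ follows.
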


This theorem highlights the generalization advantage of DP-GD over standard GD. The key insight is that the injected Gaussian noise in DP-GD facilitates the signal learning component in surpassing the threshold $\kappa$ of the Huberized ReLU when $t \geq\tilde T_1$. Once this threshold is crossed, the derivative $\sigma^\prime(z)$ becomes 1, allowing the signal to grow more effectively. In contrast, the signal learning component remains at initialization order, and $\sigma^\prime(z)$ remains very small, limiting the signal learning process in standard GD.

Related to our work, \cite{ding2025understanding} and \cite{zhang2026understanding} established the theoretical framework for analyzing Differentially private Stochastic Gradient Descent (DP-SGD) from a feature learning perspective. Specifically, \cite{ding2025understanding} demonstrated that effective private signal learning requires a higher SNR compared to non-private training and that when noise memorization occurs in the non-private setting, it often persists in private learning, leading to poor generalization. \cite{zhang2026understanding} revealed that DP-SGD can help generalization performance on long-tailed data through gradient clipping and noise injection, which suppress the model’s ability to memorize implicit class-specific features. However, distinct from these works, we identify a specific theoretical regime with moderate SNR where DP-GD effectively enhances generalization compared to standard GD, rather than merely inducing a trade-off between generalization and privacy.

\noindent \textbf{Overview of proof technique}
We utilize the same signal-noise decomposition method as in Definition \ref{GD:w_decomposition} as the core method for our analysis, the different point is that we need the additional added Gaussian noise terms $\eta \sum_{k=0}^{t-1} \bb_{j,r,k}$ in DP-GD.

\begin{definition} \label{def3}
	Let $\wjr^{(t)}$ for $j \in \{+1,-1\}$, $r \in [m]$ be the convolution filters of the CNN at the $t$-th iteration of noisy SGD. Then there exist unique coefficients $\gam^{(t)}$ and $\rho_{j,r,i}^{(t)}$ such that
	\begin{align} \label{decompositionNGD}
		\nonumber\wjr^{(t)}&= \wjr^{(0)} + j \cdot \gam^{(t)} \cdot \norm \bmu_2^{-2} \cdot \bmu \\
		& + \sum_{i=1}^n \rho_{j,r,i}^{(t)} \cdot \norm {\bxi_i}_2^{-2} \cdot \bxi_i  - \eta \sum_{k=0}^{t-1} \bb_{j,r,k}.
	\end{align}
	By further denoting $\orho^{(t)}:= \rho_{j,r,i}^{(t)} \mathds{1}(\rho_{j,r,i}^{(t)} \geq 0)$, and $\urho^{(t)}:= \rho_{j,r,i}^{(t)} \mathds{1}(\rho_{j,r,i}^{(t)} \leq 0)$, we have
	\begin{align} \label{sndecompositionNGD}
		\nonumber  \wjr^{(t)}&= \wjr^{(0)} + j \gam^{(t)} \cdot \norm \bmu_2^{-2} \cdot \bmu + \sum_{i=1}^n \orho^{(t)} \cdot \norm {\bxi_i}_2^{-2} \cdot \bxi_i \\
		& + \sum_{i=1}^n \urho^{(t)} \cdot \norm {\bxi_i}_2^{-2} \cdot \bxi_i - \eta \sum_{k=0}^{t-1} \bb_{j,r,k}.
	\end{align}
\end{definition}

\begin{figure*}[t]
    \centering
    \includegraphics[width=0.93\textwidth]{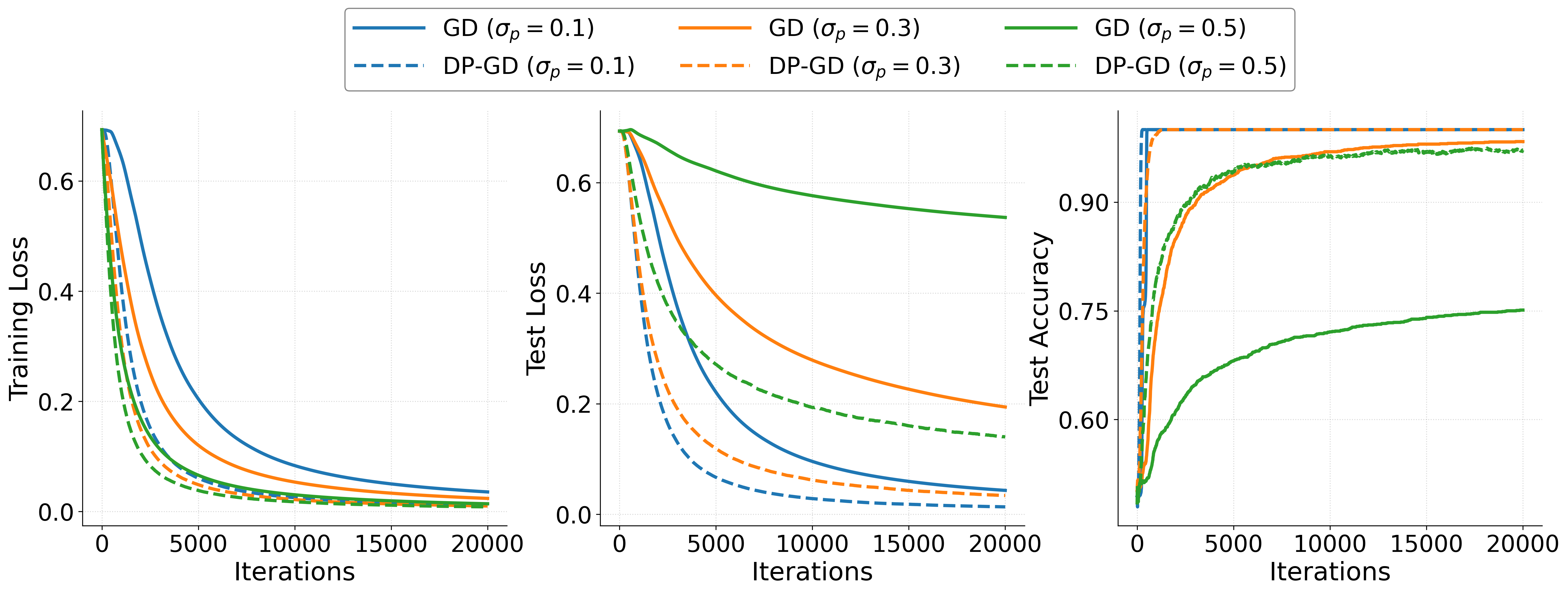}
    
    \vspace{-0.05cm} 

    \noindent
    \begin{minipage}[t]{0.31\textwidth}
        \centering  (a) Training Loss
    \end{minipage}%
    \begin{minipage}[t]{0.31\textwidth}
        \centering  (b) Test Loss
    \end{minipage}%
    \begin{minipage}[t]{0.31\textwidth}
        \centering  (c) Test Accuracy
    \end{minipage}

    \makeatletter
    \refstepcounter{figure} 
    \let\saved@currentlabel\@currentlabel 
    
    \def\@currentlabel{\saved@currentlabel(a)}\label{subfig:training_loss}
    \def\@currentlabel{\saved@currentlabel(b)}\label{subfig:test_loss}
    \def\@currentlabel{\saved@currentlabel(c)}\label{subfig:test_accuracy}
    
    \def\@currentlabel{\saved@currentlabel}\label{fig:synthe_1}
    
    \addtocounter{figure}{-1} 
    \makeatother

    \caption{Training loss, test loss, and test accuracy of two-layer CNNs trained with GD and DP-GD. Results are shown for three noise levels ($\sigma_p \in \{0.1, 0.3, 0.5\}$) with fixed signal strength ($\|\bmu\|_2=1$).}
\end{figure*}

% \begin{figure*}[t]
%     \centering
%     \includegraphics[width=\textwidth，height=0.25\textheight,]{figures/CNN_all_metrics_top_legend.png}
    
%     \vspace{-0.2cm} 

%     \begin{tabular}{p{0.33\textwidth}<{\centering} p{0.33\textwidth}<{\centering} p{0.33\textwidth}<{\centering}}
%         \small (a) Training Loss & \small (b) Test Loss & \small (c) Test Accuracy
%     \end{tabular}

%     \makeatletter
%     \refstepcounter{figure} 
%     \let\saved@currentlabel\@currentlabel 
    
%     \def\@currentlabel{\saved@currentlabel(a)}\label{subfig:training_loss}
%     \def\@currentlabel{\saved@currentlabel(b)}\label{subfig:test_loss}
%     \def\@currentlabel{\saved@currentlabel(c)}\label{subfig:test_accuracy}
    
%     \def\@currentlabel{\saved@currentlabel}\label{fig:synthe_1}
    
%     \addtocounter{figure}{-1} 
%     \makeatother

%     \caption{Training loss, test loss, and test accuracy of two-layer CNNs trained with GD and DP-GD. Results are shown for three noise levels ($\sigma_p =0.1, 0.3, 0.5$) with fixed signal strength ($\|\bmu\|_2=1$).}
% \end{figure*}

\noindent \textbf{Training loss.} We briefly introduce the proof of the convergence of training loss for DP-GD. The key idea is to analyze the growth of the signal and noises together during the training process, which is denoted as
      $\lambda_{i}^{(t)}= \frac{1}{m} \sum_{r=1}^m \left(\gamma_{y_i,r}^{(t)}+\overline{\rho}_{y_i,r,i}^{(t)} \right)$. It is different from the analysis of GD since we cannot identify whether the memorization of noise or the learning of signal dominates during the training process, due to the perturbations caused by the added Gaussian noise in the gradient. However, the added Gaussian noises $\eta \sum_{k=0}^{t-1} \bb_{j,r,k}$ in DP-GD can help at least one neuron in  $\langle\bw_{j,r}^{(t)}, j\bmu\rangle$ or $\langle\bw_{y_i,r}^{(t)}, \bxi_i \rangle$ to achieve the threshold $\kappa$ of the Huberized ReLU activation function, for each $j\in \{\pm 1\}$ and 
      $i\in [n]$, when the iteration $t\geq \tilde{T}_1$ is large enough. This property makes it different from the analysis of GD, since it ensures that at least one neuron in signal learning to hit $\kappa$, and therefore help the signal learning term to grow outside the scope of the initialization order.
      
      Specifically, for the time $t \geq \tilde T_1= \Theta\left(\frac{ \kappa^2}{\eta^2 \sigma_b^2 \min\{\norm{\bmu}_2^2,\sigma_p^2d\}} \right)$, for each $j\in \{\pm 1\}$ and 
     $i\in [n]$, we have
      \begin{align*} 
     	&\max_{r} \sigma'\left( \langle\bw_{j,r}^{(t)}, j\bmu\rangle \right) =1 ,\\
     	&\max_{r} \sigma'\left( \langle\bw_{y_i,r}^{(t)}, \bxi_i \rangle \right)=1.
     \end{align*} 
     Therefore, according to the update rule of $\gamma_{j,r}^{(t)}$, $\orho^{(t)}$ based on Definition \ref{def3}, we have
     \begin{align} 
     	\nonumber \lambda_{i}^{(t+1)} \geq \lambda_{i}^{(t)} + \frac{\eta \max\{\norm{\bmu}_2^2,\sigma_p^2d\}}{12nm^2} e^{-\lambda_{i}^{(t)}},
     \end{align}
     based on this iteration inequality, we can prove that $L_S(\bW^{(\tilde T^*)}) \leq \epsilon$ within $\tilde T^*$ iterations. \\

\iffalse
	according to the tensor power methods, we have a lower bound for each $i \in [n]$ that
	\begin{align*} 
		\lambda_{i}^{(t)} &\geq \log \left(\frac{\eta \max\{\norm{\bmu}_2^2,\sigma_p^2d\}}{12nm^2}  (t- \tilde T_1)\right).
	\end{align*}
	This property gives an upper bound for the training loss of each data point
	\begin{align*}
		\ell \left(y_i f(\bW^{(\tilde T^*)},\bx_i) \right)\leq \frac{48nm^2}{\eta \max\{\norm{\bmu}_2^2,\sigma_p^2d\} (\tilde T^*- \tilde T_1)},
	\end{align*}
	therefore, by the choice of $\tilde T^*= \tilde T_1 + \Theta \left(\frac{nm^2}{\eta \epsilon \max\{\norm{\bmu}_2^2,\sigma_p^2d\}} \right)$, we have $\ell \left(y_i f(\bW^{(\tilde T^*)},\bx_i) \right) \leq \epsilon$ for each $i\in [n]$, hence $L_S(\bW^{(\tilde T^*)}) \leq \epsilon$.  \\
\fi
	
	\noindent \textbf{Test error.}
	To prove the upper bound of the test error, we need the following lemma, which demonstrates that the learning of signal $\frac{1}{m}\sum_{r=1}^m \gam^{(t)}$ can be as large as $\Theta(\frac{1}{m})$ after $\tilde{T}_2$ iterations.
	\begin{lemma} 
		Under Condition \ref{condition2}, denote $c_1= \frac{3\eta \max\{ n\norm \bmu_2^{2}, \sigma_p^2d\}}{nm}$, $\tilde T_1= \Theta \left(\frac{\kappa^2}{\eta^2 \sigma_b^2 \min\{\norm{\bmu}_2^2,\sigma_p^2d\}} \right)$, and $\tilde T_2= \tilde T_1 + \Theta \left(e^{c_1} (\tilde{T}_1 + \frac{1}{c_1}) \right)$. Then, for any $\tilde T_2 \leq t \leq \tilde{T}^*$, we have
		\begin{align}
			\frac{1}{m}\sum_{r=1}^m \gam^{(t)} = \Omega \left(\frac{n \norm \bmu_2^{2}}{e^{c_1} m \max\{n \norm \bmu_2^{2}, \sigma_p^2d\}} \right),
		\end{align}
	\end{lemma}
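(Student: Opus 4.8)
The plan is to follow a fixed $j\in\{\pm1\}$ and track the average signal strength $\Gamma_j^{(t)}:=\frac{1}{m}\sum_{r=1}^m\gamma_{j,r}^{(t)}$, show it can only increase, and convert the increase over the window $[\tilde T_1,\tilde T_2]$ into a geometric sum of loss derivatives. From the update rule \eqref{NGDupdate} and Definition~\ref{def3}, projecting onto $j\bmu$ (and using $\langle\bxi_i,\bmu\rangle=0$) gives the recursion
\begin{equation*}
	\gamma_{j,r}^{(t+1)}=\gamma_{j,r}^{(t)}+\frac{\eta\norm{\bmu}_2^2}{nm}\sum_{i=1}^n\big|\ell_i^{\prime(t)}\big|\,\sigma'\!\big(\langle\bw_{j,r}^{(t)},y_i\bmu\rangle\big),
\end{equation*}
so each $\gamma_{j,r}^{(t)}$ is non-decreasing and it suffices to lower bound $\Gamma_j^{(\tilde T_2)}$. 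Since $\gamma_{j,r}^{(\tilde T_1)}\ge0$, summing the increments over $s\in[\tilde T_1,\tilde T_2)$, keeping only the (non-negative) terms with $y_i=j$, using $\sum_r\sigma'(\langle\bw_{j,r}^{(s)},j\bmu\rangle)\ge\max_r\sigma'(\langle\bw_{j,r}^{(s)},j\bmu\rangle)=1$ for $s\ge\tilde T_1$ (from the analysis leading to Theorem~\ref{result:NGDtraining}), and $|\{i:y_i=j\}|\ge n/3$ with high probability (Hoeffding, $n\ge C\log(m/\delta)$), I get
\begin{equation*}
	\Gamma_j^{(\tilde T_2)}\ \ge\ \frac{\eta\norm{\bmu}_2^2}{nm^2}\sum_{s=\tilde T_1}^{\tilde T_2-1}\ \sum_{i:\,y_i=j}\big|\ell_i^{\prime(s)}\big|\ \ge\ \frac{\eta\norm{\bmu}_2^2}{3m^2}\sum_{s=\tilde T_1}^{\tilde T_2-1}\min_{i\in[n]}\big|\ell_i^{\prime(s)}\big|,
\end{equation*}
so the whole task reduces to proving $\sum_{s=\tilde T_1}^{\tilde T_2-1}\min_i|\ell_i^{\prime(s)}|=\Omega(1/c_1)$.

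The second ingredient is that these loss derivatives decay only geometrically across the window. Using $\sigma(z)\le z_+$, $F_{-y_i}(\bW_{-y_i}^{(s)},\bx_i)\ge0$, and Conditions~\ref{condition2}.3, .5, .7 to bound the Gaussian‑initialization term, the off-diagonal correlations $\langle\bxi_i,\bxi_{i'}\rangle$, and the accumulated DP noise $\eta\sum_{k<s}\bb_{j,r,k}$ inside $\langle\bw_{j,r}^{(s)},y_i\bmu\rangle$ and $\langle\bw_{j,r}^{(s)},\bxi_i\rangle$, one obtains $y_if(\bW^{(s)},\bx_i)\le C_1\lambda_i^{(s)}+C_2$ for all $i$ and $s\in[\tilde T_1,\tilde T^*]$, where $\lambda_i^{(s)}=\frac1m\sum_r(\gamma_{y_i,r}^{(s)}+\overline{\rho}_{y_i,r,i}^{(s)})$; hence $\min_i|\ell_i^{\prime(s)}|\ge\tfrac12 e^{-C_2}e^{-C_1\overline{\lambda}^{(s)}}$ with $\overline{\lambda}^{(s)}:=\max_i\lambda_i^{(s)}$. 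Next, a crude one-step bound on the $\lambda_i$-update — from $|\ell'|\le1$, $\sigma'\le1$, $\norm{\bxi_i}_2^2\le2\sigma_p^2d$, at most $m$ filters and $n$ samples, and the negligibility of the $\langle\bxi_i,\bxi_{i'}\rangle$ cross-terms when $d\ge Cm^4n^2$ — gives $\lambda_i^{(s+1)}-\lambda_i^{(s)}\le2c_1$; combined with $\overline{\lambda}^{(\tilde T_1)}=o(1)$ (signal and noise learning are still of initialization order at $\tilde T_1$, by the lemmas preceding Theorem~\ref{result:NGDtraining}) this yields $\overline{\lambda}^{(s)}\le o(1)+2c_1(s-\tilde T_1)$, so $\min_i|\ell_i^{\prime(s)}|\ge c_2\,e^{-c_3 c_1(s-\tilde T_1)}$ for absolute constants $c_2,c_3>0$.

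Assembling, plug this into the first display and sum the geometric series:
\begin{equation*}
	\Gamma_j^{(\tilde T_2)}\ \ge\ \frac{c_2\,\eta\norm{\bmu}_2^2}{3m^2}\sum_{k=0}^{\tilde T_2-\tilde T_1-1}e^{-c_3 c_1 k}\ \ge\ \frac{c_2\,\eta\norm{\bmu}_2^2}{3m^2}\cdot\frac{1-e^{-c_3 c_1(\tilde T_2-\tilde T_1)}}{c_3 c_1}.
\end{equation*}
Since $\tilde T_2-\tilde T_1=\Theta\!\big(e^{c_1}(\tilde T_1+c_1^{-1})\big)\ge\Omega(c_1^{-1})$ we have $c_1(\tilde T_2-\tilde T_1)=\Omega(1)$, so the numerator is a positive constant and $\Gamma_j^{(\tilde T_2)}=\Omega\!\big(\tfrac{\eta\norm{\bmu}_2^2}{m^2 c_1}\big)$. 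Substituting $c_1=\frac{3\eta\max\{n\norm{\bmu}_2^2,\sigma_p^2d\}}{nm}$ turns this into $\Gamma_j^{(\tilde T_2)}=\Omega\!\big(\tfrac{n\norm{\bmu}_2^2}{m\max\{n\norm{\bmu}_2^2,\sigma_p^2d\}}\big)$, which is $\ge\Omega\!\big(\tfrac{n\norm{\bmu}_2^2}{e^{c_1}m\max\{n\norm{\bmu}_2^2,\sigma_p^2d\}}\big)$ because $e^{c_1}\ge1$. For any $t\in[\tilde T_2,\tilde T^*]$, monotonicity of each $\gamma_{j,r}^{(\cdot)}$ gives $\frac1m\sum_r\gamma_{j,r}^{(t)}\ge\Gamma_j^{(\tilde T_2)}$, and the argument applies verbatim to $j=+1$ and $j=-1$, which is all the statement needs.

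The main obstacle is the "junk-control" step: one must bound, uniformly over $r\in[m]$, $i\in[n]$, and $s\le\tilde T^*$, the contributions to $\langle\bw_{j,r}^{(s)},y_i\bmu\rangle$ and $\langle\bw_{j,r}^{(s)},\bxi_i\rangle$ from the Gaussian initialization, the correlations $\langle\bxi_i,\bxi_{i'}\rangle$, and above all the random walk $\eta\sum_{k<s}\bb_{j,r,k}$ injected by DP, so that $y_if(\bW^{(s)},\bx_i)$ is pinched between constant multiples of $\lambda_i^{(s)}$ and the clean recursions emerge; this is precisely where Conditions~\ref{condition2}.3 ($d$ large), .5 ($\sigma_0$ small), and .7 ($\sigma_b$ small) are used. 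A secondary point — absorbed by the $c_1^{-1}$ term inside $\tilde T_2-\tilde T_1$ — is that the window must be long enough (length $\gtrsim c_1^{-1}$) to accumulate the full $\Omega(c_1^{-1})$ mass of $\sum_s\min_i|\ell_i^{\prime(s)}|$; I expect the actual proof replaces the crude linear bound on $\overline{\lambda}^{(s)}$ by the sharper logistic estimate $\overline{\lambda}^{(s)}\approx\log\!\big(c_1(s-\tilde T_1)\big)$, which is also the origin of the $\tilde T_1$ term and the $e^{c_1}$ factor in the statement.
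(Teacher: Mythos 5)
Your high-level reduction is on the right track and mirrors the paper's first steps: track $\frac1m\sum_r\gamma_{j,r}^{(t)}$, use that each $\gamma_{j,r}$ is non-decreasing, use $\max_r\sigma'(\la\bw_{j,r}^{(t)},j\bmu\ra)=1$ for $t\ge\tilde T_1$, and reduce to lower-bounding $\sum_{s=\tilde T_1}^{\tilde T_2-1}\min_i|\ell_i'^{(s)}|$. But your treatment of $\overline{\lambda}^{(\tilde T_1)}$ is a genuine gap. You assert $\overline{\lambda}^{(\tilde T_1)}=o(1)$ on the grounds that ``signal and noise learning are still of initialization order at $\tilde T_1$.'' That is false in this DP-GD setting: $\tilde T_1$ is by construction the time at which these quantities have been pushed up to $\kappa$ by the injected noise, and more importantly the only generic one-step bound available is $\lambda_i^{(t+1)}\le\lambda_i^{(t)}+c_1$, which gives only $\overline{\lambda}^{(\tilde T_1)}\le c_1\tilde T_1$. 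Under Condition~\ref{condition2} alone, $\sigma_b$ can be arbitrarily small subject to Condition~\ref{condition2}.7, so $\tilde T_1$ and hence $c_1\tilde T_1$ can be arbitrarily large; the paper itself can only show $\tilde\lambda^{(\tilde T_1)}\le\log(c_1\tilde T_1+1)+c_1$, which is unbounded. Plugging your crude linear bound $\overline{\lambda}^{(s)}\le\overline{\lambda}^{(\tilde T_1)}+2c_1(s-\tilde T_1)$ into $\min_i|\ell_i'^{(s)}|\gtrsim e^{-\overline{\lambda}^{(s)}}$ and summing the geometric series leaves an extra $e^{-\Theta(\overline{\lambda}^{(\tilde T_1)})}=e^{-\Theta(c_1\tilde T_1)}$ prefactor, which makes your final bound exponentially weaker than the stated one whenever $c_1\tilde T_1\gg1$. (It does suffice in the specific regime of Lemma~\ref{lemma37}, where the concrete choice of $\sigma_b,\eta$ makes $\tilde T_1=O(1)$ and $c_1=O(1)$, but the lemma is stated for all of Condition~\ref{condition2}.)

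What the paper does instead is precisely the sharper coupled argument you speculate about at the end. It pairs the lower recursion $\lambda_i^{(t+1)}\ge\lambda_i^{(t)}+c_2e^{-\lambda_i^{(t)}}$ with the upper recursion $\lambda_i^{(t+1)}\le\lambda_i^{(t)}+c_1e^{-\tilde\lambda^{(t)}}$ (with $\tilde\lambda^{(t)}=\min_i\lambda_i^{(t)}$, not your $\max$). From the lower recursion and Lemma~\ref{tensorpower2} one gets $\tilde\lambda^{(t)}\ge\log\bigl(c_2(t-\tilde T_1)+e^{z_0}\bigr)$ with $z_0=\tilde\lambda^{(\tilde T_1)}$; substituting this into the upper recursion and integrating gives $\lambda_i^{(t)}\le\tilde\gamma\log\bigl(c_2(t-\tilde T_1)+e^{z_0}\bigr)+c_1e^{-z_0}+(1-\tilde\gamma)z_0$ with $\tilde\gamma=c_1/c_2$. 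When this is fed into $|\ell_i'^{(t)}|\gtrsim e^{-\lambda_i^{(t)}}$ and the $\gamma$-update is summed over the window, the $(\tilde\gamma-1)z_0$ in the exponent cancels exactly against the $e^{z_0(\tilde\gamma-1)}$ that comes out of the time integral, leaving a bound proportional to $\frac{\eta\norm{\bmu}_2^2}{m^2c_1}e^{-c_1e^{-z_0}}\ge\frac{\eta\norm{\bmu}_2^2}{m^2c_1}e^{-c_1}$, i.e.\ independent of $z_0$. This algebraic cancellation is exactly what your linear bound cannot reproduce, and it is why the answer carries the specific $e^{c_1}$ factor rather than an $e^{\Theta(c_1\tilde T_1)}$ one.
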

	
	This lemma is important for the generalization analysis of DP-GD, since it tells us when $t \geq \tilde{T}_2$, the signal learning can escape the initialization order.

	Next, the following lemma shows that if $\sigma_b$ and $t$ are chosen to satisfy a condition, we can derive a high probability upper bound of the test error at time $t$. The condition is chosen to ensure that the signal learning term can dominate the influence of added Gaussian noise to achieve the right classification.
	\begin{lemma}  \label{result:testloss}
		Under Condition \ref{condition2}, let $\tilde{T}_2 \leq t \leq \tilde{T}^*$ and satisfies that
		$$\eta \sigma_b \norm {\bmu}_2 \sqrt{t} \leq \frac{n\norm \bmu_2^{2}}{C m \max\{n \norm \bmu_2^{2}, \sigma_p^2d\}}$$
		for some large constant $C$,
		we have 
		\begin{align*}
			\mathcal{R}_{\mathcal{D}}(\bW^{(t})) & \leq  \exp \bigg( -  C_2 \bigg(\frac{ n \norm \bmu_2^{2}}{e^{c_1} m \max\{n \norm \bmu_2^{2}, \sigma_p^2d\}} \\
			& \cdot  \frac{1}{\sigma_0\sigma_p \sqrt{d} + \frac{\sigma_p m}{\norm \bmu_2} + \frac{mn}{  \sqrt{d}} +  \eta \sigma_b \sigma_p \sqrt{d \tilde{T}_2}}\bigg)^2 \bigg),
		\end{align*}
        where $C_2$ is a positive constant.
	\end{lemma}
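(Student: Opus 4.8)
The plan is to work on the high-probability event carrying all the earlier DP-GD estimates and, on it, to bound the misclassification probability of a fresh test point over the randomness of its own noise vector. Two inputs are needed. First, the preceding signal-learning lemma gives, for each $j\in\{\pm1\}$, $\frac1m\sum_{r=1}^m\gamma_{j,r}^{(t)}\ge c_0 A$ with $A:=\frac{n\|\bmu\|_2^2}{e^{c_1}m\max\{n\|\bmu\|_2^2,\sigma_p^2d\}}$ and $c_0>0$ absolute. Second, from the signal--noise decomposition \eqref{sndecompositionNGD}, the triangle inequality over its four nontrivial summands, concentration of $\|\bw_{j,r}^{(0)}\|_2$, near-orthonormality of $\{\bxi_i\}$ and of the accumulated DP-noise directions, and the carried-over upper bounds on $\gamma_{j,r}^{(t)}$ and $\sum_i|\rho_{j,r,i}^{(t)}|$, one obtains a deterministic bound $\sigma_p\max_{j,r}\|\bw_{j,r}^{(t)}\|_2\le p$, where $p:=\sigma_0\sigma_p\sqrt d+\frac{\sigma_p m}{\|\bmu\|_2}+\frac{mn}{\sqrt d}+\eta\sigma_b\sigma_p\sqrt{d\tilde T_2}$ is (up to a constant) the denominator in the claimed bound --- the four terms coming from $\bw_{j,r}^{(0)}$, the signal part, the memorized-noise part, and $\eta\sum_k\bb_{j,r,k}$ respectively, the last controlled via the hypothesis on $\eta\sigma_b\|\bmu\|_2\sqrt t$. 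Since $y$ is Rademacher, $\mathcal R_{\mathcal D}(\bW^{(t)})=\mathbb{E}_y\PP_{\bxi}(yf(\bW^{(t)},\bx)<0\mid y)$, and the two labels are symmetric, so I describe $y=+1$.

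Using that $F_j$ sums over the two patches $y\bmu$ and $\bxi$ of $\bx$, I would split $yf(\bW^{(t)},\bx)=S+N(\bxi)$ with deterministic signal part $S:=\frac1m\sum_r[\sigma(\langle\bw_{y,r}^{(t)},y\bmu\rangle)-\sigma(\langle\bw_{-y,r}^{(t)},y\bmu\rangle)]$ and noise part $N(\bxi):=y\,\frac1m\sum_r[\sigma(\langle\bw_{+1,r}^{(t)},\bxi\rangle)-\sigma(\langle\bw_{-1,r}^{(t)},\bxi\rangle)]$. For $S$: since $\bxi_i\perp\bmu$, the memorized-noise coefficients do not affect $\langle\bw_{j,r}^{(t)},\bmu\rangle$, hence $\langle\bw_{y,r}^{(t)},y\bmu\rangle\ge\gamma_{y,r}^{(t)}-\Delta$ and $\langle\bw_{-y,r}^{(t)},y\bmu\rangle\le\Delta$ (using $\gamma_{-y,r}^{(t)}\ge0$), where $\Delta:=\tilde O(\sigma_0\|\bmu\|_2)+\tilde O(\eta\sigma_b\|\bmu\|_2\sqrt t)$ bounds $|\langle\bw_{j,r}^{(0)},\bmu\rangle|$ and $|\eta\langle\sum_k\bb_{j,r,k},\bmu\rangle|$ with high probability. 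The elementary inequalities $\sigma(z)\ge z-\kappa$ and $0\le\sigma(z)\le\max\{z,0\}$, together with monotonicity of $\sigma$, give after averaging $S\ge\frac1m\sum_r\gamma_{y,r}^{(t)}-2\Delta-\kappa$; Condition \ref{condition2} makes $\kappa$ negligible against $A$, and the hypothesis $\eta\sigma_b\|\bmu\|_2\sqrt t\le\frac{n\|\bmu\|_2^2}{Cm\max\{n\|\bmu\|_2^2,\sigma_p^2d\}}$ with $e^{c_1}=O(1)$ makes $\Delta$ negligible too, so $S\ge cA$ for an absolute $c>0$.

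For $N(\bxi)$: $\sigma\ge0$ forces $N(\bxi)\ge-\frac1m\sum_r\sigma(\langle\bw_{-y,r}^{(t)},\bxi\rangle)=:-h(\bxi)$, so $\{yf<0\}\subseteq\{h(\bxi)>cA\}$, and I would bound $\PP_{\bxi}(h>cA)$ by Gaussian Lipschitz concentration. Writing $\bxi=\sigma_p(\bI-\bmu\bmu^\top\|\bmu\|_2^{-2})\bm z$ with $\bm z\sim\cN(\bm 0,\bI_d)$, each $\langle\bw_{-y,r}^{(t)},\bxi\rangle$ is centered Gaussian with standard deviation $\le\sigma_p\|\bw_{-y,r}^{(t)}\|_2\le p$; as $\sigma$ is $1$-Lipschitz, $\bm z\mapsto h$ is $p$-Lipschitz, and as $0\le\sigma(z)\le\max\{z,0\}$, $\mathbb{E}h\le\frac1m\sum_r\mathbb{E}(\langle\bw_{-y,r}^{(t)},\bxi\rangle)_+\le p/\sqrt{2\pi}$. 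Gaussian concentration then gives $\PP_{\bxi}(h>cA)\le\exp(-C_2(A/p)^2)$ for a suitable positive constant $C_2$, the estimate being informative precisely in the regime where $A$ dominates $p$ that Condition \ref{condition2} provides. Running the same argument for $y=-1$ and averaging over $y$ yields $\mathcal R_{\mathcal D}(\bW^{(t)})\le\exp(-C_2(A/p)^2)$, which is the assertion since $A/p$ is exactly the bracketed factor there.

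\textbf{Main obstacle.} The Gaussian concentration is routine; the real work is establishing the deterministic norm bound $\sigma_p\|\bw_{j,r}^{(t)}\|_2\le p$ with precisely those four summands --- which requires the sharp carried-over control of $\gamma_{j,r}^{(t)}$ and $\sum_i|\rho_{j,r,i}^{(t)}|$ from the training-convergence analysis plus the near-orthogonality facts for $\{\bxi_i\}$ and the accumulated DP-noise directions --- and checking through Condition \ref{condition2} and the hypothesis on $\eta\sigma_b\|\bmu\|_2\sqrt t$ that the signal scale $A$ indeed dominates $p$, $\kappa$, and $\Delta$, so that the exponent $(A/p)^2$ is large and the bound non-vacuous.
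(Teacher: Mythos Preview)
Your proposal is correct and follows essentially the same architecture as the paper's proof: split $yf(\bW^{(t)},\bx)$ into a deterministic signal part on the $y\bmu$ patch and a random noise part on the $\bxi$ patch, lower-bound the signal by the preceding signal-learning lemma (Lemma~\ref{signallowerbound}) together with the approximation $|\langle\bw_{j,r}^{(t)},\bmu\rangle-j\gamma_{j,r}^{(t)}|\le\beta+\theta$, and control the noise part via the weight-norm bound $\|\bw_{j,r}^{(t)}\|_2=\tilde O(\sigma_0\sqrt d+m/\|\bmu\|_2+mn/(\sigma_p\sqrt d)+\eta\sigma_b\sqrt{dt})$ obtained from the decomposition \eqref{sndecompositionNGD}.

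The one methodological difference is in how you handle the fresh-noise contribution. The paper observes that each $\langle\bw_{j,r}^{(t)},\bxi\rangle$ is individually Gaussian with standard deviation at most $p$, applies a scalar Gaussian tail bound to make every such inner product $O(A)$ with probability $1-\exp(-C_2(A/p)^2)$ (implicitly absorbing the $2m$-fold union bound into the constant, legitimate since $(A/p)^2$ is large under Condition~\ref{condition2}), and then uses $\sigma(z)\le\max\{z,0\}$ termwise. You instead apply Gaussian Lipschitz concentration directly to the aggregate $h(\bxi)=\frac1m\sum_r\sigma(\langle\bw_{-y,r}^{(t)},\bxi\rangle)$, which has Lipschitz constant $\le p$ (in $\bm z$) and mean $\le p/\sqrt{2\pi}$. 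Your route avoids the union bound and the per-neuron step, so it is slightly cleaner, but both arrive at the same exponential tail $\exp(-C_2(A/p)^2)$.
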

	
    \noindent	\textbf{A condition for good test error and DP.}
     The following lemma gives a DP guarantee for DP-GD at time $T$.
    \begin{lemma}[Privacy guarantee]\label{result:dp}
    	The  DP-GD algorithm with $T$ iterations satisfies 
    	%$( \frac{  \lambda m\sum_{t=1}^T\Delta_{\ell_2,t}^2}{ \sigma_b^2}+\frac{\log(2/\delta)}{\lambda-1}, \delta )$-DP for any $\lambda>1$. 
    	$( \frac{T\lambda(2\|\bmu\|_2^2+ 3\sigma_p^2 d)}{\sigma_b^2 n^2 m}+\frac{\log(2/\delta)}{\lambda-1}, \delta )$-DP for any $\lambda>1$.
    \end{lemma}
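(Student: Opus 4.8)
The plan is to view one DP-GD step as the Gaussian mechanism applied to the aggregate gradient $\big(\nabla_{\bw_{j,r}}L_S(\bW^{(t)})\big)_{j\in\{\pm1\},\,r\in[m]}$, to bound its $\ell_2$-sensitivity by a quantity that is \emph{uniform over the iterations} (in particular independent of $\bW^{(t)}$, hence of the noise injected at earlier steps), and then to compose over the $T$ iterations through R\'enyi differential privacy (RDP). Since every iterate $\bW^{(t+1)}$ is a deterministic function of $\bW^{(t)}$ and of the $t$-th noisy release, the whole trajectory---and in particular the final weights---is a post-processing of the $T$ noisy-gradient releases, so by the post-processing property it suffices to control the adaptive composition of these $T$ Gaussian mechanisms.

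The first step, and the one requiring the most care, is the sensitivity bound. Writing $\nabla_{\bw_{j,r}}\ell[y_i f(\bW,\bx_i)] = \ell_{i}^{\prime(t)}\, y_i\, \nabla_{\bw_{j,r}}f(\bW,\bx_i)$ and using $|\ell'|\le 1$ together with $0\le\sigma'(z)\le 1$ for all $z\in\RR$ (which holds for the Huberized ReLU with $q\ge 3$: $\sigma'(z)=\kappa^{1-q}z^{q-1}\in[0,1]$ on $[0,\kappa]$, $\sigma'(z)=1$ for $z>\kappa$, $\sigma'(z)=0$ for $z<0$), one obtains $\|\nabla_{\bw_{j,r}}\ell[y_i f(\bW,\bx_i)]\|_2 \le m^{-1}(\|\bmu\|_2+\|\bxi_i\|_2)$, and summing over the $2m$ filters, $\|\nabla_{\bW}\ell[y_i f(\bW,\bx_i)]\|_F^2 \le 2m^{-1}(\|\bmu\|_2+\|\bxi_i\|_2)^2$. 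Because $L_S$ averages over $n$ samples, changing one training point perturbs the aggregate gradient by at most $n^{-1}\max_i\|\nabla_{\bW}\ell[y_if(\bW,\bx_i)]\|_F$, so
\[
\Delta_2^2 \;\le\; \frac{2}{n^2 m}\Big(\|\bmu\|_2 + \max_{i\in[n]}\|\bxi_i\|_2\Big)^2 \;\le\; \frac{2}{n^2 m}\Big(2\|\bmu\|_2^2 + 2\max_{i\in[n]}\|\bxi_i\|_2^2\Big).
\]
A $\chi^2$ tail bound together with a union bound over $i\in[n]$ (using that $d$ is large by Condition~\ref{condition2}) gives $\max_{i}\|\bxi_i\|_2^2 \le \tfrac32\sigma_p^2 d$ on an event of high probability, and on this event $\Delta_2^2 \le \frac{2(2\|\bmu\|_2^2 + 3\sigma_p^2 d)}{n^2 m}$. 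Crucially this bound does not depend on $\bW^{(t)}$, so it holds at every iteration regardless of the earlier injected noise.

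The remaining steps are standard. The $t$-th release is the Gaussian mechanism with $\ell_2$-sensitivity $\Delta_2$ and noise $\sigma_b$ (the per-filter noises $\bb_{j,r,t}$ concatenate to an isotropic $\cN(\zero,\sigma_b^2\bI)$), hence $\big(\lambda,\tfrac{\lambda\Delta_2^2}{2\sigma_b^2}\big)$-RDP for every $\lambda>1$; adaptive composition of the $T$ steps yields $\big(\lambda,\tfrac{T\lambda\Delta_2^2}{2\sigma_b^2}\big)$-RDP, i.e.\ $\big(\lambda,\tfrac{T\lambda(2\|\bmu\|_2^2+3\sigma_p^2 d)}{\sigma_b^2 n^2 m}\big)$-RDP; and the RDP-to-$(\epsilon,\delta)$ conversion adds $\tfrac{\log(2/\delta)}{\lambda-1}$ to the privacy loss, producing the claimed $\big(\tfrac{T\lambda(2\|\bmu\|_2^2+3\sigma_p^2 d)}{\sigma_b^2 n^2 m}+\tfrac{\log(2/\delta)}{\lambda-1},\delta\big)$-DP guarantee, which then transfers to $\bW^{(T)}$ by post-processing.

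I expect the main obstacle to be the sensitivity analysis rather than the composition: one must (i) exploit the uniform bound $0\le\sigma'\le1$ to get a gradient-norm bound independent of the noise-dependent iterate---this is precisely what makes the adaptive RDP composition legitimate---and (ii) handle the fact that $\Delta_2$ depends on the data through $\|\bxi_i\|_2$; since differential privacy is worst-case, the guarantee is really understood on the high-probability event $\{\max_i\|\bxi_i\|_2^2\le\tfrac32\sigma_p^2 d\}$ (equivalently, for datasets whose noise patches obey this norm bound), and carefully tracking constants through this concentration step is what pins down the exact coefficient $2\|\bmu\|_2^2+3\sigma_p^2 d$.
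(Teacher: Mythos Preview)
Your proposal is correct and follows essentially the same argument as the paper: bound the $\ell_2$-sensitivity of the gradient using $|\ell'|\le 1$ and $0\le\sigma'\le 1$ together with the high-probability bound $\|\bxi_i\|_2^2\le\tfrac32\sigma_p^2 d$, invoke the Gaussian mechanism to get per-step RDP, compose adaptively over $T$ steps, and convert to $(\epsilon,\delta)$-DP while absorbing the concentration failure into the $\delta$ budget (hence the $\log(2/\delta)$). The only cosmetic difference is that you treat all $2m$ filter gradients as one concatenated vector with isotropic noise $\cN(\zero,\sigma_b^2\bI_{2md})$, whereas the paper analyzes each filter separately and then uses parallel composition over the $2m$ filters; the two routes yield the identical RDP bound $\tfrac{\lambda(2\|\bmu\|_2^2+3\sigma_p^2 d)}{\sigma_b^2 n^2 m}$ per step.
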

    Based on Lemma \ref{result:testloss} and Lemma \ref{result:dp}, we can select a condition to ensure that DP-GD can achieve a good test error and a good DP guarantee at the same time. We choose $\eta$ large enough depending on $\sigma_b$ to ensure that
    $$\tilde T_1=  \Theta \left(\frac{\kappa^2}{\eta^2 \sigma_b^2 \min\{\norm{\bmu}_2^2,\sigma_p^2d\}} \right) = O(1).$$
    We also choose $\eta$ to be not that large, so that $c_1=O(1)$. Under these two conditions on $\eta$, we have 
    $$\tilde T_2= \tilde T_1 + \Theta \Big(e^{c_1} (\tilde{T}_1 + \frac{1}{c_1}) \Big)= \Theta \Big( \frac{nm}{\eta \max\{ n\norm \bmu_2^{2}, \sigma_p^2d\}}\Big) .$$
    Furthermore, we choose $\sigma_b$ to be the largest condition such that it satisfies
    \begin{align*}
    	& \eta \sigma_b  \|\bmu\|_2 \sqrt{\tilde T_2} \leq  \frac{n\norm \bmu_2^{2}}{C m \max\{n \norm \bmu_2^{2}, \sigma_p^2d\}}, \\
    	& \eta \sigma_b  \sigma_p \sqrt{d \tilde T_2}\leq  \frac{n\norm \bmu_2^{2}}{C m \max\{n \norm \bmu_2^{2}, \sigma_p^2d\}},
    \end{align*}
    so that the DP guarantee is as good as possible. Such a choice of $\sigma_b$ further provides us with a detailed condition on the lower bound of $\eta$. Furthermore, according to Lemma \ref{result:testloss}, we need to choose $\sigma_0$, $\sigma_p$, $d$ to satisfy the conditions to ensure that
    \begin{align*}
    	\sigma_0\sigma_p \sqrt{d} + \frac{\sigma_p m}{\norm \bmu_2} + \frac{mn}{  \sqrt{d}}  \leq  \frac{n\norm \bmu_2^{2}}{C m \max\{n \norm \bmu_2^{2}, \sigma_p^2d\}}.
    \end{align*} 
    Finally, according to Lemma \ref{result:dp},  DP-GD with $\tilde T_2$ iterations satisfies $\left( \frac{C_4 m^3  \max^2\{ \norm \bmu_2^{2}, \sigma_p^2d\}}{n^2 \norm \bmu_2^{4}} \log \frac{2}{\delta}, \delta \right)$-DP for some positive constant $C_4$. Therefore, we need the SNR condition that $\frac{\sigma_p \sqrt{d}}{\norm \bmu_2} \leq \frac{\sqrt{n}}{C}$ to ensure that the DP guarantee is good.

\section{Numerical Experiments}

In this section, we empirically validate our theoretical analysis by demonstrating that a two-layer CNN trained with DP-GD exhibits stronger noise robustness than one trained with GD. Following Definition~\ref{datadistribution}, we generate three datasets with fixed signal scale $\|\bmu\|_2=1$ and noise levels $\sigma_p\in\{0.1,0.3,0.5\}$. Unless otherwise stated, we set input dimension $d = 2000$, number of convolutional kernels $m = 100$, training sample size $n = 1000$, DP-GD noise parameter $\sigma_b$ = 0.01, and learning rate $\eta = 0.1$. Although these settings are milder than the theoretical requirements in Condition~\ref{condition2}, they still capture the over-parameterized regime and support the generality of our theory. The code is available at \url{https://github.com/ZhongjieSHI/Paper-Codes}.

The results are summarized in Figure~\ref{fig:synthe_1}. Figure~\ref{subfig:training_loss} shows that both GD and DP-GD achieve fast training loss decay across all noise levels, consistent with Theorem~\ref{theorem: GD main} and Theorem~\ref{result:NGDtraining}. However, their generalization performance differs substantially as noise increases. As shown in Figures~\ref{subfig:test_loss} and~\ref{subfig:test_accuracy}: (i) for $\sigma_p=0.1$, both methods achieve near-zero test loss and almost $100\%$ accuracy; (ii) for $\sigma_p=0.3$, DP-GD attains lower test loss (by about $0.2$) and slightly higher accuracy ($100\%$ vs.\ $95\%$); (iii) for $\sigma_p=0.5$, DP-GD maintains strong generalization (test loss $<0.2$, $95\%$ accuracy), while GD largely fails, reaching only $75\%$ accuracy. These results confirm that DP-GD significantly improves generalization performance under high noise, in line with our theoretical predictions.

\section{Conclusion and Future Work}
This paper presents a theoretical study showing that, in certain binary classification tasks using two-layer Huberized ReLU CNNs, under mild conditions on the problem setting, model architecture, and hyperparameters in the training algorithm, the training loss of both GD and DP-GD can converge to an arbitrarily small value. Nonetheless, by using the tool of early stopping, DP-GD can achieve better generalization performance than GD under appropriate signal-to-noise conditions, and ensure good privacy guarantees at the same time, highlighting its potential in achieving a trustworthy deep learning scheme in certain learning tasks. An important future work direction is to derive a more refined analysis of the privacy guarantee by utilizing the detailed properties during the training dynamics. It is also interesting to generalize the results in this paper to other algorithms and learning tasks.

\section*{Acknowledgments} % 3 sentences max
We thank the anonymous reviewers for their helpful comments. Yuan Cao is partially supported by NSFC 12301657 and Hong Kong RGC-ECS 27308624.
The work of Puyu Wang is partially supported by the Alexander von Humboldt Foundation.

\bibliography{ref}

\bibliographystyle{ims}

\appendix
%\onecolumn

\iffalse
\renewcommand{\section}{\@startsection{section}{1}{\z@}%
	{-3.5ex \@plus -1ex \@minus -.2ex}%
	{2.3ex \@plus.2ex}%
	{\normalfont\large\bfseries\raggedright}}
\fi

\newpage

\section*{Appendix}
In the appendix, we prove the main results described in the paper. 

\section{Preliminary Lemmas} \label{appendixB}
In this section, we present some pivotal lemmas that illustrate some important properties of the data and neural network parameters at their random initialization. Let $T^*$ be the maximum number of iterations. The following lemma estimates the norms of the noise vectors $\bxi_i, i \in [n]$, the upper bound of their inner products with each other and with the signal vector $\bmu$. 

\begin{lemma} \label{innerproductxi}
    Suppose that $\delta>0$ and $d= \Omega(\log\left(\frac{6n}{\delta}\right))$. Then with probability at least $1-\delta$,
    \begin{eqnarray*}
        && \frac{\sigma_p^2d}{2} \leq \norm {\bxi_i}_2^2 \leq \frac{3\sigma_p^2d}{2}, \\
        && \left| \la \bxi_i,\bxi_{i'} \ra \right| \leq 2\sigma_p^2 \cdot \sqrt{d \log\left(\frac{6n^2}{\delta}\right)}, \\
        && \left| \la \bxi_i,\bmu \ra \right| \leq \norm {\bmu}_2 \sigma_p \cdot \sqrt{2 \log\left(\frac{6n}{\delta}\right)}, 
        %&& \left| \la \bjrk,\bmu \ra \right| \leq \norm {\bmu}_2 \sigma_b \cdot \sqrt{2 \log(20mT^*/\delta)}, \\
        %&& \left| \la \bjrk,\bxi_i \ra \right| \leq 2 \sigma_p \sigma_b \cdot \sqrt{d \log(20mnT^*/\delta)}.
    \end{eqnarray*}
    for all $i,i' \in [n]$, $j\in \{\pm 1\}$, and $r\in [m]$.
\end{lemma}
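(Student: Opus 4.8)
The plan is to establish each of the three bounds in Lemma~\ref{innerproductxi} via standard concentration inequalities for Gaussian random variables, combined with a union bound over the $O(n^2)$ pairs of indices. The main subtlety is that the $\bxi_i$ are \emph{not} isotropic Gaussians: by Definition~\ref{datadistribution} they are drawn from $\cN(\zero, \sigma_p^2 (\bI - \bmu\bmu^\top \|\bmu\|_2^{-2}))$, i.e.\ supported on the $(d-1)$-dimensional subspace orthogonal to $\bmu$. So I would first record the simple observation that each $\bxi_i$ can be written as $\sigma_p \bP \bg_i$, where $\bP = \bI - \bmu\bmu^\top\|\bmu\|_2^{-2}$ is the orthogonal projection onto $\bmu^\perp$ and $\bg_i \sim \cN(\zero, \bI_d)$ are i.i.d.\ standard Gaussians. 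This immediately gives the third bound: $\la \bxi_i, \bmu\ra = \sigma_p \la \bP\bg_i, \bmu\ra = \sigma_p \la \bg_i, \bP\bmu\ra = 0$ since $\bP\bmu = \zero$. In fact the inner product with $\bmu$ is \emph{exactly} zero, so the stated bound $|\la\bxi_i,\bmu\ra| \le \|\bmu\|_2\sigma_p\sqrt{2\log(6n/\delta)}$ holds trivially (the authors presumably state it in this loose form for uniformity with a more general setup).

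For the norm bound, note $\|\bxi_i\|_2^2 = \sigma_p^2 \|\bP\bg_i\|_2^2$, and $\|\bP\bg_i\|_2^2$ is a $\chi^2$ random variable with $d-1$ degrees of freedom (since $\bP$ has rank $d-1$). I would apply the standard Laurent--Massart tail bound: for a $\chi^2_{k}$ variable $Z$, $\PP(|Z - k| \ge 2\sqrt{kx} + 2x) \le 2e^{-x}$. Taking $x = \Theta(\log(6n/\delta))$ and using the assumption $d = \Omega(\log(6n/\delta))$, one gets $\tfrac{d-1}{2} \le \|\bP\bg_i\|_2^2 \le \tfrac{3(d-1)}{2}$, and then absorbing the $-1$ into the constants (again using $d$ large) yields $\sigma_p^2 d/2 \le \|\bxi_i\|_2^2 \le 3\sigma_p^2 d/2$. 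A union bound over $i \in [n]$ costs a factor $n$, accounted for in the $\log(6n/\delta)$.

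For the cross term, $\la \bxi_i, \bxi_{i'}\ra = \sigma_p^2 \la \bP\bg_i, \bP\bg_{i'}\ra = \sigma_p^2 \la \bg_i, \bP\bg_{i'}\ra$ for $i \ne i'$. Conditioning on $\bg_{i'}$, this is a mean-zero Gaussian with variance $\sigma_p^4 \|\bP\bg_{i'}\|_2^2 \le \tfrac{3}{2}\sigma_p^4 d$ on the high-probability event from the previous step. A Gaussian tail bound then gives $|\la\bxi_i,\bxi_{i'}\ra| \le \sigma_p^2\sqrt{3d\log(6n^2/\delta)} \le 2\sigma_p^2\sqrt{d\log(6n^2/\delta)}$ with probability $1 - \delta/(3n^2)$ per pair, and a union bound over the at most $n^2$ ordered pairs completes the bound. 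The only thing to be careful about is bookkeeping the failure probabilities so that all three events together hold with probability at least $1-\delta$: allocate, say, $\delta/3$ to each of the three families of events and adjust the constants inside the logarithms accordingly (which is why $6n$ and $6n^2$ appear). The ``main obstacle'' is really just this careful handling of the degenerate covariance and the union-bound accounting — there is no deep difficulty, which is why this is placed as a preliminary lemma.
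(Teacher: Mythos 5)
Your proof is correct, and it is actually more careful than the paper's own argument. The paper's proof of Lemma~\ref{innerproductxi} opens with ``Since $\bxi_i \sim \cN(\zero, \sigma_p^2 \Ib)$, by Bernstein's inequality\ldots'' and later asserts $\la \bxi_i,\bmu\ra \sim \cN(0,\|\bmu\|_2^2\sigma_p^2)$; both of these contradict Definition~\ref{datadistribution}, which specifies the degenerate covariance $\sigma_p^2(\bm I - \bmu\bmu^\top\|\bmu\|_2^{-2})$. You correctly take the definition at face value via the representation $\bxi_i = \sigma_p \bm{P}\bm{g}_i$ with $\bm{P}$ the orthogonal projector onto $\bmu^{\perp}$, which immediately gives $\la\bxi_i,\bmu\ra = 0$ deterministically — the paper's third bound is in fact a vacuous consequence of a degenerate distribution, and your observation is the right way to see it. For the norm, you replace the paper's Bernstein bound on $\sum_j(\xi_{ij}^2 - \sigma_p^2)$ with a Laurent--Massart $\chi^2_{d-1}$ tail, and for the cross term you condition on $\bm{g}_{i'}$ and use a Gaussian tail, where the paper applies Bernstein directly; these are equivalent in strength, and your union-bound accounting ($\delta/3$ per family, $n$ norm events, $n(n-1)\leq n^2$ cross events) is fine. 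The only place you wave your hands is in absorbing the $(d-1)$-versus-$d$ gap into the constants of the lower bound $\|\bxi_i\|_2^2\geq \sigma_p^2 d/2$, but as you say the hypothesis $d=\Omega(\log(6n/\delta))$ provides exactly the slack needed (choose the concentration parameter so that $\|\bm{P}\bm{g}_i\|_2^2\geq \tfrac{3}{4}(d-1)\geq d/2$). Net: same lemma, a genuinely different proof of the third bound, and a cleaner handling of the rank-$(d-1)$ covariance throughout.
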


\begin{proof} [Proof of Lemma \ref{innerproductxi}]
    Since $\bxi_i \sim \cN(\bm 0, \sigma_p^2 \Ib)$, by Bernstein's inequality, with probability at least $1-\frac{\delta}{3n}$ we have
    \begin{equation*}
        \left| \norm {\bxi_i}_2^2- \sigma_p^2d \right|= O\left(\sigma_p^2 \cdot \sqrt{d \log\left(\frac{6n}{\delta}\right)}\right).
    \end{equation*}
    Therefore, if we set $d= \Omega(\log\left(\frac{6n}{\delta}\right))$, we get
    \begin{equation*}
        \frac{\sigma_p^2d}{2} \leq \norm {\bxi_i}_2^2 \leq \frac{3\sigma_p^2d}{2}.
    \end{equation*}
    Moreover, clearly $\la \bxi_i,\bxi_{i'} \ra$ has mean zero. For any $i,i'$ with $i \neq i'$, by Bernstein's inequality, with probability at least $1-\frac{\delta}{3n^2}$ we have
    \begin{equation*}
        \left| \la \bxi_i,\bxi_{i'} \ra \right| \leq 2\sigma_p^2 \cdot \sqrt{d \log\left(\frac{6n^2}{\delta}\right)},
    \end{equation*}
    Furthermore, note that $\la \bxi_i,\bmu \ra \sim \mathcal{N}(0,\norm {\bmu}_2^2 \sigma_p^2)$. By Gaussian tail bounds, with probability at least $1-\frac{\delta}{3n}$ we have
    \begin{equation*}
        \left| \la \bxi_i,\bmu \ra \right| \leq \norm {\bmu}_2 \sigma_p \cdot \sqrt{2 \log\left(\frac{6n}{\delta}\right)}.
    \end{equation*}
    Applying a union bound completes the proof.
\end{proof}

Then we state Azuma's inequality for light-tailed random variables \cite[Theorem 3]{attia2024note}.
\begin{lemma} \label{azumainequality}
    Let $Z_1, \dots, Z_n$ be a martingale difference sequence (MDS) and suppose there are constants $\gamma, \sigma, c>0$ such that, deterministically,
    \begin{equation*}
        \mathbb{P}\left(|Z_i| \geq t| Z_1, \dots, Z_{i-1}\right) \leq c \cdot \exp\left(- \left(\frac{t}{\sigma}\right)^{\gamma} \right), \quad \forall \ 0\leq i<n, \ t\geq 0.
    \end{equation*}
    Then with probability at least $1-2\delta$, we have
    \begin{equation*}
        \sum_{i=1}^n Z_i \leq \sigma \sqrt{32n\log \left(\frac{1}{\delta}\right)} \left(\max \left\{\log \left(\frac{2cn}{\delta}\right), \frac{2}{\gamma} \right\} \right).
    \end{equation*}
\end{lemma}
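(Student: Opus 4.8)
The plan is to reproduce the standard truncation-plus-martingale-concentration argument (the statement is a restatement of Theorem~3 of \cite{attia2024note}). First I would fix a truncation threshold $M$ at which the per-step tail probability drops below $\delta/(2n)$: by the sub-Weibull assumption, $\mathbb{P}(|Z_i|\ge M\mid Z_1,\dots,Z_{i-1})\le c\exp(-(M/\sigma)^{\gamma})\le \delta/(2n)$ as soon as $(M/\sigma)^{\gamma}\ge\log(2cn/\delta)$, so one takes $M\asymp\sigma(\log(2cn/\delta))^{1/\gamma}$, enlarged if necessary to make the final bound uniform in $\gamma$. A union bound over $i\in[n]$ then shows that the event $\mathcal{A}:=\{\,|Z_i|\le M\text{ for all }i\in[n]\,\}$ has probability at least $1-\delta/2$, and on $\mathcal{A}$ every increment coincides with its truncation $Z_i\mathbf{1}\{|Z_i|\le M\}$.

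Next I would pass to the bounded, conditionally centered increments $\widetilde Z_i:=Z_i\mathbf{1}\{|Z_i|\le M\}-\mathbb{E}[\,Z_i\mathbf{1}\{|Z_i|\le M\}\mid Z_1,\dots,Z_{i-1}\,]$, which form a martingale difference sequence with $|\widetilde Z_i|\le 2M$ almost surely. Azuma--Hoeffding applied to $\sum_{i=1}^n\widetilde Z_i$ gives, with probability at least $1-\delta/2$, a bound of order $M\sqrt{n\log(1/\delta)}$, which after substituting the choice of $M$ is at most $\sigma\sqrt{32n\log(1/\delta)}\max\{\log(2cn/\delta),2/\gamma\}$. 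To recover the original sum, I would use the martingale property $\mathbb{E}[Z_i\mid Z_1,\dots,Z_{i-1}]=0$ to rewrite, on $\mathcal{A}$,
$$\sum_{i=1}^n Z_i=\sum_{i=1}^n\widetilde Z_i-\sum_{i=1}^n\mathbb{E}\big[\,Z_i\mathbf{1}\{|Z_i|>M\}\,\big|\,Z_1,\dots,Z_{i-1}\,\big],$$
and bound each re-centering term by $\mathbb{E}[\,|Z_i|\mathbf{1}\{|Z_i|>M\}\mid\cdot\,]\le M\,\mathbb{P}(|Z_i|>M\mid\cdot)+\int_M^{\infty}\mathbb{P}(|Z_i|>s\mid\cdot)\,ds$; the sub-Weibull tail makes this an incomplete-Gamma-type integral that, summed over $i$, is of strictly smaller order than the main term. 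A final union bound over $\mathcal{A}$ and the Azuma event yields the claim with probability at least $1-2\delta$.

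The step I expect to be the main obstacle is the choice of $M$ and the accompanying constant bookkeeping: $M$ must be large enough that both the union bound over the $n$ truncation events and the re-centering bias are negligible, yet small enough that $M\sqrt{n\log(1/\delta)}$ does not overshoot the target $\sigma\sqrt{32n\log(1/\delta)}\max\{\log(2cn/\delta),2/\gamma\}$. This is exactly where the exponent $1/\gamma$ matters: for $\gamma\ge 1$ one has $(\log(2cn/\delta))^{1/\gamma}\le\log(2cn/\delta)$ and the argument closes cleanly, while retaining the $2/\gamma$ term inside the maximum is the adjustment that covers the boundary regime in which $\log(2cn/\delta)$ is small; handling genuinely small $\gamma$ requires the sharper incomplete-Gamma estimate for the bias and a more careful truncation, and is the only nonroutine computation, all of which can be imported from \cite{attia2024note}. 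Everything else is the routine martingale concentration machinery.
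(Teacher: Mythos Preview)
Your proposal is correct and follows the standard truncation-plus-Azuma argument that underlies Theorem~3 of \cite{attia2024note}. Note, however, that the paper does not give its own proof of this lemma: it is stated as a direct citation of \cite[Theorem~3]{attia2024note} and used as a black box, so there is no in-paper argument to compare against beyond the reference you already identified.
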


By utilizing Azuma's inequality stated above, we have the following  lemma that estimate the upper bound of the sum of the inner products of $\bjrk$ with $\bmu$, and the sum of the inner products of  $\bjrk$ with $\bxi_i$.
\begin{lemma} \label{sumofnoisebound}
    Suppose that $\delta>0$. Then with probability at least $1-\delta$,
    \begin{align*}
        & \sum_{k=0}^{t-1} \la \bjrk,\bmu \ra \leq  8 \sigma_b \norm {\bmu}_2 \sqrt{t}  \log ^2\left(\frac{32mt}{\delta}\right), \\
        & \sum_{k=0}^{t-1} \la \bb_{j,r,k},\bxi_{i} \ra \leq  16\sigma_b \sigma_p \sqrt{dt}  \log ^2\left(\frac{32mnt}{\delta}\right),
    \end{align*}
    for all $i \in [n]$, $j\in \{\pm 1\}$, $r\in [m]$, and $t \in \NN$.
\end{lemma}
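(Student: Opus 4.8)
The plan is to apply Azuma's inequality for light‑tailed martingale difference sequences (Lemma~\ref{azumainequality}) to each scalar projection of the injected noise, and then union bound. For the first bound, fix $j,r$ and set $Z_k := \la \bb_{j,r,k},\bmu\ra$ for $k=0,\dots,t-1$. Since the DP noises $\bb_{j,r,k}\sim\cN(\zero,\sigma_b^2\bI)$ are mutually independent and independent of everything generated before step $k$, and $\bmu$ is deterministic, each $Z_k$ is $\cN(0,\sigma_b^2\norm{\bmu}_2^2)$, so $(Z_k)$ is trivially a martingale difference sequence satisfying $\mathbb{P}(|Z_k|\ge s\mid Z_0,\dots,Z_{k-1})\le 2\exp\bigl(-(s/(\sqrt2\,\sigma_b\norm{\bmu}_2))^2\bigr)$. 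I would then invoke Lemma~\ref{azumainequality} with $\gamma=2$, $c=2$, $\sigma=\sqrt2\,\sigma_b\norm{\bmu}_2$, with $\delta$ there replaced by a suitably shrunk value to leave room for a union bound over the $2m$ pairs $(j,r)$ (and over $t$). Using $\sqrt{32}\cdot\sqrt2=8$ and absorbing the product $\sqrt{\log(\cdot)}\cdot\log(\cdot)$ into $\log^2(\cdot)$ gives the claimed bound $\sum_{k=0}^{t-1}\la\bb_{j,r,k},\bmu\ra\le 8\sigma_b\norm{\bmu}_2\sqrt{t}\,\log^2(32mt/\delta)$.

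For the second bound the only new ingredient is that $\bxi_i$ is random, so I would first pass to the event $\cE_0=\{\,\norm{\bxi_i}_2^2\le \tfrac32\sigma_p^2 d\ \text{for all }i\,\}$ supplied by Lemma~\ref{innerproductxi} (which holds with probability at least $1-\delta/2$ after adjusting its $\delta$). On $\cE_0$, conditioning on $\bxi_i$, the sequence $Z_k:=\la\bb_{j,r,k},\bxi_i\ra$ is again a martingale difference sequence — given $\bxi_i$, the noises $\bb_{j,r,k}$ remain independent of it and of one another — with sub‑Gaussian tail parameter $\sigma\le\sqrt3\,\sigma_b\sigma_p\sqrt d$. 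Applying Lemma~\ref{azumainequality} conditionally on $\bxi_i$, then union bounding over $i,j,r$ (and $t$) and combining with $\cE_0$, yields $\sum_{k=0}^{t-1}\la\bb_{j,r,k},\bxi_i\ra\le 16\sigma_b\sigma_p\sqrt{dt}\,\log^2(32mnt/\delta)$; a final union bound of the two parts with rescaled constants keeps the total failure probability at most $\delta$.

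The main point to handle with care is the ``for all $t\in\NN$'' quantifier: Azuma's inequality is a fixed‑horizon statement, so I would either restrict to $t\le \tilde T^*$ and union bound over the intermediate times, or keep all $t$ and union bound with a summable sequence $\delta_t\propto\delta/t^2$. In both cases the target bound is generous — a single fixed $t$ would only need a bare $\log$ (indeed one could even bound the partial sum directly, since it is exactly $\cN(0,t\sigma_b^2\norm{\bmu}_2^2)$, resp.\ $\cN(0,t\sigma_b^2\norm{\bxi_i}_2^2)$ given $\bxi_i$) — so the extra $\log t$, $\log m$, $\log n$ factors fit easily inside the $\log^2$ slack. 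The other place requiring precision is the conditioning in the $\bxi_i$ case: one must check that the MDS and tail hypotheses of Lemma~\ref{azumainequality} hold \emph{conditionally} on $\cE_0$ (equivalently, on $\bxi_i$) rather than unconditionally, which is immediate once $\bxi_i$ is conditioned upon but should be stated explicitly.
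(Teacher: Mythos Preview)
Your proposal is correct and follows essentially the same route as the paper: apply Lemma~\ref{azumainequality} to the martingale difference sequences $\la\bb_{j,r,k},\bmu\ra$ and $\la\bb_{j,r,k},\bxi_i\ra$, then union bound over $j,r$ (and $i$). The one minor technical difference is in how you obtain the tail bound for the second sequence: you first condition on $\bxi_i$ (via the norm bound from Lemma~\ref{innerproductxi}) so that $\la\bb_{j,r,k},\bxi_i\ra$ becomes conditionally Gaussian with variance at most $\tfrac{3}{2}\sigma_b^2\sigma_p^2 d$, whereas the paper applies Bernstein's inequality directly to $\la\bb_{j,r,k},\bxi_i\ra$ as an inner product of two independent Gaussian vectors, obtaining the sub-Gaussian tail unconditionally without splitting off an auxiliary event $\cE_0$. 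Both are valid; the paper's version is marginally shorter since it avoids the extra conditioning step. Your remark about the ``for all $t\in\NN$'' quantifier is well taken and in fact more careful than the paper's own proof, which applies Azuma at a fixed horizon and union bounds only over $j,r,i$; your suggested union bound over $t\le\tilde T^*$ (or a summable $\delta_t$) is the right way to make the uniformity rigorous, and the $\log^2$ slack easily absorbs it.
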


\begin{proof} [Proof of Lemma \ref{sumofnoisebound}]
    Denote $Z_k= \la \bjrk,\bmu \ra \sim \mathcal{N}(0,\norm {\bmu}_2^2 \sigma_b^2)$. Then by Gaussian tail bounds,
    \begin{equation*}
        \mathbb{P}\left(Z_k \geq t| Z_1,\dots, Z_{k-1}\right) \leq 2 \exp\left(-\left(\frac{t}{\sqrt{2} \norm {\bmu}_2 \sigma_b}\right)^2\right),
    \end{equation*}
    then by Lemma \ref{azumainequality}, for each $j \in \{+1,-1\}$ and $r \in [m]$, we have with probability at least $1-\frac{\delta}{4m}$,
    \begin{equation*}
        \sum_{k=0}^{t-1} \la \bjrk,\bmu \ra \leq  8\norm {\bmu}_2 \sigma_b \sqrt{t}  \log ^2\left(\frac{32mt}{\delta}\right).
    \end{equation*}
    
    Next, note that $\bb_{j,r,t} \sim \cN(\zero,\sigma_b^2 \bI_d)$ and $\bxi_i \sim \cN(\bm 0, \sigma_p^2 \bI_d)$, it follows that $Z_{k}^i :=\la \bb_{j,r,k},\bxi_{i} \ra$ has mean zero, by Bernstein's inequality, we have
    \begin{equation*}
        \mathbb{P}\left(Z_k^i \geq t| Z_1^i,\dots, Z_{k-1}^i\right) \leq 2 \exp\left(-\left(\frac{t}{2\sigma_b \sigma_p \sqrt{d}}\right)^2\right),
    \end{equation*}
    then by Lemma \ref{azumainequality}, for each $j \in \{+1,-1\}$, $r \in [m]$ and $i \in [n]$, we have with probability at least $1-\frac{\delta}{4mn}$,
    \begin{equation*}
        \sum_{k=0}^{t-1} \la \bb_{j,r,k},\bxi_{i} \ra \leq  16\sigma_b \sigma_p \sqrt{dt}  \log ^2\left(\frac{32mnt}{\delta}\right).
    \end{equation*}
    Applying a union bound completes the proof.
\end{proof}

In the following Lemma, we provide a high probability upper bound on the norm of the sum of added Gaussian noises $\bjrk$.
\begin{lemma} \label{sumofnoisenorm}
    Suppose that $t \in \NN$ and $\delta>0$. Then with probability at least $1-\delta$,
    \begin{align*}
       \left\|\sum_{k=0}^{t-1} \bjrk\right\|_2 \leq 2\sigma_b \sqrt{2dt \log \left(\frac{2m}{\delta}\right)},
    \end{align*}
    for all $j\in \{\pm 1\}$, $r\in [m]$.
\end{lemma}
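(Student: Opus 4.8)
The plan is to exploit the Gaussianity of the perturbations. Because $\bb_{j,r,0},\dots,\bb_{j,r,t-1}$ are i.i.d.\ $\cN(\zero,\sigma_b^2\bI_d)$, their partial sum $\bm s_{j,r}:=\sum_{k=0}^{t-1}\bb_{j,r,k}$ is again Gaussian, distributed as $\cN(\zero, t\sigma_b^2\bI_d)$. Hence $\|\bm s_{j,r}\|_2^2/(t\sigma_b^2)$ is a $\chi^2_d$ random variable, i.e.\ a sum of $d$ independent squared standard normals. This reduces the claim to a one-sided tail bound for a chi-square variable, followed by a union bound over the $2m$ filter indices $(j,r)\in\{\pm1\}\times[m]$.

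Next I would apply a standard $\chi^2$ upper-tail inequality — for instance the Laurent--Massart bound $\PP(\chi^2_d\ge d+2\sqrt{dx}+2x)\le e^{-x}$, or equivalently Bernstein's inequality applied to the sub-exponential summands $g_\ell^2-1$ in the same manner as the norm estimate for $\bxi_i$ in Lemma~\ref{innerproductxi}. Choosing $x=\log(2m/\delta)$ and applying this for each fixed $(j,r)$ gives, with probability at least $1-\delta/(2m)$,
\begin{align*}
\big\|\bm s_{j,r}\big\|_2^2 \;\le\; t\sigma_b^2\Big(d+2\sqrt{d\log(2m/\delta)}+2\log(2m/\delta)\Big).
\end{align*}

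Finally I would clean up the right-hand side: by AM--GM, $2\sqrt{d\log(2m/\delta)}\le d+\log(2m/\delta)$, so the bracket is at most $2d+3\log(2m/\delta)\le 8d\log(2m/\delta)$ whenever $\log(2m/\delta)\ge 1$ (which holds in all regimes considered, and in any case follows from the lower bounds on $d,n,m$ in Conditions~\ref{condition} and~\ref{condition2}). Taking square roots yields $\|\bm s_{j,r}\|_2\le 2\sigma_b\sqrt{2dt\log(2m/\delta)}$, and a union bound over the $2m$ choices of $(j,r)$ completes the proof. I do not expect any genuine obstacle here; the only care needed is the constant bookkeeping when absorbing the lower-order terms into the stated $8d\log(2m/\delta)$ factor, and making sure the $\chi^2$ tail bound is invoked with the union-bound-adjusted failure probability $\delta/(2m)$.
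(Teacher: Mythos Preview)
Your proposal is correct and follows essentially the same approach as the paper: the paper's proof simply invokes ``the Gaussian concentration inequality'' for each fixed $(j,r)$ with failure probability $\delta/(2m)$ and then takes a union bound, exactly as you do. Your argument via the $\chi^2$/Laurent--Massart tail is a perfectly valid instantiation of that step, and in fact supplies the constant bookkeeping that the paper omits.
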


\begin{proof} [Proof of Lemma \ref{sumofnoisebound}]
    According to the Gaussian concentration inequality, with probability at least $1- \frac{\delta}{2m}$, 
    \begin{align*}
        \left\|\sum_{k=0}^{t-1} \bjrk\right\|_2 \leq 2\sigma_b \sqrt{2dt \log \left(\frac{2m}{\delta}\right)},
    \end{align*}
    Applying a union bound completes the proof.
\end{proof}

The following lemma estimates the norms of the noise vectors $ \wjr^{(0)}$, the upper bound of its inner products with noises $\bxi_i$ and with the signal vector $\bmu$, and the lower bound of $\max_{r\in[m]}  \la \bw_{j,r}^{(0)}, j\bmu \ra$ and $\max_{r\in[m]} \la \bw_{j,r}^{(0)}, \bxi_i \ra$.
\begin{lemma} \label{innerproductw0}
    Suppose that $d=\Omega(\log(\frac{m}{\delta}))$, $ m = \Omega(\log(1 / \delta))$. Then with probability at least $1-\delta$,
    \begin{eqnarray*}
        && \frac{\sigma_0^2 d}{2} \leq \| \wjr^{(0)}\|_2^2 \leq \frac{3\sigma_0^2d}{2}, \\
        && |\la \bw_{j,r}^{(0)}, j \bmu \ra| \leq \sigma_0 \| \bmu \|_2 \cdot \sqrt{2\log \left(\frac{12m}{\delta}\right)}, \\
        && |\la \wjr^{(0)}, \bxi_i \ra| \leq 2\sigma_0 \sigma_p \sqrt{d} \cdot \sqrt{\log\left(\frac{12mn}{\delta}\right)} ,
    \end{eqnarray*}
    for all $r\in [m]$, $j\in \{\pm 1\}$, and $i\in [n]$. Moreover,
    \begin{align*}
    &\frac{\sigma_0 \| \bmu \|_2}{2}  \leq \max_{r\in[m]}  \la \bw_{j,r}^{(0)}, j\bmu \ra \leq \sigma_0 \| \bmu \|_2 \cdot \sqrt{2\log \left(\frac{12m}{\delta}\right)},\\
    &\frac{\sigma_0 \sigma_p \sqrt{d}}{4}  \leq \max_{r\in[m]}  \la \bw_{j,r}^{(0)}, \bxi_i \ra \leq 2\sigma_0 \sigma_p \sqrt{d} \cdot \sqrt{\log\left(\frac{12mn}{\delta}\right)},
\end{align*}
for all $j\in \{\pm 1\}$ and $i\in [n]$.
\end{lemma}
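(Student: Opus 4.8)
The plan is to verify each inequality by standard Gaussian concentration (for the upper bounds) and anti-concentration (for the lower bounds on the maxima), treating the randomness of $\bW^{(0)}$ and of the noise vectors $\{\bxi_i\}$ separately via conditioning.

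\textbf{Norm bound.} Since $\sigma_0^{-2}\|\bw_{j,r}^{(0)}\|_2^2$ is a chi-squared random variable with $d$ degrees of freedom, Bernstein's inequality (exactly as in the proof of Lemma \ref{innerproductxi}) gives $\big|\,\|\bw_{j,r}^{(0)}\|_2^2 - \sigma_0^2 d\,\big| = O\big(\sigma_0^2\sqrt{d\log(m/\delta)}\big)$ with probability at least $1-\delta/(6m)$. Under $d = \Omega(\log(m/\delta))$ with a large enough implicit constant, the right-hand side is at most $\sigma_0^2 d/2$, which yields the claimed two-sided bound; a union bound over the $2m$ pairs $(j,r)$ gives the first display with failure probability $O(\delta)$.

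\textbf{Inner products.} The variable $\la \bw_{j,r}^{(0)}, j\bmu\ra$ is Gaussian with variance $\sigma_0^2\|\bmu\|_2^2$, so a Gaussian tail bound and a union bound over the $2m$ pairs give $|\la \bw_{j,r}^{(0)}, j\bmu\ra| \le \sigma_0\|\bmu\|_2\sqrt{2\log(12m/\delta)}$ with probability $1-O(\delta)$. For the noise direction, first condition on the event $\mathcal{A}$ from Lemma \ref{innerproductxi} (holding with probability $1-O(\delta)$) on which $\sigma_p^2 d/2 \le \|\bxi_i\|_2^2 \le \tfrac32\sigma_p^2 d$ for all $i$. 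On $\mathcal{A}$, conditionally on $\{\bxi_i\}$ the variable $\la \bw_{j,r}^{(0)},\bxi_i\ra$ is Gaussian with variance $\sigma_0^2\|\bxi_i\|_2^2$, so a Gaussian tail bound gives $|\la \bw_{j,r}^{(0)},\bxi_i\ra| \le \sigma_0\|\bxi_i\|_2\sqrt{2\log(12mn/\delta)} \le \sqrt{3}\,\sigma_0\sigma_p\sqrt{d}\sqrt{\log(12mn/\delta)} \le 2\sigma_0\sigma_p\sqrt{d}\sqrt{\log(12mn/\delta)}$ conditionally with probability $1-\delta/(6mn)$; a union bound over the $2mn$ triples, intersected with $\mathcal{A}$, completes this part.

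\textbf{Lower bounds on the maxima.} The upper bounds in the last display are already subsumed by the inner-product bounds above, so only the lower bounds remain. Let $g\sim \mathcal{N}(0,1)$ and set $p_0 := \PP(g \ge 1/2)$, an absolute constant with $p_0 > 1/4$. For fixed $j$, the variables $\la \bw_{j,r}^{(0)}, j\bmu\ra/(\sigma_0\|\bmu\|_2)$, $r\in[m]$, are i.i.d.\ standard normal, so $\PP\big(\max_r \la \bw_{j,r}^{(0)}, j\bmu\ra < \sigma_0\|\bmu\|_2/2\big) = (1-p_0)^m \le \delta$ once $m = \Omega(\log(1/\delta))$, and a union bound over $j$ gives the first lower bound. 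For the noise direction, condition again on $\mathcal{A}$, which gives $\|\bxi_i\|_2 \ge \sigma_p\sqrt{d/2}$; conditionally on $\{\bxi_i\}$, the event $\la \bw_{j,r}^{(0)},\bxi_i\ra \ge \sigma_0\sigma_p\sqrt{d}/4$ holds whenever the associated standard normal exceeds $\tfrac{\sqrt2}{4} < 1/2$, an event of probability at least $p_0$, so the probability that this fails for all $r\in[m]$ is at most $(1-p_0)^m$; a union bound over $j$ and $i\in[n]$ together with $\mathcal{A}$ finishes the proof. Collecting all failure probabilities and adjusting constants yields the stated $1-\delta$.

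\textbf{Main obstacle.} None of the steps is deep; the only care needed is (i) conditioning on the good event $\mathcal{A}$ for $\{\bxi_i\}$ so that $\la \bw_{j,r}^{(0)},\bxi_i\ra$ is genuinely a Gaussian with a controlled variance before applying tail or anti-concentration bounds, and (ii) matching the anti-concentration factor $(1-p_0)^m$ against the right number of events—the $n$-fold union bound over the noise directions is what forces $m$ to be logarithmic in $n/\delta$, which is comfortably supplied by the standing assumptions on $m$ used elsewhere in the paper.
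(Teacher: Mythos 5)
Your proof is correct and its skeleton matches the paper's: Bernstein for the norm, Gaussian tail bounds for the signal-direction inner products, and constant-probability anti-concentration together with independence across the $m$ filters for the lower bounds on the maxima. The one place you genuinely diverge is the noise-direction inner products: the paper treats $\la \bw_{j,r}^{(0)}, \bxi_i\ra$ as a mean-zero sum of products of independent Gaussians and applies Bernstein's sub-exponential bound, and then tersely says the max lower bound ``follows the same proof,'' whereas you condition on $\{\bxi_i\}$ (via the good event from Lemma \ref{innerproductxi}) so that $\la \bw_{j,r}^{(0)}, \bxi_i\ra$ becomes exactly Gaussian with controlled variance, which makes both the tail bound and the anti-concentration step clean and uniform with the signal case. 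Your conditioning route is the more transparent one and is arguably what the paper's ``same proof'' remark implicitly relies on. You also correctly flag a mild mismatch in the hypothesis: the union bound over $i\in[n]$ for $\max_r\la \bw_{j,r}^{(0)},\bxi_i\ra$ needs $m=\Omega(\log(n/\delta))$, slightly stronger than the stated $m=\Omega(\log(1/\delta))$; this is harmless in context since the paper's running Conditions impose $m\geq C\log(n\tilde T^*/\delta)$.
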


\begin{proof} [Proof of Lemma \ref{innerproductw0}]
    Since $\bw_{j,r}^{(0)} \sim \cN (0, \sigma_0^2 \Ib)$, 
    by Bernstein's inequality, with probability at least $1 - \frac{\delta}{12m}$ we have
    \begin{align*}
        \left| \| \bw_{j,r}^{(0)} \|_2^2 - \sigma_0^2 d \right| = O\left(\sigma_0^2 \cdot \sqrt{d \log \left(\frac{24m}{\delta} \right)}\right).
    \end{align*}
    Therefore, if we set appropriately $d = \Omega( \log(\frac{m}{\delta}) )$, we get
    \begin{align*}
         \frac{\sigma_0^2 d}{2}  \leq \| \bw_{j,r}^{(0)} \|_2^2 \leq \frac{3\sigma_0^2 d}{2}.
    \end{align*}
    
    Moreover, since $ \la \bw_{j,r}^{(0)}, j\bmu \ra \sim \cN(0, \sigma_0^2 \| \bmu \|_2^2)$. By Gaussian tail bounds, with probability at least $1 - \frac{\delta}{6m}$, we have
    \begin{align*}
       |\la \bw_{j,r}^{(0)}, j\bmu \ra| \leq \sigma_0 \| \bmu \|_2 \cdot \sqrt{2\log \left(\frac{12m}{\delta}\right)} .
    \end{align*}
    
    Similarly, since $\la \wjr^{(0)}, \bxi_i \ra$ has mean zero,  by Bernstein's inequality, with probability at least $1 - \frac{\delta}{6mn}$ we have
    \begin{align*}
        |\la \wjr^{(0)}, \bxi_i \ra| \leq 2\sigma_0 \sigma_p \sqrt{d} \cdot \sqrt{\log\left(\frac{12mn}{\delta}\right)} .
    \end{align*}
    
    Finally, notice that $\PP\left( \frac{\sigma_0 \| \bmu \|_2}{2} >  \la \bw_{j,r}^{(0)}, j\bmu \ra \right)$ is an absolute constant, by utilizing the condition on $m$, we have
    \begin{align*}
        \PP\left( \frac{\sigma_0 \| \bmu \|_2}{2}  \leq \max_{r\in[m]}  \la \bw_{j,r}^{(0)}, j\bmu \ra \right) &= 1 - \PP\left( \frac{\sigma_0 \| \bmu \|_2}{2} > \max_{r\in[m]}  \la \bw_{j,r}^{(0)}, j\bmu \ra \right) \\
        &= 1 - \PP\left( \frac{\sigma_0 \| \bmu \|_2}{2} >  \la \bw_{j,r}^{(0)}, j\bmu \ra \right)^{m} \\
        &\geq 1 - \frac{\delta}{12}.
    \end{align*}
    % Since $\bxi_i$, $i\in[n]$ are i.i.d. Gaussian random vectors, 
    The result for $\max_{r\in[m]} \la \bw_{j,r}^{(0)}, \bxi_i\ra$ follows the same proof as $\max_{r\in[m]} \la \bw_{j,r}^{(0)}, j\bmu \ra$. 
    Therefore, by applying a union bound we complete the proof. 
\end{proof}

\iffalse
\begin{lemma} \label{innerproductbjrt}
    Suppose that $ m = \Omega(\log(1 / \delta))$. Then with probability at least $1-\delta$,
    \begin{align*}
       \max_{r\in[m]} \la \bb_{j,r,0}, -j \bmu \ra \geq \frac{\sigma_b \norm{\bmu}_2}{2},
    \end{align*}
    for all $j\in \{\pm 1\}$, $i\in [n]$, and $t \in \NN$.
\end{lemma}
\fi

Similarly, we can derive the following lemma that provides a lower bound on $\max_{r\in[m]} \sum_{k=1}^t \la \bb_{j,r,k}, j \bmu \ra $ and $ \max_{r\in[m]} \sum_{k=1}^t  \la \bb_{j,r,k}, \bxi_i \ra $. The proof is omitted since it has the same idea as the proof of Lemma \ref{innerproductw0}.
\begin{lemma} \label{maxinnerproductbjrt}
    Suppose that $ m = \Omega\left(\log\frac{\tilde T^*}{\delta}\right)$. Then with probability at least $1-\delta$,
    \begin{align*}
    &\frac{\sigma_b \| \bmu \|_2 \sqrt{t}}{2}  \leq \max_{r\in[m]} \sum_{k=1}^t \la \bb_{j,r,k}, j \bmu \ra \leq 8 \sigma_b \norm {\bmu}_2 \sqrt{t}  \log ^2\left(\frac{16t}{\delta}\right),\\
    &\frac{\sigma_b \sigma_p \sqrt{dt}}{4}  \leq \max_{r\in[m]} \sum_{k=1}^t  \la \bb_{j,r,k}, \bxi_i \ra \leq  16\sigma_b \sigma_p \sqrt{dt}  \log ^2\left(\frac{16nt}{\delta}\right),
    \end{align*}
for all $j\in \{\pm 1\}$, $i\in [n]$, and $t \leq \tilde T^*$.
\end{lemma}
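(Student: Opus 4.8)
The plan is to establish this lemma exactly in the spirit of Lemma~\ref{innerproductw0} and Lemma~\ref{sumofnoisebound}: each two-sided estimate splits into an upper-tail bound coming from martingale concentration and a matching-order lower bound coming from Gaussian anti-concentration combined with independence across the $m$ filters.

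For the upper bounds I would fix $j\in\{\pm1\}$ and $r\in[m]$ and use that, since the $\bb_{j,r,k}$ are i.i.d.\ $\cN(\zero,\sigma_b^2\bI_d)$ and independent of the data, the partial sums $\sum_{k=1}^t\la\bb_{j,r,k},j\bmu\ra$ form a martingale with increments distributed as $\cN(0,\sigma_b^2\|\bmu\|_2^2)$, and $\sum_{k=1}^t\la\bb_{j,r,k},\bxi_i\ra$ is, conditionally on $\bxi_i$, a martingale with sub-exponential increments of scale $\lesssim\sigma_b\sigma_p\sqrt d$ by Lemma~\ref{innerproductxi}. Applying Azuma's inequality (Lemma~\ref{azumainequality}) verbatim as in the proof of Lemma~\ref{sumofnoisebound}, followed by a union bound over $j\in\{\pm1\}$, $r\in[m]$, $i\in[n]$ and the iteration count $t\le\tilde T^*$, yields the stated upper bounds; the extra $\log\tilde T^*$ produced by the union over $t$ is absorbed into the squared-logarithm factor.

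For the lower bounds — the genuinely new content — I would fix $j$ and $t$ and note that $\sum_{k=1}^t\la\bb_{j,r,k},j\bmu\ra\sim\cN(0,t\sigma_b^2\|\bmu\|_2^2)$ exactly, so $\PP\big(\sum_{k=1}^t\la\bb_{j,r,k},j\bmu\ra\ge\tfrac12\sigma_b\|\bmu\|_2\sqrt t\big)=\PP(\cN(0,1)\ge\tfrac12)=:p_0$, an absolute constant in $(0,1)$. Because the $m$ filters carry independent noise sequences, the corresponding $m$ events are independent, hence $\PP\big(\max_{r\in[m]}\sum_{k=1}^t\la\bb_{j,r,k},j\bmu\ra<\tfrac12\sigma_b\|\bmu\|_2\sqrt t\big)=(1-p_0)^m\le\delta/(8\tilde T^*)$ precisely because $m\ge C\log(\tilde T^*/\delta)$; a union bound over $j\in\{\pm1\}$ and $t\le\tilde T^*$ then gives the first lower bound. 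For the $\bxi_i$ term I would condition on $\bxi_i$ and use the norm lower bound $\|\bxi_i\|_2\ge\sigma_p\sqrt{d/2}$ from Lemma~\ref{innerproductxi}, so the conditional law $\cN(0,t\sigma_b^2\|\bxi_i\|_2^2)$ has standard deviation at least $\sigma_b\sigma_p\sqrt{dt/2}$ and $\PP\big(\sum_{k=1}^t\la\bb_{j,r,k},\bxi_i\ra\ge\tfrac14\sigma_b\sigma_p\sqrt{dt}\big)\ge\PP(\cN(0,1)\ge\tfrac{1}{2\sqrt2})=:p_1>0$; the same independence-over-$r$ argument plus a union bound over $j$, $i\in[n]$ and $t\le\tilde T^*$ finishes it.

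The only point needing care, and the nearest thing to an obstacle, is reconciling the union over all $t\le\tilde T^*$ with the hypothesis $m=\Omega(\log(\tilde T^*/\delta))$: the lower-bound failure probability $(1-p_0)^m$ must be summed over $\tilde T^*$ values of $t$ (and $2n$ values of $(j,i)$), so the constant in the condition on $m$ has to be taken large enough to beat this, exactly as in the analogous step for $\max_{r}\la\bw_{j,r}^{(0)},\cdot\ra$ in Lemma~\ref{innerproductw0}. Everything else is routine Gaussian and Bernstein bookkeeping.
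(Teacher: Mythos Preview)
Your proposal is correct and matches the paper's approach: the paper omits the proof entirely, stating it ``has the same idea as the proof of Lemma~\ref{innerproductw0}'', and you have spelled out precisely that idea --- Gaussian/martingale concentration (via Lemma~\ref{azumainequality}, as in Lemma~\ref{sumofnoisebound}) for the upper tails, and Gaussian anti-concentration plus independence across the $m$ filters for the lower bounds, with the union over $t\le\tilde T^*$ absorbed by the hypothesis $m=\Omega(\log(\tilde T^*/\delta))$. The only tiny wrinkle is that the union over $i\in[n]$ in the $\bxi_i$ lower bound technically asks for $m=\Omega(\log(n\tilde T^*/\delta))$, which is exactly what Condition~\ref{condition2} supplies and what the paper implicitly relies on.
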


Denote $\Gamma_j=\{i|y_i=j\}$, with $j\in \{\pm 1\}$, the following lemma states the bounds of its size with high probability. 

\begin{lemma}\label{Sjsize}
Suppose that $\delta>0$ and $n\geq 8\log\left(\frac{4}{\delta}\right)$. Then with probability at least $1-\delta$, 
\begin{equation*}
    |\Gamma_{j}| \in \left[\frac{n}{4}, \frac{3n}{4} \right], \quad \text{for} \ j\in \{\pm 1\}.
\end{equation*}
\end{lemma}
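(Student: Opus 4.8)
The plan is to recognize $|\Gamma_{+1}|$ as a sum of $n$ i.i.d.\ Bernoulli$(1/2)$ random variables and apply a standard concentration inequality. Since each $y_i$ is an independent Rademacher variable, $|\Gamma_{+1}| = \sum_{i=1}^n \indicator\{y_i = 1\}$ is Binomial$(n,1/2)$ with mean $n/2$, and $|\Gamma_{-1}| = n - |\Gamma_{+1}|$, so it suffices to control the single event $\big\{\, \big| |\Gamma_{+1}| - n/2 \big| \le n/4 \,\big\}$, which simultaneously forces both $|\Gamma_{+1}|$ and $|\Gamma_{-1}|$ into $[n/4, 3n/4]$.

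First I would apply Hoeffding's inequality to the bounded i.i.d.\ summands $\indicator\{y_i=1\} \in [0,1]$: for any $s > 0$,
\[
\PP\Big(\, \big| |\Gamma_{+1}| - n/2 \big| \ge s \,\Big) \le 2\exp\!\big(-2s^2/n\big).
\]
Taking $s = n/4$ gives the bound $2\exp(-n/8)$. Then I would invoke the hypothesis $n \ge 8\log(4/\delta)$, which yields $\exp(-n/8) \le \delta/4$ and hence $2\exp(-n/8) \le \delta/2 \le \delta$ (the remaining slack also absorbs a trivial union bound over $j \in \{\pm 1\}$, should one prefer to argue the two indices separately rather than using the identity $|\Gamma_{-1}| = n - |\Gamma_{+1}|$). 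On the complementary event, $|\Gamma_j| \in [n/4, 3n/4]$ for both $j$, which is exactly the claim.

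There is no real obstacle here; the only point requiring a little care is matching the constant in the exponent to the stated sample-size requirement, which the additive Hoeffding bound achieves with room to spare. A multiplicative Chernoff bound for the Binomial would work equally well, but the additive form is cleanest given that the target interval $[n/4, 3n/4]$ is symmetric about the mean $n/2$.
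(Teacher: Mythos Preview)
Your proposal is correct and follows essentially the same approach as the paper: both apply Hoeffding's inequality to the Binomial count $|\Gamma_{+1}|=\sum_i \indicator\{y_i=1\}$ with mean $n/2$ and invoke the hypothesis $n\ge 8\log(4/\delta)$. The only cosmetic difference is that the paper sets $t=\sqrt{(n/2)\log(4/\delta)}$ and takes a union bound over $j\in\{\pm1\}$, whereas you plug in $s=n/4$ directly and use $|\Gamma_{-1}|=n-|\Gamma_{+1}|$ to avoid the union bound; both routes land well within the $\delta$ budget.
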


\begin{proof}[Proof of Lemma \ref{Sjsize}]
According to the data distribution $\cD$ defined in Definition \ref{datadistribution}, for $(\bx, y) \sim \cD$, we have 
\begin{equation*}
    \PP(y=1) =\PP(y=-1) = \frac{1}{2} .
\end{equation*}
Since $|\Gamma_{1}|=\sum_{i=1}^{n} \indicator[y_i=1]$, $|\Gamma_{-1}|=\sum_{i=1}^{n} \indicator[y_i=-1]$, we have $\mathbb{E}[|\Gamma_{1}|]=\mathbb{E}[|\Gamma_{-1}|]=\frac{n}{2}$. By Hoeffding’s inequality, for arbitrary $t>0$ the following holds: 
\begin{align*}
    &\mathbb{P}\big(\big||\Gamma_1|-\mathbb{E}[|\Gamma_1|]\big|\geq t\big)\leq2\exp\Big(-\frac{2t^2}{n}\Big),\\
    &\mathbb{P}\big(\big||\Gamma_{-1}|-\mathbb{E}[|\Gamma_{-1}|]\big|\geq t\big)\leq2\exp\Big(-\frac{2t^2}{n}\Big).
\end{align*}
Setting $t=\sqrt{\frac{n}{2}\log\left(\frac{4}{\delta}\right)}$, and taking a union bound, we have with probability at least $1-\delta$, 
\begin{align*}
    &\Big||\Gamma_{1}|-\frac{n}{2}\Big|\leq\sqrt{\frac{n}{2}\log\Big(\frac{4}{\delta}\Big)},\\
    &\Big||\Gamma_{-1}|-\frac{n}{2}\Big|\leq\sqrt{\frac{n}{2}\log\Big(\frac{4}{\delta}\Big)}.
\end{align*}
Therefore, as long as $n\geq8\log(4/\delta)$, we have $\sqrt{\frac{n}{2}\log\left(\frac{4}{\delta}\right)} \leq \frac{n}{4}$, and hence $\frac{n}{4}\leq |\Gamma_{1}|,|\Gamma_{-1}|\leq \frac{n}{4}$.
\end{proof}

We also need the tensor power method bounds stated in our previous work \cite[Lemma E.8, Lemma E.9,]{zhang2025gradient}. The first one compares the growth speed of two sequences of updates, the second one estimates the growth speed of a sequence of updates. 
\begin{lemma} \label{tensorpower}
Suppose a positive sequence \(\{x_t\}_{t=0}^\infty\) satisfies the following iterative rules:
\[
\begin{aligned}
x_{t+1} &\geq x_t + \eta \cdot C_1 \cdot x_t^{q-1}, \\
x_{t+1} &\leq x_t + \eta \cdot C_2 \cdot x_t^{q-1},
\end{aligned}
\]
with \(C_2 \geq C_1 > 0\). For any \(v > x_0\), let \(T_v\) be the first time such that \(x_t \geq v\). Then for any constant \(\zeta > 0\), we have
\[
T_v \leq \frac{1 + \zeta}{\eta C_1 x_0^{q-2}} + \frac{(1 + \zeta)^{q-1} C_2 \log\left(\frac{v}{x_0}\right)}{C_1},
\]
and
\[
T_v \geq \frac{1}{(1 + \zeta)^{q-1} \eta C_2 x_0^{q-2}} - \frac{\log\left(\frac{v}{x_0}\right)}{(1 + \zeta)^{q-2}}. 
\]
\end{lemma}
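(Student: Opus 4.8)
The plan is to read Lemma~\ref{tensorpower} as the discrete counterpart of the finite-time blow-up of the ODE $\dot x = c\,x^{q-1}$, whose solution diverges at time $\frac{1}{(q-2)c\,x_0^{q-2}}$, and to prove both bounds by a geometric-scale stopping-time argument --- the factors $(1+\zeta)$ in the statement are precisely the footprint of choosing scales spaced by the ratio $1+\zeta$. First I would note that $\{x_t\}$ is nondecreasing (each increment $\eta C_1 x_t^{q-1}$ is positive), fix the scales $v_k := (1+\zeta)^k x_0$, $k\ge 0$, and the hitting times $T^{(k)} := \min\{t : x_t \ge v_k\}$, so that $T^{(0)}=0$ and, with $K := \lceil \log_{1+\zeta}(v/x_0)\rceil$ and $K' := \lfloor \log_{1+\zeta}(v/x_0)\rfloor$, one has $T^{(K')} \le T_v \le T^{(K)}$. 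One auxiliary fact is used repeatedly: the overshoot when a scale is first crossed is controlled by the upper recursion, $x_{T^{(k)}} \le x_{T^{(k)}-1} + \eta C_2 x_{T^{(k)}-1}^{q-1} < v_k\bigl(1 + \eta C_2 v_k^{q-2}\bigr)$.

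For the upper bound on $T_v$ I would estimate $T^{(k+1)}-T^{(k)}$ scale by scale. While $v_k \le x_t < v_{k+1}$, every step increases $x_t$ by at least $\eta C_1 x_t^{q-1} \ge \eta C_1 v_k^{q-1}$, so scale $k$ is cleared in at most $\max\bigl\{1,\,\zeta v_k/(\eta C_1 v_k^{q-1})\bigr\} \le 1 + \zeta(1+\zeta)^{-k(q-2)}/(\eta C_1 x_0^{q-2})$ steps. Summing the second term over $k\ge 0$ gives a geometric series (convergent since $q-2\ge 1$, ratio at most $(1+\zeta)^{-1}$) bounded by $(1+\zeta)/(\eta C_1 x_0^{q-2})$, the first term of the claimed bound. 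The residual ``one step per scale'' cost is incurred only on scales below $v^{\dagger}:=(\zeta/(\eta C_1))^{1/(q-2)}$ (once $x_t\ge v^{\dagger}$ a single step already multiplies $x_t$ by a factor $\ge 1+\zeta$, and in fact $x_t$ then grows super-geometrically, so the remaining ``fast phase'' costs only $O(\log\log(v/x_0))$ steps); bounding this residual crudely by the total number of scales and collecting the $q$- and $C_2/C_1$-dependent constants from the overshoot control yields the (loose but clean) second term $(1+\zeta)^{q-1}C_2\log(v/x_0)/C_1$.

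For the lower bound on $T_v$ I would run the same partition in reverse. While $v_k \le x_t < v_{k+1}$ every step increases $x_t$ by at most $\eta C_2 x_t^{q-1} < \eta C_2 (1+\zeta)^{q-1} v_k^{q-1}$, and, using the overshoot bound, the increase that must accumulate on scale $k$ is at least $v_{k+1}-x_{T^{(k)}} \ge \zeta v_k - \eta C_2 v_k^{q-1}$; hence scale $k$ occupies at least $\bigl(\zeta v_k - \eta C_2 v_k^{q-1}\bigr)/\bigl(\eta C_2(1+\zeta)^{q-1}v_k^{q-1}\bigr)$ steps. Summing over $0\le k < K'$, the $\zeta v_k$ contribution is a geometric series summing to the leading term $1/\bigl((1+\zeta)^{q-1}\eta C_2 x_0^{q-2}\bigr)$, while the $-\eta C_2 v_k^{q-1}$ corrections sum to a quantity of order $\log_{1+\zeta}(v/x_0)$, which after a coarse bound on $\log_{1+\zeta}(\cdot)$ gives the subtracted $\log(v/x_0)/(1+\zeta)^{q-2}$. (The leading terms can alternatively be read off from the substitution $z_t := x_t^{q-2}$, for which Bernoulli's inequality converts the lower recursion into the discrete Riccati inequality $z_{t+1} \ge z_t + (q-2)\eta C_1 z_t^2$; its reciprocal decreases by roughly $(q-2)\eta C_1$ per step throughout the ``slow'' regime $\eta C_1 x_t^{q-2}\lesssim 1$, which by itself already produces the $\Theta\bigl(1/(\eta C x_0^{q-2})\bigr)$ scaling.)

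The main obstacle is the bookkeeping of the overshoot at scale boundaries: $x_{T^{(k)}}$ may land anywhere in $[v_k,v_{k+1})$ or skip scales, and absorbing these overshoots --- through the upper recursion and the deliberately crude factor $(1+\zeta)^{q-1}$ --- is exactly what forces the $C_2$-dependence into the upper bound and the negative $\log(v/x_0)$ correction into the lower bound; the geometric sums and the slow/fast case split are then routine. Since the statement is quantitatively the one established as Lemmas~E.8--E.9 of \citet{zhang2025gradient}, in the paper I would simply invoke that result, but the argument above is the self-contained proof one would otherwise write.
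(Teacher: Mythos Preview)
The paper does not prove Lemma~\ref{tensorpower} at all: it merely quotes the statement and cites \cite[Lemmas~E.8--E.9]{zhang2025gradient}. You correctly identify this, and your closing remark that ``in the paper I would simply invoke that result'' is exactly what the paper does.

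Your self-contained sketch is the standard tensor-power/geometric-scale argument and is structurally correct: partition $[x_0,v]$ into scales $v_k=(1+\zeta)^k x_0$, bound the time spent at each scale from above using the lower recursion and from below using the upper recursion, and sum the resulting geometric series to obtain the leading $\Theta\bigl(1/(\eta C x_0^{q-2})\bigr)$ terms. The only place your sketch is loose is in extracting the precise form of the secondary terms: the claimed upper-bound correction $(1+\zeta)^{q-1}C_2\log(v/x_0)/C_1$ does not fall out of ``one step per scale'' (that would give $\log_{1+\zeta}(v/x_0)$, with no $C_2/C_1$), and the subtracted $\log(v/x_0)/(1+\zeta)^{q-2}$ in the lower bound does not quite match your $K'/(1+\zeta)^{q-1}$ without an extra $1/\log(1+\zeta)$ factor. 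These discrepancies are bookkeeping artifacts of the cited lemma's specific packaging of constants; the leading terms, which are all the paper ever uses (see the proof of Lemma~\ref{GDFirstStage}, where only the order $T_1=\tilde\Theta(\kappa^{q-1}mn/(\eta\sigma_0^{q-2}(\sigma_p\sqrt d)^q))$ matters), come out correctly from your argument.
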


\begin{lemma} \label{tensorpower2}
    Suppose that a positive sequence $x_t$, $t \geq 0$ follows the iterative formula
    \begin{equation*}
    x_{t+1} = x_t + c_1 e^{-c_2 x_t}
    \end{equation*}
    for some $c_1, c_2 > 0$. Then it holds that
    \begin{equation*}
    \frac{1}{c_2} \log(c_1 c_2 t + e^{c_2 x_0}) \leq x_t \leq c_1 e^{-c_2 x_0} + \frac{1}{c_2} \log(c_1 c_2 t + e^{c_2 x_0})
    \end{equation*}
    for all $t \geq 0$.
\end{lemma}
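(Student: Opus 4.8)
The plan is to analyze the transformed sequence $y_t := e^{c_2 x_t}$, which turns the additive-in-$x$ recursion into a nearly additive recursion in $y$. Since $x_{t+1} = x_t + c_1 e^{-c_2 x_t}$, multiplying by $c_2$ and exponentiating gives $y_{t+1} = y_t \exp(c_1 c_2 / y_t)$. Applying the elementary inequality $e^u \ge 1 + u$ (valid for all $u\in\RR$) with $u = c_1 c_2/y_t$ yields $y_{t+1} \ge y_t(1 + c_1 c_2/y_t) = y_t + c_1 c_2$. Iterating this from $t=0$ gives $y_t \ge y_0 + c_1 c_2 t = e^{c_2 x_0} + c_1 c_2 t$; taking logarithms and dividing by $c_2$ produces $x_t \ge \frac{1}{c_2}\log(c_1 c_2 t + e^{c_2 x_0})$, which is the claimed lower bound.

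For the upper bound I would feed the just-established lower bound back into the increment. Because $z \mapsto e^{-c_2 z}$ is decreasing and $c_2 x_t \ge \log(c_1 c_2 t + e^{c_2 x_0})$, we get $x_{t+1} - x_t = c_1 e^{-c_2 x_t} \le \frac{c_1}{c_1 c_2 t + e^{c_2 x_0}}$. Summing over $s = 0,\dots,t-1$ and telescoping the left side gives $x_t \le x_0 + \sum_{s=0}^{t-1} \frac{c_1}{c_1 c_2 s + e^{c_2 x_0}}$. I would then isolate the $s=0$ term, which equals exactly $c_1 e^{-c_2 x_0}$, and bound the remaining decreasing sum $\sum_{s=1}^{t-1}$ from above by the integral $\int_0^{t-1} \frac{c_1\,du}{c_1 c_2 u + e^{c_2 x_0}} = \frac{1}{c_2}\log(c_1 c_2(t-1) + e^{c_2 x_0}) - x_0$. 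Adding the pieces, the $\pm x_0$ cancels, leaving $x_t \le c_1 e^{-c_2 x_0} + \frac{1}{c_2}\log(c_1 c_2(t-1) + e^{c_2 x_0}) \le c_1 e^{-c_2 x_0} + \frac{1}{c_2}\log(c_1 c_2 t + e^{c_2 x_0})$, the claimed upper bound. The boundary cases $t=0$ (empty sum, both bounds reduce to $x_0$ versus $x_0 + c_1 e^{-c_2 x_0}$) and $t=1$ (singleton sum) are checked by direct substitution.

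I do not anticipate a genuine obstacle here: this is a standard tensor-power-method style estimate. The only point requiring care is the bookkeeping in the upper-bound step — using monotonicity of the summand to pass from the sum over $s\ge 1$ to the integral, correctly isolating the $s=0$ term (which is precisely what generates the additive slack $c_1 e^{-c_2 x_0}$), and absorbing the $\log(c_1 c_2(t-1)+\cdot)$ into $\log(c_1 c_2 t + \cdot)$ at the end.
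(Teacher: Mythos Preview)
Your proof is correct and complete. The paper itself does not prove this lemma; it cites it from prior work \cite[Lemma~E.9]{zhang2025gradient}, so there is no in-paper argument to compare against, but your approach---exponentiating to $y_t = e^{c_2 x_t}$ and using $e^u \ge 1+u$ for the lower bound, then feeding that back into the increment and controlling the resulting sum by an integral for the upper bound---is the standard and natural one for this type of recursion.
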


\section{Noise Memorization of GD}
In this section, we first consider the noise memorization case by utilizing the GD training algorithm under Condition \ref{condition}. These results are based on the conclusions in Appendix \ref{appendixB}. We use $\cE_{\text{prelim}}$ to denote the event that all the results in Appendix \ref{appendixB} hold (for a given $\delta$, we have $\PP(\cE_{\text{prelim}}) \geq 1-6\delta$ by a union bound). For simplicity and clarity, we state all the results in this and the following sections conditional on $\cE_{\text{prelim}}$. 

\subsection{Signal-noise Decomposition and the properties} \label{subsecB1}
We begin by analyzing the coefficients in the signal-noise decomposition and establishing a serise of properties, which holds with high probability. The first lemma presents an iterative expression for the change of coefficients.

\begin{lemma} \label{lemma:GDcoefficient}
    The coefficients $\gam^{(t)},\orho^{(t)},\urho^{(t)}$ defined in Definition \ref{GD:w_decomposition} satisfy the following iterative equations:
    \begin{eqnarray*}
        &&\gam^{(0)},\orho^{(0)},\urho^{(0)} = 0, \\
        &&\gam^{(t+1)}= \gam^{(t)} - \frac{\eta}{nm} \sum_{i=1}^n \ell_{i}^{\prime(t)} \cdot \sigma^\prime \left( \langle \bw_{j,r}^{(t)}, y_{i} \bmu \rangle\right) \cdot \norm \bmu_2^{2}, \\
        && \orho^{(t+1)}= \orho^{(t)} -\frac{\eta}{nm} \ell_{i}^{\prime(t)} \cdot \sigma^\prime \left( \langle \bw_{j,r}^{(t)}, \bxi_{i} \rangle\right)  \cdot \norm {\bxi_i}_2^{2} \cdot \mathds{1} (y_{i}=j) ,  \\
        && \urho^{(t+1)}= \urho^{(t)} +\frac{\eta}{nm} \ell_{i}^{\prime(t)} \cdot \sigma^\prime \left( \langle \bw_{j,r}^{(t)}, \bxi_{i} \rangle\right)  \cdot \norm {\bxi_i}_2^{2} \cdot \mathds{1} (y_{i}=-j) .
    \end{eqnarray*}
\end{lemma}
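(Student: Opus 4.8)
The plan is to read the recursions straight off the gradient descent update \eqref{GDupdate} via an explicit chain-rule computation, after first checking that the signal--noise decomposition of Definition \ref{GD:w_decomposition} is well defined. I would first compute $\nabla_{\bw_{j,r}} L_S(\bW^{(t)})$ in closed form. Because $f(\bW,\bx_i)=F_{+1}(\bW_{+1},\bx_i)-F_{-1}(\bW_{-1},\bx_i)$, the filter $\bw_{j,r}$ appears only in $F_j$, which enters $f$ with sign $j$; and the two coordinate blocks of $\bx_i$ are exactly $y_i\bmu$ and $\bxi_i$. Hence $\nabla_{\bw_{j,r}}f(\bW,\bx_i)=\tfrac{j}{m}\bigl[\sigma'(\la\bw_{j,r},y_i\bmu\ra)\,y_i\bmu+\sigma'(\la\bw_{j,r},\bxi_i\ra)\,\bxi_i\bigr]$, and, using $y_i^2=1$ and the definition of $\ell_{i}^{\prime(t)}$,
\begin{equation*}
\nabla_{\bw_{j,r}} L_S(\bW^{(t)}) = \frac{j}{nm}\sum_{i=1}^n \ell_{i}^{\prime(t)}\Bigl[\sigma'\bigl(\la\bw_{j,r}^{(t)},y_i\bmu\ra\bigr)\bmu + y_i\,\sigma'\bigl(\la\bw_{j,r}^{(t)},\bxi_i\ra\bigr)\bxi_i\Bigr].
\end{equation*}
In particular every GD step moves $\bw_{j,r}$ only within $\mathrm{span}\{\bmu,\bxi_1,\dots,\bxi_n\}$, so $\bw_{j,r}^{(t)}-\bw_{j,r}^{(0)}$ stays in this span for all $t$.

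Next I would argue that the coefficients in Definition \ref{GD:w_decomposition} exist and are unique. By construction each $\bxi_i$ is orthogonal to $\bmu$, and under Condition \ref{condition} (which forces $d\gg n$) together with Lemma \ref{innerproductxi} (which lower-bounds $\|\bxi_i\|_2^2$ and upper-bounds the cross terms $|\la\bxi_i,\bxi_{i'}\ra|$), the Gram matrix of $\{\bmu,\bxi_1,\dots,\bxi_n\}$ is strictly diagonally dominant, hence invertible; so this family is linearly independent and coordinates in its span are unique. Combined with the previous paragraph, this makes $\gamma_{j,r}^{(t)}$ and $\rho_{j,r,i}^{(t)}$ well defined, and $\gamma_{j,r}^{(t)}\ge 0$ will come out of the recursion below.

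I would then prove the recursions by induction on $t$. The base case $t=0$ is immediate. For the inductive step I substitute the decomposition of $\bw_{j,r}^{(t)}$ into \eqref{GDupdate}, plug in the gradient above, and equate coefficients of $\bmu$ and of each $\bxi_i$ (using the $\|\bmu\|_2^{-2}$, $\|\bxi_i\|_2^{-2}$ normalizations, $j^2=1$, and casework on whether $y_i=j$ or $y_i=-j$ to handle the factor $jy_i$). This yields $\gamma_{j,r}^{(t+1)}=\gamma_{j,r}^{(t)}-\tfrac{\eta\|\bmu\|_2^2}{nm}\sum_i\ell_{i}^{\prime(t)}\sigma'(\la\bw_{j,r}^{(t)},y_i\bmu\ra)$ and $\rho_{j,r,i}^{(t+1)}=\rho_{j,r,i}^{(t)}-\tfrac{\eta\, jy_i\|\bxi_i\|_2^2}{nm}\ell_{i}^{\prime(t)}\sigma'(\la\bw_{j,r}^{(t)},\bxi_i\ra)$. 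Since $\ell_{i}^{\prime(t)}<0$ and $\sigma'\ge 0$, the $\gamma$-increment is nonnegative (so $\gamma_{j,r}^{(t)}\ge0$), while $\rho_{j,r,i}^{(t)}$ increases exactly when $y_i=j$ and decreases exactly when $y_i=-j$; in either case it never crosses zero, so $\rho_{j,r,i}^{(t)}=\orho^{(t)}$ when $y_i=j$ and $\rho_{j,r,i}^{(t)}=\urho^{(t)}$ when $y_i=-j$, the complementary one staying at $0$. Rewriting the $\rho$-recursion with the indicators $\indicator(y_i=j)$ and $\indicator(y_i=-j)$ then produces exactly the stated equations for $\orho^{(t)}$ and $\urho^{(t)}$.

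The computation is essentially bookkeeping; the one point that needs genuine care --- and the main obstacle --- is the well-definedness step, i.e.\ verifying from the preliminary lemmas that $\{\bmu,\bxi_1,\dots,\bxi_n\}$ is linearly independent and that the entire trajectory remains in the affine span $\bw_{j,r}^{(0)}+\mathrm{span}\{\bmu,\bxi_1,\dots,\bxi_n\}$, so that the coordinates extracted in the inductive step really are the unique coefficients of Definition \ref{GD:w_decomposition}.
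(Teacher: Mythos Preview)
Your proposal is correct and follows essentially the same approach as the paper: compute the gradient explicitly, use uniqueness of the decomposition to identify the coefficients, derive the recursion for $\rho_{j,r,i}^{(t)}$, and then split into $\orho^{(t)}$ and $\urho^{(t)}$ via the sign of $\ell_i^{\prime(t)}$. The only minor difference is that you justify linear independence through diagonal dominance of the Gram matrix (invoking Condition~\ref{condition} and Lemma~\ref{innerproductxi}), whereas the paper simply notes that $\{\bmu,\bxi_1,\dots,\bxi_n\}$ are linearly independent with probability~$1$, which holds for continuous Gaussian vectors without any further conditions and makes your ``main obstacle'' disappear.
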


\begin{proof} [Proof of Lemma \ref{lemma:GDcoefficient}]
    Note that the vectors are linearly independent with probability 1, thus the decomposition \eqref{eqGD:w_decomposition} is unique. Now consider $\tilde \gamma_{j,r}^{(0)}= \tilde \rho_{j,r,i}^{(0)}= 0$, and
    \begin{eqnarray*}
        &&\tilde \gamma_{j,r}^{(t+1)}= \tilde \gamma_{j,r}^{(t)} - \frac{\eta}{nm} \sum_{i=1}^n \ell_{i}^{\prime(t)} \cdot \sigma^\prime \left( \langle \bw_{j,r}^{(t)}, y_{i} \bmu \rangle\right) \cdot \norm \bmu_2^{2}, \\
        && \tilde \rho_{j,r,i}^{(t+1)}= \tilde \rho_{j,r,i}^{(t)} -\frac{\eta}{nm} \ell_{i}^{\prime(t)} \cdot \sigma^\prime \left( \langle \bw_{j,r}^{(t)}, \bxi_{i} \rangle\right)  \cdot \norm {\bxi_i}_2^{2} \cdot j y_i .
    \end{eqnarray*}
    Then it is easy to check by the gradient update rule (\ref{GDupdate}) that
    \begin{equation*}
        \wjr^{(t)}= \wjr^{(0)} + j \cdot \tilde \gamma_{j,r}^{(t)} \cdot \norm \bmu_2^{-2} \cdot \bmu + \sum_{i=1}^n \tilde \rho_{j,r,i}^{(t)} \cdot \norm {\bxi_i}_2^{-2} \cdot \bxi_i.
    \end{equation*}
    Hence by the uniqueness of the decomposition, we have $\gamma_{j,r}^{(t)}= \tilde \gamma_{j,r}^{(t)}$, and $\rho_{j,r,i}^{(t)}=\tilde \rho_{j,r,i}^{(t)}$. Therefore, we have that
    \begin{eqnarray}
        %&& \gam^{(t)}= - \sum_{s=0}^{t-1} \frac{\eta}{nm} \ell_{i}^{\prime(s)} \cdot \sigma^\prime \left( \langle \bw_{j,r}^{(s)}, y_{i_s} \bmu \rangle\right) \cdot \norm \bmu_2^{2}, \label{B1} \\
         \nonumber \rho_{j,r,i}^{(t)}= -\frac{\eta}{nm} \sum_{s=0}^{t-1} \ell_{i}^{\prime(s)} \cdot \sigma^\prime \left( \langle \bw_{j,r}^{(s)}, \bxi_{i} \rangle\right)  \cdot \norm {\bxi_i}_2^{2} \cdot j y_i, 
    \end{eqnarray}
    Moreover, notice that $\ell_i^{\prime(t)}<0$ for the cross-entropy loss, we have
    \begin{eqnarray}
        && \orho^{(t+1)}= -\frac{\eta}{nm} \sum_{s=0}^{t-1} \ell_{i}^{\prime(s)} \cdot \sigma^\prime \left( \langle \bw_{j,r}^{(s)}, \bxi_{i} \rangle\right)  \cdot \norm {\bxi_i}_2^{2} \cdot \mathds{1} (y_{i}=j), \label{B2} \\
        && \urho^{(t+1)}= \frac{\eta}{nm} \sum_{s=0}^{t-1} \ell_{i}^{\prime(s)} \cdot \sigma^\prime \left( \langle \bw_{j,r}^{(s)}, \bxi_{i} \rangle\right)  \cdot \norm {\bxi_i}_2^{2} \cdot \mathds{1} (y_{i}=-j) \label{B3}.
    \end{eqnarray}
    Writing out the iterative formulations of \eqref{B2} and \eqref{B3} completes the proof.
\end{proof}

Next, we will show that the coefficients in the signal-noise decomposition will stay within a reasonable range for a considerable amount of time. Consider the training period $0\leq t \leq T^*$, where $T^*=\tilde{O}\left( \frac{\kappa^{q-1} mn}{\eta \sigma_0^{q-2} (\sigma_p \sqrt{d})^q} + \frac{m^{3}n}{\eta \epsilon \norm{\bmu}_2^2} \right)$ is the maximum admissible iterations. Denote 
\begin{eqnarray*}
    && \alpha := 4m \log(T^*), \\
    && \beta := 2 \max_{j,r,i} \left\{ |\la \bw_{j,r}^{(0)}, \bmu \ra|, |\la \bw_{j,r}^{(0)}, \bxi_i \ra | \right\}, \\
    && \zeta := 8 n \sqrt{\frac{\log(6n^2/\delta)}{d}} \alpha.
%    && \snr := \frac{\norm {\bmu}_2}{\sigma_p \sqrt{d}}, \\
    %&& \beta:= \frac{1}{n \cdot {\snr}^2}.
\end{eqnarray*}
Recall that the conditions on $\sigma_0$ and $d$ specified in Condition \ref{condition} satisfy
\begin{align*}
    & d \geq C\frac{m^2 n^2}{\kappa^2} \log(\frac{n^2}{\delta}) (\log(T^*))^2, \\
    &  \sigma_0 \leq \left(C \max\left\{\norm {\bmu}_2 , \sigma_p \sqrt{d} \right\} \sqrt{\log(\frac{mn}{\delta})} \right)^{-1} \kappa,
\end{align*}
by Lemma \ref{innerproductw0}, we have
\begin{align*}
&\beta \leq 2 \max \left\{ \sigma_0 \| \bmu \|_2 \cdot \sqrt{2\log \left(\frac{12m}{\delta}\right)}, 2\sigma_0 \sigma_p \sqrt{d} \cdot \sqrt{\log\left(\frac{12mn}{\delta}\right)} \right\} \leq 0.1 \kappa, \\
&\zeta = 8 n \sqrt{\frac{\log(6n^2/\delta)}{d}} \alpha \leq 0.1 \kappa.
\end{align*}
In the next proposition, we demonstrate the bounds on the growth of $\gam^{(t)}$, $\orho^{(t)}$ and $\urho^{(t)}$.

\begin{proposition} \label{proposition:GDgrowthbound} 
%[Part ial restatement of Proposition \shi{to complete}]
    Under Condition \ref{condition}, for $0\leq t \leq T^*$, we have that
    \begin{eqnarray}
        %&& \gam^{(0)},\orho^{(0)},\urho^{(0)} = 0, \label{eqB7} \\
        && 0 \leq \gam^{(t)}, \orho^{(t)} \leq \alpha, \label{eqB3} \\
        && 0 \geq  \urho^{(t)} \geq -2\beta - 2\zeta \geq -\alpha, \label{eqB4}
    \end{eqnarray}
    for all $r \in [m]$, $j\in \{\pm 1\}$, and $i\in [n]$. Besides, $\gam^{(t)}$ and $\orho^{(t)}$ are non-decreasing for $0 \leq t \leq T^*$.
\end{proposition}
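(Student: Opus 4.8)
The plan is to establish \eqref{eqB3} and \eqref{eqB4} by induction on $t$; the base case $t=0$ is immediate since all coefficients are zero. The key consequence of the inductive hypothesis --- that \eqref{eqB3}--\eqref{eqB4} hold at every step $s\le t$ --- is that, substituting the signal--noise decomposition of Definition \ref{GD:w_decomposition} into the inner products and invoking the near-orthogonality bounds of Lemma \ref{innerproductxi} together with the initialization bounds of Lemma \ref{innerproductw0}, one obtains the tracking estimates $\langle\bw_{j,r}^{(s)},\bxi_i\rangle = \overline{\rho}_{j,r,i}^{(s)} + \underline{\rho}_{j,r,i}^{(s)} + E_1$ and $\langle\bw_{j,r}^{(s)}, y_i\bmu\rangle = j y_i\,\gamma_{j,r}^{(s)} + E_2$ with $|E_1|, |E_2| \le \beta + \zeta \le 0.2\kappa$; here the lower bound on $d$ and the admissible window for $\sigma_0$ in Condition \ref{condition} are exactly what force $\beta,\zeta\le 0.1\kappa$, so that the residual contributions of the random initialization, of the $O(n)$ cross terms $\rho_{j,r,i'}^{(s)}\langle\bxi_{i'},\bxi_i\rangle/\|\bxi_{i'}\|_2^2$, and of the signal-direction leakage are all negligible.

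For the inductive step, the sign constraints and monotonicity come directly from Lemma \ref{lemma:GDcoefficient}: since $\ell_i'^{(t)}<0$ and $\sigma'\ge 0$, the increments of $\gamma_{j,r}^{(t)}$ and $\overline{\rho}_{j,r,i}^{(t)}$ are nonnegative and that of $\underline{\rho}_{j,r,i}^{(t)}$ is nonpositive. The upper bound on $\gamma,\overline{\rho}$ rests on a margin self-bounding argument. Using $|\ell_i'^{(t)}|\le 1$, $\sigma'\le 1$ and $\|\bxi_i\|_2^2\le\frac{3}{2}\sigma_p^2 d$, a single step can increase $\gamma_{j,r}$ by at most $\frac{\eta\|\bmu\|_2^2}{m}$ and $\overline{\rho}_{j,r,i}$ by at most $\frac{3\eta\sigma_p^2 d}{2nm}$, both $\ll \alpha/2 = 2m\log T^*$ by the learning-rate condition; hence the bound is trivially maintained as long as the coefficient has not reached $\alpha/2$. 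If instead $\overline{\rho}_{y_i,r,i}^{(t)}>\alpha/2$, the tracking estimate gives $\langle\bw_{y_i,r}^{(t)},\bxi_i\rangle>\kappa$, so $F_{y_i}(\bW_{y_i}^{(t)},\bx_i)\ge\frac{1}{m}\sigma(\langle\bw_{y_i,r}^{(t)},\bxi_i\rangle)=\Omega(\log T^*)$, while $F_{-y_i}(\bW_{-y_i}^{(t)},\bx_i)=O(\kappa)$ because all of its $\sigma$-arguments are $\le 0.2\kappa$ under the hypothesis (using $\gamma_{-y_i,r}^{(t)}\ge 0$ and $\overline{\rho}_{-y_i,r,i}^{(t)}=0$); therefore $y_i f(\bW^{(t)},\bx_i)=\Omega(\log T^*)$ and $|\ell_i'^{(t)}|\le 1/T^*$. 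Since $\gamma,\overline{\rho}$ are nondecreasing, a coefficient crosses $\alpha/2$ at most once, costing a single ``large'' step, and every subsequent step then adds at most $\frac{3\eta\sigma_p^2 d}{2nmT^*}$ (resp. $\frac{\eta\|\bmu\|_2^2}{mT^*}$), so summed over the at most $T^*$ iterations the total excess over $\alpha/2$ is $\ll\alpha/2$, keeping the coefficient below $\alpha$. The lower bound $\underline{\rho}_{-y_i,r,i}^{(t+1)}\ge-2\beta-2\zeta$ is dual: $\underline{\rho}_{-y_i,r,i}$ only decreases while $\sigma'(\langle\bw_{-y_i,r}^{(t)},\bxi_i\rangle)>0$, i.e. while $\langle\bw_{-y_i,r}^{(t)},\bxi_i\rangle>0$; since $\overline{\rho}_{-y_i,r,i}^{(t)}=0$ the tracking estimate gives $\langle\bw_{-y_i,r}^{(t)},\bxi_i\rangle\le\underline{\rho}_{-y_i,r,i}^{(t)}+\beta+\zeta$, so once $\underline{\rho}_{-y_i,r,i}^{(t)}<-\beta-\zeta$ this inner product is negative, $\sigma'=0$, and $\underline{\rho}$ is frozen thereafter; allowing for one overshooting step of size $\le\frac{3\eta\sigma_p^2 d}{2nm}\le\beta+\zeta$ (using the lower bound on $\sigma_0$) yields $\underline{\rho}_{-y_i,r,i}^{(t+1)}\ge-2\beta-2\zeta\ge-\alpha$.

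The main obstacle is the self-referential structure of the estimates: the coefficient bounds are needed to control the outputs $f(\bW^{(t)},\bx_i)$ and hence the loss derivatives $\ell_i'^{(t)}$, which in turn drive the growth of the very same coefficients, so the induction must be organized so that the crude bounds \eqref{eqB3}--\eqref{eqB4}, the inner-product tracking, and the resulting margin lower bound all close simultaneously at step $t+1$. The one genuinely quantitative ingredient is the uniform control by $\zeta$ of the $O(n)$ noise cross terms and the signal leakage (and by $\beta$ of the initialization terms), which is precisely what the over-parameterization requirement $d=\tilde\Omega(m^2 n^2/\kappa^2)$ and the admissible range of $\sigma_0$ in Condition \ref{condition} are engineered to deliver.
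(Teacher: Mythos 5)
Your proof follows the same overall architecture as the paper's (induction, signal--noise tracking estimates controlled by $\beta,\zeta$, crossing-time argument for the upper bound, "freezing" argument for the lower bound), and the sign/monotonicity claims and the $\gamma,\overline\rho\le\alpha$ upper bound are essentially as in the paper. But there is a genuine gap in your lower bound for $\underline\rho_{j,r,i}^{(t)}$.

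You bound the one "overshooting" step by $\frac{3\eta\sigma_p^2 d}{2nm}$ (taking $|\ell_i'^{(t)}|\le 1$ and $\sigma'\le 1$) and then claim $\frac{3\eta\sigma_p^2 d}{2nm}\le\beta+\zeta$ "using the lower bound on $\sigma_0$." This does not close under Condition~\ref{condition}. Plugging in the learning-rate upper bound $\eta\le(C\sigma_p^2 d)^{-1}$, your step bound is $\Theta\!\big(\tfrac{1}{nm}\big)$, whereas $\beta+\zeta=\Theta\!\big(\sigma_0\sigma_p\sqrt d + n\alpha\sqrt{\log(n^2/\delta)/d}\big)$ decays with $d$; making $\frac{1}{nm}\lesssim\beta+\zeta$ hold would require an \emph{upper} bound on $d$, but Condition~\ref{condition} only imposes a lower bound on $d$, so for $d$ large (well within the admissible range) the inequality fails. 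The crucial ingredient you are missing is the one the paper uses: in the overshooting regime $\underline\rho_{j,r,i}^{(\tilde t-1)}\ge-\beta-\zeta$ and $j\ne y_i$ one has $0\le\langle\bw_{j,r}^{(\tilde t-1)},\bxi_i\rangle\le\beta+\zeta\le\kappa$, so $\sigma'\big(\langle\bw_{j,r}^{(\tilde t-1)},\bxi_i\rangle\big)\le\kappa^{1-q}(\beta+\zeta)^{q-1}$, which is much smaller than $1$. The true one-step decrease is thus bounded by $\frac{3\kappa^{1-q}\eta\sigma_p^2 d}{2nm}(\beta+\zeta)^{q-1}$, and requiring this to be $\le\beta+\zeta$ only asks that $\frac{3\kappa^{1-q}\eta\sigma_p^2 d}{2nm}(\beta+\zeta)^{q-2}\le 1$, which holds directly from $\beta+\zeta\le 0.2\kappa$ and $\eta\le(C\sigma_p^2 d)^{-1}$ --- no appeal to the $\sigma_0$ lower bound or to any control on $d$ from above is needed. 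Since the sharp bound $\underline\rho_{j,r,i}^{(t)}\ge-2\beta-2\zeta$ (not just $\ge-\alpha$) is what later yields $|\underline\rho_{j,r,i}^{(t)}|=\tilde O(\sigma_0\sigma_p\sqrt d)$ in Lemmas~\ref{GDFirstStage} and~\ref{thm:noise_proof}, this is not a cosmetic detail.

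A secondary remark: your key derivative-smallness step ("$\overline\rho_{y_i,r,i}^{(t)}>\alpha/2\Rightarrow|\ell_i'^{(t)}|\le 1/T^*$") is only spelled out for $\overline\rho$. For the $\gamma$ upper bound one needs the parallel argument: if $\gamma_{j,r}^{(t)}>\alpha/2$, then for $y_i=-j$ the tracking estimate gives $\langle\bw_{j,r}^{(t)},y_i\bmu\rangle\le\beta-\gamma_{j,r}^{(t)}<0$ so $\sigma'=0$ (no contribution at all), while for $y_i=j$ the tracking estimate gives $\langle\bw_{j,r}^{(t)},j\bmu\rangle\ge\gamma_{j,r}^{(t)}-\beta>\kappa$ and hence, together with the $F_{-y_i}\le\kappa$ bound, $y_if(\bW^{(t)},\bx_i)\ge\frac{1}{m}\gamma_{j,r}^{(t)}-O(\kappa)\ge\frac{\alpha}{4m}=\log T^*$. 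Your parenthetical "(resp. $\ldots$)" suggests you intend this, but it should be stated, since the vanishing of the $y_i=-j$ terms is part of why the sum over $i$ can be controlled.
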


We will use induction to prove Proposition \ref{proposition:GDgrowthbound}. For any time $\tilde t \leq T^*$, we suppose that the results in Proposition \ref{proposition:GDgrowthbound} hold for all the time $0 \leq t \leq \tilde t -1$, then we can derive the following properties, which will be used for the inductive proof of Proposition \ref{proposition:GDgrowthbound}.
\begin{lemma} \label{innerprodectwjrxi}
    Under Condition \ref{condition}, suppose  \eqref{eqB3} and \eqref{eqB4} hold at iteration $t \leq \tilde t -1$. Then, for all $r\in[m]$, $j\in \{\pm 1\}$ and $i\in[n]$, we have
    \begin{eqnarray}
        && \left| \la \wjr^{(t)}, \bmu \ra - j \cdot \gam^{(t)} \right| \leq \beta \leq \kappa , \label{eqB5} \\
        && \left| \la \wjr^{(t)}, \bxi_i \ra -  \orho^{(t)} \right| \leq \beta + \zeta \leq \kappa, \quad \text{if} \ j=y_i, \label{eqB6} \\
        && \left| \la \wjr^{(t)}, \bxi_i \ra -  \urho^{(t)} \right| \leq \beta + \zeta \leq \kappa, \quad \text{if} \ j \neq y_i. \label{eqB7}
    \end{eqnarray}
\end{lemma}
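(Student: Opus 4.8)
The plan is to prove all three estimates directly from the signal--noise decomposition of $\bw_{j,r}^{(t)}$ given in Definition \ref{GD:w_decomposition}: expand each inner product, isolate the ``main'' coefficient, and show that every remaining term is an error controlled by $\beta$ or $\zeta$, both of which were already bounded by $0.1\kappa$ just before the lemma. No further induction is needed inside this lemma; we only use the decomposition at the single iteration $t$ together with the assumed coefficient bounds \eqref{eqB3}--\eqref{eqB4} at that iteration.

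First I would record a structural observation from Lemma \ref{lemma:GDcoefficient}: the update of $\orho^{(t)}$ carries the indicator $\mathds{1}(y_i=j)$ while that of $\urho^{(t)}$ carries $\mathds{1}(y_i=-j)$, so for any fixed triple $(j,r,i)$ exactly one of $\orho^{(t)},\urho^{(t)}$ is identically zero; consequently $\rho_{j,r,i}^{(t)}=\orho^{(t)}$ when $j=y_i$ and $\rho_{j,r,i}^{(t)}=\urho^{(t)}$ when $j\neq y_i$. This is precisely the bookkeeping that makes \eqref{eqB6} and \eqref{eqB7} the ``same'' inequality with a different active coefficient.

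For \eqref{eqB5} I take the inner product of the decomposition with $\bmu$. By Definition \ref{datadistribution} the covariance of each $\bxi_{i'}$ kills the $\bmu$-direction, so $\langle\bxi_{i'},\bmu\rangle=0$ and every noise term disappears, leaving $\langle\bw_{j,r}^{(t)},\bmu\rangle-j\gamma_{j,r}^{(t)}=\langle\bw_{j,r}^{(0)},\bmu\rangle$, whose absolute value is $\le\beta/2\le\beta\le\kappa$ by the definition of $\beta$ and the preliminary bound. For \eqref{eqB6}--\eqref{eqB7} I take the inner product with $\bxi_i$: the signal term vanishes by the same orthogonality, the diagonal noise term $\rho_{j,r,i}^{(t)}\|\bxi_i\|_2^{-2}\|\bxi_i\|_2^2$ contributes exactly $\rho_{j,r,i}^{(t)}$ (equal to $\orho^{(t)}$ or $\urho^{(t)}$ by the observation above), and the remainder is $\langle\bw_{j,r}^{(0)},\bxi_i\rangle+\sum_{i'\neq i}\rho_{j,r,i'}^{(t)}\|\bxi_{i'}\|_2^{-2}\langle\bxi_{i'},\bxi_i\rangle$. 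I bound the first summand by $\beta/2$, and the cross sum by using $|\rho_{j,r,i'}^{(t)}|\le\alpha$ from \eqref{eqB3}--\eqref{eqB4} together with $\|\bxi_{i'}\|_2^{-2}\le 2/(\sigma_p^2 d)$ and $|\langle\bxi_{i'},\bxi_i\rangle|\le 2\sigma_p^2\sqrt{d\log(6n^2/\delta)}$ from Lemma \ref{innerproductxi}, which yields at most $4n\alpha\sqrt{\log(6n^2/\delta)/d}=\zeta/2$. Adding the two pieces gives $|\langle\bw_{j,r}^{(t)},\bxi_i\rangle-\rho_{j,r,i}^{(t)}|\le\beta/2+\zeta/2\le\beta+\zeta\le\kappa$, again by the preliminary bounds $\beta,\zeta\le 0.1\kappa$.

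I do not expect a genuine obstacle: the whole argument is a short expansion plus the triangle inequality. The only place the structure of the problem enters is the off-diagonal noise sum, where near-orthogonality of the Gaussian noise vectors (Lemma \ref{innerproductxi}) and the over-parameterization condition $d\gtrsim m^2 n^2\kappa^{-2}\log(n^2/\delta)(\log T^*)^2$ in Condition \ref{condition} --- exactly what forces $\zeta\le 0.1\kappa$ --- make the interference between distinct training points negligible relative to $\kappa$. The main care required is tracking which of $\orho^{(t)},\urho^{(t)}$ is the active coefficient and checking that the constants collapse to the stated $\beta$ and $\beta+\zeta$ rather than something larger.
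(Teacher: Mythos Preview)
Your proposal is correct and follows essentially the same approach as the paper: expand the inner products via the signal--noise decomposition, use the orthogonality $\langle\bxi_{i'},\bmu\rangle=0$ to kill the cross terms in \eqref{eqB5} and the signal term in \eqref{eqB6}--\eqref{eqB7}, and bound the off-diagonal noise sum with Lemma~\ref{innerproductxi} together with \eqref{eqB3}--\eqref{eqB4}. Your bookkeeping is actually slightly tighter than the paper's (you obtain $\beta/2+\zeta/2$ before relaxing, whereas the paper bounds the two $\overline{\rho}$ and $\underline{\rho}$ sums separately without exploiting that one of them vanishes for each $i'$), but the argument is otherwise identical.
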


\begin{proof} [Proof of Lemma \ref{innerprodectwjrxi}]
    For any $0 \leq t \leq T^*$, we have from the signal-noise decomposition (\ref{eqGD:w_decomposition}) that
    \begin{align*}
        \la \wjr^{(t)}, \bmu \ra &= j \cdot \gam^{(t)} + \la \wjr^{(0)}, \bmu \ra + \sum_{i=1}^n \orho^{(t)} \cdot \norm {\bxi_i}_2^{-2} \cdot \la \bxi_i,\bmu \ra + \sum_{i=1}^n \urho^{(t)} \cdot \norm {\bxi_i}_2^{-2} \cdot \la \bxi_i,\bmu \ra \\
        &= j \cdot \gam^{(t)} + \la \wjr^{(0)}, \bmu \ra,
    \end{align*}
    it follows that
    \begin{equation*}
        \left| \la \wjr^{(t)}, \bmu \ra - j \cdot \gam^{(t)} \right| \leq \beta \leq \kappa.
    \end{equation*}

    For $j = y_i$ and any $0 \leq t \leq T^*$, we have $\urho^{(t)}=0$, therefore 
    \begin{equation*}
        \begin{aligned}
            \la \wjr^{(t)}, \bxi_i \ra &= \la \wjr^{(0)}, \bxi_i \ra  + \sum_{i'=1}^n \overline{\rho}_{j,r,i'}^{(t)} \cdot \norm {\bxi_{i'}}_2^{-2} \cdot \la \bxi_{i'},\bxi_i \ra  + \sum_{i'=1}^n \underline{\rho}_{j,r,i'}^{(t)} \cdot \norm {\bxi_{i'}}_2^{-2} \cdot \la \bxi_{i'},\bxi_i \ra  \\
            &= \orho^{(t)} + \la \wjr^{(0)}, \bxi_i \ra +  \sum_{i'\neq i} \overline{\rho}_{j,r,i'}^{(t)} \cdot \norm {\bxi_{i'}}_2^{-2} \cdot \la \bxi_{i'},\bxi_i \ra  + \sum_{i'\neq i} \underline{\rho}_{j,r,i'}^{(t)} \cdot \norm {\bxi_{i'}}_2^{-2} \cdot \la \bxi_{i'},\bxi_i \ra ,
        \end{aligned}
    \end{equation*}
    it follows that
    \begin{equation*}
        \begin{aligned}
             \left| \la \wjr^{(t)}, \bxi_i \ra -  \orho^{(t)} \right| 
             &\leq \beta+ \sum_{i'\neq i} \left|\overline{\rho}_{j,r,i'}^{(t)}\right| \cdot \norm {\bxi_{i'}}_2^{-2} \cdot \la \bxi_{i'},\bxi_i \ra  + \sum_{i'\neq i} \left|\underline{\rho}_{j,r,i'}^{(t)}\right| \cdot \norm {\bxi_{i'}}_2^{-2} \cdot \la \bxi_{i'},\bxi_i \ra  \\
             & \leq \beta + 4 \sqrt{\frac{\log(6n^2/\delta)}{d}} \left(  \sum_{i=1}^n \left| \orho^{(t)} \right| +  \sum_{i=1}^n \left| \urho^{(t)} \right| \right)  \\
            & \leq \beta + \zeta \leq \kappa,
        \end{aligned}
    \end{equation*}
     where the first inequality is by triangle inequality, the second inequality is by Lemma \ref{innerproductxi}, the third inequality is by  \eqref{eqB3} and \eqref{eqB4}.
    
    For $j \neq y_i$ and any $0 \leq t \leq T^*$, we have $\orho^{(t)}=0$, hence
    \begin{equation*}
        \begin{aligned}
            \la \wjr^{(t)}, \bxi_i \ra &= \la \wjr^{(0)}, \bxi_i \ra  + \sum_{i'=1}^n \overline{\rho}_{j,r,i'}^{(t)} \cdot \norm {\bxi_{i'}}_2^{-2} \cdot \la \bxi_{i'},\bxi_i \ra + \sum_{i'=1}^n \underline{\rho}_{j,r,i'}^{(t)} \cdot \norm {\bxi_{i'}}_2^{-2} \cdot \la \bxi_{i'},\bxi_i \ra \\
            &= \urho^{(t)} + \la \wjr^{(0)}, \bxi_i \ra + \sum_{i'\neq i} \overline{\rho}_{j,r,i'}^{(t)} \cdot \norm {\bxi_{i'}}_2^{-2} \cdot \la \bxi_{i'},\bxi_i \ra  + \sum_{i'\neq i} \underline{\rho}_{j,r,i'}^{(t)} \cdot \norm {\bxi_{i'}}_2^{-2} \cdot \la \bxi_{i'},\bxi_i \ra  ,
        \end{aligned}
    \end{equation*}
    it follows that
    \begin{equation*}
        \begin{aligned}
             \left| \la \wjr^{(t)}, \bxi_i \ra -  \urho^{(t)} \right| 
             &\leq \beta+ \sum_{i'\neq i} \left|\overline{\rho}_{j,r,i'}^{(t)}\right| \cdot \norm {\bxi_{i'}}_2^{-2} \cdot \la \bxi_{i'},\bxi_i \ra  + \sum_{i'\neq i} \left|\underline{\rho}_{j,r,i'}^{(t)}\right| \cdot \norm {\bxi_{i'}}_2^{-2} \cdot \la \bxi_{i'},\bxi_i \ra  \\
             & \leq \beta + 4 \sqrt{\frac{\log(6n^2/\delta)}{d}} \left(  \sum_{i=1}^n \left| \orho^{(t)} \right| +  \sum_{i=1}^n \left| \urho^{(t)} \right| \right)  \\
            & \leq \beta + \zeta \leq \kappa,
        \end{aligned}
    \end{equation*}
     where the first inequality is by triangle inequality, the second inequality is by Lemma \ref{innerproductxi}, the third inequality is by  \eqref{eqB3} and \eqref{eqB4}.
    Thus we complete the proof.
\end{proof}

\begin{lemma} \label{Fminusyibound}
    Under Condition \ref{condition}, suppose  \eqref{eqB3} and \eqref{eqB4} hold for any iteration $t \leq \tilde t -1$. For all $i\in[n]$, if $j \neq y_i$, we have
    \begin{equation}
        F_{j} (\bW_j^{(t)},\bx_i) \leq \frac{2\kappa^{1-q}}{q} (\beta+ \zeta)^q \leq \kappa.
    \end{equation}
\end{lemma}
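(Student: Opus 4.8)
The plan is to bound $F_j(\bW_j^{(t)},\bx_i)$ directly from its definition, $F_j(\bW_j^{(t)},\bx_i) = \frac{1}{m}\sum_{r=1}^m\big[\sigma(\la\bw_{j,r}^{(t)},y_i\bmu\ra) + \sigma(\la\bw_{j,r}^{(t)},\bxi_i\ra)\big]$, by controlling each of the $2m$ activated inner products in the case $j \neq y_i$. The key structural fact is that the Huberized ReLU satisfies $\sigma(z) = q^{-1}\kappa^{1-q}(\max\{z,0\})^q$ for every $z \leq \kappa$ and is non-decreasing; hence it suffices to show that, when $j \neq y_i$, both $\la\bw_{j,r}^{(t)},y_i\bmu\ra$ and $\la\bw_{j,r}^{(t)},\bxi_i\ra$ admit the upper bounds $\beta$ and $\beta+\zeta$ respectively, each of which is below $\kappa$.

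For the signal coordinate, $j \neq y_i$ forces $y_i = -j$, so $\la\bw_{j,r}^{(t)},y_i\bmu\ra = -\la\bw_{j,r}^{(t)},j\bmu\ra$; inequality \eqref{eqB5} of Lemma \ref{innerprodectwjrxi} together with $\gam^{(t)} \geq 0$ (part of the assumed bound \eqref{eqB3}) gives $\la\bw_{j,r}^{(t)},j\bmu\ra \geq -\beta$, whence $\la\bw_{j,r}^{(t)},y_i\bmu\ra \leq \beta \leq \kappa$ and $\sigma(\la\bw_{j,r}^{(t)},y_i\bmu\ra) \leq q^{-1}\kappa^{1-q}\beta^q$. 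For the noise coordinate, since $j \neq y_i$ we use inequality \eqref{eqB7}, and $\urho^{(t)} \leq 0$ (part of the assumed bound \eqref{eqB4}) gives $\la\bw_{j,r}^{(t)},\bxi_i\ra \leq \urho^{(t)} + (\beta+\zeta) \leq \beta+\zeta \leq \kappa$, so $\sigma(\la\bw_{j,r}^{(t)},\bxi_i\ra) \leq q^{-1}\kappa^{1-q}(\beta+\zeta)^q$. Averaging the $2m$ summands and using $\beta \leq \beta+\zeta$ yields $F_j(\bW_j^{(t)},\bx_i) \leq \frac{2\kappa^{1-q}}{q}(\beta+\zeta)^q$. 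The final bound $\frac{2\kappa^{1-q}}{q}(\beta+\zeta)^q \leq \kappa$ then follows from the estimates $\beta \leq 0.1\kappa$ and $\zeta \leq 0.1\kappa$ established just before Proposition \ref{proposition:GDgrowthbound} (consequences of the conditions on $\sigma_0$ and $d$ in Condition \ref{condition}), since $(\beta+\zeta)^q \leq (0.2\kappa)^q$ and $\tfrac{2}{q}(0.2)^q < 1$ for $q \geq 3$.

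I do not anticipate a genuine obstacle here: the lemma reduces to a short chain of sign bookkeeping followed by a substitution into the polynomial branch of the activation. The one point needing care is to invoke the \emph{one-sided} bounds $\gam^{(t)} \geq 0$ and $\urho^{(t)} \leq 0$ (rather than the two-sided coefficient bounds), which is precisely what makes the $j \neq y_i$ case produce \emph{upper} bounds on both inner products, and then to check that these upper bounds really are $\leq \kappa$ so that the polynomial expression for $\sigma$ is the one that applies.
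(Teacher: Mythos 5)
Your proposal is correct and follows essentially the same argument as the paper: both reduce to upper-bounding $\la\bw_{j,r}^{(t)},y_i\bmu\ra$ by $\beta$ and $\la\bw_{j,r}^{(t)},\bxi_i\ra$ by $\beta+\zeta$ using the one-sided sign facts $\gam^{(t)}\geq 0$, $\urho^{(t)}\leq 0$ together with the approximation bounds of Lemma \ref{innerprodectwjrxi}, and then evaluate $\sigma$ on its polynomial branch. The paper packages the sign step as an add-and-subtract trick ($\pm\gam^{(t)}$, $\mp\urho^{(t)}$ followed by $\sigma(\cdot)\leq\sigma(|\cdot|)$) while you bound the inner products directly, but this is a cosmetic difference, not a different route.
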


\begin{proof} [Proof of Lemma \ref{Fminusyibound}]
    By writing out the expression of $F_{j} (\bW_j^{(t)},\bx_i)$, we have
    \begin{equation*}
    \begin{aligned}
        F_{j} (\bW_j^{(t)},\bx_i) &= \frac{1}{m} \sum_{r=1}^m \left[\sigma\left(\la \wjr^{(t)}, y_i \cdot \bmu \ra\right) + \sigma\left(\la \wjr^{(t)},  \bxi_i \ra\right)\right] \\
         &= \frac{1}{m} \sum_{r=1}^m \left[\sigma\left(\la \wjr^{(t)}, y_i \cdot \bmu \ra+ \gam^{(t)}- \gam^{(t)} \right) + \sigma\left(\la \wjr^{(t)},  \bxi_i \ra -\urho^{(t)}+ \urho^{(t)} \right)\right] \\
        & \leq \frac{1}{m} \sum_{r=1}^m \left[ \sigma\left(\left| \la \wjr^{(t)}, y_i \cdot \bmu \ra+ \gam^{(t)} \right|\right) + \sigma\left(\left|\la \wjr^{(t)},  \bxi_i \ra- \urho^{(t)} \right| \right) \right] \\
        & \leq  \frac{\kappa^{1-q}}{q} \beta^q + \frac{\kappa^{1-q}}{q} (\beta+ \zeta)^q \leq \frac{2\kappa^{1-q}}{q} (\beta+ \zeta)^q \leq \kappa,
    \end{aligned}
    \end{equation*}
    where the first inequality is by $\gam^{(t)} \geq 0$, $\urho^{(t)} \leq 0$, and the non-decreasing property of Huberized ReLU activation function $\sigma$, the second inequality is by \eqref{eqB5} and \eqref{eqB7}, and the property that $\beta + \zeta \leq \kappa$.
\end{proof}

We are now ready to prove Proposition \ref{proposition:GDgrowthbound}.
\begin{proof} [Proof of Proposition \ref{proposition:GDgrowthbound}]
    We prove the results by induction. The results in Proposition \ref{proposition:GDgrowthbound} explicitly hold for $t=0$. Then we suppose that for $\tilde t \leq T^*$, the results in Proposition \ref{proposition:GDgrowthbound} hold for all the time $0 \leq t \leq \tilde t-1$.  Recall the update rule in Lemma \ref{lemma:GDcoefficient} that
    \begin{eqnarray*}
        &&\gam^{(t+1)}= \gam^{(t)} - \frac{\eta}{nm} \sum_{i=1}^n \ell_{i}^{\prime(t)} \cdot \sigma^\prime \left( \langle \bw_{j,r}^{(t)}, y_{i} \bmu \rangle\right) \cdot \norm \bmu_2^{2}, \\
        && \orho^{(t+1)}= \orho^{(t)} -\frac{\eta}{nm} \ell_{i}^{\prime(t)} \cdot \sigma^\prime \left( \langle \bw_{j,r}^{(t)}, \bxi_{i} \rangle\right)  \cdot \norm {\bxi_i}_2^{2} \cdot \mathds{1} (y_{i}=j) ,  \\
        && \urho^{(t+1)}= \urho^{(t)} +\frac{\eta}{nm} \ell_{i}^{\prime(t)} \cdot \sigma^\prime \left( \langle \bw_{j,r}^{(t)}, \bxi_{i} \rangle\right)  \cdot \norm {\bxi_i}_2^{2} \cdot \mathds{1} (y_{i}=-j) .
    \end{eqnarray*}
    
    We first prove \eqref{eqB3} hold at time $\tilde t$, denote $t_{j,r}$ as the last time $t \leq T^*$ such that $ \gam^{(t)}\leq 0.5\alpha$. Then we have 
    \begin{equation} \label{eqB9}
        \begin{aligned}
            \gam^{(\tilde t)} &= \gam^{(t_{j,r})} -  \frac{\eta}{nm}  \sum_{i=1}^n \ell_{i}^{\prime(t_{j,r})} \cdot \sigma^\prime \left( \langle \bw_{j,r}^{(t_{j,r})}, y_{i} \bmu \rangle\right) \cdot \norm \bmu_2^{2} \\
            &- \sum_{t_{j,r}<t< \tilde t}  \frac{\eta}{nm} \sum_{i=1}^n \ell_{i}^{\prime(t)} \cdot \sigma^\prime \left( \langle \bw_{j,r}^{(t)}, y_{i} \bmu \rangle\right) \cdot \norm \bmu_2^{2}.
        \end{aligned}
    \end{equation}
    The second term in \eqref{eqB9} can be bounded by
    \begin{equation} \label{eqB10}
        -  \frac{\eta}{nm}  \sum_{i=1}^n \ell_{i}^{\prime(t_{j,r})} \cdot \sigma^\prime \left( \langle \bw_{j,r}^{(t_{j,r})}, y_{i} \bmu \rangle\right) \cdot \norm \bmu_2^{2} \leq \frac{\eta}{m} \norm {\bmu}_2^{2} \leq 0.25 \alpha,
    \end{equation}
    where the first inequality is by $-\ell^{\prime (t)}_{i} < 1$ and the property of Huberized ReLU that $\sigma^\prime (z) \leq 1$,  the last inequality is by $\alpha\geq 1$ and the condition on $\eta$ specified in Condition \ref{condition}.
    
    Then we bound the third term in \eqref{eqB9}. For $t_{j,r} < t< \tilde t$ and any index $i$ satisfying $y_i=-j$, we have
    \[
    \langle \bw_{j,r}^{(t)}, y_{i} \bmu \rangle= \langle \bw_{j,r}^{(t)}, -j \bmu \rangle + \gam^{(t)} - \gam^{(t)} \leq \kappa - \gam^{(t)} \leq 0,
    \]
    where the first inequality is by Lemma \ref{innerprodectwjrxi}, the second inequality is by $\gam^{(t)} \geq 0.5 \alpha \geq \kappa$. Hence,
    \[
    \sigma^\prime \left(\langle \bw_{j,r}^{(t)}, y_{i} \bmu \rangle \right) =0.
    \]
    For $t_{j,r} < t< \tilde t$ and any index $i$ satisfying $y_i=j$, we have 
    \[
    \langle \bw_{j,r}^{(t)}, y_{i} \bmu \rangle= \langle \bw_{j,r}^{(t)}, j \bmu \rangle - \gam^{(t)} + \gam^{(t)}  \geq  \gam^{(t)} - \kappa \geq \kappa,
    \]
    where the first inequality is by Lemma \ref{innerprodectwjrxi}, the second inequality is by $\gam^{(t)} \geq 0.5 \alpha \geq 2\kappa$. Hence,
    \[
    \sigma^\prime \left(\langle \bw_{j,r}^{(t)}, y_{i} \bmu \rangle \right) =1.
    \]
    Moreover, notice that
    \begin{equation*}
    \begin{aligned}
        y_i f(\bW^{(t)},\bx_i) &= j F_{j}(\bW_{j}^{(t)},\bx_i) - j F_{-j}(\bW_{-j}^{(t)},\bx_i)  \\
        &\geq j F_{j}(\bW_{j}^{(t)},\bx_i) - \kappa \\
        &= \frac{1}{m} \sum_{r'=1}^m \left[ \sigma\left( \langle \bw_{j,r}^{(t)}, j \bmu \rangle\right)  +\sigma\left( \langle \bw_{j,r'}^{(t)}, \bm \bxi_i \rangle\right)  \right] - \kappa \\
        & \geq \frac{1}{m}  \sigma\left( \langle \bw_{j,r}^{(t)},  j \bmu \rangle - \gamma_{j,r}^{(t)} +\gamma_{j,r}^{(t)} \right)  -\kappa \\
        & \geq \frac{1}{m} \sigma\left(\gam^{(t)}- \kappa \right) - \kappa \\
        & \geq \frac{1}{m}  \gam^{(t)} - 2\kappa.
    \end{aligned}
    \end{equation*}
    where the first inequality is by Lemma \ref{Fminusyibound}, the second inequality is by $\sigma(t) \geq 0$, the third inequality is by Lemma \ref{innerprodectwjrxi} and the non-decreasing property of $\sigma$, and the last inequality is by $\sigma(t) \geq t$ for $t= \frac{1}{m} \gam^{(t)}- \kappa \geq 0.5 \frac{\alpha}{m} -\kappa \geq \kappa$. It follows that
    \begin{equation} \label{litupperbound}
        \begin{aligned}
            -\ell^{\prime (t)}_{i} &= -\ell^{\prime} \left(y_{i} f(\bW^{(t)},\bx_{i})\right) \leq \exp\left(-y_{i} f(\bW^{(t)},\bx_{i})\right) \\
            & \leq \exp\left(-\frac{1}{m}   \gam^{(t)} + 2\kappa \right) \\
            & \leq \exp\left(-\frac{\alpha}{2m} +\frac{\alpha}{4m}\right) \\
            &= \exp\left(-\frac{\alpha}{4m}\right),
        \end{aligned}
    \end{equation}
    where the first inequality is by $-\ell^{\prime}(t)= \frac{e^{-t}}{1+e^{-t}} \leq e^{-t}$, the second inequality is by $\gam^{(t)} \geq 0.5\alpha$, and the third inequality is by $\kappa \leq 1 \leq \frac{\alpha}{8m}$.
    Therefore,
    \begin{equation} \label{eqB11}
        \begin{aligned}
            & - \sum_{t_{j,r}<t< \tilde t}  \frac{\eta}{nm} \sum_{i=1}^n \ell_{i}^{\prime(t)} \cdot \sigma^\prime \left( \langle \bw_{j,r}^{(t)}, y_{i} \bmu \rangle\right) \cdot \norm \bmu_2^{2} \\
             \leq & \frac{\eta(\tilde t- t_{j,r}-1)}{m} \cdot \exp\left(-\frac{\alpha}{4m}\right)  \cdot  \norm \bmu_2^{2} \\
             \leq & \frac{\eta T^*}{m} \cdot \exp\left(-\log(T^*)\right)  \cdot  \norm \bmu_2^{2} \leq 0.25 \alpha,
        \end{aligned}
    \end{equation}
    where the first inequality is by \eqref{litupperbound}, the second inequality is by the definition of $\alpha$, and the last inequality is by the condition that $\eta=O(\frac{m\log(T^*)}{\norm{\bmu}_2})$ specified in Condition \ref{condition}. By utilizing the bounds \eqref{eqB10} and \eqref{eqB11} in \eqref{eqB9}, we have
    \begin{equation*}
        \gam^{(\tilde t)} \leq 0.5\alpha+ 0.25\alpha+ 0.25\alpha = \alpha.
    \end{equation*}
    Similarly, we can prove that $\urho^{(t)} \leq \alpha$ using the property that $\norm {\bxi_i}_2^2 \leq \frac{3\sigma_p^2d}{2}$ specified in Lemma \ref{innerproductxi}, and the condition that $\eta= O(\frac{n m^2 \log(T^*)}{\sigma_p^2 d})$ specified in Condition \ref{condition}.

    Finally, we prove \eqref{eqB4} hold at time $t=\tilde t$. We only need to consider the case that $j=-y_i$, since $\urho^{(t)}= 0$ for $j=y_i$. By the induction hypothesis, we have $\urho^{(\tilde t-1)} \geq -2\beta - 2\zeta$. Then we consider the two cases. For the first case, if $\urho^{(\tilde t-1)} < -\beta - \zeta$, according to Lemma \ref{innerprodectwjrxi}, we have
    \begin{equation*}
        \la \wjr^{(\tilde t-1)}, \bxi_i \ra \leq  \urho^{(\tilde t-1)} +\beta + \zeta < 0,
    \end{equation*}
    then by the update rule of $\urho^{(\tilde t)}$, we have
    \begin{align*}
        \urho^{(\tilde t)} &= \urho^{(\tilde t-1)} +\frac{\eta}{nm} \ell_{i}^{\prime(\tilde t-1)} \cdot \sigma^\prime \left( \langle \bw_{j,r}^{(\tilde t-1)}, \bxi_{i} \rangle\right)  \cdot \norm {\bxi_i}_2^{2} \cdot \mathds{1} ( y_{i}=-j) \\
        &= \urho^{(\tilde t-1)} \geq -2\beta -2\zeta .
    \end{align*}
    For the second case, if $\urho^{(\tilde t-1)} \geq -\beta - \zeta$, we have
    \begin{align*}
        \urho^{(\tilde t)} &= \urho^{(\tilde t-1)} +\frac{\eta}{nm} \ell_{i}^{\prime(\tilde t-1)} \cdot \sigma^\prime \left( \langle \bw_{j,r}^{(\tilde t-1)}, \bxi_{i} \rangle\right)  \cdot \norm {\bxi_i}_2^{2} \cdot \mathds{1} (y_{i}=-j) \\
        &\geq -\beta - \zeta - \frac{3\kappa^{1-q} \eta \sigma_p^2 d}{2nm} (\beta + \zeta)^{q-1} \\
        & \geq -\beta - \zeta - \beta - \zeta = -2\beta - 2\zeta,
    \end{align*}
    where the first inequality is by $|\ell_{i}^{\prime(\tilde t-1)}| < 1$, $\langle \bw_{j,r}^{(\tilde t-1)}, \bxi_{i} \rangle \leq \beta+ \zeta \leq \kappa$, and Lemma \ref{innerproductxi}, the second inequality is by the condition on $\eta$ stated in Condition \ref{condition}. Combining these two cases, we have $\urho^{(\tilde t)} \geq -2\beta - 2\zeta \geq - \alpha$.
\end{proof}

\subsection{First Stage}
\begin{lemma} \label{GDFirstStage}
    Under Condition \ref{condition}, we can find a time $T_1= \tilde \Theta \left( \frac{\kappa^{q-1} mn}{\eta \sigma_0^{q-2} (\sigma_p \sqrt{d})^q} \right)$, such that
    \begin{itemize}
        \item $\max_{r} \langle\bw_{y_i,r}^{(T_1)}, \bxi_i \rangle \geq \kappa$, $\max_{j,r} \orho^{(T_1)} \geq \kappa$, for all $i \in [n]$.
        \item $\max_{j,r} \langle\bw_{j,r}^{(t)}, j \mu \rangle = \tilde O (\sigma_0 \|\mu\|_2)$, $\max_{j,r} \gam^{(T_1)} = \tilde O (\sigma_0 \|\mu\|_2)$,  for all $0 \leq t \leq T_1$.
        \item $\max_{r,i} \left|\langle\bw_{-y_i,r}^{(t)}, \bxi_i \rangle\right| = \tilde O (\sigma_0 \sigma_p \sqrt{d})$, $\max_{j,r,i} |\urho^{(T_1)}|  = \tilde O (\sigma_0 \sigma_p \sqrt{d})$, for all $0 \leq t \leq T_1$.
    \end{itemize}
\end{lemma}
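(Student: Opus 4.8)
The plan is a two-regime analysis of the scalar recursions from Lemma \ref{lemma:GDcoefficient} (the polynomial branch $\sigma'(z)=\kappa^{1-q}z^{q-1}$ on $[0,\kappa]$ versus the linear branch), built on the signal-noise decomposition of Definition \ref{GD:w_decomposition}, using Proposition \ref{proposition:GDgrowthbound} and Lemma \ref{innerprodectwjrxi} to keep every inner product within $O(\kappa)$ of its coefficient, and then matching with the tensor-power bounds of Lemma \ref{tensorpower}. It is all organized as one simultaneous induction over $(j,r,i)$ and $t\le T_1$: before any noise-memorization coordinate hits $\kappa$, all filter outputs are $O(\kappa)=O(1)$, so $|\ell_i'^{(t)}|\in[c_0,1]$ for an absolute constant $c_0>0$ (and this persists for a sample $i$ as long as its own memorization has not crossed a constant multiple of $\kappa$, which is all we use).

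For the first bullet, fix $i$, let $r^\star=\arg\max_r\langle\bw_{y_i,r}^{(0)},\bxi_i\rangle$, and set $w_0:=\langle\bw_{y_i,r^\star}^{(0)},\bxi_i\rangle\ge\sigma_0\sigma_p\sqrt d/4$ (Lemma \ref{innerproductw0}). The coefficient $a_t:=\overline{\rho}_{y_i,r^\star,i}^{(t)}$ obeys the cross-term-free recursion $a_{t+1}=a_t+\tfrac{\eta}{nm}|\ell_i'^{(t)}|\,\sigma'(\langle\bw_{y_i,r^\star}^{(t)},\bxi_i\rangle)\,\|\bxi_i\|_2^2$, while $\langle\bw_{y_i,r^\star}^{(t)},\bxi_i\rangle=a_t+w_0+e_t$ for a cross-noise error $e_t$. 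With $\tfrac12\sigma_p^2 d\le\|\bxi_i\|_2^2\le\tfrac32\sigma_p^2 d$ and the bound $|e_t|\le\tfrac12(a_t+w_0)$ discussed below, $b_t:=a_t+w_0$ is sandwiched between two tensor-power iterations $b_{t+1}\ge b_t+\eta C_1 b_t^{q-1}$ and $b_{t+1}\le b_t+\eta C_2 b_t^{q-1}$ with $C_1,C_2=\Theta(\kappa^{1-q}\sigma_p^2 d/(nm))$ and $b_0=w_0$. Lemma \ref{tensorpower} then gives that $a_t$ reaches $2\kappa$ within $\tfrac{1+o(1)}{\eta C_1 w_0^{q-2}}+\tilde O(1)=\tilde\Theta\!\big(\tfrac{\kappa^{q-1}mn}{\eta\sigma_0^{q-2}(\sigma_p\sqrt d)^q}\big)$ steps, and then $\langle\bw_{y_i,r^\star}^{(t)},\bxi_i\rangle\ge 2\kappa-(\beta+\zeta)\ge\kappa$. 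Taking the maximum over $i$ of these times (they differ by only polylog factors since every $w_0\in[\sigma_0\sigma_p\sqrt d/4,\tilde O(\sigma_0\sigma_p\sqrt d)]$ and enters through the exponent $q-2$) defines $T_1$ and gives the first bullet.

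For the second bullet, $\bmu\perp\bxi_{i'}$ gives $\langle\bw_{j,r}^{(t)},j\bmu\rangle=\gamma_{j,r}^{(t)}+\langle\bw_{j,r}^{(0)},j\bmu\rangle$ exactly, with $|\langle\bw_{j,r}^{(0)},j\bmu\rangle|\le\beta_\mu:=\sigma_0\|\bmu\|_2\sqrt{2\log(12m/\delta)}=\tilde O(\sigma_0\|\bmu\|_2)$. In the update of $\gamma_{j,r}^{(t)}$ only the $y_i=j$ terms and an $O(\kappa^{1-q}\beta_\mu^{q-1})$ remainder survive on the polynomial branch, so $\gamma_{j,r}^{(t+1)}\le\gamma_{j,r}^{(t)}+\tfrac{C\eta}{m}\kappa^{1-q}(\gamma_{j,r}^{(t)}+\beta_\mu)^{q-1}\|\bmu\|_2^2$. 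Under the inductive hypothesis $\gamma_{j,r}^{(t)}\le C'\beta_\mu$, summing this increment over $\le T_1$ steps and plugging in the value of $T_1$ and $\beta_\mu\asymp\sigma_0\|\bmu\|_2$ bounds the total growth of $\gamma_{j,r}$ on $[0,T_1]$ by $\tilde O\!\big((C'+1)^{q-1}n\,\text{SNR}^q\,\sigma_0\|\bmu\|_2\big)$; since $\text{SNR}^{-1}\ge\tilde\Omega(n^{1/q})$ (with a polylog/constant margin absorbing $(C'+1)^{q-1}$) makes $n\,\text{SNR}^q=\tilde O(1)$, this is $\tilde O(\sigma_0\|\bmu\|_2)$, which closes the induction and yields $\max_{j,r}\gamma_{j,r}^{(t)}=\tilde O(\sigma_0\|\bmu\|_2)$ and $\max_{j,r}\langle\bw_{j,r}^{(t)},j\bmu\rangle=\tilde O(\sigma_0\|\bmu\|_2)$ for $0\le t\le T_1$. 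The third bullet is immediate from Proposition \ref{proposition:GDgrowthbound}: $\overline{\rho}_{-y_i,r,i}^{(t)}=0$, so $\langle\bw_{-y_i,r}^{(t)},\bxi_i\rangle=\underline{\rho}_{-y_i,r,i}^{(t)}+\langle\bw_{-y_i,r}^{(0)},\bxi_i\rangle+(\text{cross},\le\zeta)$ and $0\ge\underline{\rho}_{j,r,i}^{(t)}\ge-2\beta-2\zeta=\tilde O(\sigma_0\sigma_p\sqrt d)$ (using $\|\bmu\|_2\le\sigma_p\sqrt d$ and the lower bound on $\sigma_0$ to dominate $\zeta$).

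The main obstacle is the control $|e_t|\le\tfrac12(a_t+w_0)$ in the first bullet: in the escape-from-initialization phase $a_t\lesssim w_0$ while the uniform bound $|e_t|\le\zeta$ from Lemma \ref{innerprodectwjrxi} is itself of order $w_0$, so it is too crude to fix $\sigma'(\langle\bw_{y_i,r^\star}^{(t)},\bxi_i\rangle)$ up to constants. This is handled by the time-localized estimate $|e_t|\le4\sqrt{\log(6n^2/\delta)/d}\,\sum_{i'\ne i}\big(|\overline{\rho}_{y_i,r^\star,i'}^{(t)}|+|\underline{\rho}_{y_i,r^\star,i'}^{(t)}|\big)$ together with a joint induction showing that all noise-memorization coefficients on neuron $r^\star$ stay within a bounded factor of one another, so the sum is $\le n$ times something $\lesssim a_t+w_0$; the over-parameterization $d\gtrsim n^2\log(n^2/\delta)$ then pushes the prefactor below $1/2$. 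A secondary, routine obstacle is bookkeeping the order of events — that every sample's noise memorization crosses $\kappa$ within a polylog window and strictly before signal learning or the cross-noise terms leave their $\tilde O(\sigma_0\cdot)$ initialization scale — which is precisely what the SNR and $\sigma_0$ conditions secure.
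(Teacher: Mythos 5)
Your bullets 2 and 3 match the paper essentially line-for-line: for the signal you use the exact identity $\langle\bw_{j,r}^{(t)},j\bmu\rangle=\gamma_{j,r}^{(t)}+\langle\bw_{j,r}^{(0)},j\bmu\rangle$, telescope the polynomial-branch increment over $T_1$ steps, and kill the accumulated growth with $n\cdot\text{SNR}^q=\tilde O(1)$; for the third bullet you read $|\underline{\rho}_{j,r,i}^{(t)}|\le 2\beta+2\zeta$ off Proposition~\ref{proposition:GDgrowthbound} and note $\zeta$ is dominated under the lower bound on $\sigma_0$. All of that is correct and is what the paper does.

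The deviation is in bullet 1, and specifically in what you flag as the ``main obstacle.'' You want a two-sided sandwich $b_{t+1}\in[b_t+\eta C_1 b_t^{q-1},\,b_t+\eta C_2 b_t^{q-1}]$ for a fixed neuron, which forces you to control the cross-noise error $e_t$ relative to $a_t+w_0$ \emph{pointwise in $t$}, and you propose a joint induction showing that all $\overline{\rho}_{y_{i'},r^\star,i'}^{(t)}$ stay within a bounded factor of one another. That auxiliary claim is not needed and is actually delicate in the explosive regime: in the tensor-power phase small differences in initial conditions across $i'$ amplify polynomially, so there is no a priori reason the ratios stay $O(1)$, and establishing this would be harder than the lemma itself. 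The paper avoids the issue entirely. It does not sandwich; it lower-bounds. It defines $B_i^{(t)}:=\max_{j,r}\{\overline{\rho}_{j,r,i}^{(t)}+\langle\bw_{j,r}^{(0)},\bxi_i\rangle-\zeta\}$, with the \emph{global} offset $-\zeta$ baked in once and for all. Under Condition~\ref{condition}, the lower bound on $\sigma_0$ (which, reading the paper's own proof of the third bullet, should carry an extra factor of $m$, i.e.\ $\sigma_0\gtrsim nm(\sigma_p d)^{-1}\sqrt{\log(n^2/\delta)}\log T^*$) together with the over-parameterization condition on $d$ makes $\zeta$ at most a small constant multiple of $\sigma_0\sigma_p\sqrt d/4$, so $B_i^{(0)}\geq\sigma_0\sigma_p\sqrt d/8>0$ and $\langle\bw_{y_i,r}^{(t)},\bxi_i\rangle\geq B_i^{(t)}$ for the argmax neuron. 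This gives the one-sided recursion $B_i^{(t+1)}\geq B_i^{(t)}+\frac{C_\ell\kappa^{1-q}\eta\sigma_p^2 d}{2nm}[B_i^{(t)}]^{q-1}$ with a constant multiplicative loss already absorbed into the starting value, and only the upper-bound half of Lemma~\ref{tensorpower} is invoked (via contradiction: if the inner product were still $\leq\kappa$ at $T_{1,i}$, $B_i$ would have reached $2$, a contradiction). In short, the uniform $\zeta$ bound from Lemma~\ref{innerprodectwjrxi} is \emph{not} too crude; it is exactly the right quantity, and the hyperparameter conditions were engineered so it sits below the initialization scale. You should drop the joint-induction plan and instead observe that $\zeta\leq\text{(small const)}\cdot\sigma_0\sigma_p\sqrt d$ under Condition~\ref{condition}.

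Two minor remarks. First, your plan aims for a two-sided $T_1=\tilde\Theta(\cdot)$; the paper only needs that noise memorization hits $\kappa$ by $T_1$ (the $\tilde\Theta$ in the statement names a fixed threshold, not a matching lower bound on the hitting time), so the harder direction of the sandwich buys nothing. Second, your factor of $\|\bmu\|_2\leq\sigma_p\sqrt d$ in bullet 3 is indeed implied by $\text{SNR}^{-1}\geq\tilde\Omega(n^{1/q})>1$, so that step is fine.
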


\begin{proof} [Proof of Lemma \ref{GDFirstStage}]
    We first prove the second result. Denote $\tilde \beta= \sigma_0 \| \bmu \|_2 \cdot \sqrt{2\log \left(\frac{12m}{\delta}\right)}$. According to the iteration formulation \eqref{GDupdate}, we have
    \[
    \begin{aligned}
        \langle\bw_{j,r}^{(t+1)}, j\bmu\rangle &= \langle \bw_{j,r}^{(t)}, j\bmu\rangle- \frac{\eta}{nm} \sum_{i=1}^n \ell_{i}^{\prime(t)} \cdot \sigma^\prime \left( \langle \bw_{j,r}^{(t)}, y_{i} \bmu \rangle\right) \cdot \|\bmu\|_2^2 \\
        & \leq \langle \bw_{j,r}^{(t)}, j\bmu\rangle + \frac{\eta}{nm} \sum_{i=1}^n  \sigma^\prime \left( \max \left\{ \langle \bw_{j,r}^{(t)}, j \bmu \rangle, \langle \bw_{j,r}^{(t)}, -j \bmu \rangle \right\} \right) \cdot \|\bmu\|_2^2 \\
        & \leq \langle \bw_{j,r}^{(t)}, j\bmu\rangle + \frac{\eta}{nm} \sum_{i=1}^n  \sigma^\prime \left( \langle \bw_{j,r}^{(t)}, j \bmu \rangle + \tilde \beta  \right) \cdot \|\bmu\|_2^2
    \end{aligned}
    \]
    where the first inequality is by $-\ell_{i}^{\prime(t)} \leq 1$, the second inequality utilizes the property $\langle \bw_{j,r}^{(t)}, -j\bmu\rangle \leq \langle \bw_{j,r}^{(0)}, -j\bmu\rangle \leq \tilde \beta$ by Lemma \ref{innerproductw0}. 
    
    Denote $A^{(t)}= \max_{j,r} \left\{\langle \bw_{j,r}^{(t)}, j \bmu \rangle + \tilde \beta  \right\}$. We use induction to prove that $A^{(t)} = \tilde O (\sigma_0 \|\mu\|_2)$ holds for $0 \leq t \leq T_1$. Clearly, it holds for $t=0$ by Lemma \ref{innerproductw0}. Suppose that it holds for $0 \leq t \leq \tilde T-1$ with $\tilde T \leq T_1$. Since $A^{(t)} = \tilde O (\sigma_0 \|\mu\|_2) \leq \kappa$, we can rewrite the above iteration as
    \[
    A^{(t+1)} \leq A^{(t)} + \frac{\eta \kappa^{1-q} \| \bmu \|_2^2}{m} \left[A^{(t)}\right]^{q-1}.
    \]
     by taking a telescoping sum we have
    \begin{align*}
        A^{(\tilde{T})} &\leq A^{(0)} + \sum_{s=0}^{\tilde{T}} \frac{\eta \kappa^{1-q} \|\bmu\|_{2}^{2}}{m} \left[A^{(s)}\right]^{q-1}\\
        &\leq  A^{(0)} + \tilde O \left( \frac{\eta \kappa^{1-q}  \sigma_0^{q-1} \| \bmu \|_2^{q+1} T_{1}}{m} \right) \\ 
        & = \tilde O (\sigma_0 \|\mu\|_2) + \tilde O\left( \frac{\sigma_0 \|\mu\|_2 n \|\mu\|_2^q}{(\sigma_p \sqrt{d})^q}\right) \\
        & = \tilde O (\sigma_0 \|\mu\|_2),
    \end{align*}
    where the first equation is by the choice of $T_1= \tilde \Theta \left( \frac{\kappa^{q-1} mn}{\eta \sigma_0^{q-2} (\sigma_p \sqrt{d})^q} \right)$ and the second equation is by the SNR condition $\frac{\sigma_p^q d^{q/2}}{n \|\mu\|_2^q} \geq \tilde \Omega(1)$ specified in Condition \ref{condition}.
    
    Next, we prove the first result by contradiction. We consider the case $j=y_i$. Denote $T_{1,i}$ be the the last time in the period $0 < t \leq T_{1,i}$ satisfying that $\max_{j,r} \langle\bw_{j,r}^{(t)}, \bxi_i \rangle \leq \kappa$. Moreover, for $0 \leq t \leq T_{1,i}$, notice that $\la \bw_{y_i,r}^{(t)}, y_i \cdot \bmu \ra = \tilde O(\sigma_0 \|\mu\|_2)=O(1) $ and $\la \bw_{y_i,r}^{(t)},  \bxi_i \ra \leq \kappa = O(1)$,
    %\begin{align*}
    %    F_{-y_i} (\bW_{-y_i}^{(t)},\bx_i) &= \frac{1}{m} \sum_{r=1}^m \left[\sigma\left(\la \bw_{-y_i,r}^{(t)}, y_i \cdot \bmu \ra\right) + \sigma\left(\la \bw_{-y_i,r}^{(t)},  \bxi_i \ra\right)\right] \\
    %     &= \frac{\kappa^{1-q}}{q} \left[ \tilde O \left((\sigma_0 \|\mu\|_2)^q \right) + (\sigma_0 \sigma_p \sqrt{d})^q \right] \leq 1.
    %\end{align*}
    therefore,
    \begin{align*}
        -\ell_i^{'(t)} &= \frac{1}{1+e^{F_{y_i}(\bW_{y_i}^{(t)},\bx_i)- F_{-y_i} (\bW_{-y_i}^{(t)},\bx_i)}}  \geq \frac{1}{2e^{F_{y_i}(\bW_{y_i}^{(t)},\bx_i)}} \\
        & = \frac{1}{2} \exp\left(-\frac{1}{m} \sum_{r=1}^m \left[\sigma\left(\la \bw_{y_i,r}^{(t)}, y_i \cdot \bmu \ra\right) + \sigma\left(\la \bw_{y_i,r}^{(t)},  \bxi_i \ra\right)\right]\right) \geq C_{\ell},
    \end{align*}
    for some positive constant $C_\ell$. Recall that
    \begin{align*}
        \orho^{(t+1)}= \orho^{(t)} -\frac{\eta}{nm} \ell_{i}^{\prime(t)} \cdot \sigma^\prime \left( \langle \bw_{j,r}^{(t)}, \bxi_{i} \rangle\right)  \cdot \norm {\bxi_i}_2^{2}.
    \end{align*}
    Moreover, by Lemma \ref{innerprodectwjrxi}, we have
    \begin{align*}
        \kappa \geq \langle \bw_{j,r}^{(t)}, \bxi_{i} \rangle \geq \orho^{(t)}+ \langle \bw_{j,r}^{(0)}, \bxi_{i} \rangle - \zeta.
    \end{align*}
    By denoting $B_{i}^{(t)} = \max_{j,r}\left\{\orho^{(t)} + \la \bW_{j,r}^{(0)}, \bxi_{i}\ra - \zeta \right\}$. Notice that,
    $$B_{i}^{(0)} = \max_{j,r}\left\{\la \bW_{j,r}^{(0)}, \bxi_{i}\ra - \zeta \right\} \geq \frac{\sigma_0 \sigma_p \sqrt{d}}{4} - \zeta \geq \frac{\sigma_0 \sigma_p \sqrt{d}}{8},$$
    where the first inequality is by Lemma \ref{innerproductw0}, the second inequality is by the condition that $\sigma_0 \geq \frac{Cn}{\sigma_p d} \sqrt{\log\left(\frac{n^2}{\delta}\right)} \log(T^*)$ specified in Condition \ref{condition}.
    Therefore, we have the following inequality on the growth of $B_{i}^{(t)}$,
    \begin{align*}
    B_{i}^{(t+1)} &\geq B_{i}^{(t)} + \frac{C_{\ell} \kappa^{1-q} \eta \sigma_{p}^{2}d}{2nm}\left[B_{i}^{(t)}\right]^{q-1},
    \end{align*}
    by utilizing Lemma \ref{tensorpower} with $C_1= \frac{C_{\ell} \kappa^{1-q} \sigma_{p}^{2}d}{2nm}$, we conclude that $B_i^{(t)}$ will reach $2$ at the time $T_{1,i}= \Theta \left( \frac{\kappa^{q-1} mn}{\eta \sigma_0^{q-2} (\sigma_p \sqrt{d})^q} \right)$.
     It follows that
     $$\max_{j, r} \langle\bw_{j,r}^{(t)}, \bxi_i \rangle \geq \max_{j, r}\orho^{(t)}- \kappa \geq B_{i}^{(t)} - \kappa - \beta - \zeta \geq 2- 2\kappa \geq \kappa,$$ 
     where the first inequality is by Lemma \ref{innerprodectwjrxi}, the second inequality is by the definition of $\beta$. This contradicts with the assumption that $\max_{j, r} \langle\bw_{j,r}^{(T_{1,i})}, \bxi_i \rangle \leq \kappa$. Notice that $T_{1}= \tilde \Theta \left( \frac{\kappa^{q-1} mn}{\eta \sigma_0^{q-2} (\sigma_p \sqrt{d})^q} \right)$, we conclude that $\max_{j, r} \langle\bw_{j,r}^{(T_1)}, \bxi_i \rangle \geq \kappa$ for all $i\in [n]$.

    The last result follows from the property that
    \begin{align*}
        \left|\urho^{(t)} \right| \leq 2\beta + 2\zeta \leq \tilde O (\sigma_0 \sigma_p \sqrt{d})+ 2\zeta = \tilde O (\sigma_0 \sigma_p \sqrt{d}),
    \end{align*}
    where the first inequality is by Proposition \ref{proposition:GDgrowthbound}, the second inequality is by Lemma \ref{innerproductw0}, and the last inequality is by the condition that $\sigma_0 \geq \frac{C nm}{\sigma_p d} \sqrt{\log\left(\frac{n^2}{\delta}\right)} \log(T^*)$ specified in Condition \ref{condition}. Thus we complete the proof.
\end{proof}

\subsection{Second Stage}
Note that Lemma \ref{GDFirstStage} indicates that $\max_{r} \langle\bw_{y_i,r}^{(t)}, \bxi_i \rangle \geq \kappa$ for all $t\geq T_{1}$. Now we choose $\bW^{*}$ as follows
\begin{align*}
\bw^{*}_{j,r} = \bw_{j,r}^{(0)} + 4qm \log\frac{2q}{\epsilon}\bigg[\sum_{ i = 1}^n \frac{\bxi_{i}}{\|\bxi_{i}\|_{2}} \cdot \indicator(j = y_{i}) \bigg].
\end{align*}
Based on the definition of $\bW^{*}$, we have the following lemma.
\begin{lemma}\label{lm:distance2}
Under Condition \ref{condition}, we have that $\|\bW^{(T_{1})} - \bW^{*}\|_{F} =  \tilde O \left(\frac{m^{\frac{3}{2}} n^{\frac{1}{2}}}{\|\bmu\|_{2}} \right)$.
\end{lemma}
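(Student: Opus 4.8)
The plan is to subtract the two weight collections and read off the difference from the signal--noise decomposition of Definition \ref{GD:w_decomposition}. Since the random initializations $\bw_{j,r}^{(0)}$ appear in both $\bw_{j,r}^{(T_{1})}$ and $\bw^{*}_{j,r}$, they cancel, leaving
\[
\bw_{j,r}^{(T_{1})} - \bw^{*}_{j,r} = \gamma_{j,r}^{(T_{1})}\cdot\|\bmu\|_{2}^{-2}\cdot j\bmu + \sum_{i=1}^{n}\Big(\rho_{j,r,i}^{(T_{1})} - 4qm\log\tfrac{2q}{\epsilon}\cdot\indicator(j=y_{i})\Big)\cdot\|\bxi_{i}\|_{2}^{-2}\cdot\bxi_{i}.
\]
Because $\|\bW^{(T_{1})}-\bW^{*}\|_{F}^{2}=\sum_{j,r}\|\bw_{j,r}^{(T_{1})}-\bw^{*}_{j,r}\|_{2}^{2}\le 2m\max_{j,r}\|\bw_{j,r}^{(T_{1})}-\bw^{*}_{j,r}\|_{2}^{2}$, it suffices to bound the $\ell_{2}$ norm of one such filter difference and multiply by $\sqrt{2m}$.

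For a single filter I would expand the squared norm into a diagonal part and cross terms. On the diagonal: Lemma \ref{GDFirstStage} gives $\gamma_{j,r}^{(T_{1})}=\tilde O(\sigma_{0}\|\bmu\|_{2})$, so the signal contribution is $(\gamma_{j,r}^{(T_{1})})^{2}\|\bmu\|_{2}^{-2}=\tilde O(\sigma_{0}^{2})$; Proposition \ref{proposition:GDgrowthbound} gives $0\le\overline{\rho}_{j,r,i}^{(T_{1})}\le\alpha=4m\log(T^{*})$ and $|\underline{\rho}_{j,r,i}^{(T_{1})}|\le\alpha$, so each coefficient $\rho_{j,r,i}^{(T_{1})}-4qm\log\tfrac{2q}{\epsilon}\indicator(j=y_{i})$ has magnitude $\tilde O(m)$, and with $\|\bxi_{i}\|_{2}^{2}=\Theta(\sigma_{p}^{2}d)$ (Lemma \ref{innerproductxi}) the noise contribution is $\sum_{i}\tilde O(m^{2})\|\bxi_{i}\|_{2}^{-2}=\tilde O(nm^{2}/(\sigma_{p}^{2}d))$. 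The cross terms each carry a factor $|\la\bxi_{i},\bxi_{i'}\ra|\le 2\sigma_{p}^{2}\sqrt{d\log(6n^{2}/\delta)}$ or $|\la\bmu,\bxi_{i}\ra|\le\|\bmu\|_{2}\sigma_{p}\sqrt{2\log(6n/\delta)}$ from Lemma \ref{innerproductxi}; multiplying by the $\tilde O(m)$ coefficient bounds and the normalizations $\|\bxi_{i}\|_{2}^{-2},\|\bmu\|_{2}^{-2}$ and summing over the $O(n^{2})$ index pairs, the dimension condition $d\ge C m^{2}n^{2}\kappa^{-2}\log(n^{2}/\delta)(\log T^{*})^{2}$ of Condition \ref{condition} makes them negligible compared with the diagonal. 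Using the upper bound $\sigma_{0}=\tilde O(1/(\sigma_{p}\sqrt{d}))$ from Condition \ref{condition} to absorb $\tilde O(\sigma_{0}^{2})$, this yields $\|\bw_{j,r}^{(T_{1})}-\bw^{*}_{j,r}\|_{2}^{2}=\tilde O(nm^{2}/(\sigma_{p}^{2}d))$.

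Summing over the $2m$ filters gives $\|\bW^{(T_{1})}-\bW^{*}\|_{F}^{2}=\tilde O(nm^{3}/(\sigma_{p}^{2}d))$, and since $\text{SNR}^{-1}\ge\tilde\Omega(n^{1/q})\ge\tilde\Omega(1)$ forces $\sigma_{p}\sqrt{d}\ge\tilde\Omega(\|\bmu\|_{2})$, we conclude $\|\bW^{(T_{1})}-\bW^{*}\|_{F}=\tilde O(m^{3/2}n^{1/2}/\|\bmu\|_{2})$. I expect the main obstacle to be the cross-term bookkeeping in the middle step: one has to check that all pairwise-inner-product contributions, once scaled by the $\tilde O(m)$ coefficient bounds and the normalizations and summed over $O(n^{2})$ pairs, are dominated by the diagonal term $\tilde O(nm^{2}/(\sigma_{p}^{2}d))$, which is exactly where the over-parameterization lower bound on $d$ is used; the remaining estimates are direct substitutions of the bounds from Lemma \ref{GDFirstStage}, Proposition \ref{proposition:GDgrowthbound}, and Lemma \ref{innerproductxi}.
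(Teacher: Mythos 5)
Your approach is substantively the same as the paper's but organized more directly. The paper applies the triangle inequality to split $\|\bW^{(T_1)} - \bW^*\|_F \le \|\bW^{(T_1)} - \bW^{(0)}\|_F + \|\bW^{(0)} - \bW^*\|_F$, bounding each piece via the signal-noise decomposition and the explicit form of $\bW^*$, whereas you subtract directly so that the initializations cancel, then expand one filter difference and sum. Both proofs reduce to the same dominant term: $\tilde O(m)$-sized coefficients over $n$ near-orthogonal vectors each contributing $\approx (\sigma_p\sqrt{d})^{-1}$ to the $\ell_2$ norm, giving $\tilde O(m^{3/2}n^{1/2}/(\sigma_p\sqrt d))$, which is then converted via the SNR condition to $\tilde O(m^{3/2}n^{1/2}/\|\bmu\|_2)$. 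Your per-filter bound, Frobenius aggregation, and the check that pairwise inner products are dominated under $d\gtrsim n^2$ are all consistent with the paper's ingredients (Lemma~\ref{GDFirstStage}, Proposition~\ref{proposition:GDgrowthbound}, Lemma~\ref{innerproductxi}).

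One point you should have flagged rather than silently corrected: you rewrite the $\bW^*$ contribution as a coefficient $-4qm\log\tfrac{2q}{\epsilon}\,\indicator(j=y_i)$ against the normalization $\|\bxi_i\|_2^{-2}\bxi_i$, but the paper's displayed definition of $\bW^*$ uses $\bxi_i/\|\bxi_i\|_2$, i.e.\ only one power of $\|\bxi_i\|_2$ in the denominator. Taken literally, $\|\bW^{(0)} - \bW^*\|_F$ would then be $\tilde O(m^{3/2}\sqrt{n})$ — with no $\sigma_p\sqrt d$ in the denominator — and the lemma's stated bound would fail. The paper's own computation of $\|\bW^{(0)} - \bW^*\|_F = O(m^{1/2}n^{1/2}\log(1/\epsilon)/(\sigma_p\sqrt d))$ and the off-diagonal estimate $\tilde O(mn/\sqrt d)$ in Lemma~\ref{lm: Gradient Stable2} are both only consistent with a $\|\bxi_i\|_2^{-2}$ normalization, so the displayed definition is evidently a typo. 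Your reading is the correct one, but in a blind proof you should note the discrepancy explicitly, since otherwise a reader checking against the displayed formula will conclude your decomposition of $\bw_{j,r}^{(T_1)} - \bw^*_{j,r}$ is wrong.

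A final minor remark: the signal--noise cross terms (between the $\gamma$-part and the $\rho$-part) should also be checked, not just the $\bxi_i$--$\bxi_{i'}$ cross terms, but they are $\tilde O(\sigma_0\sigma_p)$ smaller than the noise diagonal and are killed by the upper bound on $\sigma_0$ in Condition~\ref{condition}; your proposal gestures at this but doesn't spell it out.
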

\begin{proof}[Proof of Lemma~\ref{lm:distance2}] 
By the signal-noise decompositon, at the end of the first stage, we have
\begin{align*}
\bw_{j,r}^{(T_1)} = \bw_{j,r}^{(0)} + \gamma_{j,r}^{(T_1)} \cdot \| \bmu \|_2^{-2} \cdot j\bmu + \sum_{ i = 1}^n \orho^{(T_1) }\cdot \| \bxi_i \|_2^{-2} \cdot \bxi_{i} + \sum_{ i = 1}^n \urho^{(T_1) }\cdot \| \bxi_i \|_2^{-2} \cdot \bxi_{i},
\end{align*}
it follows that
\begin{align*}
\|\bW^{(T_{1})} - \bW^{*}\|_{F} &\leq \|\bW^{(T_{1})} - \bW^{(0)}\|_{F} + \|\bW^{(0)} - \bW^{*}\|_{F} \\
&\leq \sum_{j,r}\gamma_{j,r}^{(T_{1})}\|\bmu\|_{2}^{-1} + O(\sqrt{m})\max_{j,r}\bigg\|\sum_{ i = 1}^n \orho^{(T_{1})} \cdot \frac{\bxi_{i}}{\|\bxi_{i}\|_{2}^{2}} + \sum_{ i = 1}^n \urho^{(T_{1})} \cdot \frac{\bxi_{i}}{\|\bxi_{i}\|_{2}^{2}}\bigg\|_{2}\\
&\qquad + O \left(\frac{m^{\frac{1}{2}} n^{\frac{1}{2}}}{\sigma_p \sqrt{d}} \log\frac{1}{\epsilon} \right)\\
&= \tilde O \left(\frac{m^{\frac{3}{2}}}{\|\bmu\|_{2}} \right) + \tilde O \left(\frac{m^{\frac{3}{2}} n^{\frac{1}{2}}}{\sigma_p \sqrt{d}} \right) + \tilde{O}\left( \frac{m^{\frac{1}{2}} n^{\frac{1}{2}}}{\sigma_p \sqrt{d}}\right) = \tilde O \left(\frac{m^{\frac{3}{2}} n^{\frac{1}{2}}}{\|\bmu\|_{2}} \right),
\end{align*}
where the first inequality is by triangle inequality, the second inequality is by our decomposition of $\bW^{(T_{1})}, \bW^{*}$ and  Lemma \ref{innerproductxi}, the first equality is by Proposition~\ref{proposition:GDgrowthbound}, the last equality is by the SNR condition $\frac{\sigma_p^q d^{q/2}}{n \|\mu\|_2^q} \geq \tilde \Omega(1)$ specified in Condition \ref{condition}.
\end{proof}

\begin{lemma}\label{lm: Gradient Stable2}
Under Condition \ref{condition}, we have that
\begin{align*}
y_{i}\la \nabla f(\bW^{(t)}, \bx_{i}), \bW^{*} \ra \geq q\log \frac{2q}{\epsilon}  
\end{align*}
for all $ T_{1} \leq t\leq T^{*}$.
\end{lemma}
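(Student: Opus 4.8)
The plan is to differentiate $f$, substitute the explicit form of $\bW^{*}$, and show that a single block of terms — the noise terms attached to those filters $\bw_{y_{i},r}$ whose correlation with $\bxi_{i}$ has already crossed the activation threshold $\kappa$ — dominates everything else. Since the two patches of $\bx_{i}$ are $y_{i}\bmu$ and $\bxi_{i}$, the chain rule gives $\nabla_{\bw_{j,r}}f(\bW,\bx_{i})=\frac{j}{m}\big[\sigma'(\la\bw_{j,r},y_{i}\bmu\ra)\,y_{i}\bmu+\sigma'(\la\bw_{j,r},\bxi_{i}\ra)\,\bxi_{i}\big]$, so
\begin{align*}
y_{i}\la\nabla f(\bW^{(t)},\bx_{i}),\bW^{*}\ra=\frac{1}{m}\sum_{j,r}jy_{i}\Big[\sigma'(\la\bw_{j,r}^{(t)},y_{i}\bmu\ra)\la y_{i}\bmu,\bw_{j,r}^{*}\ra+\sigma'(\la\bw_{j,r}^{(t)},\bxi_{i}\ra)\la\bxi_{i},\bw_{j,r}^{*}\ra\Big].
\end{align*}
First I would split this into the $j=y_{i}$ block ($jy_{i}=1$) and the $j=-y_{i}$ block ($jy_{i}=-1$), and in every term expand $\la y_{i}\bmu,\bw_{j,r}^{*}\ra$ and $\la\bxi_{i},\bw_{j,r}^{*}\ra$ against the definition of $\bW^{*}$, separating the diagonal summand ($i'=i$), the off-diagonal summands ($i'\ne i$), and the initialization contribution $\la\cdot,\bw_{j,r}^{(0)}\ra$.

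The dominant piece is the $j=y_{i}$ noise block, whose diagonal part equals a constant multiple of $\log\frac{2q}{\epsilon}$ times $\frac{1}{m}\sum_{r}\sigma'(\la\bw_{y_{i},r}^{(t)},\bxi_{i}\ra)$, up to the small off-diagonal and initialization corrections. By Lemma~\ref{GDFirstStage}, the monotonicity of $\overline{\rho}_{y_{i},r,i}^{(t)}$ from Proposition~\ref{proposition:GDgrowthbound}, and the tensor-power estimate of Lemma~\ref{tensorpower} applied to the $\Omega(m)$ filters whose initial overlap $\la\bw_{y_{i},r}^{(0)},\bxi_{i}\ra$ is within a constant factor of its maximum, a constant fraction of the filters satisfy $\la\bw_{y_{i},r}^{(t)},\bxi_{i}\ra\ge\kappa$, hence $\sigma'(\la\bw_{y_{i},r}^{(t)},\bxi_{i}\ra)=1$, for all $t\in[T_{1},T^{*}]$; since $\sigma'\ge0$ and the diagonal term forces $\la\bxi_{i},\bw_{y_{i},r}^{*}\ra>0$ for every $r$, the remaining filters only help and this block is bounded below by $2q\log\frac{2q}{\epsilon}$.

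It then remains to bound every other contribution by $q\log\frac{2q}{\epsilon}$ in absolute value, so that the total is at least $q\log\frac{2q}{\epsilon}$. These are: the $j=\pm y_{i}$ signal terms, handled through $|\la\bmu,\bxi_{i'}\ra|\le\|\bmu\|_{2}\sigma_{p}\sqrt{2\log(6n/\delta)}$ and $|\la\bmu,\bw_{j,r}^{(0)}\ra|\le\sigma_{0}\|\bmu\|_{2}\sqrt{2\log(12m/\delta)}$ from Lemmas~\ref{innerproductxi} and \ref{innerproductw0}, which are negligible thanks to the SNR constraint and the upper bound on $\sigma_{0}$ in Condition~\ref{condition}; the $j=-y_{i}$ noise terms, whose expansion has no diagonal summand and is therefore controlled by $|\la\bxi_{i},\bxi_{i'}\ra|\le2\sigma_{p}^{2}\sqrt{d\log(6n^{2}/\delta)}$ and $\|\bxi_{i'}\|_{2}^{2}\ge\sigma_{p}^{2}d/2$, contributing an $O\big(n\sqrt{\log(6n^{2}/\delta)/d}\big)$ fraction of the main block, negligible by the lower bound on $d$; and the off-diagonal and initialization corrections inside the main block, bounded the same way. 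The main obstacle is the claim in the second step that $\Omega(m)$ of the overlaps $\la\bw_{y_{i},r}^{(t)},\bxi_{i}\ra$ — not merely the single largest one guaranteed by Lemma~\ref{GDFirstStage} — have reached $\kappa$ by time $T_{1}$, since this is exactly what turns $\frac{1}{m}\sum_{r}\sigma'(\la\bw_{y_{i},r}^{(t)},\bxi_{i}\ra)$ from $1/m$ into a constant; everything else is routine but tedious bookkeeping against Condition~\ref{condition}.
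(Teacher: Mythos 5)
Your gradient expansion and the identification of the diagonal $j=y_{i}$ noise block as the dominant contribution match the paper's proof, but the way you lower-bound that block has a genuine gap, and it is a self-created one. You describe the diagonal part as ``a constant multiple of $\log\frac{2q}{\epsilon}$ times $\frac{1}{m}\sum_{r}\sigma'(\la\bw_{y_{i},r}^{(t)},\bxi_{i}\ra)$,'' and conclude you must show the filter-average $\frac{1}{m}\sum_{r}\sigma'(\cdot)$ is $\Omega(1)$, i.e.\ that $\Omega(m)$ filters have crossed $\kappa$. This misreads the scale built into $\bW^{*}$: the paper sets
\begin{align*}
\bw^{*}_{j,r} = \bw_{j,r}^{(0)} + 4qm\log\tfrac{2q}{\epsilon}\sum_{i'=1}^{n}\tfrac{\bxi_{i'}}{\|\bxi_{i'}\|_{2}}\,\indicator(j=y_{i'}),
\end{align*}
with an explicit factor of $m$ in the coefficient. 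The diagonal contribution is therefore
$\frac{1}{m}\sum_{r}\sigma'(\la\bw_{y_{i},r}^{(t)},\bxi_{i}\ra)\cdot 4qm\,\|\bxi_{i}\|_{2}\log\frac{2q}{\epsilon}$,
so the $m$ in $4qm$ cancels the $\frac{1}{m}$ prefactor exactly. Combined with $\|\bxi_{i}\|_{2}\geq 1$, a \emph{single} filter with $\sigma'=1$ already yields $4q\log\frac{2q}{\epsilon}$, which is precisely what Lemma~\ref{GDFirstStage} gives for every $t\geq T_{1}$ (together with monotonicity of $\orho^{(t)}$ from Proposition~\ref{proposition:GDgrowthbound}). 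The remaining $3q\log\frac{2q}{\epsilon}$ of slack absorbs the off-diagonal noise terms $\tilde{O}(mn/\sqrt{d})$, the signal terms $\tilde{O}(\sigma_{0}\|\bmu\|_{2})$, and the initialization terms $\tilde{O}(\sigma_{0}\sigma_{p}\sqrt{d})$ — which is how the paper concludes.

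The claim you flag as the ``main obstacle'' — that a constant fraction of the overlaps $\la\bw_{y_{i},r}^{(t)},\bxi_{i}\ra$ have reached $\kappa$ by $T_{1}$ — is not established anywhere in the paper, and the references you cite do not deliver it: Lemma~\ref{GDFirstStage} only controls $\max_{r}$, and the lower bound in the tensor-power Lemma~\ref{tensorpower} scales as $x_{0}^{-(q-2)}$, so filters whose initial overlap sits a $\sqrt{\log m}$-factor below the over-$m$ maximum (the typical gap under Gaussian initialization) would need polylogarithmically more iterations to cross $\kappa$, which the $\tilde{\Theta}$ in $T_{1}$ does not automatically cover. Once you restore the $m$ in the definition of $\bW^{*}$, none of this extra work is needed and your error-term accounting, which is otherwise in line with the paper, closes the argument.
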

\begin{proof}[Proof of Lemma~\ref{lm: Gradient Stable2}]
Recall that 
$$f(\bW^{(t)}, \bx_{i}) = \frac{1}{m} {\sum_{j,r}}j\cdot \big[\sigma(\la\bw_{j,r},y_{i}\cdot\bmu\ra) + \sigma(\la\bw_{j,r}, \bxi_{i}\ra)\big]. $$ 
Therefore, we have 
\begin{align*}
&y_{i}\la \nabla f(\bW^{(t)}, \bx_{i}), \bW^{*} \ra \notag\\
&= \frac{1}{m}\sum_{j,r}\sigma'(\la \bw_{j,r}^{(t)}, y_{i}\bmu\ra)\la \bmu,  j\bw_{j,r}^{*}\ra + \frac{1}{m}\sum_{j,r}\sigma'(\la \bw_{j,r}^{(t)}, \bxi_{i}\ra)\la y_{i}\bxi_{i},  j\bw_{j,r}^{*}\ra \notag\\
&= \frac{1}{m}\sum_{j,r}\sum_{i'=1}^{n}\sigma'(\la \bw_{j,r}^{(t)}, \bxi_{i}\ra) 4qm\log \frac{2q}{\epsilon} \cdot \frac{\la\bxi_{i'}, \bxi_{i} \ra}{\|\bxi_{i'}\|_{2}} \cdot  \indicator(j = y_{i'}) \notag\\
&\qquad+ \frac{1}{m}\sum_{j,r}\sigma'(\la \bw_{j,r}^{(t)}, y_{i}\bmu\ra)\la \bmu,  j\bw_{j,r}^{(0)}\ra  + \frac{1}{m}\sum_{j,r}\sigma'(\la \bw_{j,r}^{(t)}, \bxi_{i}\ra)\la y_{i}\bxi_{i},  j\bw_{j,r}^{(0)}\ra \notag\\
&\geq \frac{1}{m}\sum_{j=y_{i},r}\sigma'(\la \bw_{j,r}^{(t)}, \bxi_{i}\ra)4qm \log \frac{2q}{\epsilon}  - \frac{1}{m}\sum_{j,r}\sum_{i'\not= i}\sigma'(\la \bw_{j,r}^{(t)}, \bxi_{i}\ra)4qm\log \frac{2q}{\epsilon} \cdot \frac{|\la\bxi_{i'}, \bxi_{i} \ra|}{\|\bxi_{i'}\|_{2}} \notag \\
&\qquad -  \frac{1}{m}\sum_{j,r}\sigma'(\la \bw_{j,r}^{(t)}, y_{i}\bmu\ra) \left|\la \bmu,  j\bw_{j,r}^{(0)}\ra\right|  - \frac{1}{m}\sum_{j,r}\sigma'(\la \bw_{j,r}^{(t)}, \bxi_{i}\ra) \left| \la y_{i}\bxi_{i},  j\bw_{j,r}^{(0)}\ra \right| \notag\\
&\geq \frac{1}{m}\sum_{j=y_{i},r}\sigma'(\la \bw_{j,r}^{(t)}, \bxi_{i}\ra) 4qm\log \frac{2q}{\epsilon} - \frac{1}{m}\sum_{j,r}\sigma'(\la \bw_{j,r}^{(t)}, \bxi_{i}\ra)\tilde{O}\left(\frac{mn}{\sqrt{d}}\right) \notag \\
&\qquad -  \frac{1}{m}\sum_{j,r}\sigma'(\la \bw_{j,r}^{(t)}, y_{i}\bmu\ra)\tilde{O}(\sigma_{0}\|\bmu\|_{2})  - \frac{1}{m}\sum_{j,r}\sigma'(\la \bw_{j,r}^{(t)}, \bxi_{i}\ra)\tilde{O}(\sigma_{0}\sigma_{p}\sqrt{d}) \\
&\geq 4q\log \frac{2q}{\epsilon}  - \tilde{O}\left(\frac{mn}{\sqrt{d}}\right)   -\tilde{O}(\sigma_{0}\|\bmu\|_{2}) - \tilde{O}(\sigma_{0}\sigma_{p}\sqrt{d}) \geq \log q\frac{2q}{\epsilon},
\end{align*}
where the second inequality is by Lemma~\ref{innerproductxi} and Lemma \ref{innerproductw0}, the third inequality is by the first property in Lemma \ref{GDFirstStage} and the property that $\sigma^\prime (z) \leq 1$,  the last inequality is by $d\geq \tilde{\Omega}(m^{2}n^{2})$ and $\sigma_{0} \leq O(\min\{(\sigma_{p}\sqrt{d})^{-1}, \|\bmu\|_{2}^{-1}\})$ specified in Condition~\ref{condition}.
\end{proof}

\begin{lemma}\label{lm: gradient upbound}
Under Condition~\ref{condition}, for $0 \leq t\leq T^{*}$, the following result holds.
\begin{align*}
\|\nabla L_{S}(\bW^{(t)})\|_{F}^{2} \leq  O(\max\{\|\bmu\|_{2}^{2}, \sigma_{p}^{2}d\}) L_{S}(\bW^{(t)}).   
\end{align*}
\end{lemma}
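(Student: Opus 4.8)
The plan is to decouple $\nabla L_S(\bW^{(t)})$ into a uniform bound on the per-sample network gradients $\nabla f(\bW^{(t)},\bx_i)$ and an average of the squared loss derivatives $|\ell_i^{\prime(t)}|^2$, and then to convert the latter into $L_S(\bW^{(t)})$ itself via the elementary logistic-loss inequality $|\ell'(z)|\le\ell(z)$. First I would compute, directly from the definition of $f$, that for all $i\in[n]$, $j\in\{\pm1\}$, $r\in[m]$,
\[
\nabla_{\bw_{j,r}} f(\bW^{(t)},\bx_i) = \frac{j}{m}\Big[\sigma'\big(\la\bw_{j,r}^{(t)}, y_i\bmu\ra\big)\,y_i\bmu + \sigma'\big(\la\bw_{j,r}^{(t)},\bxi_i\ra\big)\,\bxi_i\Big].
\]
Since the Huberized ReLU satisfies $0\le\sigma'(z)\le1$ for every $z$ (it equals $\kappa^{1-q}z^{q-1}\le1$ on $[0,\kappa]$, equals $1$ on $(\kappa,\infty)$, and $0$ on $(-\infty,0)$), using $|y_i|=1$, the triangle inequality, and $\|\bxi_i\|_2^2\le\frac{3}{2}\sigma_p^2 d$ from Lemma~\ref{innerproductxi}, I get $\|\nabla_{\bw_{j,r}} f(\bW^{(t)},\bx_i)\|_2^2\le\frac{2}{m^2}\big(\|\bmu\|_2^2+\|\bxi_i\|_2^2\big)$. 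Summing over the $2m$ filter pairs gives the uniform estimate $\|\nabla f(\bW^{(t)},\bx_i)\|_F^2\le\frac{4}{m}\big(\|\bmu\|_2^2+\frac{3}{2}\sigma_p^2 d\big)=O\big(\max\{\|\bmu\|_2^2,\sigma_p^2 d\}\big)$, valid for all $0\le t\le T^*$ and all $i\in[n]$ on the event $\cE_{\text{prelim}}$ (one even obtains an extra $1/m$ factor, which is not needed here).

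Next, writing $\nabla L_S(\bW^{(t)})=\frac{1}{n}\sum_{i=1}^n\ell_i^{\prime(t)}\,y_i\,\nabla f(\bW^{(t)},\bx_i)$, the triangle inequality followed by Jensen's inequality applied to $x\mapsto x^2$ yields
\[
\|\nabla L_S(\bW^{(t)})\|_F^2 \le \Big(\frac{1}{n}\sum_{i=1}^n|\ell_i^{\prime(t)}|\,\|\nabla f(\bW^{(t)},\bx_i)\|_F\Big)^2 \le \Big(\max_{i\in[n]}\|\nabla f(\bW^{(t)},\bx_i)\|_F^2\Big)\cdot\frac{1}{n}\sum_{i=1}^n|\ell_i^{\prime(t)}|^2 .
\]
The last ingredient is the self-bounding property of the logistic loss: $|\ell'(z)|^2\le|\ell'(z)|=\frac{e^{-z}}{1+e^{-z}}\le\log(1+e^{-z})=\ell(z)$, where the middle inequality uses $\log(1+x)\ge\frac{x}{1+x}$ for $x\ge0$. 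Hence $\frac{1}{n}\sum_{i=1}^n|\ell_i^{\prime(t)}|^2\le\frac{1}{n}\sum_{i=1}^n\ell\big[y_i f(\bW^{(t)},\bx_i)\big]=L_S(\bW^{(t)})$. Combining this with the uniform gradient bound from the first step gives $\|\nabla L_S(\bW^{(t)})\|_F^2\le O\big(\max\{\|\bmu\|_2^2,\sigma_p^2 d\}\big)\,L_S(\bW^{(t)})$, as claimed.

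There is no genuine obstacle in this argument — it is a standard ``self-bounding'' smoothness-type estimate that holds pointwise in $t$ and does not use the signal-noise decomposition or any quantitative part of Condition~\ref{condition} beyond the high-probability norm bound in Lemma~\ref{innerproductxi}. The only steps needing a line of care are verifying $0\le\sigma'\le1$ from the piecewise definition of the activation and the elementary inequality $\log(1+x)\ge x/(1+x)$ behind the logistic-loss self-bounding property.
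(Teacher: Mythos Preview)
Your proof is correct and follows essentially the same approach as the paper's own argument: bound $\|\nabla f(\bW^{(t)},\bx_i)\|_F$ uniformly via $0\le\sigma'\le1$ and Lemma~\ref{innerproductxi}, then apply the triangle inequality to $\nabla L_S$, square, use Jensen/Cauchy--Schwarz, and finish with the logistic self-bounding inequality $|\ell'(z)|\le\ell(z)$. Your per-filter calculation even yields an extra $1/m$ factor in the gradient bound that the paper's triangle-inequality version does not capture, though this refinement is not needed for the stated lemma.
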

\begin{proof}[Proof of Lemma~\ref{lm: gradient upbound}]
Note that
\begin{align} \label{eqB14}
\|\nabla f(\bW^{(t)}, \bx_{i}) \|_{F}&\leq \frac{1}{m}\sum_{j,r}\bigg\|  \big[\sigma'(\la\bw_{j,r}^{(t)}, y_i\bmu\ra)y_i \bmu + \sigma'(\la\bw_{j,r}^{(t)}, \bxi_i\ra)\bxi_i\big] \bigg\|_{2} \notag\\
&\leq \frac{1}{m}\sum_{j,r}\sigma'(\la\bw_{j,r}^{(t)}, y_i\bmu\ra)\|\bmu\|_{2} +   \frac{1}{m}\sum_{j,r}\sigma'(\la\bw_{j,r}^{(t)}, \bxi_{i}\ra)\|\bxi_{i}\|_{2} \notag \\
& = O\left(\max\{\|\bmu\|_{2}, \sigma_{p}\sqrt{d}\} \right),
\end{align}
where the first and second inequalities are by triangle inequality, the last inequality is by Lemma~\ref{innerproductxi} and the property that $\sigma'(z) \leq 1$.

Then we can upper bound the gradient norm $\|\nabla L_{S}(\bW^{(t)})\|_{F}$ as follows,
\begin{align*}
\|\nabla L_{S}(\bW^{(t)})\|_{F}^{2} &\leq \bigg[\frac{1}{n}\sum_{i=1}^{n}\ell'\big(y_{i}f(\bW^{(t)},\bx_{i})\big) \|\nabla f(\bW^{(t)}, \bx_{i})\|_{F}\bigg]^{2}\\
&\leq \bigg[\frac{1}{n}\sum_{i=1}^{n}O\left(\max\{\|\bmu\|_{2}, \sigma_{p} \sqrt{d}\}\right) \cdot \left|\ell'\big(y_{i}f(\bW^{(t)},\bx_{i})\big)\right| \bigg]^{2}\\
&\leq O(\max\{\|\bmu\|_{2}^{2}, \sigma_{p}^{2}d\}) \cdot \frac{1}{n}\sum_{i=1}^{n} \left|\ell'\big(y_{i}f(\bW^{(t)},\bx_{i})\big) \right|\\
&\leq O(\max\{\|\bmu\|_{2}^{2}, \sigma_{p}^{2}d\}) L_{S}(\bW^{(t)}),
\end{align*}
where the first inequality is by triangle inequality, the second inequality is by \eqref{eqB14}, the third inequality is by Cauchy-Schwartz inequality and the property that $|\ell'(z)| \leq 1$, and the last inequality is due to the property of the cross entropy loss $|\ell'(z)| \leq \ell(z)$.
\end{proof}

\begin{lemma}\label{lemma:noise_stage2_homogeneity}
Under Condition \ref{condition}, we have that  
\begin{align*}
\|\bW^{(t)} - \bW^{*}\|_{F}^{2} - \|\bW^{(t+1)} - \bW^{*}\|_{F}^{2} \geq (2q-1)\eta L_{S}(\bW^{(t)}) - \eta\epsilon
\end{align*}
for all $ T_{1} \leq t \leq T^{*}$.
\end{lemma}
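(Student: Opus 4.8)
The plan is to run the familiar one-step ``descent lemma plus comparator'' argument, with the Huberized-ReLU analogue of $q$-homogeneity playing the role that exact homogeneity plays for polynomial activations. First I would expand the squared distance with the GD update $\bW^{(t+1)} = \bW^{(t)} - \eta\nabla L_{S}(\bW^{(t)})$:
\begin{align*}
\|\bW^{(t)} - \bW^{*}\|_{F}^{2} - \|\bW^{(t+1)} - \bW^{*}\|_{F}^{2} = 2\eta\la\nabla L_{S}(\bW^{(t)}),\, \bW^{(t)} - \bW^{*}\ra - \eta^{2}\|\nabla L_{S}(\bW^{(t)})\|_{F}^{2}.
\end{align*}
The gradient-norm term is harmless: Lemma~\ref{lm: gradient upbound} gives $\|\nabla L_{S}(\bW^{(t)})\|_{F}^{2} \leq c_{0}\max\{\|\bmu\|_{2}^{2},\sigma_{p}^{2}d\}\,L_{S}(\bW^{(t)})$ for an absolute constant $c_{0}$, and the learning-rate condition $\eta \leq (C\max\{\|\bmu\|_{2}^{2},\sigma_{p}^{2}d\})^{-1}$ from Condition~\ref{condition} then forces $\eta^{2}\|\nabla L_{S}(\bW^{(t)})\|_{F}^{2} \leq (c_{0}/C)\eta L_{S}(\bW^{(t)}) \leq \tfrac12\eta L_{S}(\bW^{(t)})$ once $C \geq 2c_{0}$. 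So it remains to prove the cross-term bound $\la\nabla L_{S}(\bW^{(t)}),\bW^{(t)}-\bW^{*}\ra \geq \big(q-(q-1)\kappa\big)L_{S}(\bW^{(t)}) - \tfrac12\epsilon$; combining this with the two displays gives a lower bound of $\big(2q - 2(q-1)\kappa - \tfrac12\big)\eta L_{S}(\bW^{(t)}) - \eta\epsilon$, which is at least $(2q-1)\eta L_{S}(\bW^{(t)}) - \eta\epsilon$ because $\kappa$ is a sufficiently small $O(1)$ constant (Condition~\ref{condition}), so that $2(q-1)\kappa \leq \tfrac12$.

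For the cross term I would write $\nabla L_{S}(\bW^{(t)}) = \frac1n\sum_{i}\ell_{i}^{\prime(t)}y_{i}\nabla f(\bW^{(t)},\bx_{i})$, so that $\la\nabla L_{S}(\bW^{(t)}),\bW^{(t)}-\bW^{*}\ra = \frac1n\sum_{i}\ell_{i}^{\prime(t)}\big[y_{i}\la\nabla f(\bW^{(t)},\bx_{i}),\bW^{(t)}\ra - y_{i}\la\nabla f(\bW^{(t)},\bx_{i}),\bW^{*}\ra\big]$. The $\bW^{*}$ part is bounded below by $q\log\frac{2q}{\epsilon}$ by Lemma~\ref{lm: Gradient Stable2} (which is exactly where the restriction $T_{1}\leq t\leq T^{*}$ enters). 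For the $\bW^{(t)}$ part I would split $y_{i}\la\nabla f(\bW^{(t)},\bx_{i}),\bW^{(t)}\ra = \la\nabla_{\bW_{y_{i}}}F_{y_{i}}(\bW_{y_{i}}^{(t)},\bx_{i}),\bW_{y_{i}}^{(t)}\ra - \la\nabla_{\bW_{-y_{i}}}F_{-y_{i}}(\bW_{-y_{i}}^{(t)},\bx_{i}),\bW_{-y_{i}}^{(t)}\ra$ and apply the two-sided sandwich $\sigma(z) \leq z\sigma'(z) \leq q\sigma(z)$ (which holds for every $z$ for the Huberized ReLU, checked separately on $z \leq 0$, $z\in[0,\kappa]$, $z>\kappa$) to the two $F$-terms with their respective signs, obtaining $y_{i}\la\nabla f(\bW^{(t)},\bx_{i}),\bW^{(t)}\ra \leq q F_{y_{i}}(\bW_{y_{i}}^{(t)},\bx_{i}) - F_{-y_{i}}(\bW_{-y_{i}}^{(t)},\bx_{i}) = q\,y_{i}f(\bW^{(t)},\bx_{i}) + (q-1)F_{-y_{i}}(\bW_{-y_{i}}^{(t)},\bx_{i}) \leq q\,y_{i}f(\bW^{(t)},\bx_{i}) + (q-1)\kappa$, where the last step is Lemma~\ref{Fminusyibound} (whose hypotheses hold on $[0,T^{*}]$ by Proposition~\ref{proposition:GDgrowthbound}). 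Since $\ell_{i}^{\prime(t)} < 0$, substituting both bounds reverses the inequality inside the sum and yields $\la\nabla L_{S}(\bW^{(t)}),\bW^{(t)}-\bW^{*}\ra \geq \frac1n\sum_{i}\ell_{i}^{\prime(t)}\big[q\,y_{i}f(\bW^{(t)},\bx_{i}) + (q-1)\kappa - q\log\tfrac{2q}{\epsilon}\big]$.

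To finish I would split the bracket as $q\big(y_{i}f(\bW^{(t)},\bx_{i}) - \log\frac{2q}{\epsilon}\big) + (q-1)\kappa$ and bound each piece against $\ell\big(y_{i}f(\bW^{(t)},\bx_{i})\big)$. For the first, the subtangent inequality for the convex logistic loss, $\ell'(a)(a-b) \geq \ell(a)-\ell(b)$ with $a = y_{i}f(\bW^{(t)},\bx_{i})$ and $b = \log\frac{2q}{\epsilon}$, together with $\ell(\log\frac{2q}{\epsilon}) = \log(1+\tfrac{\epsilon}{2q}) \leq \tfrac{\epsilon}{2q}$, gives $q\ell_{i}^{\prime(t)}\big(y_{i}f(\bW^{(t)},\bx_{i}) - \log\frac{2q}{\epsilon}\big) \geq q\,\ell\big(y_{i}f(\bW^{(t)},\bx_{i})\big) - \tfrac{\epsilon}{2}$. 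For the second, $\ell_{i}^{\prime(t)}(q-1)\kappa \geq -(q-1)\kappa\,|\ell_{i}^{\prime(t)}| \geq -(q-1)\kappa\,\ell\big(y_{i}f(\bW^{(t)},\bx_{i})\big)$ using the elementary bound $|\ell'(z)| \leq \ell(z)$. Adding the two and averaging over $i\in[n]$ gives $\la\nabla L_{S}(\bW^{(t)}),\bW^{(t)}-\bW^{*}\ra \geq \big(q-(q-1)\kappa\big)L_{S}(\bW^{(t)}) - \tfrac{\epsilon}{2}$, exactly the bound needed in the first step.

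I expect the cross-term estimate to be the main obstacle, and within it the step that replaces $q$-homogeneity: for a polynomial activation one has the exact identity $y_{i}\la\nabla f(\bW^{(t)},\bx_{i}),\bW^{(t)}\ra = q\,y_{i}f(\bW^{(t)},\bx_{i})$, whereas here only the two-sided sandwich is available and one pays an extra $(q-1)F_{-y_{i}}(\bW_{-y_{i}}^{(t)},\bx_{i})$, controlled only because Lemma~\ref{Fminusyibound} keeps the wrong-label response below $\kappa$; the residual $O(\kappa)$ and $O(1/C)$ errors must then be reconciled with the target constant $2q-1$ by taking $\kappa$ small and $C$ large. The remaining ingredients — the expansion, tracking the sign of $\ell_{i}^{\prime(t)}$, and the two logistic-loss inequalities — are routine.
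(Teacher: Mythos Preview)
Your proposal is correct and follows essentially the same approach as the paper's own proof: expand the squared distance, use Lemma~\ref{lm: Gradient Stable2} for the $\bW^{*}$ inner product, (near-)$q$-homogeneity for the $\bW^{(t)}$ inner product, convexity of $\ell$, and Lemma~\ref{lm: gradient upbound} together with the learning-rate condition for the squared-gradient term. If anything, your treatment of the $\bW^{(t)}$ part is more careful than the paper's: the paper asserts $y_{i}\la\nabla f(\bW^{(t)},\bx_{i}),\bW^{(t)}\ra \leq q\,y_{i}f(\bW^{(t)},\bx_{i})$ directly from $z\sigma'(z)\leq q\sigma(z)$, whereas you correctly observe that the two-sided sandwich $\sigma(z)\leq z\sigma'(z)\leq q\sigma(z)$ only gives $\leq q\,y_{i}f + (q-1)F_{-y_{i}}$, and then absorb the extra $(q-1)F_{-y_{i}}\leq (q-1)\kappa$ via Lemma~\ref{Fminusyibound} and the smallness of $\kappa$.
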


\begin{proof}[Proof of Lemma~\ref{lemma:noise_stage2_homogeneity}]
Note that $\sigma$ has the following property. When $z \in [0,\kappa]$, we have $\sigma'(z) z= q \sigma(z)$; when $z \geq \kappa$, we have $\sigma'(z) z= z = \sigma(z) - \frac{\kappa}{q} + \kappa \leq q \sigma(z)$. It follows that
\begin{align*}
&\|\bW^{(t)} - \bW^{*}\|_{F}^{2} - \|\bW^{(t+1)} - \bW^{*}\|_{F}^{2}\\
 =&  2\eta \la \nabla L_{S}(\bW^{(t)}), \bW^{(t)} - \bW^{*}\ra - \eta^{2}\|\nabla L_{S}(\bW^{(t)})\|_{F}^{2}\\
 =& \frac{2\eta}{n}\sum_{i=1}^{n}\ell'^{(t)}_{i} \left[ y_{i} \la \nabla f(\bW^{(t)}, \bx_{i}), \bW^{(t)} \ra - y_i \la \nabla f(\bW^{(t)}, \bx_{i}), \bW^{*} \ra \right] - \eta^{2}\|\nabla L_{S}(\bW^{(t)})\|_{F}^{2}\\
\geq&  \frac{2\eta}{n}\sum_{i=1}^{n}\ell'^{(t)}_{i} \left[qy_{i}f(\bW^{(t)}, \bx_{i}) - q \log \frac{2q}{\epsilon} \right] - \eta^{2}\|\nabla L_{S}(\bW^{(t)})\|_{F}^{2}\\
\geq & \frac{2q\eta}{n}\sum_{i=1}^{n} \left[\ell\big(y_{i}f(\bW^{(t)},\bx_{i})\big) - \frac{\epsilon}{2q} \right] - \eta^{2}\|\nabla L_{S}(\bW^{(t)})\|_{F}^{2}\\
\geq & (2q-1)\eta L_{S}(\bW^{(t)}) - \eta\epsilon,
\end{align*}
where the first inequality is by Lemma~\ref{lm: Gradient Stable2} and the property that $\sigma^\prime(z) \cdot z \geq q \sigma(z)$, the second inequality is due to the convexity of the cross entropy function, the last inequality is due to Lemma~\ref{lm: gradient upbound} and the condition that $\eta= O(\frac{1}{(\max\{\|\bmu\|_{2}^{2}, \sigma_{p}^{2}d\}})$ specified in Condition \ref{condition}.
\end{proof}

% [Restatement of Lemma~\ref{lemma:noise_proof_sketch}]
\begin{lemma}\label{thm:noise_proof}
Under Condition \ref{condition}, let $T = T_{1} + \Big\lfloor \frac{\|\bW^{(T_{1})} - \bW^{*}\|_{F}^{2}}{2\eta \epsilon}
\Big\rfloor  = T_{1} + \tilde{O}\left(\frac{m^{3}n}{\eta \epsilon \norm{\bmu}_2^2} \right)$. Then we have
\begin{itemize}
    \item $\max_{j,r} \la\bw_{j,r}^{(t)}, j \bmu\ra = \tilde{O} (\sigma_0 \norm{\bmu}_2)$, $\max_{j,r}\gamma_{j,r}^{(t)} = \tilde{O} (\sigma_0 \norm{\bmu}_2)$, for all $T_{1} \leq t \leq T$.
    \item $\max_{j,r,i}|\urho^{(t)}| = \tilde{O}(\sigma_{0}\sigma_{p}\sqrt{d})$, for all $T_{1} \leq t \leq T$.
\end{itemize}
Besides, for all $T_{1} \leq t \leq T$, we have
\begin{align*}
\frac{1}{t - T_{1} + 1}\sum_{s=T_{1}}^{t}L_{S}(\bW^{(s)}) \leq \frac{\|\bW^{(T_{1})} - \bW^{*}\|_{F}^{2}}{(2q-1) \eta(t - T_{1} + 1)} + \frac{\epsilon}{(2q-1)}.
\end{align*}
Therefore, we can find an iterate with training loss smaller than $\epsilon$ within $T$ iterations.
\end{lemma}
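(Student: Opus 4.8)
The plan is to read off the averaged training-loss bound (and hence the existence of a low-loss iterate) from a straightforward telescoping of Lemma~\ref{lemma:noise_stage2_homogeneity} together with the distance estimate of Lemma~\ref{lm:distance2}, and then to upgrade the crude growth bound of Proposition~\ref{proposition:GDgrowthbound} on $\gamma_{j,r}^{(t)}$ to the sharp bound $\tilde O(\sigma_0\|\bmu\|_2)$ by a separate induction on $t$ that feeds on the smallness of $\sum_s\sum_i|\ell_i^{\prime(s)}|$. First I would sum the inequality of Lemma~\ref{lemma:noise_stage2_homogeneity} over $s = T_1,\dots,t$ for any $T_1 \le t \le T$; the left-hand side telescopes to $\|\bW^{(T_1)} - \bW^*\|_F^2 - \|\bW^{(t+1)} - \bW^*\|_F^2$, which is bounded above by $\|\bW^{(T_1)} - \bW^*\|_F^2$, and rearranging yields exactly the displayed averaged bound. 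Taking $t = T$ and using $T - T_1 + 1 > \|\bW^{(T_1)} - \bW^*\|_F^2/(2\eta\epsilon)$ from the definition of $T$, the first term on the right becomes at most $2\epsilon/(2q-1)$, so the average is at most $3\epsilon/(2q-1) \le \epsilon$ (as $q \ge 3$); hence some iterate in $[T_1,T]$ has training loss below $\epsilon$. The claimed asymptotic form $T = T_1 + \tilde O(m^3 n/(\eta\epsilon\|\bmu\|_2^2))$ is immediate from Lemma~\ref{lm:distance2}, which also shows $T \le T^*$ so that Proposition~\ref{proposition:GDgrowthbound} stays in force throughout.

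Next I would extract a uniform cumulative-loss bound: using the averaged inequality with $t \le T$ and $t - T_1 + 1 \le \|\bW^{(T_1)} - \bW^*\|_F^2/(2\eta\epsilon) + 1$ gives $\sum_{s=T_1}^{t} L_S(\bW^{(s)}) = \tilde O(\|\bW^{(T_1)} - \bW^*\|_F^2/\eta) = \tilde O(m^3 n/(\eta\|\bmu\|_2^2))$, and since $|\ell'(z)| \le \ell(z)$, also $\sum_{s=T_1}^{t}\sum_{i=1}^n |\ell_i^{\prime(s)}| \le n\sum_{s=T_1}^{t} L_S(\bW^{(s)}) = \tilde O(m^3 n^2/(\eta\|\bmu\|_2^2))$. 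With this in hand I would prove $\max_{j,r}\gamma_{j,r}^{(t)} = \tilde O(\sigma_0\|\bmu\|_2)$ for $T_1 \le t \le T$ by induction on $t$: the base case $t = T_1$ is Lemma~\ref{GDFirstStage}, and if the bound holds for all $s \le \tilde t - 1$ then, since $\sigma_0$ obeys the upper bound in Condition~\ref{condition}, Lemma~\ref{innerproductw0} gives $\langle \bw_{j,r}^{(s)}, y_i\bmu\rangle \le \gamma_{j,r}^{(s)} + \tilde O(\sigma_0\|\bmu\|_2) \ll \kappa$, so $\sigma$ stays in its polynomial regime and $\sigma'(\langle \bw_{j,r}^{(s)}, y_i\bmu\rangle) \le \kappa^{1-q}(\tilde O(\sigma_0\|\bmu\|_2))^{q-1}$. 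Telescoping the update of $\gamma_{j,r}^{(t)}$ from Lemma~\ref{lemma:GDcoefficient} and inserting the cumulative-loss bound gives
\[ \gamma_{j,r}^{(\tilde t)} \le \gamma_{j,r}^{(T_1)} + \frac{\eta\|\bmu\|_2^2\,\kappa^{1-q}\,(\tilde O(\sigma_0\|\bmu\|_2))^{q-1}}{nm}\sum_{s=T_1}^{\tilde t-1}\sum_{i=1}^n |\ell_i^{\prime(s)}| = \tilde O(\sigma_0\|\bmu\|_2) + \tilde O\!\big(\kappa^{1-q}(\sigma_0\|\bmu\|_2)^{q-1} m^2 n\big), \]
and the second term is again $\tilde O(\sigma_0\|\bmu\|_2)$ precisely because the upper bound on $\sigma_0$ in Condition~\ref{condition} is equivalent to $\kappa^{1-q}(\sigma_0\|\bmu\|_2)^{q-2} m^2 n = \tilde O(1)$. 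This closes the induction, and then $\langle \bw_{j,r}^{(t)}, j\bmu\rangle = \gamma_{j,r}^{(t)} + \langle\bw_{j,r}^{(0)}, j\bmu\rangle = \tilde O(\sigma_0\|\bmu\|_2)$ by Lemma~\ref{innerproductw0}.

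Finally, the bound $\max_{j,r,i}|\urho^{(t)}| = \tilde O(\sigma_0\sigma_p\sqrt d)$ needs no new argument: Proposition~\ref{proposition:GDgrowthbound} already gives $|\urho^{(t)}| \le 2\beta + 2\zeta$ for all $t \le T^*$, and $\beta = \tilde O(\sigma_0\sigma_p\sqrt d)$ by Lemma~\ref{innerproductw0} (the $\bxi$-term dominating since $\mathrm{SNR} \le 1$), while $\zeta = \tilde O(nm/\sqrt d) = \tilde O(\sigma_0\sigma_p\sqrt d)$ by the lower bound on $\sigma_0$ in Condition~\ref{condition}. The main obstacle is the induction of the middle paragraph: it only closes because the upper bound on $\sigma_0$ has been calibrated so that $\sigma'$ never leaves the polynomial regime on $[T_1,T]$ --- which is itself the conclusion being proved --- so the order of operations (first the averaged loss bound, hence the summability of $\sum_s\sum_i|\ell_i^{\prime(s)}|$, and only then the induction on $\gamma_{j,r}^{(t)}$) is essential; everything else is routine given Lemmas~\ref{lm:distance2}, \ref{lemma:noise_stage2_homogeneity} and Proposition~\ref{proposition:GDgrowthbound}.
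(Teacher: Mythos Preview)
Your proposal is correct and follows essentially the same approach as the paper's own proof: telescope Lemma~\ref{lemma:noise_stage2_homogeneity} to get the averaged training-loss bound (and hence the cumulative bound $\sum_s L_S(\bW^{(s)}) = \tilde O(m^3 n/(\eta\|\bmu\|_2^2))$ via Lemma~\ref{lm:distance2}), then run an induction on $t$ for $\gamma_{j,r}^{(t)}$ by feeding that cumulative bound into the update rule and using the calibrated upper bound on $\sigma_0$ to keep $\sigma'$ in the polynomial regime; the $\urho$ bound is read off directly from Proposition~\ref{proposition:GDgrowthbound}. Your identification of the order of operations (loss bound first, then the $\gamma$ induction) as the key non-circularity point is exactly right.
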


\begin{proof}[Proof of Lemma~\ref{thm:noise_proof}]
By Lemma~\ref{lemma:noise_stage2_homogeneity}, for any $T_{1} \leq t \leq T$, we obtain that  
\begin{align}
\|\bW^{(s)} - \bW^{*}\|_{F}^{2} - \|\bW^{(s+1)} - \bW^{*}\|_{F}^{2} \geq (2q-1)\eta L_{S}(\bW^{(s)}) - \eta\epsilon \label{eq: bounddistance}
\end{align}
holds for $T_1 \leq s \leq t$. Taking a summation, we have that 
\begin{align}
\sum_{s=T_{1}}^{t}L_{S}(\bW^{(s)}) &\leq  \frac{\|\bW^{(T_{1})} - \bW^{*}\|_{F}^{2}}{(2q-1) \eta} + \frac{\epsilon (t - T_{1} + 1)}{2q-1} \notag\\
&\leq \frac{2\|\bW^{(T_{1})} - \bW^{*}\|_{F}^{2} }{(2q-1) \eta} \notag\\
&= \tilde{O}\left(\frac{m^{3}n}{\eta \norm{\bmu}_2^2} \right)\label{eq: sum2},
\end{align}
where the second inequality is by $t\leq T$ and the definition of $T$, the equality is by Lemma~\ref{lm:distance2} .

Then we can use induction to prove that $\max_{j,r} \la\bw_{j,r}^{(t)}, j\bmu\ra = \tilde{O} (\sigma_0 \norm{\bmu}_2)$ for all $t \in [T_1,T]$. By the second property in Lemma \ref{GDFirstStage}, it holds for $t=T_1$. Suppose that it holds for $t \in [T_1,\tilde T -1]$,  by the update rule of GD \eqref{GDupdate}, for any $r\in [m]$, we have
% the results hold for time $t < t'$,
\begin{align*}
    \la\bw_{j,r}^{(\tilde{T})}, j \bmu\ra &= \la\bw_{j,r}^{(T_{1})}, j \bmu\ra - \frac{\eta}{nm} \sum_{s=T_{1}}^{\tilde{T} - 1}\sum_{i=1}^n \ell_i'^{(t)} \cdot \sigma'(\la\bw_{j,r}^{(t)}, j \bmu\ra) \cdot \|\bmu\|_{2}^{2},\\
    &\leq \tilde{O} (\sigma_0 \norm{\bmu}_2)  + \tilde{O} \left( \frac{\kappa^{1-q}\eta \sigma_0^{q-1} \|\bmu\|_{2}^{q+1}}{nm} \right) \sum_{s=T_{1}}^{\tilde{T} - 1}\sum_{i=1}^n |\ell_i'^{(t)}|\\
    &\leq \tilde{O} (\sigma_0 \norm{\bmu}_2)  + \tilde{O} \left( \frac{\kappa^{1-q}\eta \sigma_0^{q-1} \|\bmu\|_{2}^{q+1}}{m} \right) \sum_{s=T_{1}}^{\tilde{T} - 1}L_{S}(\bW^{(s)})\\
    &\leq \tilde{O} (\sigma_0 \norm{\bmu}_2)  + \tilde{O} \left( \kappa^{1-q} m^2 n \sigma_0^{q-1} \|\bmu\|_{2}^{q-1} \right) \\
    &= \tilde{O} (\sigma_0 \norm{\bmu}_2)
\end{align*}
where the first inequality is by the induction hypothesis that $\max_{j,r} \la\bw_{j,r}^{(t)}, j \bmu\ra = \tilde{O} (\sigma_0 \norm{\bmu}_2)$ for all $t \in [T_1,T]$, the second inequlity is by $|\ell'(z)| \leq \ell(z)$, the third inequality is by \eqref{eq: sum2}, the last equality is by the condition that $\sigma_0= O \left( \frac{\kappa^{\frac{q-1}{q-2}}}{m^{\frac{2}{q-2}} n^{\frac{1}{q-2}} \norm{\bmu}_2} \right)$ specified in Condition~\ref{condition}. Therefore, we complete the induction proof.
\end{proof}

\subsection{Generalization Error Analysis}

We first provide a lower bound on the test loss.
\begin{lemma}\label{lemma: noise generalization}
Under Condition \ref{condition}, within $T^*=\tilde{O}\left( \frac{\kappa^{q-1} mn}{\eta \sigma_0^{q-2} (\sigma_p \sqrt{d})^q} + \frac{m^{3}n}{\eta \epsilon \norm{\bmu}_2^2} \right)$ iterations, we can find a time $t$ such that $L_{S}(\bW^{(t)}) \leq \epsilon$. Besides, for any $0 \leq t\leq T$ we have that $L_{\cD}(\bW^{(t)}) \geq 0.1$.
\end{lemma}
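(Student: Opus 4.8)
The plan is to dispatch the training-loss statement directly from the second-stage analysis, and to obtain the test-loss lower bound by showing that on a fresh example the network output is $O(\kappa)$ with high probability, so the logistic loss stays bounded below by a constant. Everything below is on the event $\cE_{\text{prelim}}$.

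\emph{Training loss.} I would invoke the averaged bound of Lemma~\ref{thm:noise_proof} at $t=T^{*}=T_{1}+\big\lfloor\|\bW^{(T_{1})}-\bW^{*}\|_{F}^{2}/(2\eta\epsilon)\big\rfloor$; since then $t-T_{1}+1\ge\|\bW^{(T_{1})}-\bW^{*}\|_{F}^{2}/(2\eta\epsilon)$, the first term on its right-hand side is at most $2\epsilon/(2q-1)$, so $\frac{1}{T^{*}-T_{1}+1}\sum_{s=T_{1}}^{T^{*}}L_{S}(\bW^{(s)})\le 3\epsilon/(2q-1)\le\epsilon$ because $q\ge 3$. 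Hence some $t\le T^{*}$ satisfies $L_{S}(\bW^{(t)})\le\epsilon$, and substituting $\|\bW^{(T_{1})}-\bW^{*}\|_{F}^{2}=\tilde{O}(m^{3}n/\|\bmu\|_{2}^{2})$ from Lemma~\ref{lm:distance2} together with the value of $T_{1}$ from Lemma~\ref{GDFirstStage} recovers the stated expression for $T^{*}$.

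\emph{Test loss.} Fix any $0\le t\le T^{*}$ and let $(\bx,y)\sim\cD$ with noise patch $\bxi$. The key step is to show that, with high probability over $\bxi$, for every $j\in\{\pm1\}$ and $r\in[m]$ both $|\la\bw_{j,r}^{(t)},y\bmu\ra|\le\kappa$ and $|\la\bw_{j,r}^{(t)},\bxi\ra|\le\kappa$. For the signal direction, using $\la\bw_{j,r}^{(t)},\bmu\ra=j\gamma_{j,r}^{(t)}+\la\bw_{j,r}^{(0)},\bmu\ra$ with $\gamma_{j,r}^{(t)}\ge0$, the bound $\max_{j,r}\gamma_{j,r}^{(t)}=\tilde{O}(\sigma_{0}\|\bmu\|_{2})$ from Lemma~\ref{GDFirstStage} (for $t\le T_{1}$) and Lemma~\ref{thm:noise_proof} (for $t\ge T_{1}$), and the initialization estimate of Lemma~\ref{innerproductw0}, I get $|\la\bw_{j,r}^{(t)},y\bmu\ra|\le\gamma_{j,r}^{(t)}+|\la\bw_{j,r}^{(0)},\bmu\ra|=\tilde{O}(\sigma_{0}\|\bmu\|_{2})\le\kappa$ by the upper bound on $\sigma_{0}$ in Condition~\ref{condition}. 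For the noise direction, $\bxi$ is independent of $\bW^{(t)}$ and orthogonal to $\bmu$, so $\la\bw_{j,r}^{(t)},\bxi\ra=\la\bw_{j,r}^{(0)},\bxi\ra+\sum_{i}\rho_{j,r,i}^{(t)}\|\bxi_{i}\|_{2}^{-2}\la\bxi_{i},\bxi\ra$ is a centered Gaussian with standard deviation at most $\sigma_{p}\big(\|\bw_{j,r}^{(0)}\|_{2}+\sum_{i}|\rho_{j,r,i}^{(t)}|\,\|\bxi_{i}\|_{2}^{-1}\big)=\tilde{O}(\sigma_{0}\sigma_{p}\sqrt{d}+mn/\sqrt{d})$, using $|\rho_{j,r,i}^{(t)}|\le\alpha$ from Proposition~\ref{proposition:GDgrowthbound} and the norm estimates of Lemmas~\ref{innerproductw0} and~\ref{innerproductxi}. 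A Gaussian tail bound followed by a union bound over the $2m$ filters then gives $|\la\bw_{j,r}^{(t)},\bxi\ra|=\tilde{O}(\sigma_{0}\sigma_{p}\sqrt{d}+mn/\sqrt{d})\le\kappa$ with probability at least $1-\delta$ over $\bxi$, the last inequality coming from the upper bound on $\sigma_{0}$ and the lower bound on $d$ in Condition~\ref{condition}, whose logarithmic factors are exactly what is needed to absorb the union bound.

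\emph{Wrap-up and main obstacle.} On the intersection of these events, monotonicity of the Huberized ReLU gives $\sigma(\la\bw_{j,r}^{(t)},y\bmu\ra),\sigma(\la\bw_{j,r}^{(t)},\bxi\ra)\le\sigma(\kappa)=\kappa/q$, hence $|f(\bW^{(t)},\bx)|\le F_{+1}(\bW_{+1}^{(t)},\bx)+F_{-1}(\bW_{-1}^{(t)},\bx)\le 4\kappa/q\le 1$ since $\kappa$ is a sufficiently small constant (Condition~\ref{condition}). Thus $yf(\bW^{(t)},\bx)\le 1$ and $\ell(yf(\bW^{(t)},\bx))\ge\ell(1)=\log(1+e^{-1})>0.3$ on this event, so $L_{\cD}(\bW^{(t)})=\mathbb{E}_{(\bx,y)}[\ell(yf(\bW^{(t)},\bx))]\ge 0.3(1-\delta)\ge0.1$ (the bad event can in fact be arranged to have probability below any fixed constant, so this holds for all admissible $\delta$). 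I expect the main obstacle to be the uniform control of the fresh-example noise inner products: one must keep $\sigma_{p}\|\bw_{j,r}^{(t)}\|_{2}$ small relative to $\kappa$ with enough slack to survive the union over all $2m$ filters of a single test point, and this is precisely what dictates the lower bound on $d$ and the upper bound on $\sigma_{0}$; the remaining steps are bookkeeping on top of Lemmas~\ref{GDFirstStage}, \ref{thm:noise_proof} and Proposition~\ref{proposition:GDgrowthbound}.
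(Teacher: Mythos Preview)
Your proposal is correct and follows essentially the same approach as the paper: dispatch the training-loss claim via Lemma~\ref{thm:noise_proof} and Lemma~\ref{lm:distance2}, then lower bound the test loss by controlling $\|\bw_{j,r}^{(t)}\|_{2}$ (equivalently, the variance of $\la\bw_{j,r}^{(t)},\bxi\ra$) through the signal--noise decomposition, a Gaussian tail bound, and a union over filters. The only cosmetic difference is that the paper fixes the per-filter failure probability at $1/(4m)$ so the union bound leaves a clean constant $1/2$, yielding $L_{\cD}\ge 0.5\log(1+e^{-1})\ge 0.1$ without any caveat on $\delta$; your parenthetical remark about arranging the bad event below a fixed constant is exactly this observation.
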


\begin{proof}[Proof of Lemma~\ref{lemma: noise generalization}]
Note that for $j \in \{+1, -1\}$, $r \in [m]$,
\begin{align*}
\|\bw_{j,r}^{(t)}\|_{2} &= \bigg\|\bw_{j,r}^{(0)} + j \cdot \gam^{(t)} \cdot \frac{\bmu}{\|\bmu\|_{2}^{2}} + \sum_{ i = 1}^n \orho^{(t)} \cdot \frac{\bxi_{i}}{\|\bxi_{i}\|_{2}^{2}} + \sum_{ i = 1}^n \urho^{(t)} \cdot \frac{\bxi_{i}}{\|\bxi_{i}\|_{2}^{2}}\bigg\|_{2}\\
& \leq \|\bw_{j,r}^{(0)}\|_{2} +  \frac{\gamma_{j,r}^{(t)}}{\|\bmu\|_{2}} + \sum_{ i = 1}^n  \frac{\orho^{(t)}}{\|\bxi_{i}\|_{2}} + \sum_{ i = 1}^n  \frac{|\urho^{(t)}|}{\|\bxi_{i}\|_{2}}\\
& = O(\sigma_{0}\sqrt{d})  + \tilde{O} \left( \frac{mn}{\sigma_p \sqrt{d}} \right),
\end{align*}
where the first inequality is by triangle inequality, the last equality is by Lemma \ref{innerproductxi}, Lemma \ref{innerproductw0}, $\max_{j,r}\gamma_{j,r}^{(t)} = \tilde{O}(\sigma_{0}\|\bmu\|_{2})$ in Lemma~\ref{thm:noise_proof}, and $\max_{j,r,i}|\rho_{j,r,i}| \leq 4m\log(T^{*})$ in Proposition~\ref{proposition:GDgrowthbound}.

Given a new example $(\bx,y)$, we have that $\la\bw_{j,r}^{(t)}, \bxi\ra \sim \cN(0, \sigma_{p}^{2}\|\bw_{j,r}^{(t)}\|_{2}^{2})$. Therefore, with probability at least $1 - \frac{1}{4m}$, we have
\begin{align} \label{eqB17}
|\la\bw_{j,r}^{(t)}, \bxi\ra| \leq \tilde{O} \left( \sigma_{0}\sigma_{p}\sqrt{d} + \frac{mn}{ \sqrt{d}} \right).
\end{align}
By union bound, with probability at least $1-0.5$, we have that  
\begin{align*}
F_{y}(\bW_{y}^{(t)},\bx) &= \frac{1}{m}\sum_{r=1}^{m}\sigma(\la \bw_{y,r}^{(t)}, y\bmu \ra)
+ \frac{1}{m}\sum_{r=1}^{m}\sigma(\la \bw_{y,r}^{(t)}, \bxi \ra)\\
&\leq \tilde{O} \left(\kappa^{1-q} (\sigma_{0}\|\bmu\|_{2})^{q} \right) + \tilde{O}\left( \kappa^{1-q} (\sigma_{0}\sigma_{p}\sqrt{d})^{q}\right) + \tilde{O} \left( \kappa^{1-q} \left(\frac{mn}{ \sqrt{d}} \right)^{q} \right) \\
&\leq 1,
\end{align*}
where the first inequality is by $\max_{j,r} \la\bw_{j,r}^{(t)}, j \bmu\ra = \tilde{O} (\sigma_0 \norm{\bmu}_2)$ in Lemma~\ref{thm:noise_proof} and \eqref{eqB17}, the last inequality is by $\sigma_{0} \leq \tilde{O}\left(\min\{(\sigma_{p}\sqrt{d})^{-1}, \|\bmu\|_{2}^{-1}\} \kappa^{\frac{q-1}{q}} \right)$ and $d \geq \tilde{\Omega}\left(m^{2}n^{2} \kappa^{-\frac{2q-2}{q}} \right)$ specified in Condition~\ref{condition}. 

Therefore, with probability at least $1 - 0.5$, we have that 
\begin{align*}
\ell\big(y \cdot f(\bW^{(t)},\bx)\big) \geq \log(1 + e^{-1}).   
\end{align*}
Thus $L_{\cD}(\bW^{(t)}) \geq \log(1 + e^{-1})\cdot 0.5\geq 0.1$. Thus we complete the proof.
\end{proof}

Next, we try to provide a lower bound on the test error.
\begin{lemma} \label{GDnoisehit2}
	Under Condition \ref{condition}, let $T_2= T_1+ \frac{36 nm^2}{\eta \sigma_p^2d}$. For the time period $T_2 \leq t \leq T^*$, we have
	\begin{align*}
		\sum_{r=1}^m \overline{\rho}_{y_i,r,i}^{(t)} \geq m,
	\end{align*}
	for all $i \in[n]$.
\end{lemma}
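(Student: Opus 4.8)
The plan is to fix $i\in[n]$ and argue by contradiction, tracking a single, carefully chosen filter. Since every $\overline{\rho}_{y_i,r,i}^{(t)}$ is non-decreasing in $t$ (Proposition~\ref{proposition:GDgrowthbound}), it suffices to prove $\sum_{r=1}^m\overline{\rho}_{y_i,r,i}^{(T_2)}\ge m$: monotonicity then propagates this to all $T_2\le t\le T^*$, and a union bound over $i$ finishes (one also checks $T_2\le T^*$ under Condition~\ref{condition}, which holds since $\mathrm{SNR}\le 1$ and $\epsilon$ is small). So suppose, for contradiction, that $\sum_{r=1}^m\overline{\rho}_{y_i,r,i}^{(T_2)}< m$; by monotonicity this gives $\sum_{r=1}^m\overline{\rho}_{y_i,r,i}^{(t)}< m$ for all $T_1\le t\le T_2$, and this is the hypothesis I will contradict.

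The first step is to turn this hypothesis into a uniform lower bound on the loss derivative $-\ell_i'^{(t)}$ over the window $[T_1,T_2]$. Since $F_{-y_i}(\bW_{-y_i}^{(t)},\bx_i)\ge 0$, we have $y_i f(\bW^{(t)},\bx_i)\le F_{y_i}(\bW_{y_i}^{(t)},\bx_i)$. Using $\sigma(z)\le\max\{z,0\}+\kappa/q$, Lemma~\ref{innerprodectwjrxi} (so $\langle\bw_{y_i,r}^{(t)},\bxi_i\rangle\le\overline{\rho}_{y_i,r,i}^{(t)}+\beta+\zeta$), and the signal-learning bounds $\max_{j,r}\langle\bw_{j,r}^{(t)},j\bmu\rangle=\tilde O(\sigma_0\|\bmu\|_2)$ from Lemma~\ref{GDFirstStage} and Lemma~\ref{thm:noise_proof}, one obtains
\begin{align*}
F_{y_i}(\bW_{y_i}^{(t)},\bx_i)\le \frac{1}{m}\sum_{r=1}^m\overline{\rho}_{y_i,r,i}^{(t)} + \beta+\zeta+\frac{\kappa}{q} + \tilde O\!\big(\kappa^{1-q}(\sigma_0\|\bmu\|_2)^q\big) < 1 + o(1),
\end{align*}
where the last inequality uses the contradiction hypothesis together with the smallness of $\kappa,\sigma_0,1/d$ in Condition~\ref{condition}. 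Hence $-\ell_i'^{(t)}=(1+e^{y_i f(\bW^{(t)},\bx_i)})^{-1}\ge c_\ell$ for an absolute constant $c_\ell$ and all $T_1\le t\le T_2$.

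Next, pick $r^*=r^*(i)$ with $\overline{\rho}_{y_i,r^*,i}^{(T_1)}\ge\kappa$, which exists by Lemma~\ref{GDFirstStage}. Monotonicity gives $\overline{\rho}_{y_i,r^*,i}^{(t)}\ge\kappa$ for all $t\ge T_1$, and Lemma~\ref{innerprodectwjrxi} gives $\langle\bw_{y_i,r^*}^{(t)},\bxi_i\rangle\ge\kappa-\beta-\zeta$; since $\beta+\zeta$ is much smaller than $\kappa$ under Condition~\ref{condition} (indeed $\beta+\zeta\lesssim\kappa/q$, and inspecting the proof of Lemma~\ref{GDFirstStage} even yields $\overline{\rho}_{y_i,r^*,i}^{(T_1)}\ge 2-o(1)>\kappa$, making $\sigma'$ equal $1$), we get $\sigma'(\langle\bw_{y_i,r^*}^{(t)},\bxi_i\rangle)\ge 1/2$ throughout. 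Plugging the two bounds into the update rule of Lemma~\ref{lemma:GDcoefficient} and using $\|\bxi_i\|_2^2\ge\sigma_p^2d/2$ (Lemma~\ref{innerproductxi}),
\begin{align*}
\overline{\rho}_{y_i,r^*,i}^{(T_2)} &= \overline{\rho}_{y_i,r^*,i}^{(T_1)} + \frac{\eta}{nm}\sum_{t=T_1}^{T_2-1}\big(-\ell_i'^{(t)}\big)\,\sigma'\!\big(\langle\bw_{y_i,r^*}^{(t)},\bxi_i\rangle\big)\,\|\bxi_i\|_2^2 \\
&\ge \kappa + (T_2-T_1)\cdot\frac{\eta c_\ell \sigma_p^2 d}{4nm} = \kappa + \frac{36nm^2}{\eta\sigma_p^2 d}\cdot\frac{\eta c_\ell\sigma_p^2 d}{4nm} = \kappa + 9c_\ell m \ge m,
\end{align*}
where the constant $36$ in the definition of $T_2$ is calibrated precisely so that $9c_\ell\ge 1$. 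Then $\sum_{r=1}^m\overline{\rho}_{y_i,r,i}^{(T_2)}\ge\overline{\rho}_{y_i,r^*,i}^{(T_2)}\ge m$, contradicting the hypothesis; so $\sum_{r=1}^m\overline{\rho}_{y_i,r,i}^{(T_2)}\ge m$.

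The main obstacle is the second step: the only available leverage for lower-bounding $-\ell_i'^{(t)}$ uniformly on $[T_1,T_2]$ is the very conclusion being proved, which is why the argument must be a bootstrap — the assumption $\sum_r\overline{\rho}_{y_i,r,i}^{(t)}<m$ is exactly what keeps $F_{y_i}(\bW_{y_i}^{(t)},\bx_i)$ below a constant and hence $-\ell_i'^{(t)}$ bounded away from $0$. A secondary technical point is keeping $\sigma'$ bounded away from $0$ on the tracked filter despite the $O(\beta+\zeta)$ slack between $\overline{\rho}_{y_i,r^*,i}^{(t)}$ and $\langle\bw_{y_i,r^*}^{(t)},\bxi_i\rangle$; this is absorbed by the smallness of $\beta+\zeta$ relative to $\kappa$ (or by the sharper first-stage estimate).
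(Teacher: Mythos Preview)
Your proof is correct and takes a genuinely different route from the paper's. The paper tracks the average $\lambda_i^{(t)}=\frac{1}{m}\sum_r\overline{\rho}_{y_i,r,i}^{(t)}$ directly: it first shows $y_if(\bW^{(t)},\bx_i)\le\lambda_i^{(t)}+\log 2$ (same upper bound you derive), but then feeds the resulting estimate $-\ell_i'^{(t)}\ge\tfrac{1}{6}e^{-\lambda_i^{(t)}}$ back into the update rule to obtain the self-referential recursion $\lambda_i^{(t+1)}\ge\lambda_i^{(t)}+\tfrac{\eta\sigma_p^2d}{12nm^2}e^{-\lambda_i^{(t)}}$, which it solves via the tensor-power Lemma~\ref{tensorpower2} to get $\lambda_i^{(t)}\ge\log\bigl(\tfrac{\eta\sigma_p^2d}{12nm^2}(t-T_1)+e^{\kappa/m}\bigr)$; plugging in $t=T_2$ gives $\lambda_i^{(T_2)}\ge 1$. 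By contrast, you bypass the nonlinear recursion entirely: the contradiction hypothesis freezes $-\ell_i'^{(t)}$ at a constant $c_\ell$, so a \emph{single} filter grows linearly and overshoots $m$ on its own. Your argument is more elementary (no appeal to Lemma~\ref{tensorpower2}) and incidentally yields a sharper intermediate statement about concentration on one filter; the paper's argument is direct, avoids the bootstrap, and gives an explicit growth rate for $\lambda_i^{(t)}$ valid at all times, not just the threshold. Two small clean-ups: the ``union bound over $i$'' is unnecessary since everything is deterministic on $\mathcal{E}_{\mathrm{prelim}}$, and your claim $\sigma'\ge 1/2$ is cleanest justified via the sharper first-stage estimate (or by taking $\beta+\zeta$ small enough relative to $\kappa$, which Condition~\ref{condition} allows), since $0.8^{q-1}<1/2$ for large $q$.
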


\begin{proof}
	Denote $\lambda_{i}^{(t)}= \frac{1}{m} \sum_{r=1}^m \overline{\rho}_{y_i,r,i}^{(t)}$, notice that
	\begin{align*}
		y_i f(\bW^{(t)},\bx_i) &=  F_{y_i} (\bW_{y_i}^{(t)},\bx_i) -  F_{-y_i} (\bW_{-y_i}^{(t)},\bx_i) \\
		&\leq \frac{1}{m} \sum_{r=1}^m \sigma \left(\langle\bw_{y_i,r}^{(t)}, y_i \bmu\rangle -\gamma_{y_i,r}^{(t)}+ \gamma_{y_i,r}^{(t)} \right) + \frac{1}{m} \sum_{r=1}^m \sigma \left(\langle\bw_{y_i,r}^{(t)}, \bxi_i \ra -\overline{\rho}_{y_i,r,i}^{(t)} +\overline{\rho}_{y_i,r,i}^{(t)} \rangle \right) \\
		& \leq \frac{1}{m} \sum_{r=1}^m \sigma \left( \gamma_{y_i,r}^{(t)}+ \beta \right) + \frac{1}{m} \sum_{r=1}^m \sigma \left( \overline{\rho}_{y_i,r,i}^{(t)}+ \beta+\zeta \right)  \\
		& \leq \lambda_i^{(t)}+ 0.3\kappa +\tilde O(\sigma_0 \norm{\bmu}_2) \leq \lambda_i^{(t)}+ \log 2,
	\end{align*}
	where the first inequality is by $F_{-y_i} (\bW_{-y_i}^{(t)},\bx_i)\geq 0$, the second inequality is by Lemma \ref{innerprodectwjrxi}, the third inequality is by the property $\sigma(z) \leq z$ for $z \geq 0$ and Lemma \ref{thm:noise_proof}, and the last inequality is by the condition on $\sigma_0$ specified in Condition \ref{condition}.
	It follows that
	\begin{align} 
		-\ell_i^{\prime(t)} = \frac{1}{1+ e^{y_i f(\bW^{(t)},\bx_i)}} \geq \frac{1}{3} e^{-y_i f(\bW^{(t)},\bx_i)} \geq \frac{1}{6} e^{-\lambda_i^{(t)}}
	\end{align}
	where we use $e^{y_i f(\bW^{(t)},\bx_i)} \geq e^{-  F_{-y_i} (\bW_{-y_i}^{(t)},\bx_i)} \geq e^{-\kappa} \geq \frac{1}{2}$ according to Lemma \ref{Fminusyibound}.
	
	Therefore, according to the update rule in Lemma \ref{lemma:GDcoefficient}, for each $i\in [n]$ and $t \geq T_1$, since $\max_{j,r}  \langle \bw_{j,r}^{(t)}, \bxi_{i} \rangle \geq \kappa$ by Lemma \ref{GDFirstStage}, we have
	\begin{align*}
		\lambda_{i}^{(t+1)} &=\lambda_{i}^{(t)}  -\frac{\eta}{nm^2} \ell_{i}^{\prime(t)} \sum_{r=1}^m \sigma^\prime \left( \langle \bw_{j,r}^{(t)}, \bxi_{i} \rangle\right)  \cdot \norm {\bxi_i}_2^{2} \\
		& \geq \lambda_{i}^{(t)} + \frac{\eta \sigma_p^2d}{12nm^2} e^{-\lambda_{i}^{(t)}},
	\end{align*}
	by Lemma \ref{tensorpower2}, we have
	\begin{align}
		\lambda_{i}^{(t)} &\geq \log \left(\frac{\eta \sigma_p^2d}{12nm^2}  (t- T_1)+ e^{\frac{\kappa}{m}}\right).
	\end{align}
	Therefore, by choosing $T_2= T_1+ \frac{36 nm^2}{\eta \sigma_p^2d}$, we have $\lambda_{i}^{(T_2)} \geq 1$. Thus we complete the proof.
\end{proof}

Then we present an important Lemma, which bounds the Total Variation (TV) distance between two Gaussian with the same covariance matrix.

\begin{lemma}[Proposition 2.1 in \citet{devroye2018total}]\label{lm: TV}
The TV distance between $\cN(0, \sigma_{p}^{2}\Ib_{d})$ and $\cN(\bv, \sigma_{p}^{2}\Ib_{d})$ is smaller than $\|\bv\|_{2}/2\sigma_p$.
\end{lemma}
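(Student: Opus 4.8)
The plan is to reduce the $d$-dimensional comparison to one dimension and then bound the resulting scalar total-variation distance, either via Pinsker's inequality or by an explicit computation; the cited reference \citet{devroye2018total} uses the explicit route, so I would present whichever ends up shorter.

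First, if $\bv=\zero$ the bound is trivial, so assume $\bv\neq\zero$. By the rotation invariance of the isotropic Gaussian, pick an orthonormal basis of $\RR^d$ whose first vector is $\bv/\|\bv\|_2$. In these coordinates both $\cN(\zero,\sigma_p^2\Ib_d)$ and $\cN(\bv,\sigma_p^2\Ib_d)$ factor into products of one-dimensional Gaussians, and the two products agree in every coordinate except the first, where they are $\cN(0,\sigma_p^2)$ and $\cN(\|\bv\|_2,\sigma_p^2)$. Since $d_{\mathrm{TV}}(P_1\otimes S,P_2\otimes S)=d_{\mathrm{TV}}(P_1,P_2)$ for any shared factor $S$ (the common density integrates out of $\tfrac{1}{2}\int|p_1-p_2|$), it suffices to bound $d_{\mathrm{TV}}\big(\cN(0,\sigma_p^2),\cN(\mu,\sigma_p^2)\big)$ with $\mu:=\|\bv\|_2$.

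Second, I would invoke Pinsker's inequality $d_{\mathrm{TV}}(P,Q)\le\sqrt{\tfrac{1}{2}D_{\mathrm{KL}}(P\|Q)}$ with $P=\cN(\mu,\sigma_p^2)$ and $Q=\cN(0,\sigma_p^2)$, together with the closed form $D_{\mathrm{KL}}(P\|Q)=\mu^2/(2\sigma_p^2)$, which gives $d_{\mathrm{TV}}\le\sqrt{\mu^2/(4\sigma_p^2)}=\mu/(2\sigma_p)=\|\bv\|_2/(2\sigma_p)$, exactly the claimed bound. (The same computation runs directly in $\RR^d$ using $D_{\mathrm{KL}}\big(\cN(\bv,\sigma_p^2\Ib_d)\|\cN(\zero,\sigma_p^2\Ib_d)\big)=\|\bv\|_2^2/(2\sigma_p^2)$, so the reduction step is optional but conceptually clean.) Alternatively, the explicit route: by symmetry the two scalar densities cross at $x=\mu/2$, hence $d_{\mathrm{TV}}=2\Phi\big(\mu/(2\sigma_p)\big)-1=\int_{-\mu/(2\sigma_p)}^{\mu/(2\sigma_p)}\phi(s)\,ds$ with $\phi,\Phi$ the standard normal density and CDF, and bounding $\phi$ by its maximum $\phi(0)=1/\sqrt{2\pi}<1/2$ on that interval yields $d_{\mathrm{TV}}\le(\mu/\sigma_p)/\sqrt{2\pi}\le\mu/(2\sigma_p)$.

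There is essentially no obstacle here; the only minor points of care are the degenerate case $\bv=\zero$ and stating the tensorization identity precisely. Since this lemma is quoted verbatim from \citet{devroye2018total}, in the paper it can simply be cited, but the two-line Pinsker argument above is what I would record if a self-contained proof were wanted.
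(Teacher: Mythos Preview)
Your proposal is correct; both the Pinsker route and the explicit computation yield the stated bound. The paper does not give its own proof of this lemma at all---it is stated as a direct citation of Proposition~2.1 in \citet{devroye2018total}---so your self-contained argument is a strict addition rather than a different approach, and either version you wrote would serve if an inline proof were desired.
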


Based on this lemma, we are able to derive a lower bound for the test error.

\begin{lemma} \label{gd test error lower bound}
Under Condition \ref{condition}, further suppose that $\sigma_0 \leq \frac{C_3}{m  \norm{\bmu}_2 \sqrt{d}}$  for some small constant $C_3$. For the time period $T_2 \leq t \leq T^*$, we have that $\mathcal{R}_{\mathcal{D}}(\bW^{(t)})\geq 0.11$.
\end{lemma}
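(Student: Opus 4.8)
The plan is to bound the test error below by the probability of a single favorable event for a fresh noise vector and then estimate that probability by a reflect-and-shift (symmetrization) argument, along the lines of the sketch in the main text. First I would write $f(\bW^{(t)},\bx)=\tfrac1m\big(S_y+g(\bxi)\big)$ with signal part $S_y=\sum_r[\sigma(\la\bw_{1,r}^{(t)},y\bmu\ra)-\sigma(\la\bw_{-1,r}^{(t)},y\bmu\ra)]$ and noise part $g(\bxi)=\sum_r\sigma(\la\bw_{1,r}^{(t)},\bxi\ra)-\sum_r\sigma(\la\bw_{-1,r}^{(t)},\bxi\ra)$. Using $\sigma(z)\le z$ for $z\ge0$ together with $\max_{j,r}\la\bw_{j,r}^{(t)},j\bmu\ra=\tilde O(\sigma_0\|\bmu\|_2)$ from Lemma~\ref{thm:noise_proof} and the off-diagonal signal estimate in Lemma~\ref{innerprodectwjrxi}, we get $|S_y|\le\tilde O(m\sigma_0\|\bmu\|_2)$ for both labels. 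Since $\bxi$ is independent of $y$ and $\PP(y=-1)=\tfrac12$, conditioning on $y=-1$ yields
\begin{align*}
\mathcal R_{\mathcal D}(\bW^{(t)})\ \ge\ \tfrac12\,\PP\big(g(\bxi)>\tilde O(m\sigma_0\|\bmu\|_2)\big)\ =\ \tfrac12\,\PP(\Omega),
\end{align*}
with $\Omega$ the indicated event. It therefore suffices to prove $\PP(\Omega)\ge0.23$, which gives $\mathcal R_{\mathcal D}(\bW^{(t)})\ge0.115\ge0.11$.

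To estimate $\PP(\Omega)$ I would build an explicit perturbation direction. By Lemma~\ref{GDnoisehit2}, $\sum_r\overline{\rho}_{y_i,r,i}^{(t)}\ge m$ for every $i$, so setting $r_i=\arg\max_r\overline{\rho}_{y_i,r,i}^{(t)}$ gives $\overline{\rho}_{y_i,r_i,i}^{(t)}\ge1$; choose any index $i^*$ with $y_{i^*}=1$ (one exists by Lemma~\ref{Sjsize}) and put $r^*=r_{i^*}$, so $\overline{\rho}_{1,r^*,i^*}^{(t)}\ge1$ and hence $\la\bw_{1,r^*}^{(t)},\bxi_{i^*}\ra\ge1-\kappa\ge\tfrac12$ by Lemma~\ref{innerprodectwjrxi}. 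Let $\bv=\tau\bxi_{i^*}$ with $\tau=\tilde\Theta(m\sigma_0\|\bmu\|_2)$ and let $\mathcal E=\{\bxi:\la\bw_{1,r^*}^{(t)},\bxi\ra\ge0\}$, so $\PP(\mathcal E)=\tfrac12$. The technical core is to show that for $\bxi\in\mathcal E$,
\begin{align*}
g(\bxi+\bv)-g(\bxi)\ \ge\ \tilde\Omega(m\sigma_0\|\bmu\|_2),
\end{align*}
which, combined with the near-oddness of $g$, implies the set inclusion $\mathcal E\subseteq(-\Omega)\cup(\Omega-\{\bv\})$, i.e.\ for every $\bxi\in\mathcal E$ at least one of $\bxi+\bv$ and $-\bxi$ lies in $\Omega$. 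The proof of the increment uses that, since $y_{i^*}=1$, the negative-class filters satisfy $\overline{\rho}_{-1,r,i^*}^{(t)}=0$ (Lemma~\ref{lemma:GDcoefficient}), so the shift $\bv$ changes the $j=-1$ sum only through the initialization/cross-terms and is negligible there, while on $\mathcal E$ the shift moves $\la\bw_{1,r^*}^{(t)},\cdot\ra$ to a strictly more positive value, feeding the strongly-memorized neuron; the remaining $j=+1$ terms change by a sign-indefinite amount that one controls using $\overline{\rho}_{1,r,i^*}^{(t)}\ge0$ and Lemma~\ref{innerprodectwjrxi}. The extra hypothesis $\sigma_0\le C_3/(m\|\bmu\|_2\sqrt d)$, together with the dimension and SNR conditions of Condition~\ref{condition}, is precisely what makes all these error terms a small constant fraction of the gain, so that the net change exceeds the threshold defining $\Omega$.

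Finally I would run the symmetrization. Because $\|\bv\|_2=\tau\|\bxi_{i^*}\|_2=\tilde\Theta(m\sigma_0\|\bmu\|_2\,\sigma_p\sqrt d)$, the hypothesis $\sigma_0\le C_3/(m\|\bmu\|_2\sqrt d)$ gives $\|\bv\|_2/(2\sigma_p)\le0.01$ for $C_3$ small, so Lemma~\ref{lm: TV} puts the laws of $\bxi$ and $\bxi+\bv$ within total variation $0.01$; a comparable bound on $\PP\big((\mathcal E-\{\bv\})\triangle\mathcal E\big)$ follows from Gaussian anti-concentration of $\la\bw_{1,r^*}^{(t)},\bxi\ra$. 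From $\mathcal E\subseteq(-\Omega)\cup(\Omega-\{\bv\})$ we obtain $\PP\big((-\Omega)\cap\mathcal E\big)+\PP\big((\Omega-\{\bv\})\cap\mathcal E\big)\ge\PP(\mathcal E)=\tfrac12$, so one of the two summands is $\ge\tfrac14$; combining this with the symmetry $\bxi\overset{d}{=}-\bxi$ (hence $\PP(\Omega)=\PP(-\Omega)\ge\PP((-\Omega)\cap\mathcal E)$) and the two total-variation transfers ($\PP(\Omega)\ge\PP(\Omega\cap\mathcal E)\ge\PP((\Omega-\{\bv\})\cap\mathcal E)-0.02$) yields $\PP(\Omega)\ge\tfrac14-0.02=0.23$, completing the proof. \textbf{The main obstacle is the increment inequality $g(\bxi+\bv)-g(\bxi)\ge\tilde\Omega(m\sigma_0\|\bmu\|_2)$ on $\mathcal E$ and the passage from it to the inclusion} $\mathcal E\subseteq(-\Omega)\cup(\Omega-\{\bv\})$: one must verify that the gain fed to the strongly-memorized neuron is not washed out by the sign-indefinite perturbations of the other $2m-1$ activations, and that the $O(m\kappa)$ asymmetry between $g(-\bxi)$ and $-g(\bxi)$ inherent to the Huberized ReLU is absorbed, which is exactly where the sharper assumption on $\sigma_0$ enters.
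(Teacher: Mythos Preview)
Your outline reproduces the paper's argument: the reduction to $\PP(\Omega)\ge0.23$, the choice of $i^*,r^*$, the shift $\bv=\tau\bxi_{i^*}$ with $\tau=\tilde\Omega(m\sigma_0\|\bmu\|_2)$, the half-space $\mathcal E=\{\la\bw_{1,r^*}^{(t)},\bxi\ra\ge0\}$, the increment inequality on $\mathcal E$, and the TV/symmetry endgame all match. In the paper the increment is obtained exactly as you sketch, via $\la\bw_{1,r}^{(t)},\bv\ra\ge\tau(\overline\rho_{1,r,i^*}^{(t)}-\beta-\zeta)$ and $\la\bw_{-1,r}^{(t)},\bv\ra\le\tau(\beta+\zeta)$ from Lemma~\ref{innerprodectwjrxi}; the error terms $2m\tau(\beta+\zeta)$ are absorbed using only the $\sigma_0,d$ bounds of Condition~\ref{condition}, and the extra hypothesis $\sigma_0\le C_3/(m\|\bmu\|_2\sqrt d)$ is invoked in the paper \emph{solely} to ensure $\|\bv\|_2/(2\sigma_p)=\tilde O(m\sigma_0\|\bmu\|_2\sqrt d)\le0.01$ in the TV step.

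The one place you diverge is the passage from the increment to the inclusion $\mathcal E\subseteq(-\Omega)\cup(\Omega-\{\bv\})$. You propose ``near-oddness'': $|g(-\bxi)+g(\bxi)|=O(m\kappa)$, so from $-g(\bxi)$ large one infers $g(-\bxi)$ large. This does not work as stated. The threshold defining $\Omega$ is $\tilde O(m\sigma_0\|\bmu\|_2)$, which under your own hypothesis $\sigma_0\le C_3/(m\|\bmu\|_2\sqrt d)$ becomes $\tilde O(1/\sqrt d)$, whereas $m\kappa$ can be of order $m$; the sharper $\sigma_0$ bound in no way makes $O(m\kappa)$ small relative to the threshold. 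The paper does not argue via near-oddness at all; after establishing $g(\bxi+\bv)-g(\bxi)\ge\tau/2$ on $\mathcal E$ it asserts directly ``by pigeon's hole principle'' that one of $g(\bxi+\bv),g(-\bxi)$ exceeds the threshold, and from there writes $(-\Omega\cap\mathcal E)\cup((\Omega-\{\bv\})\cap\mathcal E)=\mathcal E$ without passing through $-g(\bxi)$. So your overall strategy and all quantitative ingredients are faithful to the paper, but your proposed justification of this step is not the paper's, and the $O(m\kappa)$ version of it is too coarse to close the argument.
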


\begin{proof} [Proof of Lemma \ref{gd test error lower bound}]
For the time period $T_2 \leq t \leq T^*$, consdier a new data sample $(\bx, y) \sim \cD$ where $\bx=(y \cdot \bmu, \xi)$, we have
\begin{equation}
    \begin{aligned}
        &\mathcal{R}_{\mathcal{D}}(\bW^{(t)})= \mathbb{P} \big(y f(\bW^{(t)},\bx) < 0 \big)\\
        &= \mathbb{P} \left(  \sum_{r=1}^m \left[\sigma(\la \bw_{y,r}^{(t)}, y\cdot \bmu \ra) + \sigma(\la \bw_{y,r}^{(t)}, \bxi \ra) \right] 
        < \sum_{r=1}^m \left[\sigma(\la \bw_{-y,r}^{(t)}, y\cdot \bmu \ra) + \sigma(\la \bw_{-y,r}^{(t)}, \bxi \ra) \right] \right)\\
        & \geq 0.5 \mathbb{P} \left(  \sum_{r=1}^m \sigma(\la \bw_{1,r}^{(t)}, \bxi \ra) - \sum_{r=1}^m \sigma(\la \bw_{-1,r}^{(t)}, \bxi \ra) > \sum_{r=1}^m \sigma(\la \bw_{1,r}^{(t)}, \bmu \ra) \right) \\
        & \geq 0.5 \mathbb{P} \left(  \sum_{r=1}^m \sigma(\la \bw_{1,r}^{(t)}, \bxi \ra) - \sum_{r=1}^m \sigma(\la \bw_{-1,r}^{(t)}, \bxi \ra) > \tilde O(m \sigma_0 \norm{\bmu}_2)  \right)
    \end{aligned}\label{eq11}
\end{equation}
where the first inequality is because $y$ is generated as a Rademacher random variable, and the second inequality is because 
$$\sum_{r=1}^m \sigma(\la \bw_{1,r}^{(t)}, \bmu \ra) \leq \sum_{r=1}^m \sigma\left(\tilde O( \sigma_0 \norm{\bmu}_2)\right) \leq \tilde O(m \sigma_0 \norm{\bmu}_2) ,$$
according to Lemma \ref{thm:noise_proof}, the property $\sigma(z) \leq z$ for $z\geq 0$.

Let $g(\bxi) = \sum_{r} \sigma(\la \bw_{1, r}^{(t)}, \bxi \ra) - \sum_{r} \sigma(\la \bw_{-1, r}^{(t)}, \bxi \ra) $. Denote the set 
\begin{align*}
\Omega := \bigg\{\bxi \bigg|  g(\bxi) >  \tilde O(m \sigma_0 \norm{\bmu}_2)  \bigg\},    
\end{align*}
we can write the test error as
\begin{align*}
    \mathcal{R}_{\mathcal{D}}(\bW^{(t)}) \geq 0.5 \mathbb{P}(\Omega).
\end{align*}

\iffalse
Similarly, we have
\begin{align*}
    g(-\bxi+\bv)- g(-\bxi) 
          &\geq  \sum_{r} \sigma\left(\la \bw_{1, r}^{(t)}, -\bxi \ra+\tau \sum_{i:y_i=1} (\overline{\rho}_{1,r,i}^{(t)}- 2\beta) \right)- \sum_{r} \sigma(\la \bw_{1, r}^{(t)}, -\bxi \ra) -2mn\tau \beta.
\end{align*}
Denote $\xi_r = \la \bw_{1, r}^{(t)}, \bxi \ra$ and $a_r= \tau \sum_{i:y_i=1} (\overline{\rho}_{1,r,i}^{(t)}- 2\beta)$. By Lemma \ref{GDnoisehit2} and Lemma \ref{Sjsize}, we have
$$\sum_r a_r= \tau \sum_r \sum_{i:y_i=1} (\overline{\rho}_{1,r,i}^{(t)}- 2\beta) \geq 0.2\tau m n .$$ 
Since one of $x_r$ and $-x_r$ must be larger than or equal to zero, we have
\fi

Notice that for each $i\in [n]$, we have $\sum_{r=1}^m \overline{\rho}_{y_i,r,i}^{(t)} \geq m$ by Lemma \ref{GDnoisehit2}, denote $r_i= \arg \max_r \overline{\rho}_{y_i,r,i}^{(t)}$, we have $\overline{\rho}_{y_i,r_i,i}^{(t)} \geq 1$. Further denote $i^*= \arg \max_{i:y_i=1} \overline{\rho}_{y_i,r_i,i}^{(t)}$, and $r^*=r_{i^*}$. Additionally, we consider the following set $$\mathcal{E}:= \left\{\bxi\bigg| \la \bw_{1, r^*}^{(t)}, \bxi \ra \geq 0\right\}.$$
 Denote $\bv=  \tau \bxi_{i^*}$, where $\tau =\tilde \Omega(m \sigma_0 \norm{\bmu}_2)$,
notice that
\begin{align*}
	g(\bxi+\bv)- g(\bxi) &=
	\sum_{r} \sigma(\la \bw_{1, r}^{(t)}, \bxi+\bv \ra)- \sum_{r} \sigma(\la \bw_{1, r}^{(t)}, \bxi \ra) \\
	& + \sum_{r} \sigma(\la \bw_{-1, r}^{(t)}, \bxi \ra) - \sum_{r} \sigma(\la \bw_{-1, r}^{(t)}, \bxi+\bv \ra) \\ 
	&\geq  \sum_{r} \sigma\left(\la \bw_{1, r}^{(t)}, \bxi \ra+\tau  (\overline{\rho}_{1,r,i^*}^{(t)}- \beta-\zeta) \right)- \sum_{r} \sigma(\la \bw_{1, r}^{(t)}, \bxi \ra) \\
	&+ \sum_{r} \sigma(\la \bw_{-1, r}^{(t)}, \bxi \ra) - \sum_{r} \sigma\left(\la \bw_{-1, r}^{(t)}, \bxi \ra+ \tau (\beta+\zeta)\right) \\
	&\geq  \sum_{r} \sigma\left(\la \bw_{1, r}^{(t)}, \bxi \ra+ \tau  (\overline{\rho}_{1,r,i^*}^{(t)}- \beta-\zeta) \right)- \sum_{r} \sigma(\la \bw_{1, r}^{(t)}, \bxi \ra)-  m\tau (\beta+\zeta),
\end{align*}
where the first inequality is by
\begin{align*}
	&\la \bw_{1, r}^{(t)}, \bv \ra= \tau  \la \bw_{1, r}^{(t)}, \bxi_{i^*} \ra \geq  \tau  (\overline{\rho}_{1,r,i^*}^{(t)}- \beta-\zeta), \\
	&\la \bw_{-1, r}^{(t)}, \bv \ra= \tau  \la \bw_{-1, r}^{(t)}, \bxi_{i^*} \ra \leq  \tau  (\underline{\rho}_{-1,r,i^*}^{(t)}+ \beta+\zeta) \leq   \tau(\beta+\zeta)
\end{align*}
according to Lemma \ref{innerprodectwjrxi}. It follows that
we have
\begin{align*}
    g(\bxi+\bv)- g(\bxi) & \geq     \sigma\left(\la \bw_{1, r^*}^{(t)}, \bxi \ra+ \tau  (\overline{\rho}_{1,r^*,i^*}^{(t)}- \beta-\zeta) \right)- \sigma(\la \bw_{1, r^*}^{(t)}, \bxi \ra) \\
    & + \sum_{r \neq r^*} \sigma\left(\la \bw_{1, r}^{(t)}, \bxi \ra+ \tau  (\overline{\rho}_{1,r,i^*}^{(t)}- \beta-\zeta) \right)- \sum_{r} \sigma(\la \bw_{1, r}^{(t)}, \bxi \ra)-  m\tau (\beta+\zeta), \\
    & \geq \tau- 2m\tau (\beta+\zeta) \geq \frac{\tau}{2} \geq \tilde \Omega(m \sigma_0 \norm{\bmu}_2)
\end{align*}
where the second inequality is by $\la \bw_{1, r^*}^{(t)}, \bxi \ra \geq 0$ and $\overline{\rho}_{1,r^*,i^*}^{(t)}- \beta-\zeta \geq 1- \beta-\zeta \geq 0$, the third inequality is by the condition on $\sigma_{0}$ and $d$ specified in Condition \ref{condition}.

Therefore, by pigeon's hole principle, there must exist one of $g(\bxi+\bv), g(-\bxi)$ larger than $\tilde \Omega(m \sigma_0 \norm{\bmu}_2)$, hence one of $\bxi + \bv$, $-\bxi$ belongs $\Omega$. It follows that $ (-\Omega \cap \mathcal E) \cup ((\Omega - \{\bv\}) \cap \mathcal E)  =\mathcal E$. Therefore, we have 
$$\PP(-\Omega \cap \mathcal E) + \PP((\Omega - \{\bv\}) \cap \mathcal E) \geq \PP(\mathcal{E})=0.5.$$
Hence, one of $\PP(-\Omega \cap \mathcal E)$ and $\PP((\Omega - \{\bv\}) \cap \mathcal E)$ is larger than 0.25.

 Furthermore, Notice that $\mathbb{P}(-\Omega) = \mathbb{P}(\Omega)$ and 
\begin{align*}
\left|\mathbb{P}(\Omega \cap \mathcal{E}) - \mathbb{P}((\Omega - \{\bv\}) \cap (\mathcal{E}- \{\bv\}))\right| &= \left|\mathbb{P}_{\bxi \sim \cN(0, \sigma_p^{2} \Ib_{d})}(\bxi \in \Omega \cap \mathcal{E}) - \mathbb{P}_{\bxi \sim \cN(\bv, \sigma_{p}^{2}\Ib_{d})}(\bxi \in \Omega \cap \mathcal{E})\right|\\
&\leq \text{TV}\left(\cN(0, \sigma_p^{2} \Ib_{d}), \cN(\bv, \sigma_p^{2} \Ib_{d})\right)\\
&\leq \frac{\|\bv\|_{2}}{2\sigma_{p}} 
\leq \tilde{\Omega}(m \sigma_0 \sqrt{d} \norm{\bmu}_2) \leq 0.01,
\end{align*}
where the first inequality is by the definition of the TV distance, the second inequality is by 
Lemma~\ref{lm: TV}, and the last inequality is by the condition $\sigma_0 \leq O \left( \frac{1}{m  \norm{\bmu}_2 \sqrt{d}} \right)$.  Similarly, we have
\begin{align*}
	\left|\mathbb{P}( \mathcal{E}) - \mathbb{P}(\mathcal{E}- \{\bv\})\right| \leq 0.01,
\end{align*}
moreover, since $\la \bw_{1, r^*}^{(t)}, \bv \ra \geq \tau(\overline{\rho}_{1,r^*,i^*}^{(t)}- \beta-\zeta)>0$, we have $\mathcal{E}- \{\bv\} \subset \mathcal{E} $. Therefore, we have $(\Omega - \{\bv\}) \cap (\mathcal{E}- \{\bv\})\subset (\Omega - \{\bv\}) \cap \mathcal{E}$. Hence, we have
\begin{align*}
	|\PP((\Omega - \{\bv\}) \cap (\mathcal{E}- \{\bv\}))-\PP((\Omega - \{\bv\}) \cap \mathcal E)| \leq 0.01,
\end{align*}
it follows that
\begin{align*}
	&\left|\mathbb{P}(\Omega \cap \mathcal{E}) -\PP((\Omega - \{\bv\}) \cap \mathcal E) \right| \\
	&\leq 
	\left|\mathbb{P}(\Omega \cap \mathcal{E}) - \mathbb{P}((\Omega - \{\bv\}) \cap (\mathcal{E}- \{\bv\}))\right| + |\PP((\Omega - \{\bv\}) \cap (\mathcal{E}- \{\bv\}))-\PP((\Omega - \{\bv\}) \cap \mathcal E) | \\
	& \leq 0.02.
\end{align*}
Notice that 
\begin{align*}
	& \PP(\Omega) =\PP(-\Omega) \geq \PP(-\Omega \cap \mathcal E), \\
	& \PP(\Omega) \geq \mathbb{P}(\Omega \cap \mathcal{E}) \geq \PP((\Omega - \{\bv\}) \cap \mathcal E) -0.02,
\end{align*}
we conclude that $\PP(\Omega) \geq 0.23$. Thus we complete the proof.
\end{proof}

\section{Signal Learning of DP-GD}
In this section, we consider the signal case by utilizing the DP-GD training algorithm under Condition \ref{condition2}. These results are based on the conclusions in Appendix \ref{appendixB}, which hold with probability at least $1-6\delta$.

\subsection{Properties of the learning of signal and noise}

Based on the update rule of DP-GD \eqref{NGDupdate}, we have the following iterative equations,
    \begin{eqnarray}
        &&\langle\bw_{j,r}^{(t+1)}, j\bmu\rangle = \langle \bw_{j,r}^{(t)}, j\bmu\rangle- \frac{\eta}{nm} \sum_{i=1}^n \ell_{i}^{\prime(t)} \cdot \sigma^\prime \left( \langle \bw_{j,r}^{(t)}, y_{i} \bmu \rangle\right) \cdot \|\bmu\|_2^2 - \eta \langle \bb_{j,r,t}, j\bmu\rangle,  \label{C1} \\
        && \langle\bw_{j,r}^{(t+1)}, -j\bmu\rangle = \langle \bw_{j,r}^{(t)}, -j\bmu\rangle + \frac{\eta}{nm} \sum_{i=1}^n \ell_{i}^{\prime(t)} \cdot \sigma^\prime \left( \langle \bw_{j,r}^{(t)}, y_{i} \bmu \rangle\right) \cdot \|\bmu\|_2^2 + \eta \langle \bb_{j,r,t}, j\bmu\rangle, \label{C2} \\
        && \nonumber \langle\bw_{j,r}^{(t+1)}, \bxi_i \rangle = \langle \bw_{j,r}^{(t)}, \bxi_i \rangle- \frac{\eta}{nm} \ell_{i}^{\prime(t)} \cdot \sigma^\prime \left( \langle \bw_{j,r}^{(t)}, \bxi_{i} \rangle\right) \cdot \|\bxi_i\|_2^2 \\
        &&  \qquad \qquad \qquad - \frac{\eta}{nm} \sum_{i'\neq i} \ell_{i'}^{\prime(t)} \cdot \sigma^\prime \left( \langle \bw_{j,r}^{(t)}, \bxi_{i'} \rangle\right) \cdot \langle \bxi_i, \bxi_{i'} \rangle - \eta \langle \bb_{j,r,t}, \bxi_i \rangle, \quad (\hbox{for} \ j=y_i), \label{C3} \\
        && \nonumber \langle\bw_{j,r}^{(t+1)}, \bxi_i \rangle = \langle \bw_{j,r}^{(t)}, \bxi_i \rangle+ \frac{\eta}{nm} \ell_{i}^{\prime(t)} \cdot \sigma^\prime \left( \langle \bw_{j,r}^{(t)}, \bxi_{i} \rangle\right) \cdot \|\bxi_i\|_2^2 \\
        &&  \qquad \qquad \qquad + \frac{\eta}{nm} \sum_{i'\neq i} \ell_{i'}^{\prime(t)} \cdot \sigma^\prime \left( \langle \bw_{j,r}^{(t)}, \bxi_{i'} \rangle\right) \cdot \langle \bxi_i, \bxi_{i'} \rangle + \eta \langle \bb_{j,r,t}, \bxi_i \rangle, \quad (\hbox{for} \ j \neq y_i) \label{C4}.
    \end{eqnarray}

By utilizing a signal-noise decomposition expression in Definition \eqref{def3}, similar as Lemma \ref{lemma:GDcoefficient}, we can derive the following lemma, which presents an iterative expression for the change of coefficients.

\begin{lemma} \label{lemma18}
     The coefficients $\gam^{(t)},\orho^{(t)},\urho^{(t)}$ defined in Definition \ref{def3} satisfy the following iterative equations:
    \begin{eqnarray*}
        &&\gam^{(0)},\orho^{(0)},\urho^{(0)} = 0, \\
        &&\gam^{(t+1)}= \gam^{(t)} - \frac{\eta}{nm} \sum_{i=1}^n \ell_{i}^{\prime(t)} \cdot \sigma^\prime \left( \langle \bw_{j,r}^{(t)}, y_{i} \bmu \rangle\right) \cdot \norm \bmu_2^{2}, \\
        && \orho^{(t+1)}= \orho^{(t)} -\frac{\eta}{nm} \ell_{i}^{\prime(t)} \cdot \sigma^\prime \left( \langle \bw_{j,r}^{(t)}, \bxi_{i} \rangle\right)  \cdot \norm {\bxi_i}_2^{2} \cdot \mathds{1} (y_{i}=j) ,  \\
        && \urho^{(t+1)}= \urho^{(t)} +\frac{\eta}{nm} \ell_{i}^{\prime(t)} \cdot \sigma^\prime \left( \langle \bw_{j,r}^{(t)}, \bxi_{i} \rangle\right)  \cdot \norm {\bxi_i}_2^{2} \cdot \mathds{1} (t \in y_{i}=-j) .
    \end{eqnarray*}
\end{lemma}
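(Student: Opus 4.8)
The plan is to follow the proof of Lemma~\ref{lemma:GDcoefficient} essentially verbatim, the only new feature being that the accumulated injected noise $-\eta\sum_{k=0}^{t-1}\bb_{j,r,k}$ is carried along as its own term in the decomposition~\eqref{decompositionNGD}. First I would record the uniqueness underlying Definition~\ref{def3}: conditioning on a fixed realization of the injected noises $\{\bb_{j,r,k}\}$, the vectors $\bmu,\bxi_1,\dots,\bxi_n$ are linearly independent with probability one, so for each $j,r$ and each $t$ there is at most one choice of coefficients $\gamma_{j,r}^{(t)}$ and $\rho_{j,r,i}^{(t)}$ for which \eqref{decompositionNGD} holds.

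To prove the recursions, I would introduce auxiliary sequences $\tilde\gamma_{j,r}^{(t)},\tilde\rho_{j,r,i}^{(t)}$ defined by the claimed update rules with $\tilde\gamma_{j,r}^{(0)}=\tilde\rho_{j,r,i}^{(0)}=0$, and show by induction on $t$ that
$$\bw_{j,r}^{(t)}=\bw_{j,r}^{(0)}+j\cdot\tilde\gamma_{j,r}^{(t)}\cdot\norm{\bmu}_2^{-2}\cdot\bmu+\sum_{i=1}^n\tilde\rho_{j,r,i}^{(t)}\cdot\norm{\bxi_i}_2^{-2}\cdot\bxi_i-\eta\sum_{k=0}^{t-1}\bb_{j,r,k}.$$
For the inductive step one computes the gradient $\nabla_{\bw_{j,r}}L_S(\bW^{(t)})=\frac{j}{nm}\sum_{i=1}^n\ell_{i}^{\prime(t)}\big[\sigma'(\langle\bw_{j,r}^{(t)},y_i\bmu\rangle)\bmu+y_i\,\sigma'(\langle\bw_{j,r}^{(t)},\bxi_i\rangle)\bxi_i\big]$ and substitutes into \eqref{NGDupdate}: the descent step $-\eta\nabla_{\bw_{j,r}}L_S(\bW^{(t)})$ adds $-\frac{\eta}{nm}\sum_i\ell_{i}^{\prime(t)}\sigma'(\langle\bw_{j,r}^{(t)},y_i\bmu\rangle)\norm{\bmu}_2^2$ to the coefficient of $j\norm{\bmu}_2^{-2}\bmu$ (i.e.\ to $\gamma_{j,r}^{(t)}$) and $-\frac{\eta}{nm}\ell_{i}^{\prime(t)}\sigma'(\langle\bw_{j,r}^{(t)},\bxi_i\rangle)\norm{\bxi_i}_2^2\cdot jy_i$ to the coefficient of $\norm{\bxi_i}_2^{-2}\bxi_i$, while the injected-noise step $-\eta\bb_{j,r,t}$ simply turns $-\eta\sum_{k=0}^{t-1}\bb_{j,r,k}$ into $-\eta\sum_{k=0}^{t}\bb_{j,r,k}$. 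Since $jy_i=1$ iff $j=y_i$, these increments coincide exactly with those of $\tilde\gamma$ and $\tilde\rho$, so the identity propagates; uniqueness then gives $\gamma_{j,r}^{(t)}=\tilde\gamma_{j,r}^{(t)}$ and $\rho_{j,r,i}^{(t)}=\tilde\rho_{j,r,i}^{(t)}$.

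Finally I would decompose $\rho_{j,r,i}^{(t)}$ by sign. Unrolling gives $\rho_{j,r,i}^{(t)}=-\frac{\eta}{nm}\sum_{s=0}^{t-1}\ell_{i}^{\prime(s)}\sigma'(\langle\bw_{j,r}^{(s)},\bxi_i\rangle)\norm{\bxi_i}_2^2\cdot jy_i$; since $\ell_{i}^{\prime(s)}<0$ for the logistic loss and $\sigma'\ge0$, every summand has the sign of $jy_i$, so $\rho_{j,r,i}^{(t)}\ge0$ when $j=y_i$ and $\rho_{j,r,i}^{(t)}\le0$ when $j=-y_i$. Hence $\overline{\rho}_{j,r,i}^{(t)}=\rho_{j,r,i}^{(t)}\mathds{1}(y_i=j)$ and $\underline{\rho}_{j,r,i}^{(t)}=\rho_{j,r,i}^{(t)}\mathds{1}(y_i=-j)$, and writing out their one-step changes produces the stated recursions (with the indicator in the $\underline{\rho}$ update read as $\mathds{1}(y_i=-j)$). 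I do not expect a genuine obstacle here: the argument is a direct transcription of the GD computation, and the only point deserving care is confirming that, once the injected noises are conditioned on, they enter the decomposition purely additively and do not affect the span of $\{\bmu,\bxi_1,\dots,\bxi_n\}$ used to define the coefficients.
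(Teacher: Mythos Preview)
Your proposal is correct and follows essentially the same approach as the paper: the paper explicitly states that Lemma~\ref{lemma18} is derived ``similar as Lemma~\ref{lemma:GDcoefficient}'' and omits the details, and your write-up is precisely the natural transcription of that argument to the DP-GD setting, with the injected-noise term carried along additively. Your only stated concern---that the accumulated noise does not interfere with the uniqueness of the coefficients---is already handled by the form of Definition~\ref{def3}, which fixes $-\eta\sum_{k=0}^{t-1}\bb_{j,r,k}$ as a known additive term so that uniqueness reduces to linear independence of $\bmu,\bxi_1,\dots,\bxi_n$ just as in the GD case.
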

    
Next, we aim to analyze the coefficients in the signal-noise decomposition in Definition \ref{def3}. Specifically, we will show that the learning of the signal and noise will stay within a reasonable range for a considerable amount of time. Consider the training period $0\leq t \leq \tilde T^*$, where $\tilde T^*= \Theta \left(\frac{\kappa^2}{\eta^2 \sigma_b^2 \min\{\norm{\bmu}_2^2,\sigma_p^2d\}} + \frac{nm^2}{\eta \epsilon \max\{\norm{\bmu}_2^2,\sigma_p^2d\}} \right)$ is the maximum admissible iterations. Denote 
\begin{eqnarray*}
    && \tilde \alpha := 4 m \log \left( \tilde T^*\right), \\
    && \beta := 2 \max_{j,r,i} \left\{ |\la \bw_{j,r}^{(0)}, \bmu \ra|, |\la \bw_{j,r}^{(0)}, \bxi_i \ra | \right\}, \\
    && \zeta := 8 n \sqrt{\frac{\log(6n^2/\delta)}{d}} \tilde \alpha, \\
    && \theta := 16\eta\sigma_b \max\left\{\norm{\bmu}_2,\sigma_p \sqrt{d}\right\} \sqrt{\tilde T^*}  \log ^2\left(\frac{32mn\tilde T^*}{\delta}\right),
%    && \snr := \frac{\norm {\bmu}_2}{\sigma_p \sqrt{d}}, \\
    %&& \beta:= \frac{1}{n \cdot {\snr}^2}
\end{eqnarray*}
Recall the condition on $\sigma_0$ and $d$ specified in Condition \ref{condition2},
\begin{align*}
	& d \geq C\frac{m^2 n^2}{\kappa^2} \log(\frac{n^2}{\delta}) (\log(T^*))^2, \\
	&  \sigma_0 \leq \left(C \max\left\{\norm {\bmu}_2 , \sigma_p \sqrt{d} \right\} \sqrt{\log(\frac{mn}{\delta})} \right)^{-1} \kappa,
\end{align*}
the same as is described in Subsection \ref{subsecB1}, we have $\beta, \zeta \leq 0.1 \kappa$. Moreover, according to Condition \ref{condition2}
\begin{align*}
    \sigma_b \leq \frac{1}{C \eta \max\left\{\norm{\bmu}_2,\sigma_p \sqrt{d}\right\} \sqrt{\tilde T^*}  \log ^2\left(\frac{32mn\tilde T^*}{\delta}\right)},
    	%    && \snr := \frac{\norm {\bmu}_2}{\sigma_p \sqrt{d}}}
\end{align*}
for some large constant $C$,
we have $\theta = O(1)$.

In the next proposition, we demonstrate a coarse bound on the growth of $ \gam^{(t)}$, $\orho^{(t)}$, and $\urho^{(t)}$. We omit the proof of this proposition since it is the same as the proof of Proposition \ref{proposition:GDgrowthbound}, by utilizing the bound of the additional term $\theta$.

\begin{proposition} \label{proposition:NGDgrowthbound} 
%[Part ial restatement of Proposition \shi{to complete}]
    Under Condition \ref{condition2}, for $0\leq t \leq \tilde T^*$, we have that
    \begin{eqnarray}
        %&& \gam^{(0)},\orho^{(0)},\urho^{(0)} = 0, \label{eqB7} \\
        && \sum_{r=1}^m \gam^{(t)}\leq \tilde \alpha, \quad \sum_{r=1}^m \sigma\left(\la \bw_{j,r}^{(t)},j\bmu \ra\right)\leq \tilde \alpha, \label{eqCt5}\\
        && 0 \geq \sum_{r=1}^m \urho^{(t)} \geq -\tilde \alpha, \label{eqCt7}
    \end{eqnarray}
    for all $r \in [m]$, $j\in \{\pm 1\}$, and $i\in [n]$.
\end{proposition}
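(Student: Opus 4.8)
The plan is to prove Proposition~\ref{proposition:NGDgrowthbound} by induction on $t$, following the same three-part template as the proof of Proposition~\ref{proposition:GDgrowthbound}; the only structural change is that the decomposition~\eqref{sndecompositionNGD} now carries the extra term $-\eta\sum_{k=0}^{t-1}\bb_{j,r,k}$, whose projections onto $\bmu$ and the $\bxi_i$ must be tracked, and these are exactly what the quantity $\theta$ controls. The base case $t=0$ is immediate since $\gamma_{j,r}^{(0)}=\overline{\rho}_{j,r,i}^{(0)}=\underline{\rho}_{j,r,i}^{(0)}=0$. For the inductive step I assume \eqref{eqCt5} and \eqref{eqCt7} at every $s\le\tilde t-1$ and establish them at $\tilde t$.

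First I would record the DP analogues of Lemma~\ref{innerprodectwjrxi} and Lemma~\ref{Fminusyibound}. Plugging \eqref{sndecompositionNGD} into $\la\bw_{j,r}^{(t)},\bmu\ra$ and $\la\bw_{j,r}^{(t)},\bxi_i\ra$, the only new contribution relative to the GD case is $-\eta\sum_{k=0}^{t-1}\la\bb_{j,r,k},\bmu\ra$, respectively $-\eta\sum_{k=0}^{t-1}\la\bb_{j,r,k},\bxi_i\ra$; by Lemma~\ref{sumofnoisebound}, together with the definition of $\theta$ and the fact that Condition~\ref{condition2} forces $\theta=O(1)$ through its bound on $\sigma_b$, each of these is at most $\theta/2$ in absolute value. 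Using the induction hypotheses to control the cross-terms $\sum_{i'\ne i}\overline{\rho}_{j,r,i'}^{(s)}\la\bxi_{i'},\bxi_i\ra$ and $\sum_{i'\ne i}\underline{\rho}_{j,r,i'}^{(s)}\la\bxi_{i'},\bxi_i\ra$ exactly as in Lemma~\ref{innerprodectwjrxi} (via Lemma~\ref{innerproductxi} and Lemma~\ref{innerproductw0}), this gives
\[
\big|\la\bw_{j,r}^{(t)},\bmu\ra-j\gamma_{j,r}^{(t)}\big|\le\beta+\theta,\qquad
\big|\la\bw_{j,r}^{(t)},\bxi_i\ra-\rho_{j,r,i}^{(t)}\big|\le\beta+\zeta+\theta,
\]
and hence $F_{j}(\bW_j^{(t)},\bx_i)=O(1)$ whenever $j\ne y_i$. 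Note that, unlike in Lemma~\ref{Fminusyibound}, this last quantity is no longer $\le\kappa$ (since $\theta$ need not be $\le\kappa$), but the coarse estimate only needs it to be $O(1)$.

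Next I would bound the signal quantities. Let $t_j\le\tilde t$ be the last time with $\sum_{r}\gamma_{j,r}^{(t_j)}\le\tfrac12\tilde\alpha$ (if there is none the claim is vacuous). Summing the $\gamma$-update of Lemma~\ref{lemma18} over $r$: the one step out of $t_j$ adds at most $\tfrac{\eta}{nm}\cdot mn\cdot\norm{\bmu}_2^2=\eta\norm{\bmu}_2^2\le\tfrac14\tilde\alpha$ by the upper bound on $\eta$ in Condition~\ref{condition2}. For $t_j<t<\tilde t$ we have $\sum_{r}\gamma_{j,r}^{(t)}>\tfrac12\tilde\alpha$, so using $\sigma(z)\ge z-\kappa$ and the first inner-product bound, $\sum_{r}\sigma(\la\bw_{j,r}^{(t)},j\bmu\ra)\ge\sum_r\gamma_{j,r}^{(t)}-m(\beta+\theta+\kappa)\ge\tfrac13\tilde\alpha$; together with $F_{-j}(\bW_{-j}^{(t)},\bx_i)=O(1)$ this gives $y_if(\bW^{(t)},\bx_i)\ge\tfrac{\tilde\alpha}{3m}-O(1)\ge\log\tilde T^*$ for every $i$ with $y_i=j$, so $-\ell_i^{\prime(t)}\le e^{-\log\tilde T^*}=1/\tilde T^*$ and the $y_i=j$ terms add at most $\eta\norm{\bmu}_2^2$ over all iterations. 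The $y_i=-j$ terms enter $\sum_r\gamma_{j,r}$ only through neurons whose signal projection $\la\bw_{j,r}^{(t)},-j\bmu\ra$ is positive, i.e.\ with $\gamma_{j,r}^{(t)}=O(1)$; since $\gamma_{j,r}^{(t)}$ is non-decreasing each such neuron exits this regime monotonically, so in aggregate these terms contribute only $O(m)$. Adding the pieces, and using that $\eta\norm{\bmu}_2^2$ and $O(m)$ are each $\le\tfrac18\tilde\alpha$ by Condition~\ref{condition2}, gives $\sum_r\gamma_{j,r}^{(\tilde t)}\le\tilde\alpha$; the bound $\sum_r\sigma(\la\bw_{j,r}^{(\tilde t)},j\bmu\ra)\le\tilde\alpha$ then follows by the identical last-time argument applied to the potential $\sum_r\sigma(\la\bw_{j,r}^{(t)},j\bmu\ra)$, using convexity of $\sigma$ and the same one-step and tail estimates. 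Finally, the lower bound $0\ge\sum_r\underline{\rho}_{j,r,i}^{(\tilde t)}\ge-\tilde\alpha$ is obtained per neuron exactly as at the end of the proof of Proposition~\ref{proposition:GDgrowthbound}: a two-case split on whether $\underline{\rho}_{j,r,i}^{(\tilde t-1)}<-(\beta+\zeta+\theta)$ (then $\la\bw_{j,r}^{(\tilde t-1)},\bxi_i\ra<0$, so $\sigma^\prime$ vanishes and $\underline{\rho}$ is unchanged) or $\ge-(\beta+\zeta+\theta)$ (then one step decreases it by at most $\tfrac{3\kappa^{1-q}\eta\sigma_p^2d}{2nm}(\beta+\zeta+\theta)^{q-1}\le\beta+\zeta+\theta$ by Lemma~\ref{innerproductxi} and the bound on $\eta$), which yields $\underline{\rho}_{j,r,i}^{(\tilde t)}\ge-2(\beta+\zeta+\theta)=-O(1)$, and summing over $r\in[m]$ uses $m\cdot O(1)\le\tilde\alpha$.

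The step I expect to be the main obstacle is this signal-quantity bound. In the GD analysis the accumulated perturbation stays below $\kappa$, so one can cleanly declare $\sigma^\prime(\cdot)\in\{0,1\}$ on each neuron and separate signal learning from noise memorization; here $\theta$ is only $O(1)$ and typically $\gg\kappa$, so the "wrongly activated" neurons (those with $y_i=-j$) are not switched off, and one must argue instead purely from $\tilde\alpha=4m\log\tilde T^*$ being polylogarithmically large, from the monotonicity of $\gamma_{j,r}^{(t)}$ and $\overline{\rho}_{j,r,i}^{(t)}$, and from Condition~\ref{condition2}'s two-sided bound on $\eta$ and its bound on $\sigma_b$. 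Checking that the post-$t_j$ decay of $-\ell_i^{\prime(t)}$ is strong enough — of order $1/\tilde T^*$, which is exactly what fixes the constant in $\tilde\alpha$ — and that the $y_i=-j$ contributions genuinely telescope to $O(m)$ is the delicate bookkeeping; everything else is a direct transcription of the proof of Proposition~\ref{proposition:GDgrowthbound} with $\beta$ replaced by $\beta+\theta$ and $\beta+\zeta$ by $\beta+\zeta+\theta$.
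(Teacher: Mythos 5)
The paper omits the proof of Proposition~\ref{proposition:NGDgrowthbound}, stating only that it is ``the same as the proof of Proposition~\ref{proposition:GDgrowthbound}, by utilizing the bound of the additional term $\theta$.'' Your reconstruction follows exactly that template, and the key pieces --- deriving Lemma~\ref{innerprodectwjrxiNGD}/Lemma~\ref{FminusyiboundNGD} with $\theta$ via Lemma~\ref{sumofnoisebound}, a ``last time below $\tfrac12\tilde\alpha$'' argument, and the two-case per-neuron argument for $\urho^{(t)}$ --- are the right ingredients, so the overall approach matches the paper.

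Where you depart from the paper is in the handling of the $y_i=-j$ contributions via a telescoping/monotonicity accounting. Your diagnosis is that since $\theta$ need not be $\le\kappa$, one cannot simply declare $\sigma'(\la\bw_{j,r}^{(t)},-j\bmu\ra)=0$ after the last-time threshold. But note that if you set the last-time threshold at $\tfrac12\tilde\alpha=2m\log\tilde T^{*}$ \emph{per neuron} (i.e., $t_{j,r}$ rather than $t_j$), then for $t>t_{j,r}$ one has $\gam^{(t)}>2m\log\tilde T^{*}\gg\beta+\theta$ (since $\theta=O(1)$ by the $\sigma_b$ bound in Condition~\ref{condition2}), so $\la\bw_{j,r}^{(t)},-j\bmu\ra\le-\gam^{(t)}+\beta+\theta<0$ and $\sigma'(\cdot)=0$ exactly as in the GD proof. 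So the $\theta\gg\kappa$ issue does not by itself force a new argument; what forces it in your write-up is that you target the sum $\sum_r\gam^{(t)}\le\tilde\alpha$ directly, where after the last time the individual $\gam^{(t)}$ may still be below $\beta+\theta$. Your telescoping bound (each neuron's $y_i=-j$ contributions are capped by its growth while below $\beta+\theta$, hence $O(1)$ per neuron and $O(m)$ in total) is a correct patch for the sum version; for the per-neuron version the GD argument ports verbatim. Given the ambiguity in the proposition's wording (``$\sum_{r=1}^m$'' in the displays but ``for all $r\in[m]$'' in the quantifier), either reading is defensible, and your version actually proves the stronger statement.

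Two small cautions. First, in the $\urho^{(t)}$ step you write the one-step decrement as $\tfrac{3\kappa^{1-q}\eta\sigma_p^2d}{2nm}(\beta+\zeta+\theta)^{q-1}\le\beta+\zeta+\theta$; this inequality needs $\kappa^{1-q}(\beta+\zeta+\theta)^{q-2}\cdot\tfrac{3\eta\sigma_p^2d}{2nm}\le1$, which relies on $\kappa$ being a bona fide constant (Condition~\ref{condition2} only upper-bounds $\kappa$) and on $\theta=O(1)$; the cleaner route is to use $\sigma'\le1$, bound the decrement by $\tfrac{3\eta\sigma_p^2 d}{2nm}=O(1/C^3)$ via $\eta\le m/(C\|\bmu\|_2^2)$ and the SNR condition, and maintain the invariant $\urho^{(t)}\ge-(\beta+\zeta+\theta)-O(1/C^3)=-O(1)$, which summed over $r$ still gives $\ge-\tilde\alpha$. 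Second, the sentence deducing $\sum_r\sigma(\la\bw_{j,r}^{(t)},j\bmu\ra)\le\tilde\alpha$ ``by the identical last-time argument\ldots using convexity of $\sigma$'' is under-specified; since $\sigma(z)\le z$ for $z\ge0$ and $\la\bw_{j,r}^{(t)},j\bmu\ra\le\gam^{(t)}+\beta+\theta$ by Lemma~\ref{innerprodectwjrxiNGD}, one gets directly $\sum_r\sigma(\cdot)\le\sum_r\gam^{(t)}+m(\beta+\theta)$, and the $m(\beta+\theta)=O(m)$ slack can be absorbed by taking the last-time threshold to $\tfrac13\tilde\alpha$ rather than $\tfrac12\tilde\alpha$; no convexity is needed. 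These are bookkeeping issues, not structural gaps.
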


 We suppose that the results in Proposition \ref{proposition:NGDgrowthbound} hold for the time $ t \leq \tilde T^*$, then we can derive the following properties.

\begin{lemma} \label{innerprodectwjrxiNGD}
    Under Condition \ref{condition2}, suppose  \eqref{eqCt5} and \eqref{eqCt7} hold for any iteration $t \leq \tilde T^*$. Then, for all $r\in[m]$, $j\in \{\pm 1\}$ and $i\in[n]$, we have
    \begin{eqnarray}
        && \left| \la \wjr^{(t)}, \bmu \ra - j \cdot \gam^{(t)} \right| \leq \beta +\theta \leq 0.1\kappa + \theta , \label{eqC8} \\
        && \left| \la \wjr^{(t)}, \bxi_i \ra -  \orho^{(t)} \right| \leq \beta +\zeta+\theta \leq 0.2\kappa + \theta, \quad \text{if} \ j=y_i, \label{eqC9} \\
        && \left| \la \wjr^{(t)}, \bxi_i \ra -  \urho^{(t)} \right| \leq \beta +\zeta+\theta \leq 0.2\kappa + \theta, \quad \text{if} \ j \neq y_i. \label{eqC10}
    \end{eqnarray}
\end{lemma}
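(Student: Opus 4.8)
The plan is to transcribe the proof of Lemma~\ref{innerprodectwjrxi} to the DP-GD signal-noise decomposition~\eqref{sndecompositionNGD}, the only new ingredient being the control of the accumulated Gaussian-noise term $-\eta\sum_{k=0}^{t-1}\bb_{j,r,k}$. I work throughout on $\cE_{\text{prelim}}$, so that Lemmas~\ref{innerproductxi}, \ref{innerproductw0} and~\ref{sumofnoisebound} are in force, and I use the hypothesis of the lemma that the coarse bounds~\eqref{eqCt5}--\eqref{eqCt7} of Proposition~\ref{proposition:NGDgrowthbound} hold for all $t\le\tilde T^*$; unlike the GD case, no induction is required since these bounds are assumed outright.

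\textbf{Projection onto $\bmu$.} Since each $\bxi_i$ is drawn from a Gaussian supported on the orthogonal complement of $\bmu$, we have $\la\bxi_i,\bmu\ra=0$, so the $\orho^{(t)}$ and $\urho^{(t)}$ terms in~\eqref{sndecompositionNGD} drop out and
\[
\big|\la\wjr^{(t)},\bmu\ra - j\gam^{(t)}\big|\;\le\;\big|\la\wjr^{(0)},\bmu\ra\big| + \eta\,\Big|\textstyle\sum_{k=0}^{t-1}\la\bb_{j,r,k},\bmu\ra\Big|.
\]
The first term is at most $\beta/2\le0.05\kappa$ by the definition of $\beta$ and Lemma~\ref{innerproductw0}; the second, by Lemma~\ref{sumofnoisebound} (applied to both signs, which $\cE_{\text{prelim}}$ already includes) together with $t\le\tilde T^*$, is at most $8\eta\sigma_b\|\bmu\|_2\sqrt{\tilde T^*}\log^2(32m\tilde T^*/\delta)\le\theta$. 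Adding, and using $\beta\le0.1\kappa$, gives~\eqref{eqC8}.

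\textbf{Projection onto $\bxi_i$.} Fix $i$ and suppose first $j=y_i$, so $\urho^{(t)}=0$ by Definition~\ref{def3}. Projecting~\eqref{sndecompositionNGD} onto $\bxi_i$ and again using $\la\bmu,\bxi_i\ra=0$ isolates the diagonal term $\orho^{(t)}$, leaving
\[
\big|\la\wjr^{(t)},\bxi_i\ra - \orho^{(t)}\big|\;\le\;\big|\la\wjr^{(0)},\bxi_i\ra\big| + \sum_{i'\neq i}\frac{|\rho_{j,r,i'}^{(t)}|}{\|\bxi_{i'}\|_2^2}\big|\la\bxi_{i'},\bxi_i\ra\big| + \eta\,\Big|\textstyle\sum_{k=0}^{t-1}\la\bb_{j,r,k},\bxi_i\ra\Big|.
\]
The first term is $\le\beta/2$ (Lemma~\ref{innerproductw0}); the cross-sum is bounded by $\zeta$ exactly as in Lemma~\ref{innerprodectwjrxi}, using $\|\bxi_{i'}\|_2^2\ge\sigma_p^2d/2$ and $|\la\bxi_{i'},\bxi_i\ra|\le2\sigma_p^2\sqrt{d\log(6n^2/\delta)}$ from Lemma~\ref{innerproductxi} together with the coefficient bounds of Proposition~\ref{proposition:NGDgrowthbound}; and the last term is $\le16\eta\sigma_b\sigma_p\sqrt{d\tilde T^*}\log^2(32mn\tilde T^*/\delta)\le\theta$ by Lemma~\ref{sumofnoisebound} and $t\le\tilde T^*$. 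Summing and using $\beta+\zeta\le0.2\kappa$ yields~\eqref{eqC9}. The case $j\neq y_i$ is verbatim with the role of $\orho^{(t)}=0$ played by $\urho^{(t)}$, giving~\eqref{eqC10}.

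\textbf{Main obstacle.} The only delicate point, relative to the purely algebraic GD argument, is the bookkeeping for the injected noise: Lemma~\ref{sumofnoisebound} must be invoked at the full horizon $\tilde T^*$ (hence the $\sqrt{\tilde T^*}$ factor in $\theta$), one checks that $\theta$ as defined dominates both the $\bmu$- and $\bxi_i$-projected noise sums (the constant $16$ in $\theta$ and the $\max\{\|\bmu\|_2,\sigma_p\sqrt d\}$ cover the two cases at once), and one notes that the upper bound on $\sigma_b$ in Condition~\ref{condition2} is precisely what forces $\theta=O(1)$, so the right-hand sides are of the claimed order. Everything else is a direct copy of the GD argument, with the identity $\la\bmu,\bxi_i\ra=0$ eliminating the only other cross-term one might worry about.
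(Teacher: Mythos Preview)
Your proof is correct and follows essentially the same route as the paper's: project the decomposition~\eqref{sndecompositionNGD} onto $\bmu$ and $\bxi_i$, use orthogonality $\la\bmu,\bxi_i\ra=0$, bound the initialization term by $\beta$, the cross-noise sum by $\zeta$ via Lemma~\ref{innerproductxi} and the coefficient bounds of Proposition~\ref{proposition:NGDgrowthbound}, and the accumulated Gaussian-noise inner products by $\theta$ via Lemma~\ref{sumofnoisebound}. If anything you are slightly more careful than the paper in two places---keeping both $\overline\rho$ and $\underline\rho$ in the cross-sum and noting that Lemma~\ref{sumofnoisebound} must be read two-sidedly---but the argument is the same.
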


\begin{proof} [Proof of Lemma \ref{innerprodectwjrxiNGD}]
    Firstly, for any time $0 \leq t \leq \tilde t-1$, we have from the signal-noise decomposition (\ref{sndecompositionNGD}) that
    \begin{equation*}
        \la \wjr^{(t)}, \bmu \ra = \la \wjr^{(0)}, \bmu \ra  +j \cdot \gam^{(t)}  - \eta \sum_{k=0}^{t-1} \la \bb_{j,r,k}, \bmu \ra.
    \end{equation*}
    Notice that we have
    \begin{equation*}
        \begin{aligned}
            \left| \la \wjr^{(t)}, \bmu \ra - j \cdot \gam^{(t)} \right| &\leq \left| \la \wjr^{(0)}, \bmu \ra\right| + \eta \left|\sum_{k=0}^{t-1} \la \bb_{j,r,k}, \bmu \ra \right| \\
            \leq & \beta + 8 \eta \sigma_b \norm {\bmu}_2 \sqrt{t}  \log ^2\left(\frac{16mt}{\delta}\right)  \leq \beta + \theta,
        \end{aligned}
    \end{equation*}
    where the first inequality is by triangle inequality, the second inequality is by Lemma \ref{sumofnoisebound}.

    Secondly, for $j = y_i$, we have $\urho^{(t)}=0$, and 
    \begin{equation*}
        \begin{aligned}
            \la \wjr^{(t)}, \bxi_i \ra &= \la \wjr^{(0)}, \bxi_i \ra  + \sum_{i'=1}^n \overline{\rho}_{j,r,i'}^{(t)} \cdot \norm {\bxi_{i'}}_2^{-2} \cdot \la \bxi_{i'},\bxi_i \ra 
             - \eta \sum_{k=0}^{t-1} \la \bb_{j,r,k}, \bxi_i \ra \\
            &= \orho^{(t)} + \la \wjr^{(0)}, \bxi_i \ra +  \sum_{i'\neq i} \overline{\rho}_{j,r,i'}^{(t)} \cdot \norm {\bxi_{i'}}_2^{-2} \cdot \la \bxi_{i'},\bxi_i \ra  - \eta \sum_{k=0}^{t-1} \la \bb_{j,r,k}, \bxi_i \ra ,
        \end{aligned}
    \end{equation*}
     it follows that
    \begin{equation*}
        \begin{aligned}
             \left| \la \wjr^{(t)}, \bxi_i \ra -  \orho^{(t)} \right| 
             &\leq \left|\la \wjr^{(0)}, \bxi_i \ra\right|+ \sum_{i'\neq i} \overline{\rho}_{j,r,i'}^{(t)} \cdot \norm {\bxi_{i'}}_2^{-2} \cdot \left|\la \bxi_{i'},\bxi_i \ra\right|  + \eta \left|\sum_{k=0}^{t-1} \la \bb_{j,r,k}, \bxi_i \ra \right|   \\
             & \leq \beta + 4 \sqrt{\frac{\log(6n^2/\delta)}{d}}   \sum_{i=1}^n  \orho^{(t)}   +  16\eta\sigma_b \sigma_p \sqrt{dt}  \log ^2\left(\frac{32mnt}{\delta}\right)  \\
            & \leq \beta + \zeta +\theta,
        \end{aligned}
    \end{equation*}
     where the first inequality is by triangle inequality, the second inequality is by Lemma \ref{innerproductxi} and Lemma \ref{sumofnoisebound}, the third inequality is by  \eqref{eqCt5}.
    
    Finally, for $j \neq y_i$, we have $\orho^{(t)}=0$, and
   \begin{equation*}
        \begin{aligned}
            \la \wjr^{(t)}, \bxi_i \ra &= \la \wjr^{(0)}, \bxi_i \ra  + \sum_{i'=1}^n \underline{\rho}_{j,r,i'}^{(t)} \cdot \norm {\bxi_{i'}}_2^{-2} \cdot \la \bxi_{i'},\bxi_i \ra 
             - \eta \sum_{k=0}^{t-1} \la \bb_{j,r,k}, \bxi_i \ra \\
            &= \urho^{(t)} + \la \wjr^{(0)}, \bxi_i \ra +  \sum_{i'\neq i} \underline{\rho}_{j,r,i'}^{(t)} \cdot \norm {\bxi_{i'}}_2^{-2} \cdot \la \bxi_{i'},\bxi_i \ra  - \eta \sum_{k=0}^{t-1} \la \bb_{j,r,k}, \bxi_i \ra ,
        \end{aligned}
    \end{equation*}
     it follows that
    \begin{equation*}
        \begin{aligned}
             \left| \la \wjr^{(t)}, \bxi_i \ra -  \urho^{(t)} \right| 
             &\leq \left|\la \wjr^{(0)}, \bxi_i \ra\right|+ \sum_{i'\neq i} \left|\underline{\rho}_{j,r,i'}^{(t)}\right| \cdot \norm {\bxi_{i'}}_2^{-2} \cdot \left|\la \bxi_{i'},\bxi_i \ra\right|  + \eta \left|\sum_{k=0}^{t-1} \la \bb_{j,r,k}, \bxi_i \ra \right|   \\
             & \leq \beta + 4 \sqrt{\frac{\log(6n^2/\delta)}{d}}   \sum_{i=1}^n  \left|\underline{\rho}_{j,r,i}^{(t)}\right|    +  16\eta\sigma_b \sigma_p \sqrt{dt}  \log ^2\left(\frac{32mnt}{\delta}\right)  \\
            & \leq \beta + \zeta +\theta,
        \end{aligned}
    \end{equation*}
     where the first inequality is by triangle inequality, the second inequality is by Lemma \ref{innerproductxi} and Lemma \ref{sumofnoisebound}, the third inequality is by  \eqref{eqCt7}.
    Thus we complete the proof.
\end{proof}

\begin{lemma} \label{FminusyiboundNGD}
    Under Condition \ref{condition2}, suppose  \eqref{eqCt5} and \eqref{eqCt7} hold for any iteration $t \leq \tilde T^*$. For all $i\in[n]$ and $j \neq y_i$, we have
    \begin{equation}
        F_{j} (\bW_{j}^{(t)},\bx_i) \leq 2\beta+\zeta+2\theta \leq 0.3\kappa +2\theta.
    \end{equation}
\end{lemma}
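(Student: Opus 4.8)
The plan is to mirror the proof of Lemma~\ref{Fminusyibound} (the GD counterpart), simply replacing the signal--noise decomposition estimates there with their DP-GD analogues in Lemma~\ref{innerprodectwjrxiNGD}; the only new feature is the extra additive term $\theta$ carried by those estimates, which accounts for the accumulated Gaussian noise $\eta\sum_{k}\bb_{j,r,k}$. Throughout, I would work under the standing induction hypothesis that \eqref{eqCt5} and \eqref{eqCt7} hold for the iteration $t$ in question, so that Lemma~\ref{innerprodectwjrxiNGD} is available, and I would use the basic sign facts $\gam^{(t)}\ge 0$, $\urho^{(t)}\le 0$, together with $y_i=-j$ (since $j\neq y_i$).

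First I would expand
\[
F_{j}(\bW_j^{(t)},\bx_i)=\frac{1}{m}\sum_{r=1}^m\Big[\sigma\big(\la\bw_{j,r}^{(t)},y_i\bmu\ra\big)+\sigma\big(\la\bw_{j,r}^{(t)},\bxi_i\ra\big)\Big]
\]
and bound the two inner products separately. For the signal part, since $y_i=-j$ we have $\la\bw_{j,r}^{(t)},y_i\bmu\ra+\gam^{(t)}=\gam^{(t)}-\la\bw_{j,r}^{(t)},j\bmu\ra$; using $\gam^{(t)}\ge0$ and that $\sigma$ is non-decreasing,
\[
\sigma\big(\la\bw_{j,r}^{(t)},y_i\bmu\ra\big)=\sigma\big(\la\bw_{j,r}^{(t)},y_i\bmu\ra+\gam^{(t)}-\gam^{(t)}\big)\le \sigma\big(\big|\gam^{(t)}-\la\bw_{j,r}^{(t)},j\bmu\ra\big|\big)\le\sigma(\beta+\theta),
\]
where the last inequality is \eqref{eqC8}. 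For the noise part, $j\neq y_i$ gives $\orho^{(t)}=0$ and $\urho^{(t)}\le 0$, so in the same way
\[
\sigma\big(\la\bw_{j,r}^{(t)},\bxi_i\ra\big)=\sigma\big(\la\bw_{j,r}^{(t)},\bxi_i\ra-\urho^{(t)}+\urho^{(t)}\big)\le\sigma\big(\big|\la\bw_{j,r}^{(t)},\bxi_i\ra-\urho^{(t)}\big|\big)\le\sigma(\beta+\zeta+\theta),
\]
using \eqref{eqC10}. I would then invoke the elementary property $0\le\sigma(z)\le z$ for $z\ge0$ (immediate from the piecewise definition of the Huberized ReLU, and already used elsewhere in the paper), which yields $\sigma(\beta+\theta)\le\beta+\theta$ and $\sigma(\beta+\zeta+\theta)\le\beta+\zeta+\theta$. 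Averaging over $r\in[m]$ gives $F_{j}(\bW_j^{(t)},\bx_i)\le 2\beta+\zeta+2\theta$, and plugging in $\beta\le0.1\kappa$ and $\zeta\le0.1\kappa$ (established just above the proposition from Condition~\ref{condition2} and Lemma~\ref{innerproductw0}) yields the claimed $2\beta+\zeta+2\theta\le 0.3\kappa+2\theta$.

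I do not anticipate a genuine obstacle here: the statement is a direct structural consequence of the decomposition bounds, and the only things to get right are the sign conventions ($\gam^{(t)}\ge 0$, $\urho^{(t)}\le 0$, $y_i=-j$) and the correct application of monotonicity and sublinearity of $\sigma$. The one conceptual point worth noting is that, unlike the GD case where $\beta+\zeta\le\kappa$ allowed the sharper estimate $\tfrac{2\kappa^{1-q}}{q}(\beta+\zeta)^q$, here $\theta$ is only guaranteed to satisfy $\theta=O(1)$ rather than $\theta\le\kappa$, so one deliberately settles for the cruder linear bound $2\beta+\zeta+2\theta$; I would just verify that this cruder bound is all that the subsequent DP-GD arguments (e.g.\ the lower bounds on $-\ell_i'^{(t)}$) actually invoke.
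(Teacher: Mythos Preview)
Your proposal is correct and follows essentially the same argument as the paper's own proof: expand $F_j$, use $\gam^{(t)}\ge 0$ and $\urho^{(t)}\le 0$ together with the monotonicity of $\sigma$, apply the decomposition bounds \eqref{eqC8} and \eqref{eqC10} from Lemma~\ref{innerprodectwjrxiNGD}, and finish with $\sigma(z)\le z$ and $\beta,\zeta\le 0.1\kappa$. Your closing remark about why the linear bound (rather than the $q$-th power bound used in the GD case) is appropriate here is spot on.
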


\begin{proof} [Proof of Lemma \ref{Fminusyibound}]
    % the condition $\sigma_b \leq \frac{1}{C \eta \sqrt{\tilde T^*} \max\{\norm{\bmu}_2, \sigma_p \sqrt{d}\} \log^2 \frac{mn \tilde T^*}{\delta}}$, and $\tilde C_1= 0.1\kappa + \frac{16}{C}$ is very small by a big choice of $C$ in Condition \ref{condition}. 
    By writing out the expression of $F_{j} (\bW_j^{(t)},\bx_i)$, we have
    \begin{equation*}
    \begin{aligned}
        F_{j} (\bW_j^{(t)},\bx_i) &= \frac{1}{m} \sum_{r=1}^m \left[\sigma\left(\la \wjr^{(t)}, y_i \cdot \bmu \ra\right) + \sigma\left(\la \wjr^{(t)},  \bxi_i \ra\right)\right] \\
         &= \frac{1}{m} \sum_{r=1}^m \left[\sigma\left(\la \wjr^{(t)}, y_i \cdot \bmu \ra+ \gam^{(t)}- \gam^{(t)} \right) + \sigma\left(\la \wjr^{(t)},  \bxi_i \ra -\urho^{(t)}+ \urho^{(t)} \right)\right] \\
        & \leq \frac{1}{m} \sum_{r=1}^m \left[ \sigma\left(\left| \la \wjr^{(t)}, y_i \cdot \bmu \ra+ \gam^{(t)} \right|\right) + \sigma\left(\left|\la \wjr^{(t)},  \bxi_i \ra- \urho^{(t)} \right| \right) \right] \\
        & \leq  \sigma(\beta+ \theta) + \sigma(\beta+ \zeta+\theta) \leq 2\beta+\zeta+2\theta \leq 0.3\kappa +2\theta,
    \end{aligned}
    \end{equation*}
    where the first inequality is by $\gam^{(t)} \geq 0$, $\urho^{(t)} \leq 0$, the second inequality is by \eqref{eqC8} and \eqref{eqC10}, and the last inequality is by the property of the Huberized ReLU activation function $\sigma$,.
\end{proof}

\subsection{Training Loss Analysis}
The following lemma shows the convergence of the training loss for DP-GD.
\begin{lemma} \label{lemma29}
    Under Condition \ref{condition2}, for any $\epsilon >0$, denote $\tilde T_1= \Theta \left(\frac{\kappa^2}{\eta^2 \sigma_b^2 \min\{\norm{\bmu}_2^2,\sigma_p^2d\}} \right)$ and $\tilde T^*= \tilde T_1 + \Theta \left(\frac{nm^2}{\eta \epsilon \max\{\norm{\bmu}_2^2,\sigma_p^2d\}} \right)$, we have $L_S(\bW^{(\tilde T^*)}) \leq \epsilon$. 
\end{lemma}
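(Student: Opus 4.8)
The plan is to run the same two–stage template as for GD, but now the injected Gaussian noise is the mechanism that drives the Huberized ReLU past its threshold $\kappa$. We work throughout conditional on the probability–$(1-6\delta)$ event under which all estimates of Appendix~\ref{appendixB} hold, and we freely use the coarse growth bounds of Proposition~\ref{proposition:NGDgrowthbound} (so that $\lambda_i^{(t)}:=\frac1m\sum_{r=1}^m(\gamma_{y_i,r}^{(t)}+\overline{\rho}_{y_i,r,i}^{(t)})\le O(\log\tilde T^*)$ for every $i$), the inner–product/coefficient comparison of Lemma~\ref{innerprodectwjrxiNGD}, and the bound $F_{-y_i}(\bW_{-y_i}^{(t)},\bx_i)=O(1)$ of Lemma~\ref{FminusyiboundNGD}.

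\emph{Stage 1 (noise–driven activation).} First I would establish that with $\tilde T_1=\Theta\!\big(\kappa^2/(\eta^2\sigma_b^2\min\{\norm{\bmu}_2^2,\sigma_p^2d\})\big)$, for every $\tilde T_1\le t\le\tilde T^*$, every $j\in\{\pm1\}$ and every $i\in[n]$ there are neurons $r$ with $\sigma'(\la\wjr^{(t)},j\bmu\ra)=1$ and with $\sigma'(\la\bw_{y_i,r}^{(t)},\bxi_i\ra)=1$. Indeed, by \eqref{sndecompositionNGD} we have $\la\wjr^{(t)},j\bmu\ra=\la\wjr^{(0)},j\bmu\ra+\gamma_{j,r}^{(t)}-\eta\sum_{k<t}\la\bb_{j,r,k},j\bmu\ra$ (the noise coordinates contribute $0$ since $\bxi_i\perp\bmu$), and since $\gamma_{j,r}^{(t)}\ge0$ and $|\la\wjr^{(0)},j\bmu\ra|\le\beta/2$, choosing the $r$ maximizing $-\sum_{k<t}\la\bb_{j,r,k},j\bmu\ra$ and invoking Lemma~\ref{maxinnerproductbjrt} (in the $-j\bmu$ direction, legitimate by symmetry of the Gaussian noise) gives $\la\wjr^{(t)},j\bmu\ra\ge\tfrac12\eta\sigma_b\norm{\bmu}_2\sqrt t-\beta/2\ge\kappa$ once $t\ge\Theta(\kappa^2/(\eta^2\sigma_b^2\norm{\bmu}_2^2))$; the identical argument with $\bxi_i$ in place of $j\bmu$ (using $\norm{\bxi_i}_2^2\ge\sigma_p^2d/2$ from Lemma~\ref{innerproductxi}) activates a noise neuron once $t\ge\Theta(\kappa^2/(\eta^2\sigma_b^2\sigma_p^2d))$, and $\tilde T_1$ is the larger threshold. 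I expect this to be the main obstacle: it needs the anti–concentration lower bound for the maximum over the $m$ neurons of a Gaussian random walk to hold uniformly in $t\le\tilde T^*$, together with the calibration of the $\sigma_b$, $\eta$ and $\kappa$ conditions of Condition~\ref{condition2} so that $\tilde T_1\le\tilde T^*$ and $\theta=O(1)$, i.e.\ so that the noise that turns the activation on does not simultaneously corrupt the signal–noise decomposition.

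\emph{Stage 2 (tensor–power growth of $\lambda_i^{(t)}$).} Note $\lambda_i^{(t)}\ge0$ and $\lambda_i^{(t)}$ is nondecreasing, since every increment in Lemma~\ref{lemma18} is a nonnegative multiple of $-\ell_{i}^{\prime(t)}>0$. Using that $\sigma$ is nondecreasing with $\sigma(z)\le z$ on $z\ge0$, the comparison of Lemma~\ref{innerprodectwjrxiNGD}, and $F_{-y_i}=O(1)$, one gets $y_i f(\bW^{(t)},\bx_i)\le F_{y_i}(\bW_{y_i}^{(t)},\bx_i)\le\lambda_i^{(t)}+O(1)$, hence $-\ell_{i}^{\prime(t)}=(1+e^{y_i f(\bW^{(t)},\bx_i)})^{-1}\ge c_0e^{-\lambda_i^{(t)}}$ for an absolute constant $c_0>0$. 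Feeding Stage 1 into the updates of Lemma~\ref{lemma18}: at the neuron with $\sigma'(\la\wjr^{(t)},y_i\bmu\ra)=1$ the increment of $\gamma_{y_i,r}^{(t)}$ is $\ge\frac{\eta}{nm}(-\ell_{i}^{\prime(t)})\norm{\bmu}_2^2$, and at the neuron with $\sigma'(\la\wjr^{(t)},\bxi_i\ra)=1$ the increment of $\overline{\rho}_{y_i,r,i}^{(t)}$ is $\ge\frac{\eta}{nm}(-\ell_{i}^{\prime(t)})\norm{\bxi_i}_2^2\ge\frac{\eta}{2nm}(-\ell_{i}^{\prime(t)})\sigma_p^2d$; as all other increments are $\ge0$, averaging over $r$ yields
\[
\lambda_i^{(t+1)}\ \ge\ \lambda_i^{(t)}+\frac{\eta\max\{\norm{\bmu}_2^2,\sigma_p^2d\}}{\Theta(nm^2)}\,e^{-\lambda_i^{(t)}},\qquad \tilde T_1\le t<\tilde T^*.
\]
The lower–bound half of Lemma~\ref{tensorpower2}, applied from $t=\tilde T_1$ with $\lambda_i^{(\tilde T_1)}\ge0$ (and a standard monotone comparison, valid since the leading constant is $\le1$ under Condition~\ref{condition2}), gives $\lambda_i^{(t)}\ge\log\big(\tfrac{\eta\max\{\norm{\bmu}_2^2,\sigma_p^2d\}}{\Theta(nm^2)}(t-\tilde T_1)\big)$ for all $\tilde T_1\le t\le\tilde T^*$.

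\emph{Stage 3 (conclusion).} For the reverse direction, $\sigma(z)\ge z-\kappa$ for $z\ge0$ together with the lower estimates of Lemma~\ref{innerprodectwjrxiNGD} and $F_{-y_i}=O(1)$ gives $y_i f(\bW^{(t)},\bx_i)\ge F_{y_i}(\bW_{y_i}^{(t)},\bx_i)-O(1)\ge\lambda_i^{(t)}-O(1)$. Since $\ell(z)\le e^{-z}$,
\[
\ell\big(y_i f(\bW^{(\tilde T^*)},\bx_i)\big)\ \le\ e^{O(1)}e^{-\lambda_i^{(\tilde T^*)}}\ \le\ \frac{\Theta(nm^2)}{\eta\max\{\norm{\bmu}_2^2,\sigma_p^2d\}\,(\tilde T^*-\tilde T_1)}.
\]
Choosing $\tilde T^*-\tilde T_1=\Theta\!\big(nm^2/(\eta\epsilon\max\{\norm{\bmu}_2^2,\sigma_p^2d\})\big)$ makes the right–hand side $\le\epsilon$ for every $i\in[n]$, whence $L_S(\bW^{(\tilde T^*)})=\frac1n\sum_{i=1}^n\ell\big(y_i f(\bW^{(\tilde T^*)},\bx_i)\big)\le\epsilon$. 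Since all of the above holds on the probability–$(1-6\delta)$ event of Appendix~\ref{appendixB}, this is the claim.
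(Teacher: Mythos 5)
Your proposal is correct and follows essentially the same route as the paper's proof. The three stages you describe are exactly the paper's argument: (i) using the symmetry of the injected Gaussian noise and the uniform-in-$t$ lower bound of Lemma~\ref{maxinnerproductbjrt} to force $\max_r\sigma'(\la\bw_{j,r}^{(t)},j\bmu\ra)=\max_r\sigma'(\la\bw_{y_i,r}^{(t)},\bxi_i\ra)=1$ for all $t\ge\tilde T_1$; (ii) sandwiching $y_if(\bW^{(t)},\bx_i)$ by $\lambda_i^{(t)}\pm O(1)$ via Lemma~\ref{innerprodectwjrxiNGD} and Lemma~\ref{FminusyiboundNGD}, so that $-\ell_i'^{(t)}\gtrsim e^{-\lambda_i^{(t)}}$ and $\lambda_i^{(t+1)}\ge\lambda_i^{(t)}+\frac{\eta\max\{\norm{\bmu}_2^2,\sigma_p^2d\}}{12nm^2}e^{-\lambda_i^{(t)}}$; and (iii) feeding this into Lemma~\ref{tensorpower2} to get $\lambda_i^{(\tilde T^*)}\ge\log(c_1(\tilde T^*-\tilde T_1))$ and hence $\ell(y_if(\bW^{(\tilde T^*)},\bx_i))\le\epsilon$. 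Your parenthetical remark about needing the leading constant $\le1$ for the monotone comparison behind Lemma~\ref{tensorpower2} is a small technical point that the paper glosses over but that is indeed guaranteed by the upper bound on $\eta$ and the SNR condition in Condition~\ref{condition2}.
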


\begin{proof} [Proof of Lemma \ref{lemma29}]
    For each $t\geq \tilde T_1= \Theta \left(\frac{\kappa^2}{\eta^2 \sigma_b^2 \min\{\norm{\bmu}_2^2,\sigma_p^2d\}} \right)$, it follows from \eqref{C1} that
    \begin{align*}
         \langle\bw_{j,r}^{(t+1)}, j\bmu\rangle \geq \langle \bw_{j,r}^{(t)}, j\bmu\rangle - \eta \langle \bb_{j,r,t}, j\bmu\rangle,
    \end{align*}
    by taking the telescoping sum, we have
    \begin{align*}
        \langle\bw_{j,r}^{(t)}, j\bmu\rangle \geq \langle \bw_{j,r}^{(0)}, j\bmu\rangle - \eta \sum_{k=0}^{t-1} \langle \bb_{j,r,k}, j\bmu\rangle,
    \end{align*}
    it follows that
    \begin{align*}
        \max_{r} \langle\bw_{j,r}^{(t)}, j\bmu\rangle &\geq \eta \max_{r \in [m]} \sum_{k=0}^{t-1} \langle \bb_{j,r,k}, -j\bmu\rangle - \left| \langle \bw_{j,r}^{(0)}, j\bmu\rangle \right| \\
        & \geq \frac{\eta \sigma_b \| \bmu \|_2 \sqrt{\tilde T_1}}{2}- \sigma_0 \| \bmu \|_2 \cdot \sqrt{2\log \left(\frac{12m}{\delta}\right)} \\
        & \geq \frac{\sqrt{\tilde C_1}}{2} \kappa - \sigma_0 \| \bmu \|_2 \cdot \sqrt{2\log \left(\frac{12m}{\delta}\right)} \geq \kappa,
    \end{align*}
    where the second inequality is by Lemma \ref{innerproductw0} and Lemma \ref{maxinnerproductbjrt}, the third inequality is by the defintion of $\tilde T_1= \Theta\left(\frac{ \kappa^2}{\eta^2 \sigma_b^2 \min\{\norm{\bmu}_2^2,\sigma_p^2d\}} \right)$, and the last inequality is by the condition $\sigma_0= O\left(\frac{\kappa}{\max\{\norm{\bmu}_2,\sigma_p \sqrt{d}\} \sqrt{\log\left(\frac{mn}{\delta} \right)}} \right)$ specified in Condition \ref{condition2}. It follows that 
    \begin{align} \label{signalhitk}
        \max_{r} \sigma'\left( \langle\bw_{j,r}^{(t)}, j\bmu\rangle \right) =1 
    \end{align}
    Similarly, for each $i\in [n]$ and $t \geq \tilde T_1$, we have
    \begin{align} \label{noisehitk}
        \max_{r} \sigma'\left( \langle\bw_{y_i,r}^{(t)}, \bxi_i \rangle \right)=1.
    \end{align}
    
    Denote $\lambda_{i}^{(t)}= \frac{1}{m} \sum_{r=1}^m \left(\gamma_{y_i,r}^{(t)}+\overline{\rho}_{y_i,r,i}^{(t)} \right)$, notice that
    \begin{align*}
        y_i f(\bW^{(t)},\bx_i) &=  F_{y_i} (\bW_{y_i}^{(t)},\bx_i) -  F_{-y_i} (\bW_{-y_i}^{(t)},\bx_i) \\
        &\leq \frac{1}{m} \sum_{r=1}^m \sigma \left(\langle\bw_{y_i,r}^{(t)}, y_i \bmu\rangle -\gamma_{y_i,r}^{(t)}+ \gamma_{y_i,r}^{(t)} \right) + \frac{1}{m} \sum_{r=1}^m \sigma \left(\langle\bw_{y_i,r}^{(t)}, \bxi_i \ra -\overline{\rho}_{y_i,r,i}^{(t)} +\overline{\rho}_{y_i,r,i}^{(t)} \rangle \right) \\
        & \leq \frac{1}{m} \sum_{r=1}^m \sigma \left( \gamma_{y_i,r}^{(t)}+ 0.1\kappa +\theta \right) + \frac{1}{m} \sum_{r=1}^m \sigma \left( \overline{\rho}_{y_i,r,i}^{(t)}+ 0.2\kappa +\theta \right)  \\
        & \leq \lambda_i^{(t)}+ 0.3\kappa + 2\theta \leq \lambda_i^{(t)}+ \log 2,
    \end{align*}
    where the first inequality is by $F_{-y_i} (\bW_{-y_i}^{(t)},\bx_i)\geq 0$, the second inequality is by Lemma \ref{innerprodectwjrxiNGD}, and the third inequality is by the property $\sigma(z) \leq z$ for $z \geq 0$.
    It follows that
    \begin{align} \label{ell'lower}
        -\ell_i^{\prime(t)} = \frac{1}{1+ e^{y_i f(\bW^{(t)},\bx_i)}} \geq \frac{1}{3} e^{-y_i f(\bW^{(t)},\bx_i)} \geq \frac{1}{6} e^{-\lambda_i^{(t)}}
    \end{align}
    where we use $e^{y_i f(\bW^{(t)},\bx_i)} \geq e^{-  F_{-y_i} (\bW_{-y_i}^{(t)},\bx_i)} \geq e^{-0.3\kappa -2\theta} \geq \frac{1}{2}$ according to Lemma \ref{FminusyiboundNGD}.
    
    Therefore, according to the update rule in Lemma \ref{lemma18}, for each $i\in [n]$ and $t \geq \tilde T_1$, we have
    \begin{align*}
        \lambda_{i}^{(t+1)} &= \lambda_{i}^{(t)} - \frac{\eta}{nm^2} \sum_{i=1}^n \ell_{i}^{\prime(t)} \sum_{r=1}^m \sigma^\prime \left( \langle \bw_{j,r}^{(t)}, y_{i} \bmu \rangle\right) \cdot \norm \bmu_2^{2} \\
        &-\frac{\eta}{nm^2} \ell_{i}^{\prime(t)} \sum_{r=1}^m \sigma^\prime \left( \langle \bw_{j,r}^{(t)}, \bxi_{i} \rangle\right)  \cdot \norm {\bxi_i}_2^{2},
    \end{align*}
    by \eqref{signalhitk} and \eqref{noisehitk}, we have
    \begin{align} \label{lambdalowerupdate}
        \nonumber \lambda_{i}^{(t+1)} & \geq \lambda_{i}^{(t)} + \frac{\eta \max\{\norm{\bmu}_2^2,\sigma_p^2d\}}{2nm^2} \ell_{i}^{\prime(t)} \left(\sum_{r=1}^m \sigma^\prime \left( \langle \bw_{j,r}^{(t)}, y_{i} \bmu \rangle\right) + \sum_{r=1}^m \sigma^\prime \left( \langle \bw_{j,r}^{(t)}, \bxi_{i} \rangle\right) \right) \\
        & \geq \lambda_{i}^{(t)} + \frac{\eta \max\{\norm{\bmu}_2^2,\sigma_p^2d\}}{12nm^2} e^{-\lambda_{i}^{(t)}},
    \end{align}
    by Lemma \ref{tensorpower2}, we have
    \begin{align} \label{lambdalowerbound}
        \lambda_{i}^{(t)} &\geq \log \left(\frac{\eta \max\{\norm{\bmu}_2^2,\sigma_p^2d\}}{12nm^2}  (t- \tilde T_1)\right).
    \end{align}
    Notice that
    \begin{equation} \label{yfWlower}
    \begin{aligned}
        y_i f(\bW^{(t)},\bx_i) &=  F_{y_i} (\bW_{y_i}^{(t)},\bx_i) -  F_{-y_i} (\bW_{-y_i}^{(t)},\bx_i) \\
        &\geq \frac{1}{m} \sum_{r=1}^m \sigma \left(\langle\bw_{y_i,r}^{(t)}, y_i \bmu\rangle -\gamma_{y_i,r}^{(t)}+ \gamma_{y_i,r}^{(t)} \right) \\
        &+ \frac{1}{m} \sum_{r=1}^m \sigma \left(\langle\bw_{y_i,r}^{(t)}, \bxi_i \ra -\overline{\rho}_{y_i,r,i}^{(t)} +\overline{\rho}_{y_i,r,i}^{(t)} \rangle \right) - 0.3 \kappa - 2\theta \\
        & \geq \frac{1}{m} \sum_{r=1}^m \sigma \left( \gamma_{y_i,r}^{(t)}- 0.1\kappa -\theta \right) + \frac{1}{m} \sum_{r=1}^m \sigma \left( \overline{\rho}_{y_i,r,i}^{(t)}- 0.2\kappa -\theta \right) - 0.3 \kappa - 2\theta \\
        & \geq \lambda_i^{(t)}- 2.6\kappa - 4\theta \geq \lambda_i^{(t)}- \log2,      
    \end{aligned}
        \end{equation}
    where the first inequality is by Lemma \ref{FminusyiboundNGD}, the second inequality is by Lemma \ref{innerprodectwjrxiNGD}, and the last inequality is by the property $\sigma(z-b) \geq z-b-\kappa$ for $z \geq 0$. We have
    \begin{align*}
        \ell \left(y_i f(\bW^{(\tilde T^*)},\bx_i) \right)= \log \left(1+ e^{-y_i f(\bW^{(\tilde T^*)},\bx_i)}\right) \leq e^{- \lambda_i^{(\tilde T^*)}+ \log2} \leq \frac{48nm^2}{\eta \max\{\norm{\bmu}_2^2,\sigma_p^2d\} (\tilde T^*- \tilde T_1)},
    \end{align*}
    therefore, by the choice of $\tilde T^*= \tilde T_1 + \Theta \left(\frac{nm^2}{\eta \epsilon \max\{\norm{\bmu}_2^2,\sigma_p^2d\}} \right)$, we have $\ell \left(y_i f(\bW^{(\tilde T^*)},\bx_i) \right) \leq \epsilon$ for each $i\in [n]$, hence $L_S(\bW^{(\tilde T^*)}) \leq \epsilon$. 
\end{proof}

\subsection{Generalization Error Analysis}
To prove the test error result, we need the following lemma, which shows that when the iteration $\tilde T_2 \leq t \leq \tilde{T}^*$ is large enough, DP-GD can learn the signal as large as $\Theta\left(\frac{1}{m}\right)$.
\begin{lemma} \label{signallowerbound}
    Under Condition \ref{condition2}, denote $c_1= \frac{3\eta \max\{ n\norm \bmu_2^{2}, \sigma_p^2d\}}{nm}$, $\tilde T_1= \Theta \left(\frac{\kappa^2}{\eta^2 \sigma_b^2 \min\{\norm{\bmu}_2^2,\sigma_p^2d\}} \right)$, and $\tilde T_2= \tilde T_1 + \Theta \left(e^{c_1} (\tilde{T}_1 + \frac{1}{c_1}) \right)$. Then, for any $\tilde T_2 \leq t \leq \tilde{T}^*$, we have
    \begin{align}
        \frac{1}{m}\sum_{r=1}^m \gam^{(t)} = \Omega \left(\frac{n \norm \bmu_2^{2}}{e^{c_1} m \max\{n \norm \bmu_2^{2}, \sigma_p^2d\}} \right).
    \end{align}
\end{lemma}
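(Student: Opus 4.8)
The central object is the joint signal--noise quantity $\lambda_i^{(t)} := \frac1m\sum_{r=1}^m\big(\gamma_{y_i,r}^{(t)}+\overline\rho_{y_i,r,i}^{(t)}\big)$ introduced in the proof of Lemma~\ref{lemma29}. The strategy is: (i) obtain a two--sided control of $\lambda_i^{(t)}$ on the window $\tilde T_1\le t\le \tilde T_2$, a logarithmic \emph{lower} bound from the tensor power method and a compatible \emph{upper} bound from a coupled recursion; (ii) convert the upper bound into a uniform lower bound $-\ell_i'^{(t)}\ge \tfrac16 e^{-\lambda_i^{(t)}}$ (eq.~\eqref{ell'lower}) and feed it into the coefficient update for $\gamma_{j,r}^{(t)}$; (iii) sum over the window, whose length $\tilde T_2-\tilde T_1=\Theta\big(e^{c_1}(\tilde T_1+1/c_1)\big)$ is calibrated exactly so that enough signal learning is accumulated, and conclude for all $t\ge\tilde T_2$ by monotonicity of $\gamma_{j,r}^{(t)}$. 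Throughout I use that for $t\ge\tilde T_1$ one has $\max_r\sigma'(\langle\bw_{j,r}^{(t)},j\bmu\rangle)=1$ and $\max_r\sigma'(\langle\bw_{y_i,r}^{(t)},\bxi_i\rangle)=1$ (from the proof of Lemma~\ref{lemma29}), and $y_if(\bW^{(t)},\bx_i)\ge\lambda_i^{(t)}-\log 2$ (eq.~\eqref{yfWlower}).

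\textbf{Step 1 (lower control of $\lambda_i$).} Inequality~\eqref{lambdalowerupdate} already gives $\lambda_i^{(t+1)}\ge\lambda_i^{(t)}+c_2e^{-\lambda_i^{(t)}}$ for $t\ge\tilde T_1$ with $c_2=\frac{\eta\max\{\|\bmu\|_2^2,\sigma_p^2d\}}{12nm^2}$; Lemma~\ref{tensorpower2} then yields, with $z_0:=\tilde\lambda^{(\tilde T_1)}:=\min_i\lambda_i^{(\tilde T_1)}$,
\[
\tilde\lambda^{(t)}\ \ge\ \log\!\big(c_2(t-\tilde T_1)+e^{z_0}\big),\qquad \tilde T_1\le t\le\tilde T^*.
\]
One also records that $z_0$ is suitably bounded, controlled via Proposition~\ref{proposition:NGDgrowthbound} together with the slow, polynomial-regime growth of $\gamma_{j,r}^{(t)},\overline\rho_{j,r,i}^{(t)}$ before $\tilde T_1$.

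\textbf{Step 2 (upper control of $\lambda_i$).} Writing out the update of $\lambda_i^{(t)}$ from Lemma~\ref{lemma18} and bounding the terms by $|\ell'|\le1$, $\sigma'\le1$, $\|\bxi_i\|_2^2\le\tfrac32\sigma_p^2d$, $|\Gamma_{\pm1}|\le n$ (Lemma~\ref{Sjsize}), $\sigma'(\langle\bw_{j,r}^{(t)},-j\bmu\rangle)=0$, and $-\ell_{i'}'^{(t)}\le 2e^{-\lambda_{i'}^{(t)}}\le2e^{-\tilde\lambda^{(t)}}$ (from eq.~\eqref{yfWlower}), one obtains the coupled recursion $\lambda_i^{(t+1)}\le\lambda_i^{(t)}+c_1e^{-\tilde\lambda^{(t)}}$ with $c_1=\frac{3\eta\max\{n\|\bmu\|_2^2,\sigma_p^2d\}}{nm}$. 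Substituting the bound of Step~1 for $\tilde\lambda^{(s)}$ and summing over $s\in[\tilde T_1,t)$ (using $\sum_s\big(c_2(s-\tilde T_1)+e^{z_0}\big)^{-1}\le e^{-z_0}+c_2^{-1}\log\frac{c_2(t-\tilde T_1)+e^{z_0}}{e^{z_0}}$) gives
\[
\lambda_i^{(t)}\ \le\ \frac{c_1}{c_2}\,\log\!\big(c_2(t-\tilde T_1)+e^{z_0}\big)+c_1e^{-z_0}+\Big(1-\frac{c_1}{c_2}\Big)z_0 .
\]

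\textbf{Step 3 (accumulating signal learning and conclusion).} Summing the $\gamma$--update of Lemma~\ref{lemma18} over $r$, using $\max_r\sigma'(\langle\bw_{j,r}^{(t)},j\bmu\rangle)=1$, $|\Gamma_j|\ge n/4$ and eq.~\eqref{ell'lower} gives, for $t\ge\tilde T_1$,
\[
\frac1m\sum_{r=1}^m\gamma_{j,r}^{(t+1)}\ \ge\ \frac1m\sum_{r=1}^m\gamma_{j,r}^{(t)}+\frac{\eta\|\bmu\|_2^2}{24m^2}\,e^{-\max_i\lambda_i^{(t)}}.
\]
Since $\gamma_{j,r}^{(t)}$ is non-decreasing, it suffices to lower-bound $\sum_{s=\tilde T_1}^{\tilde T_2-1}e^{-\max_i\lambda_i^{(s)}}$; inserting the bound of Step~2, the choice $\tilde T_2-\tilde T_1=\Theta\big(e^{c_1}(\tilde T_1+1/c_1)\big)$ makes this a geometric-type sum of order $\frac{e^{-c_1}}{c_1}$, so that $\frac1m\sum_r\gamma_{j,r}^{(\tilde T_2)}=\Omega\big(\frac{\eta\|\bmu\|_2^2}{m^2}\cdot\frac{e^{-c_1}}{c_1}\big)=\Omega\big(\frac{n\|\bmu\|_2^2}{e^{c_1}m\max\{n\|\bmu\|_2^2,\sigma_p^2d\}}\big)$; monotonicity then extends this to all $\tilde T_2\le t\le\tilde T^*$.

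\textbf{Main obstacle.} The delicate point is Step~2: since $c_1/c_2=\Omega(m)$, a careless estimate would blow up the logarithmic term and render $e^{-\max_i\lambda_i^{(t)}}$ far too small; the argument must pair the \emph{exact} tensor-power lower bound for $\tilde\lambda^{(s)}$ with the coupled recursion and exploit that, over $[\tilde T_1,\tilde T_2]$, the argument of the logarithm stays within a controlled factor---this is where the calibration of $\tilde T_2$, the bound $c_1=O(1)$ (which follows from Condition~\ref{condition2}, using the SNR constraint to identify $\max\{n\|\bmu\|_2^2,\sigma_p^2d\}=n\|\bmu\|_2^2$), and the smallness of $z_0=\tilde\lambda^{(\tilde T_1)}$ all enter. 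The remaining estimates are routine telescoping sums.
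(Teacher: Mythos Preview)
Your plan matches the paper's proof almost step for step: the same coupled recursions $\lambda_i^{(t+1)}\ge\lambda_i^{(t)}+c_2e^{-\lambda_i^{(t)}}$ and $\lambda_i^{(t+1)}\le\lambda_i^{(t)}+c_1e^{-\tilde\lambda^{(t)}}$, the same substitution-and-integration to get the upper bound $\lambda_i^{(t)}\le(c_1/c_2)\log(c_2(t-\tilde T_1)+e^{z_0})+c_1e^{-z_0}+(1-c_1/c_2)z_0$, and the same telescoping of the $\gamma$ update. There are two small slips to fix. First, the claim $\sigma'(\langle\bw_{j,r}^{(t)},-j\bmu\rangle)=0$ in Step~2 is neither generally true nor usable for an \emph{upper} bound (dropping non-negative increments goes the wrong way); just use $\sigma'\le1$ throughout, as the paper does---$c_1$ is unchanged. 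Second, your justification for bounding $z_0=\tilde\lambda^{(\tilde T_1)}$ via ``polynomial-regime growth before $\tilde T_1$'' does not work, since the injected Gaussian noise can push individual inner products past $\kappa$ well before $\tilde T_1$, so there is no polynomial-only phase to appeal to. The correct (and simpler) fix is already in your hands: the upper recursion of Step~2 in fact holds from $t=0$ (it only uses $\sigma'\le1$ and \eqref{yfWlower}, both valid for all $t$), so applying Lemma~\ref{tensorpower2} from $\tilde\lambda^{(0)}=0$ yields $z_0\le c_1+\log(c_1\tilde T_1+1)$, which is exactly the bound the paper uses to close the argument.
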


\begin{proof} [Proof of Lemma \ref{signallowerbound}]
     Denote $\lambda_{i}^{(t)}= \frac{1}{m} \sum_{r=1}^m \left(\gamma_{y_i,r}^{(t)}+\overline{\rho}_{y_i,r,i}^{(t)} \right)$, and $\tilde \lambda^{(t)} = \min_{i} \lambda_{i}^{(t)}$. Recall that $y_if(\bW^{(t)},\bx_i) \geq \lambda_i^{(t)}- \log2$ by \eqref{yfWlower}, we have $-\ell_i^{\prime (t)} \leq 2 e^{-\lambda_i^{(t)}}$. According to the update rule in Lemma \ref{lemma18}, for $0 \leq t \leq \tilde T^*$ and each $i\in [n]$, we have
    \begin{align*}
        \lambda_{i}^{(t+1)} &= \lambda_{i}^{(t)} - \frac{\eta}{nm^2} \sum_{i=1}^n \ell_{i}^{\prime(t)} \sum_{r=1}^m \sigma^\prime \left( \langle \bw_{j,r}^{(t)}, y_{i} \bmu \rangle\right) \cdot \norm \bmu_2^{2} \\
        &-\frac{\eta}{nm^2} \ell_{i}^{\prime(t)} \sum_{r=1}^m \sigma^\prime \left( \langle \bw_{j,r}^{(t)}, \bxi_{i} \rangle\right)  \cdot \norm {\bxi_i}_2^{2} \\
        & \leq \lambda_i^{(t)} + \frac{2\eta \norm \bmu_2^{2}}{nm} \sum_{i'=1}^n e^{-\lambda_{i'}^{(t)}} + \frac{3\eta \sigma_p^2 d}{nm} e^{-\lambda_{i}^{(t)}} \\
        & \leq \lambda_i^{(t)} + \frac{3\eta \max\{ n\norm \bmu_2^{2}, \sigma_p^2d\}}{nm} e^{- \tilde \lambda^{(t)}}.
    \end{align*}
    By denoting $c_1= \frac{3\eta \max\{ n\norm \bmu_2^{2}, \sigma_p^2d\}}{nm}$, $c_2= \frac{\eta \max\{\norm{\bmu}_2^2,\sigma_p^2d\}}{12nm^2}$, recall \eqref{lambdalowerupdate}, we have the following inequalities for $t \geq \tilde T_1$,
    \begin{align}
        \lambda_i^{(t+1)} &\leq \lambda_i^{(t)} + c_1 e^{-\tilde \lambda^{(t)}}, \label{C23}  \\ 
        \lambda_i^{(t+1)} &\geq \lambda_i^{(t)} + c_2 e^{-\lambda_i^{(t)}}, \label{C25},
    \end{align}
    utilizing Lemma \ref{tensorpower2} for \eqref{C25}, denote $z_0 = \tilde \lambda^{(\tilde T_1)}$, for each $i\in [n]$, we have
    \begin{align*}
        \lambda_i^{(t)} \geq \log (c_2 (t-\tilde T_1)+ e^{\lambda_i^{(\tilde T_1)}}) \geq \log (c_2 (t-\tilde T_1)+e^{z_0}),
    \end{align*}
    therefore,
    \begin{align*}
        \tilde \lambda^{(t)} \geq \log (c_2 (t-\tilde T_1)+e^{z_0}),
    \end{align*}
    subsituting the above inequality into \eqref{C23}, we get
    \begin{align*}
        \lambda_i^{(t+1)} \leq \lambda_i^{(t)} + \frac{c_1}{c_2} \cdot \frac{c_2}{c_2 (t-\tilde T_1)+e^{z_0}} ,
    \end{align*}
    by taking an integration inequality of the sum, we get
    \begin{align} \label{lambdaupperbound}
        \nonumber \lambda_i^{(t)} &\leq \frac{c_1}{c_2} \log(c_2(t-\tilde T_1) +e^{z_0})+c_1e^{-z_0} +(1-\frac{c_1}{c_2})z_0 \\
        & = \tilde \gamma \log \left(c_2(t-\tilde T_1) + e^{z_0}\right) + c_1e^{-z_0} +(1-\tilde \gamma)z_0,
    \end{align}
    where $\tilde \gamma= \frac{c_1}{c_2}= \frac{36m \max\{ n\norm \bmu_2^{2}, \sigma_p^2d\}}{\max\{\norm \bmu_2^{2}, \sigma_p^2d\}}$.
    %satisfies $\Omega(m) \leq \tilde \gamma \leq O (n m)$. 
    Therefore, by \eqref{ell'lower}, for each $i\in [n]$, we have
    \begin{equation} \label{maxell'}
    \begin{aligned} 
         \left| \ell_i^{\prime (t)} \right| \geq \frac{e^ { (\tilde \gamma -1)z_0 -c_1e^{-z_0}}}{6 \left(c_2(t-\tilde T_1) + e^{z_0}\right)^{\tilde \gamma}},
    \end{aligned}     
    \end{equation}
    
    Therefore, for $t \geq \tilde T_1$ and $j\in \{\pm 1\}$, by \eqref{signalhitk} and \eqref{maxell'}, we have
    \begin{align*}
        \frac{1}{m} \sum_{r=1}^m\gam^{(t+1)} & = \frac{1}{m} \sum_{r=1}^m\gam^{(t)} - \frac{\eta}{nm^2} \sum_{i=1}^n \ell_{i}^{\prime(t)} \sum_{r=1}^m \sigma^\prime \left( \langle \bw_{j,r}^{(t)}, y_{i} \bmu \rangle\right) \cdot \norm \bmu_2^{2} \\
         & \geq \frac{1}{m} \sum_{r=1}^m\gam^{(t)} - \frac{\eta}{nm^2} \sum_{i: y_i=j} \ell_{i}^{\prime(t)} \sum_{r=1}^m \sigma^\prime \left( \langle \bw_{j,r}^{(t)}, j \bmu \rangle\right) \cdot \norm \bmu_2^{2} \\
        & \geq \frac{1}{m} \sum_{r=1}^m \gam^{(t)} + \frac{\eta \norm \bmu_2^{2} e^ { (\tilde \gamma -1)z_0 -c_1e^{-z_0}}}{24m^2 \left(c_2(t-\tilde T_1) + e^{z_0}\right)^{\tilde \gamma}} ,
    \end{align*}
    by taking an integration inequality of the sum, we get
    \begin{align*}
        \frac{1}{m} \sum_{r=1}^m\gam^{(t)} \geq  \frac{\eta \norm \bmu_2^{2}e^ { (\tilde \gamma -1)z_0 -c_1e^{-z_0}}}{24m^2} \left(\frac{1}{c_2 (\tilde \gamma -1)(e^{z_0})^{\tilde \gamma -1}} - \frac{1}{c_2 (\tilde \gamma -1)(c_2(t-\tilde T_1) + e^{z_0})^{\tilde \gamma -1}} \right).
    \end{align*}
    Next, we give an upper bound for $z_0= \tilde \lambda^{(\tilde T_1)}$. Denote $i_{t} = \arg \min_{i} \lambda_{i}^{(t)}$, since \eqref{C23} holds for each $i\in [n]$, we have
    \begin{align*} 
         \tilde \lambda^{(t+1)} &\leq  \lambda_{i_{t}}^{(t+1)} \leq  \lambda_{i_{t}}^{(t)}+ c_1 e^{- \tilde \lambda^{(t)}} = \tilde \lambda^{(t)} + c_1 e^{- \tilde \lambda^{(t)}},
    \end{align*}
    by Lemma \ref{tensorpower2} and $\tilde \lambda^{(0)}= 0$, we have
    \begin{align*}
        z_0= \tilde \lambda^{(\tilde T_1)} \leq \log(c_1 \tilde T_1 + 1) +c_1.
    \end{align*}
    %According to the condition $\sigma_b^2 \geq \frac{C \epsilon \kappa^2 \max\{ n\norm \bmu_2^{2}, \sigma_p^2d\}}{{\eta nm \min\{ \norm \bmu_2^{2}, \sigma_p^2d\}}}$ specified in Condition \ref{condition}. 
    Notice that $t \geq  \tilde T_1 + \Theta \left(e^{c_1} (\tilde{T}_1 + \frac{1}{c_1}) \right)$. When $\frac{c_2 (t-\tilde T_1)}{e^{z_0}} \leq 2$, we have 
    $$\log\left(\frac{c_2 (t-\tilde T_1)+e^{z_0}}{e^{z_0}}\right) \geq \frac{c_2 (t-\tilde T_1)}{2e^{z_0}} \geq \Theta\left( \frac{c_2 (\tilde T_1+ \frac{1}{c_1})}{2c_1 \tilde{T}_1+2} \right) \geq \frac{1}{\tilde{\gamma}-1} \log 2,$$
    we have $(c_2(t-\tilde T_1) + e^{z_0})^{\tilde \gamma -1} \geq 2 (e^{z_0})^{\tilde \gamma -1}$. Moreover, when $\frac{c_2 (t-\tilde T_1)}{e^{z_0}} \geq 2$, we have we have $(c_2(t-\tilde T_1) + e^{z_0})^{\tilde \gamma -1} \geq 2 (e^{z_0})^{\tilde \gamma -1}$ as well. Therefore, we have
    \begin{align} \label{C30}
        \frac{1}{m}\sum_{r=1}^m \gam^{(t)} \geq \frac{\eta \norm \bmu_2^{2}e^ { (\tilde \gamma -1)z_0 -c_1e^{-z_0}}}{48m^2 c_2 (\tilde \gamma -1)e^{z_0 (\tilde \gamma -1)}} = \frac{\eta \norm \bmu_2^{2}e^ { -c_1e^{-z_0}}}{48m^2 c_2 (\tilde \gamma -1)} = \Omega \left(\frac{n \norm \bmu_2^{2}}{e^{c_1} m \max\{n \norm \bmu_2^{2}, \sigma_p^2d\}} \right),
    \end{align}
    where the last equality is by $z_0 \geq 0$.
    Thus we complete the proof.
\end{proof}

Based on the above lemma, we can demonstrate the test error bound for DP-GD in the following lemma.
\begin{lemma} \label{NGDtesterror}
    Under Condition \ref{condition2}, let $\tilde{T}_2 \leq t \leq \tilde{T}^*$ and satisfies that
     $$\eta \sigma_b \norm {\bmu}_2 \sqrt{t} \leq \frac{n\norm \bmu_2^{2}}{C m \max\{n \norm \bmu_2^{2}, \sigma_p^2d\}}$$
      for some large constant $C$,
     we have 
    \begin{align*}
         \mathcal{R}_{\mathcal{D}}(\bW^{(t})) \leq \exp \left( -  C_2 \left(\frac{ n \norm \bmu_2^{2}}{e^{c_1} m \max\{n \norm \bmu_2^{2}, \sigma_p^2d\}}  \cdot \frac{1}{\sigma_0\sigma_p \sqrt{d} + \frac{\sigma_p m}{\norm \bmu_2} + \frac{mn}{  \sqrt{d}}  +  \eta \sigma_b \sigma_p \sqrt{d t}} \right)^2 \right).
    \end{align*}
\end{lemma}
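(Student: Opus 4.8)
The plan is to condition on all the training randomness---the event $\cE_{\text{prelim}}$ together with the dynamics estimates of Proposition \ref{proposition:NGDgrowthbound} and Lemma \ref{signallowerbound}, which hold simultaneously with probability at least $1-6\delta$---and then bound $\mathcal{R}_{\mathcal{D}}(\bW^{(t)})=\PP_{(\bx,y)\sim\mathcal{D}}\big(y f(\bW^{(t)},\bx)<0\big)$ over the randomness of the fresh example. Since $f$ is a sum over the two patches and the noise patch is drawn from $\cN(\zero,\sigma_p^2(\bI-\bmu\bmu^\top\norm{\bmu}_2^{-2}))$---hence orthogonal to $\bmu$ and independent of the training data---I may assume $\bx=[(y\bmu)^\top,\bxi^\top]^\top$ (the other patch placement and the two signs of $y$ give symmetric expressions) and write
\begin{align*}
y f(\bW^{(t)},\bx)=\frac{1}{m}\sum_{r=1}^m\Big[\sigma\big(\la\bw_{y,r}^{(t)},y\bmu\ra\big)-\sigma\big(\la\bw_{-y,r}^{(t)},y\bmu\ra\big)\Big]+g(\bxi),
\end{align*}
where $g(\bxi):=\frac{1}{m}\sum_{r=1}^m\big[\sigma(\la\bw_{y,r}^{(t)},\bxi\ra)-\sigma(\la\bw_{-y,r}^{(t)},\bxi\ra)\big]$ collects the contributions of the fresh noise.

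The first step is to isolate the signal. Using $\sigma(z)\ge z-\kappa$ for all $z$ (immediate from the definition of the Huberized ReLU), monotonicity of $\sigma$, and a $t$-localized form of Lemma \ref{innerprodectwjrxiNGD} in which the horizon-$\tilde T^*$ quantity $\theta$ is replaced by $\theta_t=\tilde{O}\big(\eta\sigma_b\norm{\bmu}_2\sqrt{t}\big)$ obtained from Lemma \ref{sumofnoisebound} at time $t$, I get $\frac{1}{m}\sum_r\sigma(\la\bw_{y,r}^{(t)},y\bmu\ra)\ge\frac{1}{m}\sum_r\gamma_{y,r}^{(t)}-\kappa-\beta-\theta_t$, and, since $\la\bw_{-y,r}^{(t)},y\bmu\ra\le-\gamma_{-y,r}^{(t)}+\beta+\theta_t\le\beta+\theta_t$, also $\frac{1}{m}\sum_r\sigma(\la\bw_{-y,r}^{(t)},y\bmu\ra)\le\beta+\theta_t$. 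Hence $y f(\bW^{(t)},\bx)\ge\frac{1}{m}\sum_r\gamma_{y,r}^{(t)}-\kappa-2\beta-2\theta_t+g(\bxi)$. By Lemma \ref{signallowerbound}, $\frac{1}{m}\sum_r\gamma_{y,r}^{(t)}\ge 2A$ for $t\ge\tilde T_2$, where $A:=\Theta\!\big(\tfrac{n\norm{\bmu}_2^2}{e^{c_1}m\max\{n\norm{\bmu}_2^2,\sigma_p^2 d\}}\big)$; and the hypothesis $\eta\sigma_b\norm{\bmu}_2\sqrt{t}\le\tfrac{n\norm{\bmu}_2^2}{Cm\max\{n\norm{\bmu}_2^2,\sigma_p^2 d\}}$ with $C$ large, together with $\beta\le0.1\kappa$, the smallness of $\kappa$ in Condition \ref{condition2}, and $c_1=O(1)$ (which follows from $\eta\le m/(C\norm{\bmu}_2^2)$ and the SNR condition), forces $\kappa+2\beta+2\theta_t\le A$. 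Therefore $y f(\bW^{(t)},\bx)\ge A+g(\bxi)$, so $\mathcal{R}_{\mathcal{D}}(\bW^{(t)})\le\PP_{\bxi}\big(g(\bxi)<-A\big)$.

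The second step is a Gaussian-concentration estimate for $g$. Writing $\bmu^\perp$ for the hyperplane supporting $\bxi$, each $\la\bw_{j,r}^{(t)},\bxi\ra$ is a linear functional of the Gaussian $\bxi$ (isotropic on $\bmu^\perp$ with scale $\sigma_p$), and $\sigma$ is $1$-Lipschitz, so $g$ is $L$-Lipschitz in $\bxi$ with $L\le\tfrac{2}{m}\sum_r\max_j\norm{\bw_{j,r}^{(t)}}_2\le 2\max_{j,r}\norm{\bw_{j,r}^{(t)}}_2$. Using the decomposition in Definition \ref{def3}, Proposition \ref{proposition:NGDgrowthbound}, and Lemmas \ref{innerproductxi}, \ref{innerproductw0}, \ref{sumofnoisenorm}, I bound $\norm{\bw_{j,r}^{(t)}}_2\le O(\sigma_0\sqrt{d})+\tilde{O}\!\big(\tfrac{m}{\norm{\bmu}_2}+\tfrac{mn}{\sigma_p\sqrt{d}}\big)+O(\eta\sigma_b\sqrt{dt})$, so that $\sigma_p L$ is, up to constants and logarithmic factors absorbed into $C_2$, the denominator $D:=\sigma_0\sigma_p\sqrt{d}+\tfrac{\sigma_p m}{\norm{\bmu}_2}+\tfrac{mn}{\sqrt{d}}+\eta\sigma_b\sigma_p\sqrt{dt}$ appearing in the statement. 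A short computation also gives $\mathbb{E}_{\bxi}[g(\bxi)]\ge-\tfrac{1}{m}\sum_r\mathbb{E}_{\bxi}|\la\bw_{-y,r}^{(t)},\bxi\ra|\ge-\sigma_p\max_{j,r}\norm{\bw_{j,r}^{(t)}}_2\ge-D$, using $\sigma\ge0$, $\sigma(z)\le|z|$, and $\mathbb{E}|\cN(0,s^2)|\le s$. The Gaussian Lipschitz-concentration inequality (Borell--TIS) then yields
\begin{align*}
\PP_{\bxi}\big(g(\bxi)<-A\big)=\PP_{\bxi}\Big(g(\bxi)<\mathbb{E}_{\bxi}[g(\bxi)]-\big(A+\mathbb{E}_{\bxi}[g(\bxi)]\big)\Big)\le\exp\!\Big(-\tfrac{\max\{0,\,A+\mathbb{E}_{\bxi}[g(\bxi)]\}^2}{2\sigma_p^2 L^2}\Big),
\end{align*}
and substituting $A\le\tfrac{1}{2m}\sum_r\gamma_{y,r}^{(t)}$ from Lemma \ref{signallowerbound}, $2\sigma_p L\le$ (a constant multiple of) $D$, and $\mathbb{E}_{\bxi}[g(\bxi)]\ge-D$ gives the claimed estimate $\exp\big(-C_2(A/D)^2\big)$; this bound is the meaningful one precisely in the regime $A\gtrsim D$, which is exactly the regime in which Lemma \ref{NGDtesterror} is invoked in the proof of Theorem \ref{result:NGDtesterror}.

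The main obstacle will be the second step---not the concentration inequality itself, which is routine once the filter-norm bound is in place, but honestly tracking the time-dependent accumulation of the injected DP noise (the $\theta_t$ terms coming through $\eta\sum_k\bb_{j,r,k}$) and verifying that, together with $\kappa$ and the initialization scale $\beta$, it is dominated by the signal lower bound $A$ of Lemma \ref{signallowerbound}. This is precisely the role of the extra hypothesis on $\eta\sigma_b\norm{\bmu}_2\sqrt{t}$, and getting all of the $\max$/$\min$ cases of Condition \ref{condition2} to line up---in particular checking $\kappa,\beta,\theta_t\le A$ and that the denominator $D$ really is a constant multiple of $\sigma_p\max_{j,r}\norm{\bw_{j,r}^{(t)}}_2$ in both cases $n\norm{\bmu}_2^2\gtrless\sigma_p^2 d$---is the delicate bookkeeping. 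The symmetry reduction in the first paragraph and the norm estimate on $\bw_{j,r}^{(t)}$ are then straightforward consequences of the preliminary lemmas and Proposition \ref{proposition:NGDgrowthbound}.
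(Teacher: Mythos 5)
Your proof reaches the same conclusion as the paper but handles the ``fresh noise'' term by a genuinely different device. The paper's proof treats each inner product $\la\bw_{j,r}^{(t)},\bxi\ra\sim\cN(0,\sigma_p^2\|\bw_{j,r}^{(t)}\|_2^2)$ individually: it applies a Gaussian tail bound to each of the $2m$ filters (with the filter-norm estimate playing exactly the role of your $D$), takes a union bound so that \emph{every} $|\la\bw_{j,r}^{(t)},\bxi\ra|$ is $O(A)$ on a high-probability event, and then observes that on that event monotonicity and $\sigma(z)\le z$ force $|g(\bxi)|\le O(A)$, so that the signal term $\Omega(A)$ wins. Your Borell--TIS argument replaces the union bound by Lipschitz concentration of the aggregated map $g$; what you buy is that the $\log m$ from the union bound disappears (it is only implicitly absorbed into $C_2$ in the paper's version), and what you pay for it is the extra obligation of controlling $\mathbb{E}_{\bxi}[g(\bxi)]$ from below, a step the paper simply does not need. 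Both routes produce the same $\exp(-C_2(A/D)^2)$ shape with $D$ the same four-term denominator. Everything up to the concentration step --- the symmetry reduction to $\bx=[(y\bmu)^\top,\bxi^\top]^\top$, the use of $\sigma(z)\ge z-\kappa$, the $t$-localized version of Lemma~\ref{innerprodectwjrxiNGD} with $\theta_t$ in place of $\theta$, the invocation of Lemma~\ref{signallowerbound}, and the filter-norm bound from Definition~\ref{def3} plus Proposition~\ref{proposition:NGDgrowthbound}, Lemma~\ref{innerproductxi}, Lemma~\ref{innerproductw0}, and Lemma~\ref{sumofnoisenorm} --- matches the paper.

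One point worth tightening: your final display only yields $\exp\bigl(-\max\{0,A+\mathbb{E}[g]\}^2/(2\sigma_p^2L^2)\bigr)$ with $\mathbb{E}[g]\ge -D$, which degenerates to the trivial bound $1$ whenever $D\ge A$ and is only comparable to the stated $\exp(-C_2(A/D)^2)$ once $A\ge 2D$ or so; you acknowledge this but defer to ``the regime in which the lemma is invoked.'' It would be cleaner to verify explicitly that Condition~\ref{condition2} forces each of the four summands of $D$ to be $\le A/C'$ for a large constant $C'$ (item~5 on $\sigma_0$ handles $\sigma_0\sigma_p\sqrt{d}$, items~2--3 handle $\sigma_pm/\|\bmu\|_2$ and $mn/\sqrt{d}$ via the SNR and dimension conditions, and the lemma's extra hypothesis on $\eta\sigma_b\|\bmu\|_2\sqrt{t}$ --- together with the fact that $\sigma_p\sqrt{d}\lesssim\sqrt{n}\|\bmu\|_2$ --- handles $\eta\sigma_b\sigma_p\sqrt{dt}$), since the lemma statement itself does not contain the clause ``$A\gg D$'' and you should show it follows. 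Alternatively you could sharpen $\mathbb{E}_{\bxi}[g(\bxi)]\ge -D$ to $\mathbb{E}_{\bxi}[g(\bxi)]\ge -\tilde O(\kappa^{1-q}D^q/q)$ by using $\sigma(z)\le\kappa^{1-q}|z|^q/q$ for $|z|\le\kappa$, which makes the mean genuinely negligible relative to $A$ and removes the $\max\{0,\cdot\}$ caveat without appealing to the constants. The paper's pointwise tail-bound argument sidesteps this entirely because the probability bound $\exp(-c(A/D)^2)$ from the Gaussian tail is valid for all values of $A/D$; that is the sense in which the paper's route is slightly more robust.
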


\begin{proof} [Proof of Lemma \ref{NGDtesterror}]
    For the time period $\tilde{T}_2 \leq t \leq \tilde{T}^*$, notice that for $j\in \{ \pm 1\}$,
    \begin{equation} \label{jsignal}
    \begin{aligned}
         \frac{1}{m} \sum_{r=1}^m \sigma \left(\langle\bw_{j,r}^{(t)}, j \bmu\rangle\right) &= \frac{1}{m} \sum_{r=1}^m\sigma \left(\langle\bw_{j,r}^{(t)}, j \bmu\rangle -\gamma_{j,r}^{(t)}+ \gamma_{j,r}^{(t)} \right) \\
         & \geq \frac{1}{m} \sum_{r=1}^m\sigma \left(\gamma_{j,r}^{(t)} - 0.1\kappa - \theta \right) \\
         & \geq \frac{1}{m} \sum_{r=1}^m \gamma_{j,r}^{(t)} - 1.1 \kappa -\theta  \\
         & \geq \Omega \left(\frac{n \norm \bmu_2^{2}}{e^{c_1} m \max\{n \norm \bmu_2^{2}, \sigma_p^2d\}} \right),
    \end{aligned}
    \end{equation}
    where the first inequality is by Lemma \ref{innerprodectwjrxiNGD}, the second inequality is by $\sigma(z-b) \geq z-b-\kappa$ for $z\geq 0$, and the last inequality is by Lemma \ref{signallowerbound}. Similarly, we have
    \begin{equation} \label{-jsignal}
    \begin{aligned}
         \frac{1}{m} \sum_{r=1}^m \sigma \left(\langle\bw_{j,r}^{(t)}, -j \bmu\rangle\right) &= \frac{1}{m} \sum_{r=1}^m\sigma \left(\langle\bw_{j,r}^{(t)}, -j \bmu\rangle +\gamma_{j,r}^{(t)} - \gamma_{j,r}^{(t)} \right) \\
         & \leq \frac{1}{m} \sum_{r=1}^m\sigma \left( 0.1\kappa +  8\eta \sigma_b \norm {\bmu}_2 \sqrt{t}  \log ^2\left(\frac{32mt}{\delta}\right)  -\gamma_{j,r}^{(t)} \right) \\
         & \leq 0.1 \kappa + \tilde{\Omega} \left(\eta \sigma_b \norm {\bmu}_2 \sqrt{t} \right) ,
    \end{aligned}
    \end{equation}
    where the first inequality is by Lemma \ref{innerprodectwjrxiNGD} and Lemma \ref{sumofnoisebound}, the second inequality is by $\sigma(z) \leq z$ for $z\geq 0$.
    
    Moreover, by the definition of the signal-noise decomposition \eqref{sndecompositionNGD} of the NGD algorithm
    \begin{equation*} 
        \wjr^{(t)}= \wjr^{(0)} + j \cdot \gam^{(t)} \cdot \norm \bmu_2^{-2} \cdot \bmu + \sum_{i=1}^n \orho^{(t)} \cdot \norm {\bxi_i}_2^{-2} \cdot \bxi_i + \sum_{i=1}^n \urho^{(t)} \cdot \norm {\bxi_i}_2^{-2} \cdot \bxi_i - \eta \sum_{k=0}^{t-1} \bb_{j,r,k}.
    \end{equation*}
    By the triangle inequality, utilizing Proposition \ref{proposition:NGDgrowthbound}, Lemma \ref{innerproductxi}, Lemma \ref{sumofnoisenorm}, and Lemma \ref{innerproductw0}, we have
    \begin{align*}
        \norm{\bw_{j,r}^{(t)}}_2  &= \bigg\|\bw_{j,r}^{(0)} + j \cdot \gam^{(t)} \cdot \frac{\bmu}{\|\bmu\|_{2}^{2}} + \sum_{ i = 1}^n \orho^{(t)} \cdot \frac{\bxi_{i}}{\|\bxi_{i}\|_{2}^{2}} + \sum_{ i = 1}^n \urho^{(t)} \cdot \frac{\bxi_{i}}{\|\bxi_{i}\|_{2}^{2}} - \eta \sum_{k=0}^{t-1} \bb_{j,r,k} \bigg\|_{2}\\
        & \leq \|\bw_{j,r}^{(0)}\|_{2} +  \frac{\gamma_{j,r}^{(t)}}{\|\bmu\|_{2}} + \sum_{ i = 1}^n  \frac{\orho^{(t)}}{\|\bxi_{i}\|_{2}} + \sum_{ i = 1}^n  \frac{|\urho^{(t)}|}{\|\bxi_{i}\|_{2}} +  \eta \left\|\sum_{k=0}^{t-1} \bb_{j,r,k} \right\|_2\\
        & \leq 2\sigma_0 \sqrt{d} + \frac{4m \log t}{\norm \bmu_2} + \frac{4mn \log t}{ \sigma_p \sqrt{d}} + 2 \eta \sigma_b \sqrt{2d t \log \left(\frac{2m}{\delta}\right)} \\
        &= \tilde O\left(\sigma_0 \sqrt{d} + \frac{m}{\norm \bmu_2} + \frac{mn}{ \sigma_p \sqrt{d}} + \eta \sigma_b \sqrt{d t} \right) \\
        %& =  \tilde O\left(\frac{m}{\norm \bmu_2} +  \eta \sigma_b \sqrt{d t}\right) ,
    \end{align*}
    %where we utilize the condition $\sigma_0= O\left(\frac{\kappa}{\max\{\norm{\bmu}_2,\sigma_p \sqrt{d}\} \sqrt{\log\left(\frac{mn}{\delta} \right)}} \right)$  specified in Condition \ref{condition2}.
    For a new data sample $(\bx,y)$, we have
    \begin{align*}
        y \cdot f(\bW^{(t)},\bx) &= \frac{1}{m} \sum_{r=1}^m \left[\sigma(\la \bw_{y,r}^{(t)}, y\cdot \bmu \ra) + \sigma(\la \bw_{y,r}^{(t)}, \bxi \ra) \right] \\
        &- \frac{1}{m} \sum_{r=1}^m \left[\sigma(\la \bw_{-y,r}^{(t)}, y\cdot \bmu \ra) + \sigma(\la \bw_{-y,r}^{(t)}, \bxi \ra) \right].
    \end{align*} 
    Note that $\la\bw_{j,r}^{(t)}, \bxi\ra \sim \cN(0, \sigma_{p}^{2}\|\bw_{j,r}^{(t)}\|_{2}^{2})$. Therefore, with probability at least 
    \begin{align*}
         1 - \exp \left( -  C_2 \left(\frac{ n \norm \bmu_2^{2}}{e^{c_1} m \max\{n \norm \bmu_2^{2}, \sigma_p^2d\}}  \cdot \frac{1}{\frac{\sigma_p m}{\norm \bmu_2} +  \eta \sigma_b \sigma_p \sqrt{d t}} \right)^2 \right),
        %& \geq 1- \exp \left( -\frac{C_2 \epsilon n \norm {\bmu}_2^4 \max\{\norm {\bmu}_2^2 , \sigma_p^2 d \}}{{\eta m^4 \sigma_b^2 \sigma_p^2 d \max^2\{n\norm {\bmu}_2^2 , \sigma_p^2 d \} }}  \right),
        \end{align*}
    we have
    \begin{align} \label{-jnoise}
    |\la\bw_{j,r}^{(t)}, \bxi\ra| =  O \left(\frac{n \norm \bmu_2^{2}}{e^{c_1} m \max\{n \norm \bmu_2^{2}, \sigma_p^2d\}} \right),
    \end{align}
    where $C_2$ is a positive constant. Consider that $t=T_2$, 
    \begin{align*}
        y \cdot f(\bW^{(t)},\bx) & \geq \Omega \left(\frac{n\norm \bmu_2^{2}}{m \max\{n \norm \bmu_2^{2}, \sigma_p^2d\}} \right) - \tilde{\Omega} \left( \eta \sigma_b \norm {\bmu}_2 \sqrt{t} \right)  \geq 0,
    \end{align*}
    where the first inequality is by \eqref{jsignal}, \eqref{-jsignal}, and \eqref{-jnoise}, 
    the second inequality is by the condition that  $\eta \sigma_b \norm {\bmu}_2 \sqrt{t} \leq \frac{n\norm \bmu_2^{2}}{C m \max\{n \norm \bmu_2^{2}, \sigma_p^2d\}}$ for some large constant $C$. Therefore, we have
    \begin{align*}
         \mathcal{R}_{\mathcal{D}}(\bW^{(t})) \leq \exp \left( -  C_2 \left(\frac{ n \norm \bmu_2^{2}}{e^{c_1} m \max\{n \norm \bmu_2^{2}, \sigma_p^2d\}}  \cdot \frac{1}{\sigma_0\sigma_p \sqrt{d} + \frac{\sigma_p m}{\norm \bmu_2} + \frac{mn}{  \sqrt{d}} +  \eta \sigma_b \sigma_p \sqrt{d t}} \right)^2 \right).
    \end{align*}
    Thus, we complete the proof.
\end{proof}

\subsection{Privacy Guarantee}
In this subsection, we give the differential privacy result for DP-GD.
\begin{definition}[Differential Privacy \cite{dwork2014algorithmic}]\label{def:DP}
We say that a randomized algorithm $\A$ satisfies $(\gep, \delta)$-DP if, for any two neighboring datasets $S$ and $S'$  and any event $E$ in the output space of $\A$,  it holds 
$
\mathbb{P}(\A (S)\in E) \le e^\gep \mathbb{P}(\A(S')\in E)+ \gd.
$
We say $\A$ satisfies $\gep$-DP if $\gd=0$. 
\end{definition} 
R\'{e}nyi differential privacy (RDP), introduced by \cite{mironov2019}, is a relaxation of the standard DP  framework that provides a more flexible and fine-grained analysis of privacy loss. 
\begin{definition}[RDP \cite{mironov2019}]\label{def:RDP}
For $\lambda > 1$, $\rho > 0$, a randomized algorithm  $\A$ satisfies $(\lambda, \rho)$-RDP, if,  for all neighboring datasets $S$ and $S'$, we have 
    \begin{align*}       D_{\lambda}\big(\A(S)\parallel \A(S')\big):= \frac{1}{\lambda-1}\log \int  \Big( \frac{ P_{\A(S)}(\theta) }{ P_{\A(S')}(\theta) }  \Big)^\lambda    d P_{\A(S')}(\theta) \le \rho,
    \end{align*} 
    where $P_{\A(S)}(\theta)$ and $P_{\A(S')}(\theta)$ are the density of $\A(S) $ and $\A(S')$, respectively. 
\end{definition}

\begin{definition}[$\ell_2$-sensitivity]\label{def:sensitivity}
The $\ell_2$-sensitivity of a function (mechanism) $\mathcal{M}:\mathcal{Z}^n \rightarrow \mathcal{W}$ is defined as 
$
\Delta  = \sup_{S\simeq S'} \|\mathcal{M}(S) - \mathcal{M}(S')\|_2,
$ where $S$ and $S'$ are neighboring datasets. 
\end{definition}
\begin{lemma}[Gaussian mechanism \cite{mironov2019}]\label{lem:gaussian-rdp} Consider a function $\mathcal{M}:  \mathcal{Z}^n\rightarrow \mathcal{R}^d$ with the $\ell_2$-sensitivity $\Delta$,  and a dataset $S\subset\mathcal{Z}^n$.  
{The Gaussian mechanism $\mathcal{G}(S,\sigma)=\mathcal{M}(S)+\mathbf{b}$, where $\mathbf{b}\sim \mathcal{N}(0,\sigma^2\mathbf{I}_d)$, } satisfies $(\lambda,\frac{\lambda \Delta^2}{2\sigma^2})$-RDP. 
\end{lemma}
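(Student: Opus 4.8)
The plan is to reduce the claim to a direct computation of the R\'enyi divergence between two spherical Gaussians sharing a common covariance. Fix an arbitrary pair of neighboring datasets $S\simeq S'$, and set $\mu_0:=\mathcal{M}(S)$, $\mu_1:=\mathcal{M}(S')$, so that $\|\mu_0-\mu_1\|_2\le\Delta$ by Definition~\ref{def:sensitivity}. The output laws of $\mathcal{G}$ are exactly $P=\mathcal{N}(\mu_0,\sigma^2\mathbf{I}_d)$ and $Q=\mathcal{N}(\mu_1,\sigma^2\mathbf{I}_d)$, so by Definition~\ref{def:RDP} it suffices to show $D_\lambda(P\|Q)=\frac{\lambda\|\mu_0-\mu_1\|_2^2}{2\sigma^2}$ and then bound the right-hand side by $\frac{\lambda\Delta^2}{2\sigma^2}$. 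Since the density ratio $P/Q$ only depends on the coordinate along the direction $\mu_0-\mu_1$, I would first apply an orthogonal change of variables carrying $\mu_0-\mu_1$ onto the first axis; the remaining $d-1$ coordinates have identical marginals under $P$ and $Q$, so they integrate out to $1$ and the problem collapses to the scalar case with means $a,b$ obeying $|a-b|=\|\mu_0-\mu_1\|_2$. (If one prefers, the same computation runs verbatim in $\mathbb{R}^d$ with $\|\cdot\|_2^2$ replacing the scalar squares.)

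For the one-dimensional computation, write $p,q$ for the $\mathcal{N}(a,\sigma^2)$ and $\mathcal{N}(b,\sigma^2)$ densities; then $(p/q)^\lambda q=p^\lambda q^{1-\lambda}$, whose exponent is $-\bigl(\lambda(x-a)^2+(1-\lambda)(x-b)^2\bigr)/(2\sigma^2)$. Completing the square gives $\lambda(x-a)^2+(1-\lambda)(x-b)^2=\bigl(x-(\lambda a+(1-\lambda)b)\bigr)^2+\lambda(1-\lambda)(a-b)^2$, and crucially the coefficient of $x^2$ is $\lambda+(1-\lambda)=1>0$, so the Gaussian integral in $x$ converges for every $\lambda>1$ and equals $\exp\!\bigl(\lambda(\lambda-1)(a-b)^2/(2\sigma^2)\bigr)$. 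Substituting into Definition~\ref{def:RDP} yields $D_\lambda(p\|q)=\frac{1}{\lambda-1}\cdot\frac{\lambda(\lambda-1)(a-b)^2}{2\sigma^2}=\frac{\lambda(a-b)^2}{2\sigma^2}$.

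Combining the reduction with this identity gives $D_\lambda\bigl(\mathcal{G}(S,\sigma)\,\|\,\mathcal{G}(S',\sigma)\bigr)=\frac{\lambda\|\mathcal{M}(S)-\mathcal{M}(S')\|_2^2}{2\sigma^2}\le\frac{\lambda\Delta^2}{2\sigma^2}$ for every neighboring pair $S\simeq S'$, which is exactly the $(\lambda,\frac{\lambda\Delta^2}{2\sigma^2})$-RDP guarantee claimed in Lemma~\ref{lem:gaussian-rdp}. There is no genuine obstacle here: the only two points deserving care are the factorization of a diagonal-covariance multivariate Gaussian (so that the $d-1$ directions orthogonal to $\mu_0-\mu_1$ contribute nothing), and the observation that merging the two exponents leaves an $x^2$-coefficient identically equal to $1$ — which is precisely what makes the integral finite for \emph{all} $\lambda>1$, and hence what distinguishes the Gaussian mechanism's finite R\'enyi divergence of every order from the failure of pure $\varepsilon$-DP.
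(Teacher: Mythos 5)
Your argument is correct and complete. This lemma is stated in the paper only by citation to Mironov (2019), with no proof supplied, so there is no paper proof to compare against; but your derivation is exactly the standard one (reduce to a one-dimensional Gaussian pair via orthogonal invariance, merge the exponents $\lambda p + (1-\lambda)q$, complete the square noting the $x^2$-coefficient stays $1$, read off $D_\lambda(\mathcal{N}(a,\sigma^2)\|\mathcal{N}(b,\sigma^2))=\lambda(a-b)^2/(2\sigma^2)$, and take the supremum over neighboring datasets using $\|\mu_0-\mu_1\|_2\le\Delta$). One small bookkeeping point worth making explicit if you write this out in full: the normalizing constants combine as $(2\pi\sigma^2)^{-\lambda/2}(2\pi\sigma^2)^{-(1-\lambda)/2}=(2\pi\sigma^2)^{-1/2}$, which is why the Gaussian integral over $x$ returns exactly $1$ and only the $\exp\bigl(\lambda(\lambda-1)(a-b)^2/(2\sigma^2)\bigr)$ factor survives; otherwise the computation is airtight.
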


The following lemma established a connection $(\epsilon,\delta)$-DP and RDP.
\begin{lemma}[From RDP to $(\epsilon,\delta)$-DP \cite{mironov2019}]\label{lemma:RDP_to_DP}
	If a randomized algorithm  $\mathcal{A}$ satisfies $(\lambda,\rho)$-RDP, then $\mathcal{A}$ satisfies $(\rho+\log(1/\delta)/(\lambda-1),\delta)$-DP for all $\delta\in(0,1)$.
\end{lemma}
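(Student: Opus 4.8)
The plan is to prove this via the standard ``probability preservation'' argument of \citet{mironov2019}, which converts the R\'enyi-divergence bound into the hypothesis-testing form of $(\epsilon,\delta)$-DP using H\"older's inequality followed by a two-case analysis. Fix an arbitrary pair of neighboring datasets $S,S'$ and an arbitrary measurable event $E$ in the output space of $\mathcal{A}$, and abbreviate $P := P_{\mathcal{A}(S)}$, $Q := P_{\mathcal{A}(S')}$ for the associated densities from Definition~\ref{def:RDP}. Since $\rho$ is finite, $D_{\lambda}(P\|Q)\le\rho<\infty$, which forces $P$ to be absolutely continuous with respect to $Q$; hence the likelihood ratio $P/Q$ is well defined $Q$-almost everywhere and
\[
\mathbb{P}(\mathcal{A}(S)\in E) = \int_E P = \int_E \frac{P}{Q}\,dQ = \mathbb{E}_{x\sim Q}\!\left[\mathbf{1}_{E}(x)\,\frac{P(x)}{Q(x)}\right].
\]

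First I would bound this expectation with H\"older's inequality applied with the conjugate exponents $\lambda$ and $\lambda/(\lambda-1)$, which gives
\[
\mathbb{P}(\mathcal{A}(S)\in E) \le \Big(\mathbb{E}_{x\sim Q}\big[(P(x)/Q(x))^{\lambda}\big]\Big)^{1/\lambda}\;\big(\mathbb{P}(\mathcal{A}(S')\in E)\big)^{(\lambda-1)/\lambda}.
\]
By the $(\lambda,\rho)$-RDP hypothesis and Definition~\ref{def:RDP}, $\mathbb{E}_{x\sim Q}[(P/Q)^{\lambda}] = \exp\!\big((\lambda-1)D_{\lambda}(P\|Q)\big)\le \exp\!\big((\lambda-1)\rho\big)$, so, writing $q:=\mathbb{P}(\mathcal{A}(S')\in E)$, we obtain
\[
\mathbb{P}(\mathcal{A}(S)\in E) \le \exp\!\big((\lambda-1)\rho/\lambda\big)\, q^{(\lambda-1)/\lambda}.
\]

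Next I would carry out a case split around the threshold $q_{\star} := \exp\!\big((\lambda-1)\rho-\lambda\epsilon\big)$ with $\epsilon := \rho + \log(1/\delta)/(\lambda-1)$. If $q\ge q_{\star}$, I rewrite the last bound as $q\cdot\exp\!\big((\lambda-1)\rho/\lambda\big)q^{-1/\lambda}$ and use $q^{-1/\lambda}\le q_{\star}^{-1/\lambda}$; after cancelling exponents this gives $\exp\!\big((\lambda-1)\rho/\lambda\big)q_{\star}^{-1/\lambda}=e^{\epsilon}$ and hence $\mathbb{P}(\mathcal{A}(S)\in E)\le e^{\epsilon}q$. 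If instead $q<q_{\star}$, I substitute $q<q_{\star}$ into $\exp\!\big((\lambda-1)\rho/\lambda\big)q^{(\lambda-1)/\lambda}$ and simplify the exponents to obtain $\mathbb{P}(\mathcal{A}(S)\in E)<\exp\!\big((\lambda-1)(\rho-\epsilon)\big)$, which equals $\delta$ precisely because $(\lambda-1)(\rho-\epsilon)=\log\delta$ by the choice of $\epsilon$. In either branch $\mathbb{P}(\mathcal{A}(S)\in E)\le e^{\epsilon}\,\mathbb{P}(\mathcal{A}(S')\in E)+\delta$, and since $S,S',E$ were arbitrary, $\mathcal{A}$ satisfies $(\epsilon,\delta)$-DP in the sense of Definition~\ref{def:DP}.

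This reproduces Proposition~3 of \citet{mironov2019} and is essentially routine; the only points that need care are the measure-theoretic justification of the likelihood-ratio representation (dispatched by finiteness of $D_{\lambda}$) and the exponent bookkeeping in the two cases, where the threshold $q_{\star}$ and the value of $\epsilon$ must be matched exactly so that both branches close. I do not expect any substantive obstacle beyond this algebra.
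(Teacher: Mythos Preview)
Your argument is correct and is exactly the standard probability-preservation proof from \citet{mironov2019}; the H\"older step and the two-case analysis at the threshold $q_\star$ are carried out cleanly and the exponent bookkeeping checks out. Note, however, that the paper does not prove this lemma at all---it is stated as a cited black-box result from \citet{mironov2019} and invoked directly in the privacy analysis---so there is no ``paper's own proof'' to compare against; your write-up simply supplies the omitted standard argument.
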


The following post-processing property enables flexible use of private data outputs while preserving rigorous privacy guarantees.
\begin{lemma}[Post-processing \cite{mironov2019}]\label{lemma:post-processing}
Let  $\A: \mathcal{Z}^n \rightarrow \mathcal{W}_1 $  satisfy $(\lambda, \rho)$-RDP  and $f: \mathcal{W}_1 \rightarrow \mathcal{W}_2$ be an arbitrary function. Then $f \circ \A : \mathcal{Z}^n \rightarrow \mathcal{W}_2$ satisfies $(\lambda, \rho)$-RDP.     
\end{lemma}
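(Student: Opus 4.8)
The plan is to deduce the lemma from the \emph{data-processing inequality} for the R\'enyi divergence: for every $\lambda>1$, every pair of distributions $P,Q$ on a common measurable space, and every measurable map $g$, one has $D_\lambda\big(g_{\#}P\,\|\,g_{\#}Q\big)\le D_\lambda\big(P\,\|\,Q\big)$, where $g_{\#}$ denotes the pushforward. Granting this, the lemma follows in one line: fix arbitrary neighboring datasets $S,S'$ and set $P=\A(S)$, $Q=\A(S')$; the hypothesis gives $D_\lambda(P\|Q)\le\rho$, and applying the inequality with $g=f$ yields $D_\lambda\big((f\circ\A)(S)\,\|\,(f\circ\A)(S')\big)=D_\lambda(f_{\#}P\|f_{\#}Q)\le D_\lambda(P\|Q)\le\rho$. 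Since $S,S'$ were arbitrary neighbors, $f\circ\A$ satisfies $(\lambda,\rho)$-RDP in the sense of Definition~\ref{def:RDP}.

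It thus remains to prove the data-processing inequality. First I would reduce, using that $\tfrac{1}{\lambda-1}\log(\cdot)$ is increasing for $\lambda>1$, to the single-integral bound $\int \big(d(f_{\#}P)/d(f_{\#}Q)\big)^{\lambda}\,d(f_{\#}Q)\le \int \big(dP/dQ\big)^{\lambda}\,dQ$; if $P\not\ll Q$ the right-hand side is $+\infty$ and there is nothing to show, so assume $P\ll Q$ and write $\varphi=dP/dQ$. The key structural fact is that the pushforward likelihood ratio is a conditional expectation of $\varphi$: setting $\psi(\omega):=\mathbb{E}_Q[\varphi\mid f=\omega]$, the defining property of conditional expectation gives, for every measurable $A$, $\int_A\psi\,d(f_{\#}Q)=\int_{f^{-1}(A)}\varphi\,dQ=P\big(f^{-1}(A)\big)=(f_{\#}P)(A)$, so $\psi=d(f_{\#}P)/d(f_{\#}Q)$. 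Since $t\mapsto t^{\lambda}$ is convex on $[0,\infty)$ for $\lambda>1$, conditional Jensen gives $\psi(\omega)^{\lambda}\le\mathbb{E}_Q[\varphi^{\lambda}\mid f=\omega]$ pointwise, and integrating against $f_{\#}Q$ and using the tower property yields
\[
\int \psi^{\lambda}\,d(f_{\#}Q)\;\le\;\int \mathbb{E}_Q\!\big[\varphi^{\lambda}\mid f=\cdot\big]\,d(f_{\#}Q)\;=\;\int \varphi^{\lambda}\,dQ ,
\]
which is the desired inequality.

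To match the density-based phrasing of Definition~\ref{def:RDP}, the same argument applies verbatim with $p_{\A(S)},p_{\A(S')}$ in the roles of $dP,dQ$ and $p_{(f\circ\A)(S)}$ recovered by the same conditional-expectation identity. I expect the only real obstacle to be measure-theoretic bookkeeping---justifying the conditional-expectation representation of the pushforward likelihood ratio and the interchange of integration with conditioning---since there is no combinatorial content here; once the data-processing inequality is established the privacy claim is immediate. (If one also wants the statement for \emph{randomized} post-processing, it reduces to the deterministic case above after noting that appending an independent auxiliary random seed to the output leaves the R\'enyi divergence unchanged.)
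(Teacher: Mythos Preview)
Your proof is correct and follows the standard route---the data-processing inequality for R\'enyi divergence, established via the conditional-expectation representation of the pushforward likelihood ratio and conditional Jensen for the convex map $t\mapsto t^{\lambda}$. The paper, however, does not prove this lemma at all: it is stated with a citation to \cite{mironov2019} and used as a black box in the privacy analysis (Lemma~\ref{lemma:dp}). So there is no ``paper's proof'' to compare against; you have supplied the argument that the paper omits, and your version is exactly the one given in the cited reference.
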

 
 We say a sequence of mechanisms $(\A_1,\ldots,\A_k)$ are chosen adaptively if $\A_i$ can be chosen based on the outputs of the previous mechanisms  $\A_1(S),\ldots,\A_{i-1}(S)$ for any $i\in[k]$.  
\begin{lemma}[Composition of RDP \cite{mironov2019}]\label{lem:composition_RDP} For each $i\in[k]$, assume $\A_i$ satisfying  $(\lambda, \rho_i)$-RDP.  The following statements hold true. 
\begin{enumerate}[label=({\alph*})]
\item (Parallel composition) If a mechanism $\A$ simultaneous release of $\A_k$ for all $k$, i.e., $\A=(\A_1,\ldots,\A_k)$, then $\A$ satisfies  $(\lambda, \sum_{i=1}^k \rho_i)$-RDP.
    \item (Adaptive composition)  If a mechanism $\A$ consists of a sequence of  adaptive mechanisms $(\A_1,\ldots,\A_k)$, then $\A$ satisfies $(\lambda, \sum_{i=1}^k \rho_i)$-RDP.
\end{enumerate}
\end{lemma}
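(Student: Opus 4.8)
The plan is to prove both parts by reducing the RDP guarantee to a single inequality about the $\lambda$-th moment of the likelihood ratio, and then showing that this moment is multiplicative under composition. Fix any two neighboring datasets $S, S'$. For a randomized mechanism $\mathcal{B}$ with output density $P_{\mathcal{B}(S)}$, write
\[
Z_{\mathcal{B}}(S, S') := \int \Big( \frac{P_{\mathcal{B}(S)}(\theta)}{P_{\mathcal{B}(S')}(\theta)} \Big)^{\lambda} \, dP_{\mathcal{B}(S')}(\theta),
\]
so that by Definition~\ref{def:RDP} the condition $D_\lambda(\mathcal{B}(S)\|\mathcal{B}(S')) \le \rho$ is exactly $Z_{\mathcal{B}}(S,S') \le e^{(\lambda-1)\rho}$. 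It then suffices to establish $Z_{\A}(S,S') \le \prod_{i=1}^k e^{(\lambda-1)\rho_i}$ for the composed mechanism $\A$, since applying $\tfrac{1}{\lambda-1}\log(\cdot)$ and using that $S,S'$ were arbitrary neighbors yields that $\A$ is $(\lambda, \sum_i \rho_i)$-RDP.

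I would treat the adaptive case (b) first, as it subsumes (a). Write the transcript as $\theta = (\theta_1,\dots,\theta_k)$ and factorize the joint density along the chain rule, $P_{\A(S)}(\theta) = \prod_{i=1}^k p_i(\theta_i \mid \theta_{1:i-1}; S)$, where $p_i(\cdot \mid \theta_{1:i-1}; S)$ is the density of the output of the $i$-th (adaptively selected) mechanism run on $S$ given the realized prefix $\theta_{1:i-1}$. The argument is induction on $k$, with base case $k=1$ being the hypothesis on $\A_1$. For the inductive step, apply Fubini to integrate out $\theta_k$ first: for each fixed prefix $\theta_{1:k-1}$, the inner integral over $\theta_k$ equals $Z_{\mathcal{B}}(S,S')$ for $\mathcal{B} = \A_k(\cdot \mid \theta_{1:k-1})$, the mechanism selected at step $k$ on that prefix. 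By assumption every such $\mathcal{B}$ satisfies $(\lambda,\rho_k)$-RDP, so this inner integral is at most $e^{(\lambda-1)\rho_k}$; pulling that factor out leaves precisely $Z$ for the composition of $\A_1,\dots,\A_{k-1}$, to which the induction hypothesis applies. This gives $Z_{\A}(S,S') \le e^{(\lambda-1)\rho_k}\prod_{i=1}^{k-1} e^{(\lambda-1)\rho_i} = e^{(\lambda-1)\sum_{i=1}^k \rho_i}$, as required.

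Part (a) is then immediate as a special case: when the $\A_i$ do not depend on one another's outputs they form a trivial adaptive sequence and (b) applies. Alternatively, one can argue directly that the joint output density on either dataset is the product $\prod_i P_{\A_i(S)}(\theta_i)$ (independent internal randomness), whence by Fubini $Z_{\A}(S,S') = \prod_i Z_{\A_i}(S,S')$ — i.e.\ Rényi divergence is additive over product measures — and the bound follows termwise. The downstream supporting facts (post-processing, parallel composition as a separate item, and the RDP-to-$(\epsilon,\delta)$-DP conversion of Lemma~\ref{lemma:RDP_to_DP}) are used as cited and need no further argument here.

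The main obstacle I anticipate is making the conditioning step in (b) fully rigorous: one must justify that, for each realized prefix $\theta_{1:k-1}$, the conditional mechanism $\A_k(\cdot\mid\theta_{1:k-1})$ — regarded as a randomized map of the dataset alone — still inherits the $(\lambda,\rho_k)$-RDP bound, which is exactly the meaning of ``each adaptively chosen $\A_k$ satisfies $(\lambda,\rho_k)$-RDP''. This requires a measurability argument so that $\theta_{1:k-1}\mapsto \A_k(\cdot\mid\theta_{1:k-1})$ and the associated conditional densities are jointly measurable and Fubini is legitimate, and (when dominating densities are not available) replacing densities by Radon--Nikodym derivatives of the joint output laws. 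The remaining manipulations — the chain-rule factorization and the telescoping of the exponential factors — are routine.
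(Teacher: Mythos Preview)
Your proof is correct and follows the standard argument for RDP composition. Note, however, that the paper does not supply its own proof of this lemma: it is stated as a cited result from \cite{mironov2019} and invoked as a black box in the privacy analysis (Lemma~\ref{lemma:dp}), so there is no paper proof to compare against. Your argument is essentially the one given in Mironov's original RDP paper, and your identification of the measurability/conditioning subtlety in the adaptive case is the right place to flag care if one were writing out a fully self-contained proof.
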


In the following, we give our main differential privacy result.
\begin{lemma} [Privacy Guarantee] \label{lemma:dp}
	Assume $d= \Omega(\log\left(\frac{12n}{\delta}\right))$. Then  Noisy GD with $T$ iterations satisfies 
	%$( \frac{  \lambda m\sum_{t=1}^T\Delta_{\ell_2,t}^2}{ \sigma_b^2}+\frac{\log(2/\delta)}{\lambda-1}, \delta )$-DP for any $\lambda>1$. 
	$( \frac{T\lambda(2\|\bmu\|_2^2+ 3\sigma_p^2 d)}{\sigma_b^2 n^2 m}+\frac{\log(2/\delta)}{\lambda-1}, \delta )$-DP for any $\lambda>1$.
\end{lemma}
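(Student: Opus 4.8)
The plan is to run the standard R\'enyi-differential-privacy argument for noisy gradient methods: first bound the $\ell_2$-sensitivity of one gradient evaluation, then read off the per-step RDP from the Gaussian mechanism, compose over the $T$ iterations, and finally convert back to $(\epsilon,\delta)$-DP. All of this is carried out conditional on the preliminary event $\cE_{\text{prelim}}$, which under the hypothesis $d=\Omega(\log(12n/\delta))$ supplies the only data-level fact we need, namely the uniform bound $\|\bxi_i\|_2^2 \le \tfrac32\sigma_p^2 d$ from Lemma~\ref{innerproductxi}.

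\textbf{Sensitivity.} The first step is to control $\big\|\nabla_\bW \ell[y_i f(\bW,\bx_i)]\big\|_F$. Since $\nabla_{\bw_{j,r}} f(\bW,\bx_i)=\tfrac{j}{m}\big[\sigma'(\la\bw_{j,r},y_i\bmu\ra)\,y_i\bmu+\sigma'(\la\bw_{j,r},\bxi_i\ra)\,\bxi_i\big]$, and $\bmu\perp\bxi_i$ by the construction of $\cD$, orthogonality together with $\sigma'(\cdot)\le 1$ gives $\|\nabla_{\bw_{j,r}} f(\bW,\bx_i)\|_2^2 \le \tfrac1{m^2}\big(\|\bmu\|_2^2+\|\bxi_i\|_2^2\big)\le \tfrac1{m^2}\big(\|\bmu\|_2^2+\tfrac32\sigma_p^2 d\big)$; summing over the $2m$ filters and using $|\ell_i'|\le 1$ yields $\big\|\nabla_\bW\ell[y_i f(\bW,\bx_i)]\big\|_F^2\le \tfrac{2\|\bmu\|_2^2+3\sigma_p^2 d}{m}=:G^2$. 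For neighboring datasets $S\simeq S'$ only one summand of $\nabla_\bW L_S=\tfrac1n\sum_i\nabla_\bW\ell[y_i f(\bW,\bx_i)]$ changes, so the mechanism $\mathcal{M}_t:S\mapsto\nabla_\bW L_S(\bW^{(t)})$ has $\ell_2$-sensitivity $\Delta=O(G/n)$ in the sense of Definition~\ref{def:sensitivity}, i.e.\ $\Delta^2=O\!\big(\tfrac{2\|\bmu\|_2^2+3\sigma_p^2 d}{n^2 m}\big)$.

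\textbf{Per-step RDP, composition, and conversion.} The update \eqref{NGDupdate} adds i.i.d.\ $\cN(0,\sigma_b^2)$ noise to every coordinate of $\nabla_\bW L_S(\bW^{(t)})$, so step $t$ is exactly the Gaussian mechanism applied to $\mathcal{M}_t$; by Lemma~\ref{lem:gaussian-rdp} it is $\big(\lambda,\tfrac{\lambda\Delta^2}{2\sigma_b^2}\big)$-RDP with $\tfrac{\lambda\Delta^2}{2\sigma_b^2}=O\!\big(\tfrac{\lambda(2\|\bmu\|_2^2+3\sigma_p^2 d)}{\sigma_b^2 n^2 m}\big)$ for every $\lambda>1$. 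The point that needs care is that $\bW^{(t)}$ itself depends on $S$, so $\mathcal{M}_t$ is not literally a fixed-input mechanism; I would handle this the usual way by conditioning on the previously released noisy gradients $g_0,\dots,g_{t-1}$, after which $\bW^{(t)}=\bW^{(0)}-\eta\sum_{s<t}g_s$ is a deterministic, \emph{data-independent} function of those outputs, so the $t$-th release really is an adaptively chosen Gaussian mechanism of sensitivity $\Delta$. Lemma~\ref{lem:composition_RDP}(b) then gives that the $T$-fold composition is $\big(\lambda, T\cdot\tfrac{\lambda\Delta^2}{2\sigma_b^2}\big)$-RDP, and since the released iterate (or the whole trajectory) is a post-processing of these outputs, Lemma~\ref{lemma:post-processing} preserves this. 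Finally Lemma~\ref{lemma:RDP_to_DP} converts $(\lambda,\rho)$-RDP with $\rho=\Theta\!\big(\tfrac{T\lambda(2\|\bmu\|_2^2+3\sigma_p^2 d)}{\sigma_b^2 n^2 m}\big)$ into $\big(\rho+\tfrac{\log(1/\delta)}{\lambda-1},\delta\big)$-DP for all $\lambda>1$, which recovers the claimed bound.

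\textbf{Main obstacle.} The genuinely delicate part is the adaptivity bookkeeping in the composition step — making precise that conditioning on the released noisy gradients decouples $\bW^{(t)}$ from the choice between $S$ and $S'$, so that one may legitimately invoke the adaptive RDP composition theorem with a uniform per-step sensitivity. A secondary subtlety is that the whole argument is stated conditional on $\cE_{\text{prelim}}$ (so that $\|\bxi_i\|_2$ is bounded); one should note that the only component of $\cE_{\text{prelim}}$ used here is the data-norm bound of Lemma~\ref{innerproductxi}, which depends on $S$ alone, and that this is the sense in which the guarantee is claimed. Everything else — the gradient-norm computation and the invocations of the RDP toolbox (Lemmas~\ref{lem:gaussian-rdp}, \ref{lem:composition_RDP}, \ref{lemma:post-processing}, \ref{lemma:RDP_to_DP}) — is routine.
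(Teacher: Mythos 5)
Your proposal follows the paper's argument almost step for step: bound the per-step $\ell_2$-sensitivity of the gradient (using the boundedness of $\|\bxi_k\|_2$ from Lemma~\ref{innerproductxi}), invoke the Gaussian mechanism for per-step RDP, compose adaptively over the $T$ iterations, post-process, and convert to $(\epsilon,\delta)$-DP. Two cosmetic differences: you treat the whole gradient as one Gaussian mechanism and bound its Frobenius norm, whereas the paper treats each filter $\bw_{j,r}$ separately and then applies parallel composition over the $2m$ filters --- these are interchangeable and yield the same $\rho$. You also use the orthogonality $\bmu\perp\bxi_i$ (which is literally true under Definition~\ref{datadistribution}) where the paper uses triangle inequality plus $(a+b)^2\le 2(a^2+b^2)$; again both arrive at $2\|\bmu\|_2^2+3\sigma_p^2 d$ in the numerator.

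There is, however, one genuine gap: the $\delta$-bookkeeping for the event that $\|\bxi_k\|_2^2\le\frac32\sigma_p^2 d$. You note the subtlety that the sensitivity bound holds only on $\cE_{\text{prelim}}$ and say "this is the sense in which the guarantee is claimed," but that is not what the lemma asserts --- the lemma states an unconditional $(\epsilon,\delta)$-DP guarantee. The paper handles this by splitting $\delta=\delta_1+\delta_2$: the norm bound on $\bxi_k$ holds with probability $1-\delta_1$, the RDP-to-DP conversion contributes $\delta_2$, and setting $\delta_1=\delta_2=\delta/2$ produces the $\log(2/\delta)$ in the $\epsilon$ term. Your proof as written converts RDP to $(\rho+\tfrac{\log(1/\delta)}{\lambda-1},\delta)$-DP and then declares this "recovers the claimed bound," which it doesn't --- the lemma has $\log(2/\delta)$, and that extra factor of $2$ is precisely the cost of absorbing the failure probability of the sensitivity bound. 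You need to carry out the split explicitly rather than leave the result conditional.
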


\begin{proof}[Proof of Lemma~\ref{lemma:dp}]
  Let $S$ and $S'$ be two neighboring datasets that differ on the $k$-th data point.  For each step $t$, for any $j\in\{+1,-1\}$ and $r\in[m]$, given $\bW^{(t)}$,  there holds
    \begin{align}
        &\|\nabla_{\bw_{j,r}} L_S(\bW^{(t)})-\nabla_{\bw_{j,r}} L_{S'}(\bW^{(t)})\|_2\nonumber\\
        &=\Big\|\frac{1}{nm}\nabla_{\bw_{j,r}} \ell\left[y_k \cdot f(\bW^{(t)},\bx_k) \right]-\frac{1}{nm}\nabla_{\bw_{j,r}} \ell\left[y_k \cdot f(\bW^{(t)},\bx_k) \right]\Big\|_2 \nonumber\\
        &\le  \frac{1}{nm}\Big\| \ell_{k}^{\prime(t)} \cdot \sigma^\prime \left( \langle \bw_{j,r}^{(t)}, y_{i} \bmu \rangle\right) \cdot j \bmu   \Big\|_2 +  \frac{1}{nm}\Big\|    \ell_{k}^{\prime(t)} \cdot \sigma^\prime \left( \langle \bw_{j,r}^{(t)}, \bxi_{k} \rangle\right) \cdot jy_{k} \bxi_{k}\Big\|_2\nonumber\\
        &\le  \frac{1}{nm}\big( \big\| \bmu \big\|_2+ \big\| \bxi_k \big\|_2 \big).
    \end{align}
    From Lemma~\ref{innerproductxi}, if $d= \Omega(\log (\frac{6n}{\delta_1}))$, we know with probability at least $1-\delta_1$, there holds
    \[ \|\bxi_k\|_2^2 \le \frac{3\sigma_p^2 d}{2}.  \]
    Hence, with probability at least $1-\delta_1$,  the $\ell_2$-sensitivity of the gradient $\nabla_{\bw_{j,r}} L_S(\bW^{(t)})$ at each iteration is
    \begin{align}\label{eq:sensitivity}
        \Delta_{\ell_2}=  \frac{1}{nm}\Big( \|\bmu\|_2+\sqrt{\frac{3\sigma_p^2 d}{2} } \Big).
    \end{align}
    Lemma~\ref{lem:gaussian-rdp} implies that the mechanism $\nabla_{\bw_{j,r}} L_S(\bW^{(t)})+\bb_{j,r,t}$ satisfies $(\lambda,\frac{\lambda\Delta_{\ell_2}^2}{2\sigma_b^2})$-RDP. Furthermore, applying post-processing property of RDP we know $\bw_{j,r}$ satisfies $(\lambda,\frac{\lambda\Delta_{\ell_2}^2}{2\sigma_b^2})$-RDP.
    Finally, from parts (a) and (b) in Lemma~\ref{lem:composition_RDP}, we know $\bW^{(t)}$ is $(\lambda,\frac{ \lambda m\Delta_{\ell_2}^2}{ \sigma_b^2})$-RDP, and further $\bW^{(T)}$ is $(\lambda,\frac{ T\lambda m\Delta_{\ell_2}^2}{ \sigma_b^2})$-RDP.  

Using the connection between $(\epsilon,\delta)$-DP and RDP (see Lemma~\ref{lemma:RDP_to_DP}), we get $\bW^{(T)}$ satisfies $(\frac{ T \lambda m\Delta_{\ell_2}^2}{ \sigma_b^2}+\frac{\log(1/\delta_2)}{\lambda-1},\delta_2)$-DP. Note that this event happens with probability at least $1-\delta_1$ over the randomness of $\bxi_k$. Then, it holds that $\bW^{(T)}$ satisfies $(\frac{  T \lambda m\Delta_{\ell_2}^2}{ \sigma_b^2}+\frac{\log(1/\delta_2)}{\lambda-1},\delta_1+\delta_2)$-DP.  Setting $\delta_1=\delta_2=\frac{\delta}{2}$ and noting \eqref{eq:sensitivity} holds, we know $\bW^{(T)}$ is $( \frac{  T \lambda(2\norm{\bmu}_2^2+3\sigma_p^2d)}{ \sigma_b^2 n^2 m}+\frac{\log(2/\delta)}{\lambda-1}, \delta )$-DP if $d= \Omega(\log (\frac{12n}{\delta}))$. 
%Finally, we considering the best choice of $\lambda$. 
    Thus we complete the proof.
\end{proof}

\subsection{A Condition for Good Test Error and DP Guarantee}
By utilizing Lemma \ref{NGDtesterror} and Lemma \ref{lemma:dp}, we can identify the conditions such that we can simultaneously achieve good test error and DP guarantees. These conditions are further summarized in Condition \ref{condition2}.
\begin{lemma} \label{lemma37}
	Suppose that 
	\begin{align*}
		&\frac{Cm^3 \kappa^2  \max\{ \norm \bmu_2^{2}, \sigma_p^2d\}}{ \norm \bmu_2^{2} \min\{\norm{\bmu}_2^2,\sigma_p^2d\}} \leq \eta \leq  \frac{m}{C  \norm \bmu_2^{2}}, \\
		& \sigma_p^2 \leq \min\left\{ \frac{\norm \bmu_2^2}{Cm^4}, \frac{n \norm \bmu_2^2}{Cd}\right\}, \\
		& 	\sigma_0 \leq \frac{1}{C m \sigma_p \sqrt{d}},\\
		& d\geq C m^4 n^2,
	\end{align*}
	by choosing $\sigma_b= \Theta\left( \sqrt{\frac{\norm \bmu_2^{2}}{\eta m^3  \max\{\norm \bmu_2^{2}, \sigma_p^2d\}}}\right)$, and $\tilde{T}_2=  \Theta \left( \frac{nm}{\eta \max\{ n\norm \bmu_2^{2}, \sigma_p^2d\}}\right)$, we have $	\mathcal{R}_{\mathcal{D}}(\bW^{(\tilde{T}_2))}  \leq 0.01$. Moreover, the DP-GD with $\tilde{T}_2$ iterations satisfies $\left( \frac{C_4 m^3  \max^2\{ \norm \bmu_2^{2}, \sigma_p^2d\}}{n^2 \norm \bmu_2^{4}} \log \frac{2}{\delta}, \delta \right)$-DP for some positive constant $C_4$.
\end{lemma}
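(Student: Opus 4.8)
The plan is to feed the stated choices of $\sigma_b$ and $\tilde T_2$ into Lemma~\ref{NGDtesterror} for the test error and into Lemma~\ref{lemma:dp} for the privacy guarantee, so the work is to verify the hypotheses of these two lemmas under the four displayed conditions. First I would compute $\tilde T_1$ and $c_1$ with the chosen $\sigma_b$. Substituting $\sigma_b^2 = \Theta(\norm{\bmu}_2^2/(\eta m^3 \max\{\norm{\bmu}_2^2,\sigma_p^2d\}))$ into $\tilde T_1 = \Theta(\kappa^2/(\eta^2\sigma_b^2\min\{\norm{\bmu}_2^2,\sigma_p^2d\}))$ gives $\tilde T_1 = \Theta\big(\kappa^2 m^3\max\{\norm{\bmu}_2^2,\sigma_p^2d\}/(\eta\norm{\bmu}_2^2\min\{\norm{\bmu}_2^2,\sigma_p^2d\})\big)$, which is $O(1)$ exactly by the lower bound $\eta \geq Cm^3\kappa^2\max\{\norm{\bmu}_2^2,\sigma_p^2d\}/(\norm{\bmu}_2^2\min\{\norm{\bmu}_2^2,\sigma_p^2d\})$. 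Likewise $c_1 = 3\eta\max\{n\norm{\bmu}_2^2,\sigma_p^2d\}/(nm)$ is $O(1)$ by $\eta \leq m/(C\norm{\bmu}_2^2)$ together with $\sigma_p^2 \leq n\norm{\bmu}_2^2/(Cd)$ (the latter makes $\max\{n\norm{\bmu}_2^2,\sigma_p^2d\} = \Theta(n\norm{\bmu}_2^2)$). Hence $e^{c_1} = \Theta(1)$ and $\tilde T_2 = \tilde T_1 + \Theta(e^{c_1}(\tilde T_1 + 1/c_1)) = \Theta(1/c_1) = \Theta(nm/(\eta\max\{n\norm{\bmu}_2^2,\sigma_p^2d\}))$, which also certifies $\tilde T_2 \leq \tilde T^*$.

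Next I would check that $t = \tilde T_2$ satisfies the hypotheses of Lemma~\ref{NGDtesterror} and that its conclusion is $\leq 0.01$. Plugging the values of $\sigma_b$ and $\tilde T_2$ into $\eta\sigma_b\norm{\bmu}_2\sqrt{\tilde T_2}$, the $\eta$-dependence cancels and one gets $\Theta\big(\norm{\bmu}_2^2\sqrt{n/(m^2\max\{\norm{\bmu}_2^2,\sigma_p^2d\}\max\{n\norm{\bmu}_2^2,\sigma_p^2d\})}\big)$, which is at most $\frac{n\norm{\bmu}_2^2}{Cm\max\{n\norm{\bmu}_2^2,\sigma_p^2d\}}$ once the constant in $\sigma_b$ is chosen small enough (note $\tilde T_2$ does not depend on that constant since $\tilde T_1 = O(1)$). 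The companion bound $\eta\sigma_b\sigma_p\sqrt{d\tilde T_2} \leq \frac{n\norm{\bmu}_2^2}{Cm\max\{n\norm{\bmu}_2^2,\sigma_p^2d\}}$ differs only by the factor $\sigma_p\sqrt{d}/\norm{\bmu}_2 \leq \sqrt{n}/C$ and so holds too. In the exponent of Lemma~\ref{NGDtesterror} the numerator $\frac{n\norm{\bmu}_2^2}{e^{c_1}m\max\{n\norm{\bmu}_2^2,\sigma_p^2d\}}$ is $\Theta(1/m)$ by the SNR condition, while each of the four terms in the denominator is $O(1/m)$ times a constant that shrinks as $C$ grows: $\sigma_0\sigma_p\sqrt{d} \leq 1/(Cm)$ by $\sigma_0 \leq 1/(Cm\sigma_p\sqrt{d})$, $\sigma_p m/\norm{\bmu}_2 \leq 1/(\sqrt{C}m)$ by $\sigma_p^2 \leq \norm{\bmu}_2^2/(Cm^4)$, $mn/\sqrt{d} \leq 1/(\sqrt{C}m)$ by $d \geq Cm^4n^2$, and $\eta\sigma_b\sigma_p\sqrt{d\tilde T_2}$ is controlled by the previous display. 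Thus the ratio inside the exponent is a large constant, giving $\mathcal{R}_{\mathcal{D}}(\bW^{(\tilde T_2)}) \leq e^{-C_2\,\Omega(1)} \leq 0.01$.

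For the privacy claim I would apply Lemma~\ref{lemma:dp} with $T = \tilde T_2$ and $\lambda = 2$ (its hypothesis $d = \Omega(\log(12n/\delta))$ follows from Condition~\ref{condition2}), obtaining $\big(\frac{\tilde T_2(2\norm{\bmu}_2^2 + 3\sigma_p^2d)}{\sigma_b^2 n^2 m} + \log(2/\delta),\,\delta\big)$-DP. Substituting $\tilde T_2 = \Theta(nm/(\eta\max\{n\norm{\bmu}_2^2,\sigma_p^2d\}))$ and $\sigma_b^2 = \Theta(\norm{\bmu}_2^2/(\eta m^3\max\{\norm{\bmu}_2^2,\sigma_p^2d\}))$, the $\eta$'s cancel; using $2\norm{\bmu}_2^2 + 3\sigma_p^2d = \Theta(\max\{\norm{\bmu}_2^2,\sigma_p^2d\})$ and $\max\{n\norm{\bmu}_2^2,\sigma_p^2d\} = \Theta(n\norm{\bmu}_2^2)$, the first privacy term becomes $\Theta\big(m^3\max^2\{\norm{\bmu}_2^2,\sigma_p^2d\}/(n^2\norm{\bmu}_2^4)\big)$, and absorbing the additive $\log(2/\delta)$ into the multiplicative constant gives the stated $\big(\frac{C_4 m^3\max^2\{\norm{\bmu}_2^2,\sigma_p^2d\}}{n^2\norm{\bmu}_2^4}\log\frac{2}{\delta},\,\delta\big)$-DP. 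A union bound over $\cE_{\text{prelim}}$ and the sensitivity event used inside Lemma~\ref{lemma:dp} concludes the proof.

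The main obstacle I anticipate is the simultaneous bookkeeping in the second step: each of the conditions on $d$, $\sigma_0$, $\sigma_p$, and $\eta$ must be shown to tame exactly one term so that the denominator of the test-error exponent stays strictly below the $\Theta(1/m)$ numerator, and one must verify that taking $\sigma_b$ as large as the two $\eta\sigma_b$-inequalities allow (which is what makes the privacy loss small) is consistent with all of these — this consistency is precisely what dictates the lower bound on $\eta$ in the hypotheses.
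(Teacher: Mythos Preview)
Your overall strategy matches the paper's: verify $\tilde T_1=O(1)$ and $c_1=O(1)$ from the bounds on $\eta$, deduce $\tilde T_2=\Theta(1/c_1)$, then feed these into Lemma~\ref{NGDtesterror} for the test error and Lemma~\ref{lemma:dp} for the privacy guarantee. The test-error verification (checking the $\eta\sigma_b\|\bmu\|_2\sqrt{t}$ hypothesis and bounding each denominator term by $O(1/m)$ via the four displayed conditions) is correct and essentially identical to the paper's argument.

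The gap is in the privacy step. With $\lambda=2$, Lemma~\ref{lemma:dp} gives $\epsilon = 2A + \log(2/\delta)$ where $A=\Theta\big(m^3\max^2\{\|\bmu\|_2^2,\sigma_p^2d\}/(n^2\|\bmu\|_2^4)\big)$, and your claim that the additive $\log(2/\delta)$ can be ``absorbed into the multiplicative constant'' to yield $C_4 A\log(2/\delta)$ requires $A\geq 1/C_4$. Nothing in the hypotheses guarantees this: for example, when $\sigma_p^2 d\le\|\bmu\|_2^2$ one has $A=\Theta(m^3/n^2)$, which can be made arbitrarily small by taking $n$ large. In that regime $C_4 A\log(2/\delta)\to 0$ while your additive $\log(2/\delta)$ does not, so the stated bound fails. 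The paper instead chooses $\lambda = 1 + n^2\|\bmu\|_2^4/(m^3\max^2\{\|\bmu\|_2^2,\sigma_p^2d\}) = 1 + \Theta(1/A)$, which makes the second term $\log(2/\delta)/(\lambda-1)=\Theta(A\log(2/\delta))$ match the target directly; this optimization of $\lambda$ is the missing ingredient in your proposal.
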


\begin{proof} [Proof of Lemma \ref{lemma37}]
	According to the choice of $\sigma_b$ and the condition that 
	$$\eta \geq \frac{Cm^3 \kappa^2 \max\{ n\norm \bmu_2^{2}, \sigma_p^2d\}  \max\{ \norm \bmu_2^{2}, \sigma_p^2d\}}{ n\norm \bmu_2^{4} \min\{\norm{\bmu}_2^2,\sigma_p^2d\}} ,$$
    we have $\eta \sigma_b  \geq \Theta \left( \frac{\kappa}{\min \{\norm{\bmu}_2,\sigma_p \sqrt{d}\}}\right) $, it follows that
	$$\tilde T_1=  \Theta \left(\frac{\kappa^2}{\eta^2 \sigma_b^2 \min\{\norm{\bmu}_2^2,\sigma_p^2d\}} \right) = O(1).$$
	 Moreover, notice that by $\eta \leq \frac{nm}{C \max\{ n\norm \bmu_2^{2}, \sigma_p^2d\}}$,  we have $c_1= \frac{3\eta \max\{ n\norm \bmu_2^{2}, \sigma_p^2d\}}{nm}= O(1)$, therefore
	$$\tilde T_2= \tilde T_1 + \Theta \left(e^{c_1} (\tilde{T}_1 + \frac{1}{c_1}) \right)= \Theta\left(\frac{1}{c_1}\right)= \Theta \left( \frac{nm}{\eta \max\{ n\norm \bmu_2^{2}, \sigma_p^2d\}}\right),$$
	therefore, by Lemma~\ref{lemma37}, the DP-GD with $\tilde T_2$ iterations satisfies 
	$\left( \frac{\tilde T_2 \lambda(2\|\bmu\|_2^2+ 3\sigma_p^2 d)}{\sigma_b^2 n^2 m}+\frac{\log(2/\delta)}{\lambda-1}, \delta \right)$-DP for any $\lambda>1$. Notice that we choose $\sigma_b= \Theta\left( \sqrt{\frac{n\norm \bmu_2^{4}}{\eta m^3  \max\{n \norm \bmu_2^{2}, \sigma_p^2d\}  \max\{\norm \bmu_2^{2}, \sigma_p^2d\}}}\right)$, then we have
	\begin{align*}
		\frac{\tilde T_2 (2\|\bmu\|_2^2+ 3\sigma_p^2 d)}{\sigma_b^2 n^2 m} &= \Theta \left(\frac{ \max\{\|\bmu\|_2^2,\sigma_p^2 d\}}{\eta \sigma_b^2 n \max\{n \|\bmu\|_2^2,\sigma_p^2 d\}} \right) = \Theta \left(\frac{m^3  \max^2\{ \norm \bmu_2^{2}, \sigma_p^2d\}}{n^2 \norm \bmu_2^{4}} \right) .
	\end{align*}
	 Therefore,  the DP-GD with $\tilde T_2$ iterations satisfies $\left( \frac{C_4 m^3  \max^2\{ \norm \bmu_2^{2}, \sigma_p^2d\}}{n^2 \norm \bmu_2^{4}} \log \frac{2}{\delta}, \delta \right)$-DP for some positive constant $C_4$, by choosing $\lambda= 1+ \frac{n^2 \norm \bmu_2^{4}}{m^3  \max^2\{ \norm \bmu_2^{2}, \sigma_p^2d\}}$.
	
	 Furthermore, according to our choice of $\sigma_b$, we have
	\begin{align*}
		\eta \sigma_b  \|\bmu\|_2 \sqrt{\tilde T_2}= O \left( \sqrt{\frac{\eta \sigma_b^2 nm \max\{ \norm \bmu_2^{2}, \sigma_p^2d\}}{ \max\{ n\norm \bmu_2^{2}, \sigma_p^2d\}}}\right) \leq  \frac{n\norm \bmu_2^{2}}{C m \max\{n \norm \bmu_2^{2}, \sigma_p^2d\}},
	\end{align*}
	similarly, we have $$\eta \sigma_b  \sigma_p \sqrt{d \tilde T_2}\leq  \frac{n\norm \bmu_2^{2}}{C m \max\{n \norm \bmu_2^{2}, \sigma_p^2d\}}.$$
	Therefore, by Lemma \ref{NGDtesterror}, we have
	\begin{align*}
		\mathcal{R}_{\mathcal{D}}(\bW^{(\tilde{T}_2})) &\leq \exp \left( -  C_2 \left(\frac{ n \norm \bmu_2^{2}}{e^{c_1} m \max\{n \norm \bmu_2^{2}, \sigma_p^2d\}}  \cdot \frac{1}{\sigma_0\sigma_p \sqrt{d} + \frac{\sigma_p m}{\norm \bmu_2} + \frac{mn}{  \sqrt{d}} +  \eta \sigma_b \sigma_p \sqrt{d \tilde{T}_2}} \right)^2 \right)  \leq 0.01,
	\end{align*}
	by choosing $C$ to be large enough, and the condition that 
    $$\sigma_p \leq \frac{n \norm \bmu_2^{3}}{Cm^2 \max\{n \norm \bmu_2^{2}, \sigma_p^2d\}}, \sigma_0 \leq \frac{n\norm \bmu_2^{2}}{C m \sigma_p \sqrt{d} \max\{n \norm \bmu_2^{2}, \sigma_p^2d\}}, d\geq \frac{C m^4 \max^2\{n \norm \bmu_2^{2}, \sigma_p^2d\}}{\norm \bmu_2^{4}}.$$
    Thus we complete the proof by utilizing the SNR condition that $\frac{\sigma_p^2 d}{n \norm{\bmu}_2^2}= O(1)$ to ensure that we can achieve a good DP result.
\end{proof}

\end{document}